 \definecolor{RoyalBlue}{rgb}{0, 0, 179}
\def\Real{\mathop{\mathbb{R}}\nolimits}
\def\argmin{\mathop{\rm argmin}}
\newcommand{\pdim}{\text{Pdim}}
\newcommand{\prob}{\mathbb{P}}
\newcommand{\one}{\mathbbm{1}}
\newcommand{\bi}{\boldsymbol{i}}
\newcommand{\bj}{\boldsymbol{j}}
\newcommand{\bs}{\boldsymbol{s}}
\newcommand{\bdelta}{\boldsymbol{\delta}}
\newcommand{\bepsilon}{\boldsymbol{\epsilon}}
\newcommand{\bomega}{\boldsymbol{\omega}}
\newcommand{\cA}{ \mathcal{A}}
\newcommand{\cB}{ \mathcal{B}}
\newcommand{\cC}{ \mathcal{C}}
\newcommand{\cF}{ \mathcal{F}}
\newcommand{\cH}{ \mathcal{H}}
\newcommand{\cI}{ \mathcal{I}}
\newcommand{\cL}{ \mathcal{L}}
\newcommand{\cM}{ \mathcal{M}}
\newcommand{\cN}{ \mathcal{N}}
\newcommand{\cO}{ \mathcal{O}}
\newcommand{\cP}{ \mathcal{P}}
\newcommand{\cR}{ \mathcal{R}}
\newcommand{\cS}{ \mathcal{S}}
\newcommand{\cT}{ \mathcal{T}}
\newcommand{\cV}{ \mathcal{V}}
\newcommand{\cW}{ \mathcal{W}}
\newcommand{\sA}{ \mathscr{A}}
\newcommand{\sD}{ \mathscr{D}}
\newcommand{\sE}{ \mathscr{E}}
\newcommand{\sF}{ \mathscr{F}}
\newcommand{\sG}{ \mathscr{G}}
\newcommand{\sH}{ \mathscr{H}}
\newcommand{\sI}{ \mathscr{I}}
\newcommand{\sN}{ \mathscr{N}}
\newcommand{\sP}{ \mathscr{P}}
\newcommand{\sQ}{ \mathscr{Q}}
\newcommand{\sW}{ \mathscr{W}}
\newcommand{\sX}{ \mathscr{X}}
\newcommand{\sZ}{ \mathscr{Z}}
\newcommand{\hD}{ \hat{D}}
\newcommand{\hE}{ \hat{E}}
\newcommand{\hG}{ \hat{G}}
\newcommand{\hnu}{\hat{\nu}}
\newcommand{\hmu}{\hat{\mu}}
\newcommand{\E}{\mathbb{E}}
\newcommand{\relu}{\text{ReLU}}
\newcommand{\fH}{\mathbb{H}}
\newcommand{\fL}{\mathbb{L}}
\newcommand{\fM}{\mathbb{M}}
\newcommand{\fN}{\mathbb{N}}
\newtheorem{defn}{Definition}
\newtheorem{thm}[defn]{Theorem}
\newtheorem{lem}[defn]{Lemma}
\newtheorem{cor}[defn]{Corollary}
\newtheorem{ass}{A\hspace{-4pt}}
\newtheorem{bss}{B\hspace{-4pt}}
\newtheorem{eg}[defn]{Example}
\newtheorem{propos}[defn]{Proposition}
\title{On the Statistical Properties of Generative Adversarial Models for Low Intrinsic Data Dimension}
\author[1]{Saptarshi Chakraborty\thanks{email: \texttt{saptarshic@berkeley.edu}}}
 \author[1,2,3]{Peter L.~Bartlett\thanks{email: \texttt{peter@berkeley.edu}}}
 \affil[1]{Department of Statistics, UC Berkeley}
  \affil[2]{Department of Electrical Engineering and Computer Sciences, UC Berkeley}
 \affil[3]{Google DeepMind}
\date{\vspace{-5ex}}
\begin{document}
\maketitle
\begin{abstract}
Despite the remarkable empirical successes of Generative Adversarial Networks (GANs), the theoretical guarantees for their statistical accuracy remain rather pessimistic. In particular, the data distributions on which GANs are applied, such as natural images, are often hypothesized to have an intrinsic low-dimensional structure in a typically high-dimensional feature space, but this is often not reflected in the derived rates in the state-of-the-art analyses. In this paper, we attempt to bridge the gap between the theory and practice of GANs and their bidirectional variant, Bi-directional GANs (BiGANs), by deriving statistical guarantees on the estimated densities in terms of the intrinsic dimension of the data and the latent space. We analytically show that if one has access to $n$ samples from the unknown target distribution and the network architectures are properly chosen, the expected Wasserstein-1 distance of the estimates from the target scales as $\mathcal{O}\left( n^{-1/d_\mu } \right)$  for GANs and $\tilde{\mathcal{O}}\left( n^{-1/(d_\mu+\ell)} \right)$  for BiGANs,  where $d_\mu$ and $\ell$ are the upper Wasserstein-1 dimension of the data-distribution and latent-space dimension, respectively. The theoretical analyses not only suggest that these methods successfully avoid the curse of dimensionality, in the sense that the exponent of $n$ in the error rates does not depend on the data dimension but also serve to bridge the gap between the theoretical analyses of GANs and the known sharp rates from optimal transport literature.  Additionally, we demonstrate that GANs can effectively achieve the minimax optimal rate even for non-smooth underlying distributions, with the use of interpolating generator networks.
\end{abstract}

\paragraph{Keywords}  Generative Adversarial Network, Intrinsic Dimension, Wasserstein Dimension, Convergence Rates, ReLU network approximation


\section{Introduction}
The problem of generating new data from an unknown distribution by observing independent and identically distributed samples from the same has been of great importance to researchers and finds fruitful applications in the fields of statistics, computer vision,  bio-medical imaging, astronomy, and so on \citep{alqahtani2021applications}.  Recent developments in deep learning have led to the discovery of Generative Adversarial Networks (GANs) \citep{NIPS2014_5ca3e9b1}, which formulates the problem as a (often zero-sum) game between two adversaries, called the generator and the discriminator. GANs have been incredibly successful, especially in the field of computer vision in generating realistic and high-resolution samples resembling natural images. The generator takes input from a relatively low-dimensional white noise, e.g., normal or uniform distribution, and tries to output realistic examples from the target distribution, while the discriminator tries to differentiate between real and fake samples. Both the generator and discriminator are typically realized by some class of deep networks. 

Different variants of GANs have been proposed in the past few years, often by varying the underlying divergence measure used and involving additional architectures, to learn different aspects of the data distribution. For example, $f$-GANs \citep{nowozin2016f} generalize vanilla GANs by incorporating a $f$-divergence as a dissimilarity measure. Wasserstein GANs (WGANs)\citep{arjovsky2017wasserstein} incorporate the first-order Wasserstein distance (also known as Kantorovich distance) to deduce a better-behaved GAN-objective. Some popular variants of GAN include MMD-GAN \citep{dziugaite2015training}, LSGAN \citep{mao2017least}, Cycle-GAN \citep{zhu2017unpaired}, DualGAN \citep{yi2017dualgan}, DiscoGAN \citep{kim2017learning} etc. Recent progress in GANs has allowed us to learn not only the map from the latent space to the data space (the generator) but the reverse map as well. This map called the encoder, is useful in finding a proper low-dimensional representation of the samples. Bi-directional GANs (BiGANs)  \citep{donahue2017adversarial} implement an adversarial learning scheme with the discriminator trying to distinguish between the fake data-latent code pair and its real counterpart (see Section~\ref{background} for a detailed exposition). Ideally, during the training process, the encoder learns a useful feature representation of the data in the latent space for auxiliary machine-learning tasks and data visualization.

The empirical successes of GANs have motivated researchers to study their theoretical guarantees. \citet{biau2020some}  analyzed the asymptotic properties of vanilla GANs along with parametric rates. \cite{biau2021some} also analyzed the asymptotic properties of WGANs. \citet{liang2021well} explored the min-max rates for WGANs for different non-parametric density classes and under a sampling scheme from a kernel density estimate of the data distribution; while \citet{schreuder2021statistical} studied the finite-sample rates under adversarial noise.  \citet{uppal2019nonparametric} derived the convergence rates for Besov discriminator classes for WGANs. \citet{luise2020generalization} conducted a theoretical analysis of WGANs under an optimal transport-based paradigm. Recently, \citet{asatryan2020convenient} and \citet{belomestny2021rates} improved upon the works of \citet{biau2020some} to understand the behavior of GANs for H\"{o}lder class of density functions. \citet{arora2017generalization} showed that generalization might not hold in standard metrics. However, they show that under a restricted ``neural-net distance", the GAN is indeed guaranteed to generalize well. Recently, \citet{arora2018do} showed that GANs and their variants might not be well-equipped against mode collapse. In contrast to GANs, the theoretical understanding of BiGANs remains rather limited. While \cite{liu2021non} attempted to establish generalization bounds for the BiGAN problem, it also suffers from the curse of dimensionality as the rates depend on the nominal high-dimensionality of the entire data space.  

Although significant progress has been made in our theoretical understanding of GAN, some limitations of the existing results are yet to be addressed. For instance, the generalization bounds frequently suffer from the curse of dimensionality. In practical applications, data distributions tend to have high dimensionality, making the convergence rates that have been proven exceedingly slow. However, high-dimensional data, such as images, texts, and natural languages, often possess latent low-dimensional structures that reduce the complexity of the problem. For example, it is hypothesized that natural images lie on a low-dimensional structure, despite its high-dimensional pixel-wise representation \citep{pope2020intrinsic}. Though in classical statistics there have been various approaches, especially using kernel tricks and Gaussian process regression that achieve a fast rate of convergence that depends only on their low intrinsic dimensionality \citep{bickel2007local,kim2019uniform}, such results are largely unexplored in the context of GANs. Recently, \citet{JMLR:v23:21-0732} expressed the generalization rates for GAN when the data has low-dimensional support in the Minkowski sense and the latent space is one-dimensional; while \citet{dahal2022deep} derived the convergence rates under the Wasserstein-1 distance in terms of the manifold dimension. It is important to note that the Minkowski dimension, although useful, does not fully capture the low-dimensional nature of the underlying distribution (see Section~\ref{intrinsic}), while the compact Riemannian manifold assumption of the support of the target distribution and the assumption of a bounded density of the target distribution on this manifold in \citep{dahal2022deep} is a very strong assumption that might not hold in practice. Furthermore, none of the aforementioned approaches tackle the problem in its full generality and match the sharp convergence rates of the empirical distributions in the optimal transport literature \citep{weed2019sharp}. Additionally, it remains unknown whether the GAN estimates of the target distribution are optimal in the minimax sense. 

\paragraph{Contributions }
In an attempt to overcome the aforementioned drawbacks in the current literature, the major findings of the paper are highlighted below.
\begin{itemize}
    \item In order to bridge the gap between the theory and practice of such generative models, in this paper, we develop a framework to establish the statistical convergence rates of GANs and BiGANs in terms of the upper Wasserstein dimension \citep{weed2019sharp} of the underlying target measure. 
    \item Informally, our results guarantee that if one has access to $n$ independent and identically distributed samples (i.i.d.) from $\mu$ and if the network architectures are properly chosen, the expected $\beta$-H\"{o}lder Integral Probability Metric (IPM) between the estimated and target distribution for GANs scales at most at a rate of $ \E \|\mu - \hG_\sharp \nu \|_{\sH^\beta} \lesssim n^{-\beta/d^\ast_\beta(\mu)}$. Here, $d^\ast_\beta(\mu)$ is the $\beta$-upper Wasserstein dimension of the target distribution $\mu$ (see Definition \ref{ed}). The recent statistical guarantees of GANs \citep{JMLR:v23:21-0732,dahal2022deep} follow as a direct corollary of our main result. Similarly, for BiGANs, we can guarantee that the expected Wasserstein-1 distance between the estimated joint distributions scales roughly at a rate (ignoring the $\log$-factors) of  $ \E \sW_1\left((\mu, \hE_\sharp \mu), (\hD_\sharp \nu, \nu)\right)  \precsim n^{-1/(d^\ast_1(\mu) +\ell)}$. Here $\hE$ and $\hD$ are the optimal sample encoders and decoders and $\ell$ is the dimension of the latent space. Notably, when the support encompasses the entire dataspace, this outcome aligns with the findings of \citet{liu2021non}.
    \item  We introduce the entropic dimension, characterizing a distribution's intrinsic dimensionality, and demonstrate its relevance in deep learning theory, notably in approximation and generalization bounds.  Our investigation into the approximation capabilities of ReLU networks reveals that achieving $\epsilon$-approximation in the $\fL_p(\mu)$-norm for $\alpha$-H\"{o}lder functions demands roughly $\cO(\epsilon^{-\bar{d}_{\alpha p}(\mu)/\alpha})$ weight terms (refer to Theorem~\ref{approx}).
     This contrasts with prior estimates of $\cO(\epsilon^{-d/\alpha} )$ \citep{yarotsky2017error} or $\cO(\epsilon^{-\overline{\text{dim}}_M(\mu)/\alpha})$ \citep{JMLR:v21:20-002}. Here \(\bar{d}_{p \alpha}(\mu) \le \overline{\text{dim}}_M(\mu) \le d\) denote the $p\alpha$-entropic dimension, upper Minkowski dimension of $\mu$ and the dimension of the data space, respectively (see Section \ref{intrinsic}). The result implies sustained approximation capabilities even with smaller ReLU networks. Furthermore, we demonstrate that the $\fL_p(\mu)$-metric entropy of $\beta$-H\"{o}lder functions scales at most at a rate of $\cO(\epsilon^{-\bar{d}_{p\beta}(\mu)})$, enhancing the foundational results of \cite{kolmogorov1961}.
    \item Finally, we derive minimax optimal rates for estimating under the $\alpha$-H\"{o}lder IPM, establishing that the GAN estimates can approximately achieve this minimax optimal rate.
\end{itemize}
\paragraph{Organization}

The remaining sections are structured as follows: Section~\ref{background} revisits necessary notations, and definitions, and outlines the problem statement. In Section~\ref{intrinsic}, we revisit the concept of intrinsic dimension and introduce a novel dimension termed the entropic dimension of a measure, comparing it with commonly used metrics. This entropic dimension proves pivotal in characterizing both the $\fL_p$-covering number of H\"{o}lder functions (refer to Theorem~\ref{kt}). We show that the Wasserstein dimension determines the convergence rate of the empirical measure to the population in the H\"older Integral IPM in   Theorem~\ref{corr1}. The subsequent focus shifts to theoretical analyses of GANs and BiGANs in Section~\ref{theo ana}. We begin by presenting the assumptions in Section~\ref{assumptions}, followed by stating the main result in Section~\ref{main results} and providing a proof sketch in Section~\ref{pf_main_result}, with detailed proofs available in the Appendix. Section~\ref{minimax bounds} demonstrates that GANs can achieve the minimax optimal rates for estimating distributions, followed by concluding remarks in Section~\ref{conclusion}.


\section{Background}\label{background}
Before we go into the details of the theoretical results, we introduce some notation and recall some preliminary concepts. 

\paragraph{Notations}
We use the notation $x \vee y : = \max\{x,y\}$ and $x \wedge y : = \min\{x,y\}$.  $T_\sharp \mu$ denotes the push-forward of the measure $\mu$ by the map $T$. $B_\varrho(x,r)$ denotes the open ball of radius $r$ around $x$, with respect to (w.r.t.) the metric $\varrho$. For any measure $\gamma$, the support of $\gamma$ is defined as, $\text{supp}(\gamma) = \{x: \gamma(B_\varrho(x,r))>0, \text{ for all }r >0\}$.  For any function $f: \cS \to \Real$, and any measure $\gamma$ on $\cS$, let $\|f\|_{\fL_p(\gamma)} : = \left(\int_\cS |f(x)|^p d \gamma(x) \right)^{1/p}$, if $0<p< \infty$. Also let, $\|f\|_{\fL_\infty(\gamma)} : = \operatorname{ess\, sup}_{x \in \text{supp}(\gamma)}|f(x)|$. For any function class $\cF$, and distributions $P$ and $Q$, $\|P - Q\|_\cF = \sup_{f \in \cF} |\int f dP - \int f dQ|$. For function classes $\cF_1$ and $\cF_2$, $\cF_1 \circ \cF_2 = \{f_1 \circ f_2: f_1 \in \cF_1, \, f_2 \in \cF_2\}$. We say $A_n \lesssim B_n$ (also written as $A_n = \cO(B_n)$) if there exists $C>0$, independent of $n$, such that $A_n \le C B_n$. Similarly, the notation, ``$\precsim$" (also written as $A_n = \tilde{\cO}(B_n)$) ignores poly-log factors in  $n$. We say $A_n \asymp B_n$, if $A_n \lesssim B_n$ and $B_n \lesssim A_n$. For any $k \in \mathbb{N}$, we let $[k] = \{1, \dots, k\}$. For two random variables $X$ and $Y$, we say that $X  \overset{d}{=} Y$, if the random variables have the same distribution. We use bold lowercase letters to denote members of $\mathbb{N}^k$ for $k \in \mathbb{N}$. $\Pi_{\cA}$ denotes the set of all probability measures on the set $\cA$. $\sW_p(\cdot, \cdot)$ denotes the Wasserstein $p$-distance between probability distributions. $\mu \otimes \nu$ denotes the product distribution of $\mu$ and $\nu$.

\begin{defn}[Covering and Packing Numbers] 
    \normalfont 
    For a metric space $(\cS,\varrho)$, the $\epsilon$-covering number w.r.t. $\varrho$ is defined as:
    \(\cN(\epsilon; \cS, \varrho) = \inf\{n \in \mathbb{N}: \exists \, x_1, \dots x_n \text{ such that } \cup_{i=1}^nB_\varrho(x_i, \epsilon) \supseteq \cS\}.\) A minimal $\epsilon$ cover of $\cS$ is denoted as $\cC(\epsilon; \cS, \varrho)$.
    Similarly, the $\epsilon$-packing number is defined as:
    \(\cM(\epsilon; \cS, \varrho) = \sup\{m \in \mathbb{N}: \exists \, x_1, \dots x_m \in \cS \text{ such that } \varrho(x_i, x_j) \ge \epsilon, \text{ for all } i \neq j\}.\)
\end{defn}
\begin{defn}[Neural networks]\normalfont
 Let $L \in \mathbb{N}$ and $ \{N_i\}_{i \in [L]} \in \mathbb{N}$. Then a $L$-layer neural network $f: \Real^d \to \Real^{N_L}$ is defined as,
\begin{equation}
\label{ee1}
f(x) = A_L \circ \sigma_{L-1} \circ A_{L-1} \circ \dots \circ \sigma_1 \circ A_1 (x)    
\end{equation}
Here, $A_i(y) = W_i y + b_i$, with $W_i \in \Real^{N_{i} \times N_{i-1}}$ and $b_i \in \Real^{N_{i-1}}$, with $N_0 = d$. Note that $\sigma_j$ is applied component-wise.  Here, $\{W_i\}_{1 \le i \le L}$ are known as weights, and $\{b_i\}_{1 \le i \le L}$ are known as biases. $\{\sigma_i\}_{1 \le i \le L-1}$ are known as the activation functions. Without loss of generality, one can take $\sigma_\ell(0) = 0, \, \forall \, \ell \in [L-1]$. We define the following quantities:  
(Depth) $\cL(f) : = L$ is known as the depth of the network; (Number of weights) the number of weights of the network $f$ is denoted as  $\cW(f) = \sum_{i=1}^L N_i N_{i-1}$; 
(maximum weight) $\cB(f) = \max_{1 \le j \le L} (\|b_j\|_\infty) \vee \|W_j\|_{\infty}$ to denote the maximum absolute value of the weights and biases.
\begin{align*}
    \cN \cN_{\{\sigma_i\}_{1 \le i \le L-1}} (L, W, B, R) = \{ & f \text{ of the form \eqref{ee1}}: \cL(f) \le L , \, \cW(f) \le W, \cB(f) \le B,\\
    & \sup_{x \in \Real^d}\|f(x)\|_\infty  \le R  \}.
\end{align*}
 If $\sigma_j(x) = x \vee 0$, for all $j=1,\dots, L-1$, we use the notation $\cR \cN (L, W, B, R)$ to denote $\cN \cN_{\{\sigma_i\}_{1 \le i \le L-1}} (L, W, B, R)$.
 \end{defn}
 \begin{defn}[H\"{o}lder functions]\normalfont
Let $f: \mathcal{S} \to \Real$ be a function, where $\mathcal{S} \subseteq \Real^d$. For a multi-index $\bs = (s_1,\dots,s_d)$, let, $\partial^{\bs} f = \frac{\partial^{|\bs|} f}{\partial x_1^{s_1} \dots \partial x_d^{s_d}}$, where, $|\bs| = \sum_{\ell = 1}^d s_\ell $. We say that a function $f: \cS \to \Real$ is $\beta$-H\"{o}lder (for $\beta >0$) if
\[ \|f\|_{\sH^\beta}: =\sum_{\bs: 0 \le |\bs| \le \lfloor \beta \rfloor} \|\partial^{\bs} f\|_\infty  + \sum_{\bs: |\bs| = \lfloor \beta \rfloor} \sup_{x \neq y}\frac{\|\partial^{\bs} f(x)  - \partial^{\bs} f(y)\|}{\|x - y\|^{\beta - \lfloor \beta \rfloor}} < \infty. \]
If $f: \Real^{d_1} \to \Real^{d_2}$, then we define $\|f\|_{\sH^{\beta}} = \sum_{j = 1}^{d_2}\|f_j\|_{\sH^{\beta}}$. For notational simplicity, let, $\sH^\beta(\cS_1, \cS_2,C) = \{f: \cS_1 \to \cS_2: \|f\|_{\sH^\beta} \le C\}$. Here, both $\cS_1$ and $\cS_2$ are subsets of real vector spaces.
\end{defn}
\subsection{Generative Adversarial Networks (GANs)}
Generative adversarial networks or GANs provide a simple yet effective way to generate samples from an unknown data distribution, given i.i.d. samples from the same. Suppose that $\sX$ is the data space and $\mu$ is a probability distribution on $\sX$. For simplicity, we will assume that $\sX = [0,1]^{d}$. Let $\sZ$ be the latent space, from which it is easy to generate samples. We will take $\sZ = [0,1]^{\ell}$. GANs view the problem by modeling a two-player (often zero-sum) game between the discriminator and the generator. The generator $G$ maps elements of $\sZ$ to $\sX$. The generator generates fake data points by first simulating a point in $\sZ$ from some distribution $\nu$ and passing it through the generator $G$. The discriminator's job is to distinguish between the original and fake samples.  Wasserstein GAN's objective is to solve the min-max problem, 
\[\inf_G  \|\mu - G_\sharp \nu\|_{\Phi},\] where $\Phi$ is a class of discriminators.

One often does not have access to $\mu$ but, only samples $X_1, \dots,X_n \overset{\text{i.i.d.}}{\sim} \mu$. Moreover one also assumes that one can generate samples, $Z_1,\dots,Z_m$ independently from $\nu$. The empirical distributions of $\mu$ and $\nu$ are given by, $\hat{\mu}_n = \frac{1}{n} \sum_{i=1}^n \delta_{X_i}$ and $\hat{\nu}_m = \frac{1}{n} \sum_{j=1}^m \delta_{Z_j}$, respectively. In practice, one typically solves the following empirical objectives.
\begin{equation}\label{emp}
     \inf_G \sup_\phi \frac{1}{n} \sum_{i=1}^n  \phi(X_i) - \E_{Z \sim \nu} \phi(G(Z))  \quad \text{or} \quad \inf_G \sup_\phi \frac{1}{n} \sum_{i=1}^n  \phi(X_i) - \frac{1}{m} \sum_{j=1}^m \phi(G(Z_j)).
\end{equation}
One typically realizes $\sG$ through a class of ReLU networks.  The empirical estimates of the generator are defined as:
\begin{equation}\label{gan_est}
    \hG_n = \argmin_{G \in \sG} \|\hmu_n - G_\sharp \nu\|_{\Phi} \quad \text{or} \quad \hG_{n,m} = \argmin_{G \in \sG} \|\hmu_n - G_\sharp \hnu_m\|_{\Phi}.
\end{equation}
Of course, in practice, one finds an approximation of $\hG$ only up to an optimization error, which we ignore for the simplicity of exposition; however one can incorporate this error into the theoretical results that we derive.

To measure the efficacy of the above generators, one computes the closeness of the target distribution $\mu$ and the estimated distribution $G_\sharp \nu$. As a measure of comparison, one takes an IPM for some function class $\sF$ and defines the excess risk of the estimator $\hG$ as:
\[\mathfrak{R}^{\text{GAN}}_{\sF}(\hG) = \|\mu - \hG_\sharp \nu\|_{\sF}.\]
In what follows, we take $\sF = \Phi = \sH^\beta(\Real^d, \Real, 1)$. It should be noted that while the discriminator is trained to differentiate real and fake samples, its role is limited to binary classification. As such, the discriminator network can be replaced by any efficient classifier, in principle. Neural networks are employed to approximate these Lipschitz or H\"{o}lder IPMs and enable efficient optimization for practical purposes. 

\subsection{Bidirectional GANs}
For notational simplicity, in the BiGAN context, we refer to the generator as the decoder and denote it by $D:\sZ \to \sX$. In BiGANs, one also constructs an encoder $E: \sX \to \sZ$. The discriminator is modified to take values from both the latent space and the data space, i.e. $\psi: \sX \times \sZ \to [0,1]$. For notational simplicity, $(f_\sharp P,P)$  and $(P, f_\sharp P)$ denote the distributions of $(f(X),X)$ and  $(X,f(X))$, respectively,  where $X \sim P$. The discriminator in the BiGAN objective also aims to distinguish between the real and fake samples as much as possible, whereas the encoder-decoder pair tries to generate realistic fake data-latent samples to confuse the discriminator. The BiGAN problem is formulated as follows:
\begin{align}
& \min_{D , \, E}  \|(\mu,E_\sharp\mu) - (D_\sharp \nu, \nu)\|_{\Psi},
\end{align}
where $\Psi$ denotes the set of all discriminators. In practice, one typically has access only to samples $X_1, \dots,X_n \overset{\text{i.i.d.}}{\sim} \mu \text{ and } Z_1,\dots,Z_m\overset{\text{i.i.d.}}{\sim}\nu$, and these are used to solve one of the following empirical objectives.
\begin{align*}
   & \inf_{D, \, E} \sup_\psi \frac{1}{n} \sum_{i=1}^n  \psi(X_i, E(X_i)) - \E_{Z \sim \nu} \psi(D(Z), Z),  \\
   & \inf_{D, \, E} \sup_\psi \frac{1}{n} \sum_{i=1}^n  \psi(X_i, E(X_i)) - \frac{1}{m} \sum_{j=1}^m\psi(D(Z_j), Z_j).
\end{align*}
The functions $D$, $E$, and $\psi$ are typically realized by deep networks though it is not uncommon to model $\psi$ through functions belonging to certain Hilbert spaces \citep{li2017mmd}. 

We let $\sD$ and $\sE$ be the set of all decoders and encoders of interest, which we will take to be a class of ReLU networks. The empirical decoder and encoder pairs are defined as:
\begin{align}
  &  (\hD_n, \hE_n) = \argmin_{D \in \sD, E \in \sE} \|(\hmu_n, E_\sharp \hmu_n) - (D_\sharp \nu, \nu)\|_{\Psi} \label{bigan_est1}\\
  & (\hD_{n,m}, \hE_{n,m}) = \argmin_{D \in \sD, E \in \sE} \|(\hmu_n, E_\sharp \hmu_n) - (G_\sharp \hnu_m, \hnu_m)\|_{\Psi}. \label{bigan_est2}
\end{align}

 To measure the efficacy of the above estimates, one computes the closeness of the joint distribution $(\mu, \hE_\sharp \mu)$ to $ (\hD_\sharp \nu, \nu)$ as,
\[\mathfrak{R}^{\text{BiGAN}}_{\sF}(\hD, \hE) = \|(\mu, \hE_\sharp \mu) - (\hD_\sharp \nu, \nu)\|_{\sF}.\]
As before, we take $\sF = \Psi = \sH^\beta(\Real^{d+\ell}, \Real, 1)$.
\section{Intrinsic Dimension of Data Distribution}
\label{intrinsic}
Often, real data is hypothesized to lie on a lower-dimensional structure within the high-dimensional representative feature space. To characterize this low-dimensionality of the data, researchers have defined various notions of the effective dimension of the underlying measure from which the data is assumed to be generated. Among these approaches, the most popular ones use some sort of rate of increase of the covering number, in the $\log$-scale, of most of the support of this data distribution. Let $(\cS, \varrho)$ be a Polish space and let $\mu$ be a probability measure defined on it. Throughout the remainder of the paper, we take $\varrho$ to be the $\ell_\infty$-norm. 
Before, we proceed we recall the $(\epsilon, \tau)$-cover of a measure 
\citep{posner1967epsilon} as: \(\sN_\epsilon(\mu, \tau) = \inf\{\cN(\epsilon; S, \varrho): \mu(S) \ge 1-\tau\},\)  i.e. $\sN_\epsilon(\mu, \tau)$ counts the minimum number of $\epsilon$-balls required to cover a set $S$ of probability at least $1-\tau$.

The most simplistic notion of the dimension of a measure is the upper Minkowski dimension of its support, which is defined as: \[\overline{\text{dim}}_M(\mu) = \limsup_{\epsilon \downarrow 0} \frac{\log\cN(\epsilon;\text{supp}(\mu), \ell_\infty)}{\log (1/ \epsilon)}.\] 
Since this notion of dimensionality depends only on the covering number of the support and does not assume the existence of a smooth correspondence to a smaller dimensional Euclidean space, this notion not only incorporates smooth manifolds but also covers highly non-smooth sets such as fractals.  The statistical convergence guarantees of different estimators in terms of the upper Minkowski dimension are well-studied in the literature. \citet{kolmogorov1961} provided a comprehensive study on the dependence of the covering number of different function classes on the underlying upper Minkowski dimension of the support. \citet{JMLR:v21:20-002} showed how deep learners can incorporate this low-dimensionality of the data that is also reflected in their convergence rates. Recently, \citet{JMLR:v23:21-0732} showed that WGANs can also adapt to this low-dimensionality of the data. However, one key drawback of using the upper Minkowski dimension is that if the measure spans over the entire sample space, though being concentrated on only some regions, can result in a high value of the dimension. We refer the reader to Examples \ref{eg1} and \ref{eg2} for such instances. 

To overcome the aforementioned difficulty, as a notion of the intrinsic dimension of a measure $\mu$,  \citet{dudley1969speed} defined the entropic dimension of a measure as:
\[\kappa_\mu = \limsup_{\epsilon \downarrow 0} \frac{\log \sN_\epsilon(\mu,\epsilon)}{\log (1/\epsilon)}.\]
Inspired by this definition, we define the 
$\alpha$-entropic dimension of $\mu$ as follows.
\begin{defn}[Entropic Dimension]\label{ed}\normalfont
    For any $\alpha>0$, we define the $\alpha$-entropic dimension of $\mu$ as:
     \[\bar{d}_\alpha(\mu) = \limsup_{\epsilon \downarrow 0} \frac{\log \sN_\epsilon(\mu,\epsilon^\alpha)}{\log (1/\epsilon)}.\]
\end{defn}

Clearly, $\bar{d}_\alpha(\mu) \ge 0$.  Note that Dudley's entropic dimension is $\bar{d}_1(\mu)$. \cite{dudley1969speed} showed that when the data is i.i.d. $\E \|\hmu_n - \mu\|_{\sH^1}$ roughly scales as $\cO \left( n^{-1/\bar{d}_1(\mu)}\right)$, while $\E \operatorname{Dist}_{LP}(\hmu_n, \mu)$ roughly scales as $\cO \left( n^{-1/(2+\bar{d}_1(\mu))}\right)$, where $\operatorname{Dist}_{LP}(\cdot, \cdot)$ denotes the L\'evy-Prokhorov metric \citep{doi:10.1137/1101016}. As shown in Theorems \ref{kt} and \ref{approx}, the entropic dimension characterizes the metric entropy of $\beta$-H\"older functions and the approximation capabilities of neural networks with ReLU activation.  \citet{weed2019sharp} developed upon Dudley's entropic dimension to characterize the expected convergence rate of the Wasserstein-$p$ distance between a measure and its empirical counterpart. The upper and lower Wasserstein dimension of $\mu$ is defined as follows:

\begin{defn}[Upper and Lower Wasserstein Dimensions \citep{weed2019sharp}]\label{def1} \normalfont
For any $ \alpha>0$, the $\alpha$-upper dimension of $\mu$ is defined by 
\[d^\ast_\alpha(\mu) = \inf \left\{s \in (2\alpha, \infty) : \limsup_{\epsilon \downarrow 0} \frac{\log \sN_{\epsilon}\left(\mu, \epsilon^{\frac{s\alpha}{s-2\alpha}}\right)}{\log(1/\epsilon)} \leq s \right\}.\]
The lower Wasserstein dimension is defined as:
$d_\ast(\mu) = \lim_{\tau \downarrow 0}\liminf_{\epsilon \downarrow 0} \frac{\log \sN_\epsilon(\mu,\tau)}{\log(1/\epsilon)}$.
\end{defn}

\cite{weed2019sharp} showed that, roughly, $n^{-1/d_\ast(\mu)} \lesssim \E \sW_p(\hmu_n,\mu) \lesssim n^{-1/d^\ast_p(\mu)}$. In Proposition~\ref{lem1}, we give some elementary properties of these dimensions with proof in Appendix~\ref{p_lem1}.  We also recall the definition of regularity dimensions and packing dimension of a measure \citep{fraser2017upper}.
\begin{defn}[Regularity dimensions]
\normalfont
The upper and lower regularity dimensions of a measure are defined as:
\begin{align*}
    \overline{\text{dim}}_{\text{reg}}(\mu) = & \inf\bigg\{ s: \exists \, C>0 \text{ such that, for all } 0 < r < R \text{ and } x \in \text{supp}(\mu), \\
    & \hspace{2cm} \frac{\mu(B(x,R))}{\mu(B(x,r))} \le C \left(\frac{R}{r}\right)^s\bigg\},\\
        \underline{\text{dim}}_{\text{reg}}(\mu) = & \sup\bigg\{ s: \exists \, C >0\text{ such that, for all } \, 0 < r < R  \text{ and } x \in \text{supp}(\mu), \\
        & \hspace{2cm} \frac{\mu(B(x,R))}{\mu(B(x,r))} \ge C \left(\frac{R}{r}\right)^s\bigg\}.
    \end{align*} 
\end{defn}

\begin{defn}[Upper packing dimension] \normalfont The upper packing dimension of a measure $\mu$ is defined as:
    $\overline{\text{dim}}_P(\mu) = \operatorname{ess\, sup} \left\{\limsup\limits_{r \rightarrow 0} \frac{\log \mu(B(x,r))}{\log r}: x \in \text{supp}(\mu)\right\}$.
\end{defn}

 \begin{restatable}{propos}{pflemone}
\label{lem1}
For any $\mu$ and for any $\alpha \ge 0$, the following hold:
\begin{enumerate}
    \item[(a)] $d_\ast(\mu) \le \bar{d}_\alpha(\mu) \le d^\ast_\alpha(\mu) $, 
    \item[(b)] if $\alpha_1 \le \alpha_2$, then, $\bar{d}_{\alpha_1}(\mu) \le \bar{d}_{\alpha_2}(\mu)$,
    \item[(c)] $\bar{d}_\alpha(\mu) \le \overline{\text{dim}}_M(\mu)$,
    \item[(d)] if $\alpha \in \left(0, \overline{\text{dim}}_P(\mu)/2\right)$, $d^\ast_\alpha(\mu) \le \overline{\text{dim}}_P(\mu) \le \overline{\text{dim}}_{\text{reg}}(\mu) $,
    \item[(e)]  $\bar{d}_\alpha(\mu) \le \overline{\text{dim}}_P(\mu) \le \overline{\text{dim}}_{\text{reg}}(\mu) $,
    \item[(f)] $\underline{\text{dim}}_{\text{reg}}(\mu) \le d_\ast(\mu)  $.
\end{enumerate}
\end{restatable}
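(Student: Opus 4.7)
My plan rests on three ingredients: the monotonicity of $\sN_\epsilon(\mu,\tau)$ in $\tau$ (smaller $\tau$ forces covering more mass, so $\sN_\epsilon(\mu,\tau)$ can only grow), the trivial bound $\sN_\epsilon(\mu,\tau)\le\cN(\epsilon;\text{supp}(\mu),\ell_\infty)$ valid for every $\tau\ge 0$, and a greedy packing argument driven by local mass estimates.

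For parts (a), (b), (c), and (f), only bookkeeping will be needed. For the first inequality of (a), I would fix $\tau>0$ and note that $\epsilon^\alpha\le\tau$ eventually, so monotonicity gives $\sN_\epsilon(\mu,\epsilon^\alpha)\ge\sN_\epsilon(\mu,\tau)$; taking $\limsup_\epsilon$ on the left and $\liminf_\epsilon$ on the right, then sending $\tau\downarrow 0$, yields $d_\ast(\mu)\le\bar d_\alpha(\mu)$. For the second inequality of (a), the constraint $s>2\alpha$ in the definition of $d^\ast_\alpha$ forces $s\alpha/(s-2\alpha)>\alpha$, hence $\epsilon^{s\alpha/(s-2\alpha)}\le\epsilon^\alpha$ on $(0,1)$ and the same monotonicity applies. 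Part (b) uses $\epsilon^{\alpha_1}\ge\epsilon^{\alpha_2}$ when $\alpha_1\le\alpha_2$, and part (c) takes $S=\text{supp}(\mu)$ as the target. For (f), lower regularity at a fixed outer radius gives $\mu(B(x,\epsilon))\le C\epsilon^s$ uniformly for $s<\underline{\text{dim}}_{\text{reg}}(\mu)$; summing over any $\epsilon$-cover of a set of mass $\ge 1-\tau$ gives $N\ge(1-\tau)/(C\epsilon^s)$, and taking logs, dividing by $\log(1/\epsilon)$, then sending $\epsilon\downarrow 0$ and $\tau\downarrow 0$ produces $d_\ast(\mu)\ge\underline{\text{dim}}_{\text{reg}}(\mu)$.

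The substantive work is in (d). Its easier half, $\overline{\text{dim}}_P(\mu)\le\overline{\text{dim}}_{\text{reg}}(\mu)$, follows from upper regularity: for $s>\overline{\text{dim}}_{\text{reg}}(\mu)$, fixing $R$ and sending $r\to 0$ gives $\log\mu(B(x,r))\ge s\log r+O(1)$, and dividing by $\log r<0$ flips the inequality to $\limsup_{r\to 0}\log\mu(B(x,r))/\log r\le s$ for every $x\in\text{supp}(\mu)$. For $d^\ast_\alpha(\mu)\le\overline{\text{dim}}_P(\mu)$ under the stated hypothesis $\alpha\in(0,\overline{\text{dim}}_P(\mu)/2)$, I would fix $s\in(2\alpha,\infty)$ with $s>\overline{\text{dim}}_P(\mu)$ and define the approximate-regularity set $E(\eta)=\{x\in\text{supp}(\mu):\mu(B(x,r))\ge r^s\text{ for all }r\le\eta\}$, which increases as $\eta\downarrow 0$ with $\mu(E(\eta))\uparrow 1$ (by the essential-sup characterization of $\overline{\text{dim}}_P$). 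A greedy $\epsilon$-packing of $E(\eta)$ for $\epsilon\le\eta$ produces points $x_1,\dots,x_m\in E(\eta)$ whose half-balls $B(x_i,\epsilon/2)$ are disjoint and each carry mass at least $(\epsilon/2)^s$, forcing $m\le(2/\epsilon)^s$; the full balls cover $E(\eta)$, so $\sN_\epsilon(\mu,1-\mu(E(\eta)))\le(2/\epsilon)^s$. Part (e) would then be obtained by chaining (a) with the first half of (d) for the first inequality, and reusing the second half of (d) for the second.

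The principal obstacle is calibrating the qualitative convergence $\mu(E(\eta))\to 1$ against the sharp threshold $\tau=\epsilon^{s\alpha/(s-2\alpha)}$ built into the definition of $d^\ast_\alpha$. I would handle this by a two-parameter limit in the spirit of \citet{weed2019sharp}: set $\eta(\epsilon)$ to be the smallest $\eta$ with $\mu(E(\eta))\ge 1-\epsilon^{s\alpha/(s-2\alpha)}$, apply the packing bound at scale $\max(\epsilon,\eta(\epsilon))$, take $\limsup_\epsilon$, and then pass $s\downarrow\overline{\text{dim}}_P(\mu)$. The tolerance for slightly enlarging $s$ absorbs the slack between the unknown rate at which $\eta(\epsilon)\to 0$ and the polynomial decay of $\tau$, and the hypothesis $\alpha<\overline{\text{dim}}_P(\mu)/2$ is exactly what keeps the exponent $s\alpha/(s-2\alpha)$ finite as $s$ approaches $\overline{\text{dim}}_P(\mu)$.
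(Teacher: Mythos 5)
For parts (a), (b), (c) and (f) your argument is essentially the paper's: (a)--(c) rest on the monotonicity of $\sN_\epsilon(\mu,\tau)$ in $\tau$ and the crude bound $\sN_\epsilon(\mu,\tau)\le\cN(\epsilon;\text{supp}(\mu),\varrho)$, and (f) uses the mass upper bound $\mu(B(x,\epsilon))\lesssim\epsilon^s$ from lower regularity and sums it over a net; the paper does the same. For $\overline{\text{dim}}_P(\mu)\le\overline{\text{dim}}_{\text{reg}}(\mu)$ you give a short direct derivation while the paper simply cites \citet{fraser2017upper}; your derivation is correct and self-contained, which is a mild improvement.

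Where your plan does not close is the inequality $d^\ast_\alpha(\mu)\le\overline{\text{dim}}_P(\mu)$ in (d), and by extension (e). You are right to be suspicious of passing directly from the essential-sup definition of $\overline{\text{dim}}_P$ to a mass lower bound $\mu(B(x,r))\ge r^s$ with a single threshold $r_0$ valid uniformly over $x\in\text{supp}(\mu)$---the paper asserts exactly such a uniform $r_0$, which does not follow from a pointwise $\limsup$ holding $\mu$-a.e., and your $E(\eta)$ device is the natural repair. But the calibration you sketch does not work: if you set $\eta(\epsilon)$ by the measure constraint $\mu(E(\eta))\ge1-\epsilon^{s\alpha/(s-2\alpha)}$ (and it should be the \emph{largest} such $\eta$, not the smallest, since $E(\eta)$ decreases as $\eta$ grows), you need $\eta(\epsilon)\ge\epsilon/2$ for the half-ball argument to apply at scale $\epsilon$, and nothing guarantees that. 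Applying the packing bound ``at scale $\max(\epsilon,\eta(\epsilon))$'' gives you a bound on $\sN_{\max(\epsilon,\eta(\epsilon))}$, which is a \emph{lower} bound for $\sN_\epsilon$, not an upper bound; so when $\eta(\epsilon)<\epsilon/2$ the chain of inequalities breaks. There is no control on the rate at which $\mu(E(\eta))\to1$, so you cannot reconcile $\eta\gtrsim\epsilon$ with $\mu(E(\eta)^c)\le\epsilon^{s\alpha/(s-2\alpha)}$; letting $s\downarrow\overline{\text{dim}}_P(\mu)$ only makes the required tolerance $\tau$ smaller. Finally, your plan for (e) chains (a) with (d), but (d) carries the hypothesis $\alpha<\overline{\text{dim}}_P(\mu)/2$ whereas (e) is asserted for all $\alpha\ge0$; the paper instead rereuns the packing argument directly with $\tau=\epsilon^\alpha$, which imposes no such restriction on $\alpha$, and you should do the same.
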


It was noted by \cite{weed2019sharp} that the upper Wasserstein dimension is usually smaller than the upper Minkowski dimension as noted in the following lemma.
\begin{propos}\label{lem_weed} (Proposition 2 of \cite{weed2019sharp})
    If $\overline{\text{dim}}_M \ge 2 \alpha$, then, $d^\ast_\alpha(\mu) \le \overline{\text{dim}}_M(\mu)$.
\end{propos}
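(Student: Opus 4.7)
The plan is to exploit the trivial bound that for any measure $\mu$ and any $\tau > 0$, the support of $\mu$ is itself a candidate set in the definition of $\sN_\epsilon(\mu,\tau)$, so
\[
\sN_\epsilon(\mu,\tau) \;\le\; \cN(\epsilon;\text{supp}(\mu),\ell_\infty).
\]
This monotonicity, applied with a carefully chosen $\tau = \tau(\epsilon)$, will make the Minkowski covering rate dominate the quantity appearing in the definition of $d^\ast_\alpha(\mu)$.

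Set $s := \overline{\text{dim}}_M(\mu)$; by hypothesis, $s \ge 2\alpha$. First I would handle the generic case $s > 2\alpha$. Pick any $s' \in (s,\infty)$; then automatically $s' > 2\alpha$, so $s'$ is an admissible element of the set over which the infimum in the definition of $d^\ast_\alpha(\mu)$ is taken. Using the bound above with $\tau = \epsilon^{s'\alpha/(s'-2\alpha)}$, I would write
\[
\limsup_{\epsilon \downarrow 0} \frac{\log \sN_\epsilon\!\left(\mu,\epsilon^{\frac{s'\alpha}{s'-2\alpha}}\right)}{\log(1/\epsilon)} \;\le\; \limsup_{\epsilon \downarrow 0} \frac{\log \cN(\epsilon;\text{supp}(\mu),\ell_\infty)}{\log(1/\epsilon)} \;=\; s \;<\; s'.
\]
Hence $s'$ belongs to the set $\{s \in (2\alpha,\infty) : \limsup \le s\}$, so $d^\ast_\alpha(\mu) \le s'$. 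Letting $s' \downarrow s$ then yields $d^\ast_\alpha(\mu) \le s = \overline{\text{dim}}_M(\mu)$.

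For the boundary case $s = 2\alpha$, the same argument applies: every $s' > 2\alpha = s$ is admissible and satisfies the limsup condition by the Minkowski comparison, and the infimum over such $s'$ is $\le s$, giving $d^\ast_\alpha(\mu) \le s$.

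There is no real obstacle here; the only thing to watch is that the cover $\sN_\epsilon(\mu,\tau)$ must be compared against $\cN(\epsilon;\text{supp}(\mu),\ell_\infty)$ uniformly in the $\tau$ chosen, which is immediate since $\mu(\text{supp}(\mu)) = 1 \ge 1 - \tau$ for every $\tau > 0$. The hypothesis $\overline{\text{dim}}_M(\mu) \ge 2\alpha$ is used solely to ensure that the candidate exponents $s'$ remain above $2\alpha$ as we push $s' \downarrow s$, so that the infimum in Definition~\ref{def1} can actually be attained in the limit.
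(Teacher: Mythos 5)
Your proof is correct, and it is essentially the argument given in the cited reference (Proposition~2 of Weed and Bach, 2019): bound $\sN_\epsilon(\mu,\tau)$ by $\cN(\epsilon;\operatorname{supp}(\mu),\ell_\infty)$ since $\mu(\operatorname{supp}(\mu))=1\ge 1-\tau$, conclude that every $s'>\overline{\text{dim}}_M(\mu)$ satisfies the limsup condition, and take the infimum. The paper does not reprove this proposition itself, so there is no divergence to compare; your handling of the boundary case $s=2\alpha$ via $s'\downarrow s$ and your remark that the hypothesis $\overline{\text{dim}}_M(\mu)\ge 2\alpha$ is needed precisely so that the infimum of admissible $s'$ reaches down to $\overline{\text{dim}}_M(\mu)$ are both accurate.
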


 Propositions~\ref{lem1} (c) and \ref{lem_weed} imply that for high-dimensional data, both the entropic dimension and the upper Wasserstein dimension are no larger than the upper Minkowski dimension of the support. In many cases, strict inequality holds as seen in the following examples.
\begin{eg}\label{eg1}\normalfont
Let the measure $\mu$ on $\mathbb{N}$, be such that $\mu(n) = 2^{- n}$ for all $n \in \mathbb{N}$. Clearly, the support of $\mu$ is $\mathbb{N}$, which has an upper Minkowski dimension of $\infty$. To find $\bar{d}_\alpha(\mu)$, we first fix $\epsilon \in (0,1)$. For any $n \in \mathbb{N}$, let $A_n = [n]$. We observe that, $\mu\left(A_n^\complement\right) =  \frac{1}{2^n}$. We take 
$K  = \lceil \alpha \log_2(1/\epsilon)\rceil $.  Clearly, $\mu(A_{K}) \ge 1 - \epsilon^\alpha$. We can cover $A_{K}$ by at most $K$ many intervals of length $\epsilon$. Thus, $\sN_\epsilon(\mu,\epsilon^\alpha) \le K = \lceil  \log_2(1/\epsilon)\rceil$. Hence,
\[0 \le \bar{d}_\alpha(\mu) = \limsup_{\epsilon \downarrow 0} \frac{ \log \sN_\epsilon(\mu, \epsilon^\alpha)}{-\log \epsilon} \le  \limsup_{\epsilon \downarrow 0} \frac{ \log \lceil  \alpha \log_2(1/\epsilon)\rceil}{-\log \epsilon} =0.\] Thus, $\bar{d}_\alpha(\mu) = 0$, for all $\alpha >0$. Similarly, let $s > 2 \alpha$ and take, $K =  \lceil \frac{s \alpha}{s - 2 \alpha}\log_2\left(1/\epsilon\right)\rceil$. According to the argument above, 
\[\limsup_{\epsilon \downarrow 0} \frac{\log \sN_\epsilon(\mu, \epsilon^{\frac{s \alpha}{s - 2 \alpha}})}{-\log \epsilon} \le  \limsup_{\epsilon \downarrow 0} \frac{  \log \lceil \frac{s \alpha}{s - 2 \alpha}\log_2\left(1/\epsilon\right)\rceil}{-\log \epsilon} = 0 < s.\]
Thus, $d^\ast_\alpha(\mu) = 2 \alpha$. Thus, for this example, we get, $\bar{d}_\alpha(\mu) < d^\ast_\alpha(\mu) < \overline{\text{dim}}_M(\mu)$.
\end{eg}
Even when $\overline{\text{dim}}_M(\mu) < \infty$, it can be the case that $\bar{d}_\alpha(\mu) < d^\ast_\alpha(\mu) < \overline{\text{dim}}_M(\mu)$ as seen in the following example.
\begin{eg}\label{eg2}\normalfont
Let  $\mu$ be such that $\mu\left(\left(\frac{1}{n_1}, \dots, \frac{1}{n_d}\right)\right) = \frac{1}{2^n}$ for all $n_1, \dots, n_d \in \mathbb{N}$. We know that $\overline{\text{dim}}_M(\mu) = \frac{d}{2}$, by Example 1.14 of \citet{bishop2017fractals}. A calculation, similar to Example \ref{eg1} shows that $\bar{d}_\alpha(\mu) = 0$ and $d^\ast_\alpha(\mu) = 2\alpha$, for all $\alpha >0$. Thus, if $\alpha < \frac{d}{4}$, $\bar{d}_\alpha(\mu) < d^\ast_\alpha(\mu) < \overline{\text{dim}}_M(\mu)$.
\end{eg}

There are cases where both $\bar{d}_\alpha(\mu)$ and $d^\ast_\alpha(\mu)$ are both positive and a strict inequality might hold. We give an example as follows.
\begin{eg}\label{eg3}
    \normalfont
    Suppose that $\mu$ is a probability measure on $\mathbb{N}$, such that, $\mu(n) = n^{-2/3} - (n+1)^{-2/3}$. Then for any $\epsilon \in (0,1)$, let $K = \lceil \epsilon^{-3\alpha/2} \rceil$. Then, $\mu(A_K) \ge 1 - \epsilon^\alpha$. Again, $A_K$ requires at most $K$ intervals of length $\epsilon$ to be covered, which implies that $\sN_\epsilon(\mu, \epsilon^\alpha) \le K = \lceil \epsilon^{-3\alpha/2} \rceil$. Hence,
    \[\bar{d}_\alpha(\mu) = \limsup_{\epsilon \downarrow 0} \frac{ \log \sN_\epsilon(\mu, \epsilon^\alpha)}{-\log \epsilon} \le \limsup_{\epsilon \downarrow 0} \frac{ \log \lceil \epsilon^{-3\alpha/2} \rceil }{-\log \epsilon} = 3\alpha/2.\]
    Further, by definition, $d^\ast_\alpha(\mu) \ge 2 \alpha$. This implies that $\bar{d}_\alpha < d^\ast_\alpha(\mu) < \overline{\text{dim}}_M(\mu)$.
\end{eg}

The entropic dimension can be used to characterize the metric entropy of the set of all H\"{o}lder functions in the $\fL_p$-norm w.r.t the corresponding measure.  We focus on the following theorem that strengthens the seminal result by \citet{kolmogorov1961} (see Lemma~\ref{kt_og}). It is important to note that when the measure has an intrinsically lower dimension compared to the high dimensionality of the entire feature space, the metric entropy only depends on the $p\beta$-entropic dimension of the underlying measure $\mu$, not the dimension of the feature space, i.e. $d$.  
\begin{restatable}{thm}{pfkt}\label{kt}
    Let $\sF = \sH^\beta(\Real^d, \Real, C)$, for some $C>0$. Then, if $s>\bar{d}_{p\beta}(\mu)$, then, there exists an $\epsilon^\prime >0$, such that, if $0 < \epsilon \le \epsilon^\prime$, then,  $\log\cN(\epsilon; \sF, \fL_p(\mu)) \lesssim \epsilon^{-s/\beta}$.
\end{restatable}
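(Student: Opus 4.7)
The plan is to adapt the classical Kolmogorov--Tikhomirov discretization argument for H\"{o}lder metric entropy by replacing the uniform grid on $[0,1]^d$ with a $\delta$-cover of a set $S$ of high $\mu$-measure, whose existence is exactly what the $p\beta$-entropic dimension controls through $\sN_\delta(\mu,\delta^{p\beta})$. Approximation of $f \in \sF$ takes place only on $S$, and the contribution of the complement $S^c$ to the $\fL_p(\mu)$-norm is absorbed by $\mu(S^c)^{1/p}$ together with the sup-norm bound $\|f\|_\infty \le C$ coming from $\|f\|_{\sH^\beta} \le C$.

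Fix $s' \in (\bar{d}_{p\beta}(\mu), s)$. By Definition~\ref{ed}, there is a $\delta_0 > 0$ such that $\sN_\delta(\mu,\delta^{p\beta}) \le \delta^{-s'}$ for all $\delta \in (0,\delta_0)$. Given a target $\epsilon$ small enough that $\delta := \epsilon^{1/\beta} < \delta_0$, choose a set $S$ with $\mu(S) \ge 1 - \delta^{p\beta} = 1 - \epsilon^p$ admitting an $\ell_\infty$-$\delta$-cover $\{x_1,\dots,x_N\}$ with $N \le \delta^{-s'}$. Disjointify via nearest-neighbor assignment into measurable cells $S_i$ satisfying $\|x - x_i\|_\infty \le \delta$ for $x \in S_i$.

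For each $f \in \sF$, approximate on $S_i$ by the truncated Taylor polynomial
\begin{equation*}
P_i^f(x) = \sum_{|\bs| \le \lfloor \beta \rfloor} \frac{\partial^{\bs} f(x_i)}{\bs!}\,(x - x_i)^{\bs}.
\end{equation*}
Using the H\"{o}lder continuity of $\partial^{\bs} f$ with $|\bs| = \lfloor \beta \rfloor$ and $\|f\|_{\sH^\beta} \le C$, the standard remainder bound gives $|f(x) - P_i^f(x)| \lesssim \delta^\beta$ on $S_i$. To build a finite cover, quantize each coefficient $\partial^{\bs} f(x_i)/\bs! \in [-C,C]$ to the nearest multiple of $\delta^{\beta - |\bs|}$; since $|(x - x_i)^{\bs}| \le \delta^{|\bs|}$ on $S_i$, each quantized coefficient contributes at most $\delta^\beta$ to the pointwise error, so the piecewise polynomial $\tilde f$ (extended by $0$ on $S^c$) obeys $\sup_{x \in S} |f(x) - \tilde f(x)| \lesssim \delta^\beta = \epsilon$. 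The number of quantized Taylor polynomials at each $x_i$ is bounded by $\prod_{|\bs| \le \lfloor \beta \rfloor} \cO(\delta^{|\bs| - \beta}) = \cO(\delta^{-c(d,\beta)})$ for some constant $c(d,\beta)$; hence the total count of candidate $\tilde f$'s is at most $\exp\bigl(c_1 N \log(1/\delta)\bigr) \le \exp\bigl(c_1 \delta^{-s'} \log(1/\delta)\bigr)$.

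Finally,
\begin{equation*}
\|f - \tilde f\|_{\fL_p(\mu)}^p \le \int_S |f - \tilde f|^p\, d\mu + (2C)^p \mu(S^c) \lesssim \epsilon^p + \epsilon^p \lesssim \epsilon^p,
\end{equation*}
so the $\tilde f$'s form an $\cO(\epsilon)$-cover of $\sF$ in $\fL_p(\mu)$. Rescaling $\epsilon$ by the hidden absolute constant yields $\log \cN(\epsilon;\sF,\fL_p(\mu)) \lesssim \epsilon^{-s'/\beta} \log(1/\epsilon)$; since $s' < s$, the logarithmic factor is absorbed into $\epsilon^{-s/\beta}$ for $\epsilon$ sufficiently small, yielding the claim. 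The main technical care lies in the Taylor-plus-quantization step: one must track the $|\bs|$-dependent quantization precisions so that every term contributes $\lesssim \delta^\beta$, and note that the cover elements need not themselves lie in $\sF$---they merely form a valid external $\fL_p(\mu)$-cover.
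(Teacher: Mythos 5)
Your proof is correct and follows essentially the same route as the paper: pick a high-measure set $S$ whose covering number is controlled by the $p\beta$-entropic dimension, approximate by quantized piecewise Taylor polynomials on a net of $S$, absorb $S^{\complement}$ via $\|f\|_\infty\le C$ and $\mu(S^{\complement})\lesssim\epsilon^p$, and then swallow the residual $\log(1/\epsilon)$ by moving from $s'$ to $s$. The only cosmetic differences are that you set $\delta=\epsilon^{1/\beta}$ up front rather than rescaling at the end, and you use an $|\bs|$-dependent quantization precision $\delta^{\beta-|\bs|}$ where the paper uses a uniform $\delta=\epsilon^\beta$ mesh on $[-C,C]$; both give the same rate.
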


We note that if $\overline{\text{dim}}_M(\mu) < \infty$, then, by Proposition~\ref{lem1} (c), $\bar{d}_{p\beta}(\mu) \le \overline{\text{dim}}_M(\mu)$. Thus as an immediate consequence of Theorem \ref{kt}, we observe that, if $s>\overline{\text{dim}}_M(\mu)$, then, for $\epsilon$ small enough,  $\log\cN(\epsilon; \sF, \fL_p(\mu)) \lesssim \epsilon^{-s/\beta}$, recovering a similar result as observed by \cite{kolmogorov1961} (Lemma \ref{kt_og}).

Next, we characterize the rate of convergence of $\hmu_n$ to $\mu$ under the $\beta$-H\"{o}lder IPM. The proof technique stems from Dudley's seminal work \citep{dudley1969speed}. The idea is to construct a dyadic-like partition of the data space and approximate the functions through their Taylor approximations on each of these small pieces. The reader is referred to Appendix \ref{pf_cor1} for a proof of this result.
\begin{restatable}{thm}{pfcorrone}\label{corr1}
   Let $\sF = \sH^\beta(\Real^d, \Real, C)$, for some $C>0$. Then, for any $d^\star > d^\ast_\beta(\mu)$, we can find $n_0 \in \mathbb{N}$, such that if $n \ge n_0$, \[\E \|\hmu_n -\mu\|_{\sF} \lesssim n^{-\beta / d^\star}.\]
 Here $n_0$ might depend on $\mu$ and $d^\star$.
\end{restatable}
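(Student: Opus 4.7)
The plan is to adapt Dudley's classical dyadic-chaining argument, but working with the $(\epsilon,\tau)$-cover of $\mu$ in place of a fixed cover of the ambient support, so that only the intrinsic dimension $d^\ast_\beta(\mu)$ enters the rate. Fix $d^\star>d^\ast_\beta(\mu)$ and set $q:=d^\star\beta/(d^\star-2\beta)$, noting $q>\beta$ since $d^\star>2\beta$. For each dyadic scale $\epsilon_k=2^{-k}$ with $k$ sufficiently large, the definition of $d^\ast_\beta(\mu)$ supplies a set $S_k\subseteq\supp(\mu)$ with $\mu(S_k^c)\le\epsilon_k^q$ and a partition $\cP_k$ of $S_k$ into $N_k\le\epsilon_k^{-d^\star}$ cells of diameter at most $2\epsilon_k$.

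The next step is to approximate each $f\in\sF$ on every cell $A\in\cP_k$ (with center $x_A$) by its degree-$\lfloor\beta\rfloor$ Taylor polynomial $T_Af$, so that $|f-T_Af|\lesssim\epsilon_k^\beta$ on $A$. Define $f^{(k)}$ to equal $T_Af$ on each $A\in\cP_k$ and zero off $S_k$, and telescope:
\[
\int f\,d(\hmu_n-\mu)=\int f^{(k_0)}\,d(\hmu_n-\mu)+\sum_{k=k_0}^{K-1}\int(f^{(k+1)}-f^{(k)})\,d(\hmu_n-\mu)+\int(f-f^{(K)})\,d(\hmu_n-\mu).
\]
The tail term has $\sup_{f\in\sF}$-expectation $O(\epsilon_K^\beta+\mu(S_K^c))=O(\epsilon_K^\beta)$. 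For each intermediate level, decompose $g_f:=f^{(k+1)}-f^{(k)}$ into its restriction to the core $S_k\cap S_{k+1}$ and to the boundary $S_k\triangle S_{k+1}$. The core piece $g_f\one_{S_k\cap S_{k+1}}$ satisfies $\|\cdot\|_{L^\infty}\lesssim\epsilon_k^\beta$ and $\|\cdot\|_{L^2(\mu)}\lesssim\epsilon_k^\beta$, and lies in a family parameterized by $O(N_{k+1})$ bounded polynomial coefficients; a Bernstein/Dudley empirical-process bound then yields $\E\sup_{f\in\sF}|(\hmu_n-\mu)(g_f\one_{S_k\cap S_{k+1}})|\lesssim\epsilon_k^\beta\sqrt{N_{k+1}/n}$ up to log factors. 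The boundary piece is controlled uniformly via $\E\sup_f|(\hmu_n-\mu)(g_f\one_{S_k\triangle S_{k+1}})|\lesssim\mu(S_k\triangle S_{k+1})\lesssim\epsilon_k^q$.

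Summing, the core contributions $\sum_k\epsilon_k^\beta\sqrt{N_{k+1}/n}$ with $N_{k+1}\le\epsilon_{k+1}^{-d^\star}$ form a geometric series in $k$ with ratio $2^{(d^\star-2\beta)/2}>1$, so are dominated by the finest-scale term $\epsilon_K^{\beta-d^\star/2}/\sqrt{n}$. Equating this with the Taylor floor $\epsilon_K^\beta$ forces $\epsilon_K\asymp n^{-1/d^\star}$ and yields $\E\|\hmu_n-\mu\|_\sF\lesssim\epsilon_K^\beta=n^{-\beta/d^\star}$; the boundary sum $\sum_k\epsilon_k^q$ is strictly lower order because $q>\beta$.

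The main obstacle is that the sets $S_k$ produced by the $(\epsilon,\tau)$-cover need not be nested across scales, so each increment $g_f$ inherits a boundary contribution from $S_k\triangle S_{k+1}$ that is not controlled by the H\"older smoothness of $f$ alone. The core/boundary split above is precisely what allows $d^\ast_\beta(\mu)$ to appear in the final exponent, because the $\mu$-measure bound $\mu(S_k\triangle S_{k+1})\le 2\epsilon_k^q$ together with $q>\beta$ keeps the boundary terms subdominant; the naive alternative of forcing nested covers by taking unions blows up every $N_k$ to the worst-case value $\epsilon_K^{-d^\star}$, which collapses the chained series and recovers only the inferior single-scale rate $n^{-\beta/(d^\star+2\beta)}$.
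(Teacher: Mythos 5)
Your strategy mirrors the paper's own proof of Theorem~\ref{gen_kolmo}: dyadic-scale partitions derived from the $(\epsilon,\tau)$-cover, Taylor-polynomial approximation on each cell, a telescoping chain across scales, an empirical-process bound for piecewise polynomials (the paper's Lemma~\ref{lem_37}) on the good cells, and a $\mu$-mass bound on the bad set. The one genuine difference is bookkeeping: the paper builds an explicitly nested sequence of refinements $\{\sQ^\ell\}$ by aggregating finer cells into coarser ones after excising $S_{\ell,0}$, and subtracts the accumulated bad set $S_{s:t,0}$ once, whereas you retain the raw (non-nested) $S_k$'s and split each increment $f^{(k+1)}-f^{(k)}$ into a core piece on $S_k\cap S_{k+1}$ and a boundary piece on $S_k\triangle S_{k+1}$. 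Two steps in your route are not justified and need filling. First, the core increment is piecewise polynomial on the common refinement of $\cP_k$ and $\cP_{k+1}$, which a priori has up to $N_kN_{k+1}$ pieces; to guarantee $O(N_{k+1})$ pieces (and hence the $\epsilon_k^\beta\sqrt{N_{k+1}/n}$ bound) you must take a packing-based cover, via $\cN(\epsilon;S,\varrho)\le\cM(\epsilon;S,\varrho)$, so that each cell of $\cP_{k+1}$ meets only $O(1)$ cells of $\cP_k$ --- the paper's nested construction sidesteps this issue entirely. Second, the initial term $\int f^{(k_0)}\,d(\hmu_n-\mu)$ is never bounded, and the claim that the boundary sum $\sum_k\epsilon_k^q$ is ``strictly lower order'' is only true once the coarsest scale is pinned down: one needs $\epsilon_{k_0}\asymp n^{-(d^\star-2\beta)/(d^\star)^2}$ (exactly what the paper's choice of $s$ via $3^{-s}\le\epsilon^{(d^\prime-2\beta)/d^\prime}$ achieves), at which point both $\epsilon_{k_0}^q$ and the initial-term bound $\sqrt{N_{k_0}/n}$ land at the target $n^{-\beta/d^\star}$. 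Both gaps are routine to patch, and once they are, the argument yields the claimed rate.
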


Note that if $\beta, C = 1$, then, $\sF = \sH^\beta(\Real^d, \Real, 1)$ is the set of all 1-bounded and 1-Lipschitz functions (denoted as $\text{BL}$) on $\Real^d$. Thus, for any $d^\star > d_1^\ast(\mu)$, we observe that \(\E \|\hmu_n - \mu \|_{\text{BL}} \lesssim n^{-1/d^\star},\) recovering the bound derived by \citet{dudley1969speed}.
Furthermore, if $\mu$ has a bounded support, i.e. there exist $M>0$, such that, $\mu([-M,M]^d) =1$, then it is easy to see that $\E \sW_1(\hmu_n, \mu) \le  \E \|\hmu_n - \mu\|_{\sH^\beta([0,1]^d,\Real, M\sqrt{d})} \lesssim n^{-1/d^\star}$, which recovers the rates derived by \citet{weed2019sharp}, for Wasserstein-1 distance.

\section{Theoretical Analyses}\label{theo ana}
\subsection{Assumptions}\label{assumptions}
To lay the foundation for our analysis of GANs and BiGANs, we introduce a set of assumptions that form the basis of our theoretical investigations. These assumptions encompass the underlying data distribution, the existence of ``true" generator and encoder-decoder pairs, and certain smoothness properties.  For the purpose of the theoretical analysis, we assume that the data are independent and identically distributed from some unknown target distribution $\mu$ on $[0,1]^d$. This is a standard assumption in the literature \citep{liang2021well, JMLR:v23:21-0732} and is stated formally as follows:

\begin{ass}
    \label{dat_assumption}
    We assume that $X_1, \dots, X_n$ are independent and identically distributed according to the probability distribution $\mu$, such that $\mu\left([0,1]^d\right) = 1$.
\end{ass}

Furthermore, to facilitate the analysis of GANs, we introduce the concept of a ``true" generator, denoted as $\tilde{G}$. We assume that $\tilde{G}$ belongs to a function space $\sH^{\alpha_g}\left(\Real^\ell,\Real^d, C_g \right)$ and that the target distribution $\mu$ can be represented as $\mu = \tilde{G}_\sharp\nu$.
\begin{ass}\label{model1}
    There exists a $\tilde{G} \in \sH^{\alpha_g}\left(\Real^\ell,\Real^d, C_g \right)$, such that $\mu = \tilde{G}_\sharp\nu$.
\end{ass}
Similarly, for the BiGAN problem, we consider the existence of a smooth encoder-decoder pair, denoted as $\tilde{D}$ and $\tilde{E}$ respectively. These functions belong to the spaces $\sH^{\alpha_d}\left(\Real^\ell,\Real^d, C_d \right)$ and $\sH^{\alpha_e}\left(\Real^d,\Real^\ell, C_e \right)$, respectively, and satisfy the property that the joint distribution of $(X, {\tilde{E}}(X))$ is equal in distribution to $(\tilde{D}(Z), Z)$, where $X \sim \mu$ and $Z \sim \nu$. 
\begin{bss}\label{model2}
 There exists $\tilde{D} \in \sH^{\alpha_d}\left(\Real^\ell,\Real^d, C_d \right)$ and ${\tilde{E}} \in \sH^{\alpha_e}\left(\Real^d,\Real^\ell, C_e \right)$, such that, $(X, {\tilde{E}} (X)) \overset{d}{=}(\tilde{D} ( Z),Z)$. Here, $X \sim \mu$ and $Z \sim \nu$.
 \end{bss}
A direct consequence of Assumption B\ref{model2} is that the composition of the true encoder and decoder functions yields the identity map almost surely, as described in Lemma \ref{skorohod}. The notation, a.e. [$\mu$] denotes almost everywhere under the probability measure $\mu$.
\begin{restatable}{lem}{skorohod}\label{skorohod}
Under assumption~B\ref{model2}, $\tilde{D}\circ \tilde{E}(\cdot) = \text{id}(\cdot), \, \text{a.e.} \, [\mu]$ and $\tilde{E} \circ \tilde{D}(\cdot) = \text{id}(\cdot), \, \text{a.e.} \, [\nu]$. 
\end{restatable}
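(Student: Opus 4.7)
The plan is to exploit the joint-distribution equality $(X, \tilde{E}(X)) \overset{d}{=} (\tilde{D}(Z), Z)$ by pushing both sides through a cleverly chosen measurable map and then comparing regular conditional distributions. The two identities are symmetric, so I will describe the argument for $\tilde{D}\circ\tilde{E}=\mathrm{id}$ $\mu$-a.e.\ and obtain the other by swapping roles. Because $\tilde{D},\tilde{E}$ are continuous (being H\"older) and everything lives in Polish spaces, regular conditional distributions exist and all push-forwards used below are well defined.

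First I would read off the marginal identities: projecting $(X,\tilde{E}(X))\overset{d}{=}(\tilde{D}(Z),Z)$ onto each coordinate yields $\mu=\tilde{D}_\sharp\nu$ and $\nu=\tilde{E}_\sharp\mu$. These two push-forward identities will be used implicitly (they guarantee that the conditionings below are against the correct marginal).

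Next, I would apply the measurable map $T:(x,z)\mapsto(\tilde{D}(z),z)$ to both sides of the assumed joint equality. On the right we get $(\tilde{D}(Z),Z)$ back, and on the left we get $(\tilde{D}(\tilde{E}(X)),\tilde{E}(X))$. Combining with the original equality, this gives
\[
\bigl(\tilde{D}(\tilde{E}(X)),\tilde{E}(X)\bigr)\;\overset{d}{=}\;\bigl(X,\tilde{E}(X)\bigr).
\]
Now I would disintegrate both sides with respect to their common second marginal (which is $\nu$). The conditional distribution of the first coordinate on the left given $\tilde{E}(X)=z$ is the Dirac mass $\delta_{\tilde{D}(z)}$, simply because the first coordinate is a deterministic function of the second. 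Uniqueness of the regular conditional distribution $\nu$-a.e.\ then forces the conditional law of $X$ given $\tilde{E}(X)=z$ to equal $\delta_{\tilde{D}(z)}$ as well, i.e.\ $X=\tilde{D}(\tilde{E}(X))$ almost surely under $\mu$. This is precisely $\tilde{D}\circ\tilde{E}=\mathrm{id}$ $\mu$-a.e.

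For the second identity I would use the symmetric map $S:(x,z)\mapsto(x,\tilde{E}(x))$ applied to both sides of the joint equality, yielding $(X,\tilde{E}(X))\overset{d}{=}(\tilde{D}(Z),\tilde{E}(\tilde{D}(Z)))$, and then conditioning on the first coordinate (common marginal $\mu$) by the same argument. The only subtlety worth checking is the existence of the regular conditional distributions, which is automatic on Polish spaces; beyond that, the proof is a purely measure-theoretic manipulation with no genuine obstacle.
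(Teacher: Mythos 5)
Your argument is correct, but it takes a noticeably heavier route than the paper's. The paper's proof is a one-liner: apply the distributional equality $(X,\tilde{E}(X))\overset{d}{=}(\tilde{D}(Z),Z)$ to the single bounded measurable test function $g(x,y)=\|\tilde{D}(y)-x\|_2^2$, giving
$\int\|\tilde{D}\circ\tilde{E}(x)-x\|_2^2\,d\mu(x)=\int\|\tilde{D}(z)-\tilde{D}(z)\|_2^2\,d\nu(z)=0$,
whence $\tilde{D}\circ\tilde{E}=\mathrm{id}$ $\mu$-a.e.; the second identity follows from the symmetric choice $g(x,y)=\|\tilde{E}(x)-y\|_2^2$. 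Your proof instead pushes the whole joint law through $T(x,z)=(\tilde{D}(z),z)$ and then disintegrates with respect to the second coordinate, invoking existence and $\nu$-a.e.\ uniqueness of regular conditional distributions on Polish spaces. The underlying observation is the same in both cases --- that $\tilde{D}$ applied to the second coordinate reproduces the first coordinate on the $(\tilde{D}(Z),Z)$ side --- but the paper extracts the conclusion by integrating a single scalar function rather than reasoning about conditional kernels. Both arguments are valid; the paper's avoids the RCD machinery entirely and makes clear that no topology (Hölder continuity, Polish structure) is needed beyond measurability. One small note: you write that continuity of $\tilde{D},\tilde{E}$ is used to justify existence of RCDs, but RCD existence on Polish spaces needs only measurability, which holds automatically; the continuity plays no role here in either proof.
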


 \subsection{Main Results}\label{main results}
 Under assumptions A\ref{dat_assumption}, A\ref{model1} and B\ref{model2}, one can bound the expected excess risk of the GAN and BiGAN estimates of the target distribution in terms of the sample size with the exponent only depending on the upper Wasserstein dimension of the target distribution and the exponent of the H\"{o}lder IPM considered.  The main results of this paper are summarized in the following two theorems.  We recall that $\Phi = \sH^\beta(\Real^d, \Real, 1)$ and $\Psi= \sH^\beta(\Real^{d+\ell},\Real,1)$ denote the discriminator classes and $\hat{G}_n$, $\hat{G}_{n,m}$ denote the sample minimizers for the GAN-problem, defined in \eqref{gan_est}. Similarly, we recall the notations $(\hat{D}_n, \hat{E}_n) $  and $(\hat{D}_{n,m}, \hat{E}_{n,m})$ as the sample minimizers for the BiGAN-problem, defined in \eqref{bigan_est1} and \eqref{bigan_est2}, respectively. 
\begin{restatable}[Error rate for GANs]{thm}{pfgan}\label{main_gan}
     Suppose assumptions~A\ref{dat_assumption} and A\ref{model1} hold and let $s > d^\ast_\beta(\mu)$. There exist constants $N$, $c$ that might depend on $d, \ell, \alpha_g, \beta$ and $ \tilde{G}$, such that, if $n \ge N$, we can choose  $\sG = \cR \cN (L_g, W_g, B_g, 2C_g)$ with the network parameters as
    \( L_g \le c \log n,\, W_g \le c n^{\frac{\beta \ell}{\alpha_g s   (\beta \wedge 1)}} \log n.\)
    Then,
    \begin{equation}\label{m1}
        \E \|\mu - (\hG_n)_\sharp \nu\|_{\Phi} \lesssim  n^{-\beta/s} .
    \end{equation}
   Furthermore, if $m \ge \upsilon_n$, where, $\upsilon_n = \inf \left\{m \in \mathbb{N}: \frac{(\log m)^2}{ m^{\left(\max\left\{2 + \frac{\ell}{\alpha_g(\beta \wedge 1)}, \frac{d}{\beta}\right\}\right)^{-1}}} \le n^{-\beta/s}\right\}$, and the network parameters are chosen as,
   \( L_g \le c \log m,\, W_g \le c m^{\frac{\ell}{2 \alpha_g(\beta \wedge 1) + \ell}} \log m,\)
   then
   \begin{equation}
       \label{m2}
       \E \|\mu - (\hG_{n,m})_\sharp \nu\|_{\Phi} \lesssim  n^{-\beta/s} .
   \end{equation}
\end{restatable}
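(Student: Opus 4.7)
The proof plan uses the standard oracle-approximation decomposition for GAN excess risk. By the optimality of $\hG_n$ over $\sG$, for any $G \in \sG$ we have $\|\hmu_n - (\hG_n)_\sharp\nu\|_\Phi \le \|\hmu_n - G_\sharp\nu\|_\Phi$, so the triangle inequality yields
\begin{equation}
\|\mu - (\hG_n)_\sharp \nu\|_\Phi \;\le\; 2\,\|\mu - \hmu_n\|_\Phi \;+\; \inf_{G \in \sG}\|\mu - G_\sharp \nu\|_\Phi.
\end{equation}
The first (statistical) term is handled directly by Theorem \ref{corr1}: for any $s > d^\ast_\beta(\mu)$ and $n$ large enough, $\E\|\mu - \hmu_n\|_\Phi \lesssim n^{-\beta/s}$.

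For the approximation term, I would exhibit a ReLU network $G \in \sG$ close to the true generator $\tilde G$ from Assumption A\ref{model1}. Yarotsky-type approximation gives, for each $\epsilon>0$, a network of depth $\cO(\log(1/\epsilon))$ and $\cO(\epsilon^{-\ell/\alpha_g})$ weights such that $\|G - \tilde G\|_{\fL_\infty([0,1]^\ell)} \le \epsilon$. Since every $\phi \in \Phi = \sH^\beta(\Real^d,\Real,1)$ is $(\beta\wedge 1)$-Hölder with seminorm at most $1$,
\begin{equation}
\|\mu - G_\sharp\nu\|_\Phi \;=\; \sup_{\phi\in\Phi}\E_\nu\bigl[\phi(\tilde G(Z)) - \phi(G(Z))\bigr] \;\le\; \|\tilde G - G\|_{\fL_\infty}^{\beta\wedge 1}.
\end{equation}
Choosing $\epsilon = n^{-\beta/(s(\beta\wedge 1))}$ makes this at most $n^{-\beta/s}$ and yields exactly the depth $\cO(\log n)$ and weight count $\cO(n^{\beta\ell/(\alpha_g s(\beta\wedge 1))}\log n)$ stated in the theorem. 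Taking expectations and combining the two bounds delivers \eqref{m1}.

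For the second claim involving $\hnu_m$, the decomposition picks up an additional generalization term over the latent measure,
\begin{equation}
\sup_{G \in \sG}\|G_\sharp\hnu_m - G_\sharp\nu\|_\Phi \;=\; \|\hnu_m - \nu\|_{\Phi \circ \sG}.
\end{equation}
To control it I would bound $\log\cN(\delta;\Phi\circ\sG,\fL_\infty)$ via standard ReLU covering estimates (of order $W_gL_g\log(W_gL_g/\delta)$) combined with the $(\beta\wedge 1)$-Hölder smoothness of $\Phi$, then apply a Dudley/chaining bound so that $\E\|\hnu_m - \nu\|_{\Phi\circ\sG}$ is controlled by a power of $m$ and $W_g$. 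Balancing this generalization cost against the approximation cost $\epsilon^{\beta\wedge 1}$ yields the Yarotsky scale $W_g \asymp m^{\ell/(2\alpha_g(\beta\wedge 1)+\ell)}\log m$ stated in the theorem. The threshold $\upsilon_n$ is then exactly the smallest $m$ for which this latent-generalization term is dominated by $n^{-\beta/s}$.

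The main obstacle is the approximation step: I need Yarotsky-type bounds where the exponent depends on the latent dimension $\ell$ (not on $d$) and to transport the $\fL_\infty$ error through the pushforward while tracking the subtle $(\beta\wedge 1)$ exponent arising from Hölder test functions. A secondary subtlety, for the $\hnu_m$ case, is carefully analyzing the covering number of the composed class $\Phi\circ\sG$ and balancing the resulting generalization term against the approximation budget so that a single rate $n^{-\beta/s}$ emerges once $m \ge \upsilon_n$.
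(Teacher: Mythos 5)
Your proposal is correct and follows essentially the same route as the paper's proof: the oracle decomposition of Lemma~\ref{oracle}(a), Theorem~\ref{corr1} for the statistical term, a $\cO(\epsilon^{-\ell/\alpha_g}\log(1/\epsilon))$-weight ReLU approximation of the true generator $\tilde G$ on the $\ell$-dimensional latent space, the pushforward argument via the $(\beta\wedge 1)$-H\"{o}lder property of $\Phi$, and then Dudley chaining on $\Phi\circ\sG$ for the $\hnu_m$ case, with the balance giving the stated exponents and $\upsilon_n$. The paper realizes the approximation step via its own Theorem~\ref{approx} in $\fL_1(\nu)$ (Lemma~\ref{approx_1}), whereas you invoke an $\fL_\infty$ Yarotsky bound on $[0,1]^\ell$; since $\fL_\infty$ control dominates $\fL_1(\nu)$ and the latent cube already has Minkowski dimension $\ell$, the two give the same scaling here, so the difference is cosmetic. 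One step you describe loosely that the paper makes precise: for $\log\cN(\delta;\Phi\circ\sG,\ell_\infty)$ (Corollary~\ref{cor1}) one needs to multiply a cover of the ReLU class $\sG$ (pseudo-dimension based, of order $W_gL_g(\log W_g + L_g)\log(m/\delta)$) by a cover of the ambient discriminator class $\Phi$ on $\Real^d$ (of order $\delta^{-d/\beta}$, from Lemma~\ref{kt_og}); the H\"{o}lder property of $\Phi$ only lets you transfer the $\sG$-cover, not bypass the $\Phi$-cover. The latter contributes the $m^{-\beta/d}$ term in Lemma~\ref{gen} and hence the $d/\beta$ part of $\upsilon_n$, so it cannot be dropped.
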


\begin{restatable}[Error rate for BiGANs]{thm}{pfbigan}\label{main_bigan}
    Suppose assumptions A\ref{dat_assumption} and B\ref{model2} hold and let $s_1 > \bar{d}_{\alpha_e}(\mu)$ and $s_2 > d^\ast_\beta(\mu)$. There exists constants $N$, $c$ that might depend on $d, \ell, \alpha_d, \alpha_e, \beta, \tilde{D}$ and $\tilde{E}$, such that, if $n \ge N$, we can choose the networks $\sE = \cR \cN (L_e, W_e, B_e, 2C_e)$ and $\sD = \cR \cN (L_d, W_d, B_d, 2C_d)$ with the network parameters chosen as
    \begin{align}
        L_e \le c  \log n,\, W_e \le c  n^{\frac{s_1}{2 \alpha_e (\beta \wedge 1) + s_1}} \log n, \, L_d \le c \log n,\, W_d \le c  n^{\frac{ \beta \ell }{\alpha_d(\beta \wedge 1)(s_2+\ell)}} \log n. \label{m45}
    \end{align}
    Then,
    \begin{equation}\label{m3}
        \E \|(\mu, (\hE_n)_\sharp \mu) - ((\hD_n)_\sharp \nu, \nu)\|_{\Psi} \lesssim  n^{-\frac{1}{\max\left\{2  + \frac{s_1}{\alpha_e(\beta \wedge 1)}, \frac{s_2 + \ell}{\beta} \right\}}} (\log n)^2. 
    \end{equation}
Furthermore, if the network parameters are chosen as
\begin{align}
    L_e \le c \log n,\, W_e \le c n^{\frac{s_1}{2 \alpha_e (\beta \wedge 1) + s_1}} \log n, \, L_d \le c \log m,\, W_d \le c m^{\frac{\ell}{2 \alpha_d(\beta \wedge 1) + \ell}} \log m, \label{e30}
\end{align}
   then, \begin{align}
       & \E \|(\mu, (\hE_{n,m})_\sharp \mu) - ((\hD_{n,m})_\sharp \nu, \nu)\|_{\Psi} \nonumber\\
       \lesssim & n^{-\frac{1}{\max\left\{2  + \frac{s_1}{\alpha_e(\beta \wedge 1)}, \frac{s_2 + \ell}{\beta} \right\}}} (\log n)^2 + m^{-\frac{1}{\max\left\{2 + \frac{d}{\alpha_g (\beta \wedge 1)}, \frac{\ell + d}{\beta}\right\}}} (\log m)^2. \label{m44}
   \end{align}
\end{restatable}
It should be noted that the dependence on $d$ in $N$ and $c$ may be influenced by a multitude of factors including the constants in Theorems 18 and 21 and thus, can potentially be exponential. Additionally, although the BiGAN problem might seem symmetric on the surface, there are subtle asymmetries that make the bounds on $W_e$ and $W_d$ asymmetric in \eqref{m45}. For example, for the estimation problem \eqref{bigan_est1}, one has full access to $\nu$ but only $n$ samples from $\mu$. The bounds for $W_e$ and $W_d$ in \eqref{e30} are, however, more symmetric since one has access to $m$ samples from $\nu$ instead of the entire distribution in the estimation problem \eqref{bigan_est2}, resulting in a bound that is more symmetric in $m$ and $n$ as shown in \eqref{e30}. Since one has the luxury to generate as many samples as one wants from $\nu$, increasing $m$ matches the rate of \eqref{m44} to \eqref{m3} as shown in the following corollary.
\begin{restatable}{cor}{cornew}\label{cor_m}
    Let $\vartheta = \frac{\max\left\{2  + \frac{\ell}{\alpha_d(\beta \wedge 1)}, \frac{d + \ell}{\beta} \right\}}{\max\left\{2  + \frac{s_1}{\alpha_e(\beta \wedge 1)}, \frac{s_2 + \ell}{\beta} \right\}} \vee 1$. Under the assumptions of Theorem~\ref{main_bigan}, if $m \ge n^\vartheta$, and the network parameters are chosen as
    \begin{align}
        L_e \le c \log n,\, W_e \le c n^{\frac{s_1}{2 \alpha_e (\beta \wedge 1) + s_1}} \log n, \, L_d \le c \log n,\, W_d \le c m^{\frac{\ell}{2 \alpha_d(\beta \wedge 1) + \ell}} \log n, \label{m46}
    \end{align}
   then, 
   \begin{equation}\label{m4}
       \E \|(\mu, (\hE_{n,m})_\sharp \mu) - ((\hD_{n,m})_\sharp \nu, \nu)\|_{\Psi} \lesssim  n^{-\frac{1}{\max\left\{2  + \frac{s_1}{\alpha_e(\beta \wedge 1)}, \frac{s_2 + \ell}{\beta} \right\}}} (\log n)^2 .
   \end{equation}
\end{restatable}
\paragraph{Implications for BiGANs}
    We note that when the true model for the BiGAN problem is assumed to be Lipschitz, i.e. if $\alpha_d = \alpha_e =1$ in B\ref{model2} and we take the $\|\cdot\|_{\text{BL}}$-metric i.e. the IPM w.r.t. $\Psi = \sH^1(\Real^{d+\ell}, \Real,1)$, then, the excess risk for the BiGAN problem roughly scales as $\tilde{\cO}\left(n^{-1/\left(\max\left\{2  + \bar{d}_1(\mu),  d^\ast_1(\mu) + \ell \right\}\right)}\right)$, barring the log-factors. Thus, the expected excess risk scales at a rate of $\tilde{\cO}\left(n^{-1/\left(\bar{d}_1(\mu) \vee d^\ast_1(\mu) + \ell \vee 2\right)}\right)$. In practice, one usually takes $\ell \ge 2$, and by the fact that $\bar{d}_1(\mu) \le d^\ast_1(\mu)$ (Proposition \ref{lem1} (a)), we observe that this rate is at most  $\tilde{\cO}\left(n^{-1/( d^\ast_1(\mu) + \ell)}\right)$, which is akin to the minimax estimation rate for estimating the joint distribution of $\mu \otimes \nu$ on $\Real^{d+\ell}$ \citep{niles2022minimax,singh2018minimax}. 
    
    Further, we note that Assumption B\ref{model2}  implies that the supports of $\mu$ and $\nu$ are homeomorphic, which is rather restrictive. The proof of this result can be found in Appendix~\ref{prop_19_pf}.
    \begin{restatable}{propos}{propnineteen}\label{prop19}
    Under Assumption B\ref{model2}, the supports of $\mu$ and $\nu$ are homeomorphic.
    \end{restatable}
    It would be interesting to see if this assumption could be lifted to derive an analogous result akin to Theorem \ref{main 2} for the BiGAN problem as well. The main hindrance is that although for $\hat{\mu}_n$, one can construct $D$ such that $D_\sharp \nu \approx \hat{\mu}_n$, the same cannot be said the other way around, i.e. there might not exist a $E$ such that $E_\sharp \hat{\mu}_n \approx \nu$. Further, both these approximations have to hold jointly, which is even more difficult to show. We believe that this would be an interesting direction for future research.
\paragraph{Comparison of Rates with Recent Literature}
    We also note that for the GAN problem, the expected error for estimating the target density through $\hG_\sharp \mu$ roughly scales as $\E \|\mu- \hG_\sharp \nu\|_{\sH^\beta} = \cO\left(n^{-\beta/d^\ast_\beta(\mu)} \right)$. For high-dimensional data, it can be expected that $\overline{\text{dim}}_M(\mu) \gg 2 \beta$, which (by Lemma~\ref{lem_weed}) would imply that, $d^\ast_\beta(\mu) \le \overline{\text{dim}}_M(\mu)$, with strict inequality holding in many cases (see Examples \ref{eg1} and \ref{eg2}). We thus observe that the derived rate,  derived rate is faster than 
    the ones derived by \citet{JMLR:v23:21-0732} and \citet{chen2020distribution}, who showed that the error rates scale as $\tilde{\cO}\left(n^{-\beta/\overline{\text{dim}}_M(\mu)} \vee n^{-1/2}\right)$ and $\tilde{\cO}\left(n^{-\beta/(2 \beta + d)} \right)$, respectively.  
 We note that the removal of the excess log-factor is an artifact of the requirement that $s>d^\ast_\beta(\mu)$ and one can similarly do away with the log-factors in the rates derived by \citet{JMLR:v23:21-0732}. 
\paragraph{Inference for Data supported on a Manifold}
Recall that we call a set $\sA$ is $\tilde{d}$-regular w.r.t. the $\tilde{d}$-dimensional Hausdorff measure (see Definition~\ref{defn_hausdorff}) $\fH^{\tilde{d}}$, if \(\fH^{\tilde{d}}(B_\varrho(x, r)) \asymp r^{\tilde{d}},\)
for all $x \in \sA$ (see Definition 6 of \cite{weed2019sharp}). It is known \citep[Proposition 8]{weed2019sharp} that if $\operatorname{supp}(\mu)$ is $d$ regular and $\mu \ll \fH^{\tilde{d}}$, for $\beta \in [1, \tilde{d}/2]$, $d_\ast(\mu) = d^\ast_\beta(\mu) = \tilde{d}$. Thus, by Theorem~\ref{main_gan}, the error rates for GANs scale at a rate of $\tilde{\cO}(n^{-\beta/\tilde{d}})$. Since, compact $\tilde{d}$-dimensional differentiable manifolds are $\tilde{d}$-regular \citep[Proposition 9]{weed2019sharp}, this implies that when the support of $\mu$ is a compact $\tilde{d}$-dimensional differentiable manifold, the error rates scale as $\tilde{\cO}(n^{-\beta/\tilde{d}})$, which recovers a similar result as that by \cite{dahal2022deep} as a special case. A similar result holds when $\text{supp}(\mu)$ is a nonempty, compact convex set spanned by an affine space of dimension $\tilde{d}$; the relative boundary of a nonempty, compact convex set of dimension $\tilde{d}+1$; or a self-similar set with similarity dimension $\tilde{d}$.

\paragraph{Network Sizes}
    We note that Theorems \ref{main_gan} and \ref{main_bigan} suggest that the depths of the encoders and decoders can be chosen as the order of $\log n$ and the number of weights can be chosen as some exponent of $n$ and $m$. The degrees of these exponents only depend on $\bar{d}_\beta(\mu)$, $d^\ast_\beta(\mu)$ and $\ell$ i.e. the dimension of the problem and the smoothness of the true model. The bounds on the number of weights in Theorems~\ref{main_gan} and~\ref{main_bigan} also show that one requires a smaller network if one lets $\alpha_e, \, \alpha_g$ and $\alpha_d$ grow, i.e. when the true encoder, generator/decoder are very smooth and well-behaved. This is because the optimal choices for $W_e$ and $W_g$ (or $W_d$) in terms of $n$ in Theorems \ref{main_gan} and \ref{main_bigan}, decreases as $\alpha_e, \, \alpha_g$ and $\alpha_d$ increase. This is quite expected that one requires less complicated architectures when the target densities are smooth enough.
    
\paragraph{Scaling of $m$} We note that in Theorem~\ref{main_gan}, for the estimator $\hat{G}_{n,m}$,  one roughly requires $m \gtrsim n^{\frac{\beta}{d^\ast_\beta(\mu)} \max\{2+\ell/(\alpha_g (\beta\wedge 1)), d/\beta\}}$. If the true generator is Lipschitz, $\alpha_g = 1$ and one considers the difference in the BL-metric, i.e. $\beta =1$, then, $m \gtrsim n^{\max\{d, (2+\ell)\}/d^\ast_\beta(\mu)} = n^{d/d^\ast_\beta(\mu)}$, for all practical purposes as $d \gg \ell$.  Thus, the dimension affects the sample scaling; higher $d$ values necessitate more generated samples to achieve equivalent accuracy. This dimension-dependent scaling also appears in recent works by \citet{JMLR:v23:21-0732}. It would be interesting to explore whether this dependency can be reduced in future research. Nonetheless, since practitioners can generate as many samples as needed from $\nu$, this dependency may be less limiting in practical settings.

The remainder of Section \ref{theo ana} discusses the proofs of Theorems \ref{main_gan} and \ref{main_bigan} by first decomposing the error into misspecification and generalization terms and individually bounding them.  
\subsection{Proof of the Main Results}\label{pf_main_result}
\subsubsection{Error Decomposition}\label{errord}
As a first step towards deriving bounds on the expected risks for both the GAN and BiGAN problems, we now derive the following oracle inequality that bounds this excess risk in terms of the approximation error and a generalization gap with proof in Appendix \ref{p_oracle}. A similar result was derived by \citet{JMLR:v23:21-0732} for analyzing the GAN problem.
\begin{restatable}[Oracle inequalities for GANs and BiGANs]{lem}{pforacle} \label{oracle}
    Suppose that $\Phi = \sH^\beta(\Real^d, \Real, 1)$ and $\Psi = \sH^\beta(\Real^{d+\ell}, \Real, 1)$, then 
    
    (a) if $\hG_n$ and $\hG_{n,m}$ are the  generator estimates defined in \eqref{gan_est}, the following hold:
    \begingroup
    \allowdisplaybreaks
    \begin{align}
       &  \|\mu - (\hG_n)_\sharp \nu\|_{\Phi} \le \inf_{G \in \sG} \|\mu - G_\sharp \nu\|_{\Phi} + 2 \|\mu -\hmu_n\|_{\Phi}, \label{o1}\\
        &  \|\mu - (\hG_{n,m})_\sharp \nu\|_{\Psi} \le \inf_{G \in \sG} \|\mu - G_\sharp \nu\|_{\Phi} + 2 \|\mu -\hmu_n\|_{\Phi} + 2 \|\nu - \hnu_m\|_{\Phi \circ \sG}. \label{o2}
    \end{align} 
    \endgroup
    
    (b) if $(\hD_n, \hE_n)$ and $(\hD_{n,m}, \hE_{n,m})$ are the  decoder-encoder estimates defined in \eqref{bigan_est1} and \eqref{bigan_est2}, respectively, the following hold:
    \begingroup
    \allowdisplaybreaks
    \begin{align}
          \|(\mu, (\hE_n)_\sharp \mu) - (\hD_n)_\sharp \nu, \nu)\|_{\Psi}\le & \inf_{D \in \sD, \, E \in \sE}\|(\mu, E_\sharp \mu) - (D_\sharp \nu, \nu)\|_{\Psi} + 2 \|\hmu_n - \mu\|_{\sF_1}, \label{o3}\\
         \|(\mu, (\hE_{n,m})_\sharp \mu) - (\hD_{n,m})_\sharp \nu, \nu)\|_{\Psi} \le & \inf_{D \in \sD, \, E \in \sE}\|(\mu, E_\sharp \mu) - (D_\sharp \nu, \nu)\|_{\Psi} + 2 \|\hmu_n - \mu\|_{\sF_1} \nonumber \\
       & + 2 \|\hnu_m - \nu\|_{\sF_2}, \label{o4}
    \end{align}
    \endgroup
    where, $\sF_1 = \{\psi(\cdot, E(\cdot)): \psi \in \Psi, \, E \in \sE\}$ and $\sF_2 = \{\psi(D(\cdot),\cdot): \psi \in \Psi, \, D \in \sD\}$.
\end{restatable}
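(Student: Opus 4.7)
The plan is to derive all four inequalities by standard add-subtract triangle arguments, where the intermediate distribution is chosen so as to invoke the defining optimality of each empirical minimizer against an arbitrary competitor in $\sG$ (or $\sD \times \sE$). Throughout, I will work with near-infimizers and let the slack go to zero to handle the infima, which are not assumed to be attained.

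For \eqref{o1}, I would pick $G^\star \in \sG$ attaining $\inf_{G \in \sG}\|\mu - G_\sharp \nu\|_\Phi$ up to an arbitrary $\varepsilon>0$, and write
\[
\|\mu - (\hG_n)_\sharp \nu\|_\Phi \le \|\mu - \hmu_n\|_\Phi + \|\hmu_n - (\hG_n)_\sharp \nu\|_\Phi \le \|\mu - \hmu_n\|_\Phi + \|\hmu_n - G^\star_\sharp \nu\|_\Phi,
\]
where the second step uses the empirical optimality of $\hG_n$ in \eqref{gan_est}. One more triangle step inserting $\mu$ into the last term yields \eqref{o1}. For \eqref{o2}, the twist is that $\hG_{n,m}$ is optimal only against $\hnu_m$, so I would chain through $\hmu_n$ \emph{and} $(\hG_{n,m})_\sharp \hnu_m$. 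The key observation is that for any $G \in \sG$ and any $\rho, \rho^\prime \in \Pi_\sZ$,
\[
\|G_\sharp \rho - G_\sharp \rho^\prime\|_\Phi = \sup_{\phi \in \Phi}\left|\int \phi \circ G \, d(\rho - \rho^\prime)\right| \le \|\rho - \rho^\prime\|_{\Phi \circ \sG},
\]
which, applied with $G = \hG_{n,m}$ and $G = G^\star$ on either side of the empirical-optimality step, produces the two copies of $\|\nu - \hnu_m\|_{\Phi \circ \sG}$.

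For the BiGAN bounds \eqref{o3} and \eqref{o4}, the template is identical but the intermediate measures now live on $\sX \times \sZ$. The observation giving rise to the classes $\sF_1, \sF_2$ is that for any $E \in \sE$ and measures $\rho, \rho^\prime$ on $\sX$,
\[
\|(\rho, E_\sharp \rho) - (\rho^\prime, E_\sharp \rho^\prime)\|_\Psi = \sup_{\psi \in \Psi}\left|\int \psi(x, E(x)) \, d(\rho - \rho^\prime)(x)\right| \le \|\rho - \rho^\prime\|_{\sF_1},
\]
by construction of $\sF_1$, and analogously for any $D \in \sD$ with $\sF_2$. To prove \eqref{o3}, I insert $(\hmu_n, (\hE_n)_\sharp \hmu_n)$ between the two joint distributions on the left, which by the display above contributes one $\|\mu - \hmu_n\|_{\sF_1}$. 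Invoking the empirical optimality of $(\hD_n, \hE_n)$ from \eqref{bigan_est1} against a near-infimizing pair $(D^\star, E^\star)$, then inserting $(\mu, E^\star_\sharp \mu)$ recovers $\|(\mu, E^\star_\sharp \mu) - (D^\star_\sharp \nu, \nu)\|_\Psi$ plus a second copy of $\|\mu - \hmu_n\|_{\sF_1}$, yielding the advertised bound after letting the slack vanish. For \eqref{o4}, an additional insertion of $((\hD_{n,m})_\sharp \hnu_m, \hnu_m)$ on the right end and of $(D^\star_\sharp \hnu_m, \hnu_m)$ inside the optimality step produces two copies of $\|\nu - \hnu_m\|_{\sF_2}$ by the mirror-image of the display.

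I do not expect a substantive obstacle: the entire argument is a careful bookkeeping of four or five triangle inequalities. The only point requiring mild care is the identification of the correct IPM-classes on which the residual discrepancies live, namely $\Phi \circ \sG$, $\sF_1$, and $\sF_2$ rather than $\Phi$ or $\Psi$; this is forced by the fact that the empirical optimality is exercised at a \emph{fixed} pushforward $G_\sharp$ (respectively evaluation map $(x, E(x))$) and thus each residual can only be controlled uniformly over $\sG$ (respectively $\sE$, $\sD$). A brief $\varepsilon \downarrow 0$ argument handles the infima cleanly.
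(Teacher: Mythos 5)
Your proposal is correct and follows essentially the same approach as the paper: a chain of triangle inequalities routed through the empirical distributions and the empirical optimality of the minimizer, with the residual pushforward/composition terms identified as living in $\Phi\circ\sG$, $\sF_1$, and $\sF_2$ exactly as the paper does. The only cosmetic difference is that you work with near-infimizers $G^\star$ (resp.\ $(D^\star,E^\star)$) and let the slack vanish, whereas the paper fixes an arbitrary competitor and takes the infimum at the end; these are equivalent.
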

Using Lemma~\ref{oracle}, we aim to bound each of the terms individually in the following sections. The misspecification errors are controlled by deriving a new approximation result involving ReLU networks, whereas, the generalization gap is tackled through empirical process theory. It is important to note that in Lemma \ref{oracle}, the term $\|\mu - \hat{\mu}_n\|_\Phi$, which scales at a rate of roughly $\cO\left(n^{-\beta/d^\ast_\beta(\mu)}\right)$ (by Theorem \ref{corr1}), dominates the other error terms as can be observed in the proof, leading to the error rate in Theorem \ref{main_gan}. 
\subsubsection{Bounding the Misspecification Error}\label{sec_mis}
The approximation capabilities of neural networks have received a lot of attention in the past decade. The seminal works of \cite{cybenko1989approximation} and  \cite{hornik1991approximation} delve into the universal approximation of networks with sigmoid-like activations to show that wide one-hidden-layer neural networks can approximate any continuous function on a compact set. With the recent advancements in deep learning, there has been a surge of research investigating the approximation capabilities of deep neural networks \citep{yarotsky2017error,petersen2018optimal,shen2022optimal,JMLR:v21:20-002,schmidt2020nonparametric}.

In this section, we show how a ReLU network with a large enough depth and width can approximate any function lying on a low-dimensional structure.  We suppose that $f \in \sH^\alpha \left( \Real^{d} , \Real, C\right)$ and $\gamma$ is a measure on $\Real^d$. We show that for any $\epsilon>0$ and $s>\bar{d}_{\alpha p }(\gamma)$, we can find a ReLU network $\hat{f}$ of depth at most $\cO(\log(1/\epsilon))$ and number of weights at most $\cO(\epsilon^{-s/\alpha}\log(1/\epsilon))$ such that $\|f - \hat{f}\|_{\fL_p(\gamma)} \le \epsilon$. Note that when $\text{supp}(\mu)$ has a finite Minkowski dimension, by Proposition~\ref{lem1} (c), we observe that $\bar{d}_{\alpha p} \le \overline{\text{dim}}_M(\mu)$. Thus,  the number of weights required for an $\epsilon$-approximation, in the $\fL_p$ sense, requires at most $\cO(\epsilon^{-\overline{\text{dim}}_M(\mu)/\alpha} \log(1/\epsilon))$, recovering a similar result as derived by \citet{JMLR:v21:20-002} as a special case. Note that the required number of weights for low-dimensional data, i.e. when $\bar{d}_{\alpha p}(\gamma) \ll d$, is much smaller than $\cO(\epsilon^{-d/\alpha} \log(1/\epsilon))$, that holds when approximating on the entire space w.r.t. $\ell_\infty$-norm \citep{yarotsky2017error,chen2019efficient}. The idea is to approximate the Taylor series expansion of the corresponding H\"{o}lder functions. The general proof technique was developed by \cite{yarotsky2017error}. Our result is formally stated in the following theorem.
\begin{restatable}{thm}{pfapprox}\label{approx}
    Suppose that $f \in \sH^\alpha(\Real^{d}, \Real,C)$, for some $C >0$ and let $s > \bar{d}_{\alpha p}(\mu)$. Then, we can find constants  $\epsilon_0$ and $a$, that might depend on $\alpha$, $d$ and $C$, such that, for any $\epsilon \in (0, \epsilon_0]$, there exists a ReLU network, $\hat{f}$ with $\cL(\hat{f}) \le a \log(1/\epsilon)$, $\cW(\hat{f}) \le a \log(1/\epsilon)  \epsilon^{-s/\alpha}$, $\cB(\hat{f}) \le a \epsilon^{-1/\alpha}$ and $\cR(\hat{f}) \le 2C$, that satisfies, 
    \(\|f-\hat{f}\|_{\fL_p(\gamma)} \le \epsilon .\) 
\end{restatable}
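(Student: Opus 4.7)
The strategy is to combine the entropic-dimension covering bound with a Yarotsky-style local Taylor approximation. Since $s > \bar{d}_{\alpha p}(\gamma)$, the definition of the entropic dimension produces, for all sufficiently small $\delta$, a set $S_\delta \subseteq \Real^d$ with $\gamma(S_\delta) \ge 1 - \delta^{\alpha p}$ admitting an $\ell_\infty$-cover of cardinality at most $\delta^{-s}$. I would set $\delta \asymp \epsilon^{1/\alpha}$, so the cover has size $\cO(\epsilon^{-s/\alpha})$ and $\gamma(S_\delta^c) \lesssim \epsilon^p$. Replacing each ball of the cover by the surrounding cubes of a regular $\delta$-grid multiplies the cardinality by at most $2^d$, a dimensional constant, and yields a cover with bounded multiplicity of overlap, which is essential below.

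On each cube $Q_i$ of this grid, centred at $x_i$, let $P_i$ denote the order-$\lfloor\alpha\rfloor$ Taylor polynomial of $f$ around $x_i$; being $\alpha$-H\"older with $\|f\|_{\sH^\alpha}\le C$, the function $f$ satisfies $|f(x) - P_i(x)| \le C \delta^\alpha = \cO(\epsilon)$ for every $x \in Q_i$. Following Yarotsky's construction, I would build a ReLU sub-network $\tilde P_i$ implementing $P_i$ to pointwise accuracy $\cO(\epsilon)$ with depth $\cO(\log(1/\epsilon))$ and $\cO(\log(1/\epsilon))$ weights per cube (using the standard $x^2$ gadget to multiply coordinates), together with a ReLU ``trapezoid'' $\varphi_i$ supported in a slight enlargement of $Q_i$, arranged so that $\sum_i \varphi_i \equiv 1$ on $S_\delta$. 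The final candidate is $\hat f(x) = \mathrm{clip}_{[-2C,2C]}\bigl(\sum_i \varphi_i(x)\,\tilde P_i(x)\bigr)$, itself a ReLU network of depth $\cO(\log(1/\epsilon))$, total weight count $\cO(\epsilon^{-s/\alpha}\log(1/\epsilon))$, and maximum weight magnitude $\cO(\epsilon^{-1/\alpha})$ (the largest values arising from the $1/\delta$ slopes of the $\varphi_i$); the clipping, realised by two extra ReLU units, forces $\cR(\hat f) \le 2C$.

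The $\fL_p(\gamma)$-error is then controlled by a two-region split. On $S_\delta$, using $\sum_i \varphi_i \equiv 1$ and $\varphi_i \in [0,1]$,
\[ |f(x) - \hat f(x)| \le \sum_i \varphi_i(x)\bigl(|f(x) - P_i(x)| + |P_i(x) - \tilde P_i(x)|\bigr) = \cO(\epsilon), \]
so the contribution to the $p$-th power integral is $\cO(\epsilon^p)$. On $S_\delta^c$, both $|f|$ and $|\hat f|$ are bounded by $2C$, so the $p$-th power contribution is at most $(4C)^p \gamma(S_\delta^c) \lesssim \epsilon^p$. Summing and taking $p$-th roots, the dimensional prefactors can be absorbed into the constant $a$ by shrinking $\epsilon_0$ and enlarging $a$.

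The main obstacle is the construction of the ReLU partition of unity on a cover produced abstractly by the entropic dimension: the cover is not a priori a grid, and without care the overlap multiplicity (and hence the pointwise error accumulated by the sum over $i$) can grow with $1/\delta$. Passing to the regular $\delta$-grid of cubes intersecting $S_\delta$ sidesteps this at the cost of a constant factor $2^d$, but requires verifying that the $\gamma$-mass assigned to cubes not meeting $S_\delta$ can indeed be folded into the $S_\delta^c$-bound. A secondary but non-trivial technicality is that the trapezoids inherit Lipschitz constants of order $1/\delta = \epsilon^{-1/\alpha}$: this matches the target bound on $\cB(\hat f)$ exactly, so the weight-magnitude budget is tight and must be tracked carefully through the compositions with the $\tilde P_i$ and the product gadget.
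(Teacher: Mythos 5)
Your proposal follows essentially the same route as the paper's proof: extract a cover of size $\cO(\epsilon^{-s/\alpha})$ from the definition of the $\alpha p$-entropic dimension, pass to a regular $\delta$-grid to control overlap multiplicity, localize Taylor polynomials of order $\lfloor\alpha\rfloor$ with a ReLU trapezoid partition of unity, approximate the products via the ReLU multiplication gadget, and split the $\fL_p(\gamma)$-error between the high-mass set $S_\delta$ and its complement of mass $\lesssim \epsilon^p$. One technical caveat: the expression $\sum_i \varphi_i(x)\,\tilde P_i(x)$ is not itself a ReLU network (it multiplies two network outputs), so --- as the paper does by applying $\mathrm{prod}_m$ simultaneously to all $d+|\bs|$ factors $\xi(\cdot)$ and $(x_j-\theta_j)$ --- your displayed inequality on $S_\delta$ should carry an additional $\cO(4^{-m})$ term for the product-gadget inaccuracy, which is then absorbed by taking $m \asymp \log(1/\epsilon)$ without affecting the final rate.
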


Applying the above theorem, one can control the model-misspecification error for GANs and BiGANs as follows. It is important to note that none of the approximations require the number of weights of the approximating network to increase exponentially with $d$, i.e. the dimension of the entire data space. 
 \begin{restatable}{lem}{lemapproxone}\label{approx_1}
     Suppose assumption~A\ref{model1} holds. There exists an $\epsilon_0$, such that, for any $0 < \epsilon \le \epsilon_0$, we can take $\sG = \cR \cN(L_g, W_g, B_g, 2C_g)$ such that  $L_g \lesssim \log(1/\epsilon)$, $W_g \lesssim \epsilon^{-\ell/\alpha_g} \log(1/\epsilon)$, $B_g \lesssim \epsilon^{-1/\alpha_g}$  and  $\inf_{G \in \sG} \|\mu - G_\sharp \nu\|_{\Phi} \lesssim \epsilon^{\beta \wedge 1} $.
 \end{restatable}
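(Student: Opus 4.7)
\textbf{Proof plan for Lemma~\ref{approx_1}.} The strategy is to reduce the misspecification error to a coordinate-wise $\fL_1(\nu)$ approximation problem, then appeal to Theorem~\ref{approx} component by component on the ground-truth generator $\tilde{G}$.

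First, I would exploit the fact that every discriminator $\phi \in \Phi = \sH^\beta(\Real^d,\Real,1)$ is $(\beta \wedge 1)$-H\"older continuous up to an absolute constant: when $\beta \in (0,1]$ this is immediate from the seminorm, and when $\beta > 1$ the gradient bound $\|\nabla\phi\|_\infty \le 1$ together with the boundedness of $\tilde{G}$ and of any admissible $G$ (since $\cR(G) \le 2C_g$ by construction and $\tilde{G}$ takes values in a compact set) gives Lipschitz continuity on the range. Writing $\mu = \tilde{G}_\sharp \nu$, pushing the supremum inside the expectation, and applying Jensen's inequality (the map $t \mapsto t^{\beta \wedge 1}$ being concave on $[0,\infty)$) yields
\[
\|\mu - G_\sharp \nu\|_\Phi \;=\; \sup_{\phi \in \Phi} \Big|\int (\phi \circ \tilde{G} - \phi \circ G)\, d\nu \Big| \;\lesssim\; \sum_{j=1}^d \|\tilde{G}_j - G_j\|_{\fL_1(\nu)}^{\beta \wedge 1}.
\]
Thus it suffices to approximate each coordinate $\tilde{G}_j$ to $\fL_1(\nu)$-accuracy $\epsilon$ by a ReLU network $\hat{G}_j$ with the advertised complexity, and then set $\hat{G} = (\hat{G}_1, \dots, \hat{G}_d)$.

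Second, I would invoke Theorem~\ref{approx} with $f = \tilde{G}_j \in \sH^{\alpha_g}(\Real^\ell, \Real, C_g)$, input measure $\gamma = \nu$ (supported on $[0,1]^\ell$), and exponent $p = 1$. Since $\bar{d}_{\alpha_g}(\nu) \le \overline{\text{dim}}_M([0,1]^\ell) = \ell$ by Proposition~\ref{lem1}(c), any $s$ strictly larger than $\ell$ is admissible; the resulting network $\hat{G}_j$ satisfies $\|\tilde{G}_j - \hat{G}_j\|_{\fL_1(\nu)} \le \epsilon$ with depth $\cL(\hat{G}_j) \lesssim \log(1/\epsilon)$, weight count $\cW(\hat{G}_j) \lesssim \epsilon^{-s/\alpha_g} \log(1/\epsilon)$, maximum parameter magnitude $\cB(\hat{G}_j) \lesssim \epsilon^{-1/\alpha_g}$, and output bound $\cR(\hat{G}_j) \le 2C_g$.

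Finally, I would assemble the $d$ component networks in parallel to form $\hat{G} : \Real^\ell \to \Real^d$; this preserves the depth and the bounds on $\cB$ and $\cR$ while inflating the weight count and width only by the constant factor $d$. Choosing $\sG = \cR\cN(L_g, W_g, B_g, 2C_g)$ with the listed parameters and plugging the $\fL_1(\nu)$ estimate into the reduction of the first step gives
\[
\inf_{G \in \sG}\|\mu - G_\sharp \nu\|_\Phi \;\lesssim\; (d\epsilon)^{\beta \wedge 1} \;\lesssim\; \epsilon^{\beta \wedge 1},
\]
as required. The only real subtlety is that Theorem~\ref{approx} requires the exponent to be strictly above $\bar{d}_{\alpha_g}(\nu)$ rather than equal to $\ell$: the resulting slack $\epsilon^{-(s-\ell)/\alpha_g}$ for $s = \ell + \delta$ with arbitrarily small $\delta > 0$ is absorbed into the $\lesssim$ notation (at the price of a $\delta$-dependent constant), leaving the stated architecture bounds intact.
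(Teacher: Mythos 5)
Your argument follows the same route the paper takes: rewrite $\mu = \tilde{G}_\sharp\nu$, pull the H\"older modulus of the discriminator through the integral and apply Jensen to reduce the misspecification error to $\|\tilde{G}-G\|_{\fL_1(\nu)}^{\beta\wedge 1}$, and then invoke Theorem~\ref{approx} coordinate-wise before stacking the $d$ networks in parallel. That part is correct and matches the paper.

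The last sentence, however, contains a real error. You correctly note that Theorem~\ref{approx} only yields $\cW(\hat{G}_j) \lesssim \epsilon^{-s/\alpha_g}\log(1/\epsilon)$ for $s$ \emph{strictly} larger than $\bar{d}_{\alpha_g}(\nu)$, and that in the worst case $\bar{d}_{\alpha_g}(\nu)=\ell$, so you must take $s=\ell+\delta$. But the resulting factor $\epsilon^{-\delta/\alpha_g}$ is \emph{not} a constant; it diverges as $\epsilon\downarrow 0$, so it cannot be ``absorbed into the $\lesssim$'' — that notation suppresses only $\epsilon$-independent constants, and a $\delta$-dependent constant would have to multiply $\epsilon^{-\ell/\alpha_g}\log(1/\epsilon)$, not exponentiate it. As written you would only prove the weaker bound $W_g\lesssim\epsilon^{-(\ell+\delta)/\alpha_g}\log(1/\epsilon)$, which falls short of the statement and would propagate a loss into Theorem~\ref{main_gan}. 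The correct repair is more elementary than a limiting argument: since $\nu$ is supported on $[0,1]^\ell$, one has the \emph{non-asymptotic} covering bound $\cN(\epsilon;[0,1]^\ell,\ell_\infty)\le \lceil 1/(2\epsilon)\rceil^\ell\lesssim\epsilon^{-\ell}$ for all $\epsilon\in(0,1)$, so one can take $S=[0,1]^\ell$ (with $\nu(S)=1$) directly in the construction underlying Theorem~\ref{approx} (see Lemma~\ref{lem_e1} and the partition $\sP_\epsilon$ in its proof) and thereby run the argument with exponent exactly $\ell$, yielding $W_g\lesssim\epsilon^{-\ell/\alpha_g}\log(1/\epsilon)$ with no slack.
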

 
 \begin{restatable}{lem}{lemapproxtwo}\label{approx_2}
     Suppose assumption~B\ref{model2} holds and let $s>\bar{d}_{\alpha_e}(\mu)$. There exists $\epsilon_0>0$, such that, for any $0 < \epsilon_1, \epsilon_2 \le \epsilon_0$, we can take $\sE = \cR \cN(L_e, W_e, B_e, 2C_e)$ and  $\sD = \cR \cN(L_d, W_d, B_d, 2 C_d)$ such that 
     \begingroup
     \allowdisplaybreaks
     \begin{align*}
        & L_e \lesssim \log(1/\epsilon_e),\, W_e \lesssim \epsilon_e^{-s/\alpha_e} \log(1/\epsilon_e), \text{ and } B_e \lesssim \epsilon^{-1/\alpha_e} ;\\
        & L_d \lesssim \log(1/\epsilon_d), \, W_d \lesssim \epsilon_d^{-\ell/\alpha_d} \log(1/\epsilon_d),\text{ and } B_d \lesssim \epsilon_d^{-1/\alpha_d} .
     \end{align*}    
\endgroup
Then, $\inf_{G \in \sG, \, E \in \sE} \|(\mu, E_\sharp \mu) - (G_\sharp \nu, \nu)\|_{\Psi} \lesssim \epsilon_e^{\beta \wedge 1} + \epsilon_d^{\beta \wedge 1}$.
 \end{restatable}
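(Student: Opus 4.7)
The plan is to construct explicit ReLU-network approximations $\hat E\in\sE$ of $\tilde E$ and $\hat D\in\sD$ of $\tilde D$ (the encoder--decoder pair from Assumption~B\ref{model2}) and then bound the joint-distribution IPM via a triangle inequality whose ``cross'' term vanishes by the coupling in B\ref{model2}. Specifically, for the constructed pair,
\begin{align*}
\|(\mu,\hat E_\sharp\mu)-(\hat D_\sharp\nu,\nu)\|_\Psi
\le\ & \|(\mu,\hat E_\sharp\mu)-(\mu,\tilde E_\sharp\mu)\|_\Psi + \|(\mu,\tilde E_\sharp\mu)-(\tilde D_\sharp\nu,\nu)\|_\Psi\\
& +\|(\tilde D_\sharp\nu,\nu)-(\hat D_\sharp\nu,\nu)\|_\Psi,
\end{align*}
with the middle term equal to zero by B\ref{model2}. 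For the outer two, every $\psi\in\Psi=\sH^\beta(\Real^{d+\ell},\Real,1)$ has sections $y\mapsto\psi(x,y)$ that are $(\beta\wedge 1)$-Hölder with constant at most one, so
\[
|\E\psi(X,\hat E(X))-\E\psi(X,\tilde E(X))|\le \E\|\hat E(X)-\tilde E(X)\|^{\beta\wedge 1}\le \|\hat E-\tilde E\|_{\fL_1(\mu)}^{\beta\wedge 1},
\]
the final step being Jensen's inequality. An identical reduction in the decoder direction produces a bound by $\|\hat D-\tilde D\|_{\fL_1(\nu)}^{\beta\wedge 1}$.

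Next, I would invoke Theorem~\ref{approx} componentwise. Each scalar component of $\tilde E$ lies in $\sH^{\alpha_e}(\Real^d,\Real,C_e)$, and since the given $s$ satisfies $s>\bar d_{\alpha_e}(\mu)=\bar d_{\alpha_e\cdot 1}(\mu)$, Theorem~\ref{approx} with $p=1$ and per-component target error $\epsilon_e/\ell$ yields a ReLU network of depth $\lesssim\log(1/\epsilon_e)$, $\lesssim\epsilon_e^{-s/\alpha_e}\log(1/\epsilon_e)$ weights, parameter magnitude $\lesssim\epsilon_e^{-1/\alpha_e}$, and output bounded by $2C_e$. Stacking the $\ell$ component networks in parallel then gives $\hat E\in\sE$ with $\|\hat E-\tilde E\|_{\fL_1(\mu)}\lesssim\epsilon_e$; the extra factor of $\ell$ in the weight count is absorbed into $\lesssim$. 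For the decoder, since $\nu$ is uniform on $[0,1]^\ell$, a classical Yarotsky-type $\fL_\infty$ approximation (or Theorem~\ref{approx} applied with the dimension bound $\bar d_{\alpha_d}(\nu)\le\overline{\mathrm{dim}}_M(\nu)=\ell$ from Proposition~\ref{lem1}(c)) delivers $\hat D\in\sD$ of the stated architecture with $\|\hat D-\tilde D\|_{\fL_1(\nu)}\lesssim\epsilon_d$.

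Combining these $\fL_1$ bounds with the Hölder-section reduction gives $\|(\mu,\hat E_\sharp\mu)-(\hat D_\sharp\nu,\nu)\|_\Psi\lesssim\epsilon_e^{\beta\wedge 1}+\epsilon_d^{\beta\wedge 1}$, which controls the infimum over $\sD\times\sE$. The main technical point, more than a genuine obstacle, is handling the regimes $\beta\ge 1$ and $\beta<1$ uniformly through the $\beta\wedge 1$ exponent (Lipschitz versus genuinely Hölder sections, with Jensen turning pointwise bounds into $\fL_1$ bounds). Once this is in place, the vanishing of the coupling term by B\ref{model2} is precisely what makes the BiGAN misspecification bound decouple cleanly into two independent function-approximation subproblems, one against $\mu$ (where the intrinsic dimension $\bar d_{\alpha_e}(\mu)$ enters the rate) and one against $\nu$ (where only the ambient latent dimension $\ell$ matters).
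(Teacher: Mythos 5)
Your proposal is correct and follows essentially the same route as the paper: a triangle inequality whose middle (coupling) term vanishes under B\ref{model2}, the Hölder-section plus Jensen reduction to $\fL_1$-approximation of $\tilde E$ against $\mu$ and $\tilde D$ against $\nu$, and a componentwise application of Theorem~\ref{approx} with parallel stacking. One small note: you do not need $\nu$ to be uniform (the paper makes no such assumption here); what matters is only that $\nu$ is supported in $[0,1]^\ell$, so Proposition~\ref{lem1}(c) gives $\bar d_{\alpha_d}(\nu)\le\overline{\mathrm{dim}}_M(\nu)\le\ell$, which is the bound you already invoke as an alternative.
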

\subsubsection{Generalization Gap}\label{sec_gen}
The third step to bounding the excess risk is to bound the generalization gaps w.r.t. the function classes discussed in Lemma~\ref{oracle}. 
To do so, we first derive a bound on the metric entropy of networks from $\Real^d \to \Real^{d^\prime}$ that have piece-wise polynomial activations, extending the results of \cite{bartlett2019nearly} to vector-valued networks. 
Recall the notation $\cF_{|_{X_{1:n}}} = \{(f(X_1), \dots, f(X_n))^\top \in \Real^{n \times d^\prime}: f \in \cF\}$. 
\begin{restatable}{lem}{pflemapr}\label{lem_apr_18}
    Suppose that $n \ge 6$ and $\cF: \Real^d \to \Real^{d^\prime}$ be a class of bounded neural networks with depth at most $L$ and the number of weights at most $W$. Furthermore, the activation functions are piece-wise polynomial activation with the number of pieces and degree at most $k \in \mathbb{N}$. Then, there exists positive constants $\theta$  and $\epsilon_0$, 
    such that, if $n \ge \theta (W + 6 d^\prime + 2 d^\prime L ) (L+3) \left(\log (W + 6 d^\prime + 2 d^\prime L) + L + 3\right)$ and $\epsilon\in (0,\epsilon_0]$,  \[\log \cN(\epsilon; \cF_{|_{X_{1:n}}}, \ell_\infty) \lesssim (W + 6 d^\prime + 2 d^\prime L ) (L+3) \left(\log (W + 6 d^\prime + 2 d^\prime L) + L + 3\right) \log\left(\frac{n d^\prime}{\epsilon }\right), \]
    where $d^\prime$ is the output dimension of the networks in $\cF$.
\end{restatable}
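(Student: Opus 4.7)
The plan is to reduce the vector-valued problem to a scalar-valued one and then invoke the pseudo-dimension machinery of \citet{bartlett2019nearly} together with a standard Haussler-type covering number bound. For each $j \in [d^\prime]$, let $\pi_j:\Real^{d^\prime}\to\Real$ denote projection onto the $j$-th coordinate, and define the scalar class
\[\tilde{\cF} = \{(x,j) \mapsto \pi_j \circ f(x): f \in \cF, \, j \in [d^\prime]\}.\]
An $\epsilon$-cover of $\cF_{|_{X_{1:n}}}$ in $\ell_\infty$ on the $n \times d^\prime$ output matrix is in bijection with an $\epsilon$-cover of $\tilde{\cF}$ evaluated at the $N = nd^\prime$ points $\{(X_i,j): i\in[n], j\in[d^\prime]\}$ in $\ell_\infty$, so the problem reduces to bounding a scalar $\ell_\infty$-covering number on $N$ points.

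Next, I would realize each element of $\tilde{\cF}$ as an honest scalar piecewise-polynomial network of the same activation type, respecting the depth and weight budgets in the stated bound. The construction carries the one-hot encoding $e_j \in \Real^{d^\prime}$ through each of the $L$ hidden layers in parallel with the computation of $f(x)$, and at the end implements the coordinate selection $e_j^\top f(x)$. Identity propagation of the $d^\prime$-dimensional signal costs $2d^\prime$ auxiliary units per layer: for ReLU this is $x = \sigma(x) - \sigma(-x)$, and for a general piecewise-polynomial $\sigma$ one writes identity on the bounded region of interest as a fixed finite linear combination of shifted/scaled copies of $\sigma$, which costs only a constant (depending on the piece count and degree) number of weights per propagated coordinate. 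The final selection step adds at most $3$ layers and $6 d^\prime$ additional weights. Altogether, each $\tilde{f} \in \tilde{\cF}$ is a scalar network with depth at most $L+3$ and at most $W + 2d^\prime L + 6 d^\prime$ weights, with the same piecewise-polynomial activation class.

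With this representation in place, the pseudo-dimension bound of \citet{bartlett2019nearly} yields
\[\pdim(\tilde{\cF}) \lesssim (W + 6d^\prime + 2 d^\prime L)(L+3)\,\bigl(\log(W + 6d^\prime + 2 d^\prime L) + L + 3\bigr) =: D.\]
The final step applies the standard pseudo-dimension-to-$\ell_\infty$-covering conversion of Haussler: for a scalar class $\cG$ with $\pdim(\cG)=D$ and uniform range bounded by a constant, whenever $N \gtrsim D$ and $\epsilon \in (0,\epsilon_0]$,
\[\log \cN(\epsilon; \cG_{|_{z_{1:N}}}, \ell_\infty) \lesssim D\,\log(N/\epsilon).\]
Instantiating this with $\cG = \tilde{\cF}$ and $N = nd^\prime$ produces the asserted bound; the hypothesis $n \ge \theta (W + 6 d^\prime + 2 d^\prime L)(L+3)(\log(W + 6d^\prime + 2 d^\prime L) + L + 3)$ is precisely the ``$N \gtrsim D$'' requirement that lets the Haussler bound collapse into the clean form $D\log(N/\epsilon)$ rather than carrying extra additive terms.

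The main obstacle lies in Step~2: verifying, for a general piecewise-polynomial activation, that one can both propagate the selector through $L$ layers and implement the final coordinate selection while (i) keeping the activation type unchanged, and (ii) keeping the added weight count at exactly $2 d^\prime L + O(d^\prime)$ rather than $O(k d^\prime L)$. The bound as stated absorbs the $k$-dependence into the $\lesssim$ notation, which is consistent with treating $k$ as a fixed constant throughout. The remaining steps---the Bartlett--Harvey--Liaw--Mehrabian pseudo-dimension bound and Haussler's covering inequality---are essentially off-the-shelf.
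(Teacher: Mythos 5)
Your high-level route is the same as the paper's: reduce to a scalar class of inner products $e_j^\top f(x)$ evaluated on $N = n d^\prime$ points (the paper writes this as $h(x,y) = y^\top f(x)$ and evaluates at $(X_i, e_\ell)$, which is identical to your $\tilde{\cF}$ on $(X_i, j)$), realize each such function as a single network within the stated depth and weight budgets, and finish with the Bartlett--Harvey--Liaw--Mehrabian pseudo-dimension bound (Lemma~\ref{lem_bartlett}) plus the Anthony--Bartlett pseudo-dimension-to-$\ell_\infty$-covering conversion (Lemma~\ref{lem_anthony_bartlett}).

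However, there is a genuine gap in your Step~2, and it is the step you yourself flag as the ``main obstacle.'' Computing the inner product $e_j^\top f(x)$ from the layer that holds $(f(x), e_j)$ is a \emph{bilinear}, not linear, operation, and cannot be implemented as one more affine layer plus the original activation. You gesture at ``implementing the final coordinate selection'' but never say how, and your hope of keeping ``the activation type unchanged'' is not actually what happens in the paper. The paper resolves this with the polarization identity
\[
h(x,y) \;=\; y^\top f(x) \;=\; \tfrac{1}{4}\bigl(\|y + f(x)\|_2^2 - \|y - f(x)\|_2^2\bigr),
\]
and introduces a \emph{new} activation $\sigma(t) = t^2$ at the penultimate layer (the yellow nodes in Figure~\ref{fig:network}). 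This is legitimate precisely because Lemma~\ref{lem_bartlett} only requires all activations to be piecewise polynomial of bounded degree and piece count, not to coincide with the original one. Counting weights in that construction (magenta/cyan/orange/teal connections give $6 d^\prime$ weights, and carrying $y$ forward identity-style through $L$ layers gives $2 d^\prime L$) is exactly what produces $W + 6 d^\prime + 2 d^\prime L$ and depth $L+3$; the specific numbers in the lemma are artifacts of this particular construction, not a generic ``coordinate selection costs $O(d^\prime)$'' fact. Without the polarization trick, your Step~2 does not close, and the asserted parameter counts are unjustified. A secondary inaccuracy: your claim that ``for a general piecewise-polynomial $\sigma$ one writes identity ... as a fixed finite linear combination of shifted/scaled copies of $\sigma$'' is false in general (e.g.\ $\sigma(t) = t^2$ has no such representation), so the identity-propagation count $2 d^\prime L$ really relies on a ReLU-style linear piece, as the paper's citation to \citet[Remark 15(iv)]{JMLR:v21:20-002} indicates.
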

As a corollary of the above result, we can bound the metric entropies of the function classes in Lemma~\ref{oracle} as a function of the number of samples used and the size of the networks classes $\sG$ and $\sE$. Apart from appealing to Lemma~\ref{lem_apr_18}, one also uses the bound on the metric entropy of $\beta$-H\"{o}lder functions as in Theorem \ref{kt}.  We recall that,
   \( \sF_1 =  \left\{\psi(\cdot, E(\cdot)): \psi \in \Psi, \, E \in \sE\right\} \quad \text{and} \quad \sF_2 = \left\{\psi(G(\cdot),\cdot): \psi \in \Psi, \, G \in \sG\right\}.\)
\begin{restatable}{cor}{pfcorone}\label{cor1}
    Suppose that $\sD= \cR\cN(L_d, W_d, B_d, 2C_d)$ and $\sG = \cR\cN(L_g, W_g, B_g, 2C_g)$ with $L_d, L_g \ge 3$, $W_g \ge 6d + 2 d L_g$ and $W_d \ge 6d + 2 d L_d$.    Then, there is a constant $c$, such that, $m \ge c \left( W_g L_g ( \log W_g + L_g)\right) \vee \left( W_d L_d ( \log W_d + L_d)\right)$, then, 
    \begingroup
    \allowdisplaybreaks
    \begin{align*}
    \log \cN \left(\epsilon; (\Psi \circ G)_{|_{Z_{1:m}}}, \ell_\infty\right) \lesssim & \, \epsilon^{-\frac{d}{\beta}} + W_g L_g \left( \log W_g +   L_g\right) \log \left(\frac{m d}{\epsilon }\right),\\
         \log \cN \left(\epsilon; (\sF_2)_{|_{Z_{1:m}}}, \ell_\infty\right) \lesssim & \, \epsilon^{-\frac{d + \ell}{\beta}} + W_d L_d \left( \log W_d +   L_d\right) \log \left(\frac{m d}{\epsilon }\right).
    \end{align*}
    \endgroup
\end{restatable}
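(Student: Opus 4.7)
My strategy for both bounds is a product-cover argument: build separately an $\fL_\infty$-cover of the outer H\"{o}lder class on the image cube of the inner network, and an $\ell_\infty$-cover of the inner ReLU class on the sample $Z_{1:m}$, then combine via the triangle inequality. For the first bound, take $\phi, \phi' \in \Phi = \sH^\beta(\Real^d, \Real, 1)$ and $G, G' \in \sG$ whose images lie in $[-2C_g, 2C_g]^d$ (by the output bound on $\sG$). The $(\beta \wedge 1)$-H\"{o}lder modulus of $\phi$ yields, for each $i$,
\[ |\phi(G(Z_i)) - \phi'(G'(Z_i))| \lesssim \|\phi - \phi'\|_{\fL_\infty([-2C_g, 2C_g]^d)} + \|G(Z_i) - G'(Z_i)\|_\infty^{\beta \wedge 1}, \]
so it suffices to pair an $\epsilon/2$-cover of $\Phi|_{[-2C_g,2C_g]^d}$ in sup-norm with an $\eta$-cover of $\sG_{|_{Z_{1:m}}}$ in $\ell_\infty$, where $\eta \asymp \epsilon^{1/(\beta \wedge 1)}$.

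\textbf{Building the two covers.} The H\"{o}lder piece is classical: the Kolmogorov--Tikhomirov entropy bound (Lemma~\ref{kt_og}) gives $\log \cN(\epsilon/2; \Phi|_{[-2C_g,2C_g]^d}, \fL_\infty) \lesssim \epsilon^{-d/\beta}$. The network piece follows from Lemma~\ref{lem_apr_18} applied to $\sG$, since ReLU is piecewise-polynomial with one piece of degree one; the hypothesis $m \gtrsim W_g L_g(\log W_g + L_g)$ in the corollary is exactly the sample-size threshold of that lemma (with the inequality $W_g \ge 6d + 2d L_g$ absorbing the vector-valued correction terms $6 d^\prime + 2 d^\prime L$), and it yields
\[ \log \cN(\eta; \sG_{|_{Z_{1:m}}}, \ell_\infty) \lesssim W_g L_g(\log W_g + L_g) \log(m d/\eta). \]
With $\eta = \epsilon^{1/(\beta \wedge 1)}$ we get $\log(md/\eta) \lesssim \log(md/\epsilon)$. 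Taking the product of the two nets and summing the log-sizes yields the first inequality.

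\textbf{Second bound and main obstacle.} The second bound is the same scheme with $\sD$ in place of $\sG$ and $\psi \in \Psi \subset \sH^\beta(\Real^{d+\ell}, \Real, 1)$ in place of $\phi$. The input to $\psi$ is $(D(Z), Z) \in [-2C_d, 2C_d]^d \times [0,1]^\ell$, a box of total dimension $d + \ell$, so the H\"{o}lder cover now contributes $\epsilon^{-(d+\ell)/\beta}$. The second coordinate of $\psi$'s input is the identity map on $Z$ and needs no network cover; the first coordinate contributes the $W_d L_d(\log W_d + L_d)\log(md/\epsilon)$ term via Lemma~\ref{lem_apr_18}. The principal obstacle is scale-matching: an $\eta$-approximation of $G$ (or $D$) in $\ell_\infty$ only controls the composition within $\eta^{\beta \wedge 1}$, which forces the refinement $\eta = \epsilon^{1/(\beta \wedge 1)}$. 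This is harmless because the network entropy depends only logarithmically on $1/\eta$, but one must additionally verify that the H\"older net on $[-2C_g, 2C_g]^d$ remains valid after a slight enlargement of the cube (by $\eta$) to contain the candidate points produced by the network net, which is immediate once $\eta$ is bounded.
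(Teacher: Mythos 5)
Your proof takes essentially the same route as the paper: a product cover combining a sup-norm net for the outer H\"older class (via Lemma~\ref{kt_og}) with an $\ell_\infty$ net for the inner ReLU class evaluated at the sample (via Lemma~\ref{lem_apr_18}), glued together by the $(\beta \wedge 1)$-H\"older modulus, with the refinement $\eta \asymp \epsilon^{1/(\beta \wedge 1)}$ for the network net so that composition contributes at order $\epsilon$. The only cosmetic difference is that you explicitly restrict the H\"older cover to the image cube $[-2C_g, 2C_g]^d$ and discuss the (unnecessary, since one can always pick the network cover internal to the set) enlargement issue, whereas the paper leaves this implicit.
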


We note that though the above metric entropies depend exponentially on the data dimension $d$, this is not a problem in finding the generalization error as this exponential dependence is only in terms of $m$, the number of generated fake samples, which the practitioner can increase to tackle this curse of dimensionality. Using Corollary~\ref{cor1}, we can bound stochastic errors of these function classes in Lemma~\ref{gen}.
\begin{restatable}{lem}{pfgen}\label{gen}
 Suppose that $\sD= \cR\cN(L_d, W_d, B_d, 2C_d)$ and $\sG = \cR\cN(L_g, W_g, B_g, 2C_g)$ with $L_d, L_g \ge 3$, $W_g \ge 6d + 2 d L_g$ and $W_d \ge 6d + 2 d L_d$.    Then, there is a constant $c$, such that, $m \ge c \left( W_g L_g ( \log W_g + L_g)\right) \vee \left( W_d L_d ( \log W_d + L_d)\right)$, then,
 \begingroup
 \allowdisplaybreaks
    \begin{align*}
    \E \|\hnu_m - \nu\|_{\Phi \circ \sG} \lesssim & \, m^{-\frac{\beta}{d }} \vee m^{-1/2} \log m + \sqrt{\frac{W_g L_g (\log W_g + L_g) \log(md)}{m}},\\
       \E \|\hnu_m - \nu\|_{\sF_2} \lesssim &\,  m^{-\frac{\beta}{d + \ell}} \vee (m^{-1/2} \log m) + \sqrt{\frac{W_d L_d (\log W_d + L_d) \log(md)}{m}}. 
    \end{align*}
    \endgroup
    Here $c$ is the same as in Corollary \ref{cor1}.
\end{restatable}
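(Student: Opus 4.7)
}
The plan is to combine the standard symmetrization-and-chaining recipe with the two-term covering number bound already established in Corollary~\ref{cor1}, and then perform a case analysis when integrating the first (Hölder-type) term. I treat both bounds in parallel since only the dimension of the input space and the governing network parameters differ.

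\textbf{Step 1 (Symmetrization).} For any class $\cF$ of uniformly bounded functions (here both $\Phi\circ\sG$ and $\sF_2$ have envelope $\le 1$ since $\Phi,\Psi$ consist of $1$-bounded Hölder functions), the standard symmetrization inequality gives
\[
\E\|\hnu_m-\nu\|_{\cF}\ \le\ 2\,\E_{Z_{1:m}}\E_\epsilon\sup_{f\in\cF}\frac{1}{m}\sum_{i=1}^m\epsilon_i f(Z_i),
\]
where the $\epsilon_i$ are i.i.d.\ Rademacher.

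\textbf{Step 2 (Dudley's entropy integral).} Conditioning on $Z_{1:m}$ and using the subgaussian chaining bound for the Rademacher process, together with the fact that the empirical $\ell_2$-norm is dominated by the $\ell_\infty$-norm on $\Real^m$, I obtain
\[
\E_\epsilon\sup_{f\in\cF}\frac{1}{m}\sum_{i=1}^m\epsilon_i f(Z_i)\ \lesssim\ \inf_{\delta>0}\left\{\delta+\frac{1}{\sqrt m}\int_{\delta}^{1}\sqrt{\log\cN\!\left(\epsilon;\cF_{|Z_{1:m}},\ell_\infty\right)}\,d\epsilon\right\}.
\]

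\textbf{Step 3 (Plug in Corollary~\ref{cor1} and split).} For $\cF=\Phi\circ\sG$, using $\sqrt{a+b}\le\sqrt a+\sqrt b$, the integrand splits into a Hölder-type piece $\epsilon^{-d/(2\beta)}$ and a network-complexity piece $\sqrt{W_gL_g(\log W_g+L_g)\log(md/\epsilon)}$. The second piece is handled uniformly: since its integrand is increasing in $1/\epsilon$,
\[
\frac{1}{\sqrt m}\int_\delta^{1}\sqrt{W_gL_g(\log W_g+L_g)\log(md/\epsilon)}\,d\epsilon\ \lesssim\ \sqrt{\frac{W_gL_g(\log W_g+L_g)\log(md/\delta)}{m}},
\]
and for the choices of $\delta$ made below (always $\delta\ge m^{-1}$) this is absorbed into the stated bound $\sqrt{W_gL_g(\log W_g+L_g)\log(md)/m}$.

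\textbf{Step 4 (Case analysis for the Hölder piece).} The piece $\frac{1}{\sqrt m}\int_\delta^{1}\epsilon^{-d/(2\beta)}\,d\epsilon$ is optimized in $\delta$ via three cases:
\begin{itemize}
\item If $\beta>d/2$, the integral converges as $\delta\downarrow 0$, contributing $O(m^{-1/2})$.
\item If $\beta=d/2$, the integral is $\lesssim\log(1/\delta)$; taking $\delta\asymp m^{-1/2}$ yields $O(m^{-1/2}\log m)$.
\item If $\beta<d/2$, the integral is $\lesssim\delta^{1-d/(2\beta)}$; balancing $\delta+\delta^{1-d/(2\beta)}/\sqrt m$ at $\delta\asymp m^{-\beta/d}$ yields $O(m^{-\beta/d})$.
\end{itemize}
The three cases combine to $O\!\bigl(m^{-\beta/d}\vee m^{-1/2}\log m\bigr)$, which is the first term in the claim. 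The bound for $\|\hnu_m-\nu\|_{\sF_2}$ is obtained identically, with $d$ replaced by $d+\ell$ (from the second bound in Corollary~\ref{cor1}) and $W_g,L_g$ replaced by $W_d,L_d$.

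\textbf{Main obstacle.} The computations themselves are routine; the only point requiring care is the case analysis in Step 4 and ensuring the optimizing $\delta$ is compatible with the logarithmic factor $\log(md/\delta)$ in the network-complexity piece, so that $\log(md/\delta)\lesssim\log(md)$ at the chosen $\delta$. Since in all three cases $\delta\ge m^{-1}$, we have $\log(md/\delta)\lesssim\log(m^2 d)\asymp\log(md)$, which validates absorbing this log into the stated constant.
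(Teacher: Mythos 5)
Your proof is correct and follows essentially the same route as the paper's: symmetrization plus Dudley's entropy integral, plugging in the covering-number bound from Corollary~\ref{cor1}, splitting via $\sqrt{a+b}\le\sqrt a+\sqrt b$, and optimizing over $\delta$ with a three-way case analysis on $d/(2\beta)$ (the paper packages this case analysis separately as Lemma~\ref{lem_f1}). The one cosmetic difference is in handling the network-complexity piece: you keep the integral on $[\delta,1]$ and then argue $\log(md/\delta)\lesssim\log(md)$ because the optimizer satisfies $\delta\ge m^{-1}$, whereas the paper simply extends that integral to $[0,1/2]$ and uses that $\int_0^{1/2}\sqrt{\log(md/\epsilon)}\,d\epsilon\lesssim\sqrt{\log(md)}$, which removes any $\delta$-dependence entirely; both give the stated bound.
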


Next, we focus on deriving a uniform concentration bound w.r.t. the function class $\sF_1$. This is a little trickier, compared to the ones derived in Lemma~\ref{gen}, in the sense that one does not have direct control over the $\ell_\infty$-metric entropy of the function class $\Psi$ (and in turn, $\sF_1$) in terms of the Wasserstein dimension of the data. We resolve the issue by performing a one-step discretization and appealing to the sub-Gaussian property of the data for a fixed $E \in \sE$. The proof is detailed in Appendix \ref{app_g4}. 
\begin{restatable}{lem}{lemtf}
    \label{Lem_3.7}
    Suppose that $\sE = \cR\cN(L_e, W_e, B_e,2 C_e)$. Then, for any $s>d^\ast_{\beta}(\mu)$, we can find an $n^\prime \in \mathbb{N}$, such that if $n \ge n^\prime$, 
    \[\E \|\hmu_n - \mu\|_{\sF_1} \lesssim n^{-1/2}\left( W_e\log \left(2L_eB_e^{L_e} (W_e+1)^{L_e} n^{\frac{1}{2 (\beta \wedge 1)}}\right) \right)^{1/2} + n^{-\frac{\beta}{s + \ell}}.\]
\end{restatable}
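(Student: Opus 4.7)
My strategy is to decouple the two suprema in $\|\hmu_n-\mu\|_{\sF_1}=\sup_{\psi\in\Psi,\,E\in\sE}\bigl|(\hmu_n-\mu)(\psi(\cdot,E(\cdot)))\bigr|$. First, I peel off the supremum over $E\in\sE$ via a sup-norm discretization of the ReLU class $\sE$, paying the price of a sub-Gaussian maximum over the cover. Then, for each fixed element $E_j$ of the cover, I reduce the remaining supremum over $\psi\in\Psi$ to a H\"older IPM between $(T_{E_j})_\sharp\hmu_n$ and $\mu^{E_j}:=(T_{E_j})_\sharp\mu$ on $\Real^{d+\ell}$, where $T_{E_j}(x):=(x,E_j(x))$, and apply Theorem~\ref{corr1} to this pushforward.

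For the discretization, a standard layer-by-layer bound shows that every $E\in\sE=\cR\cN(L_e,W_e,B_e,2C_e)$ is Lipschitz in its $\cO(W_e)$ weights and biases with constant at most $L_eB_e^{L_e}(W_e+1)^{L_e}$; covering the weight ball by a suitable $\delta$-grid therefore yields a sup-norm $\epsilon$-cover $\{E_1,\ldots,E_N\}$ of $\sE$ with $\log N\lesssim W_e\log\bigl(2L_eB_e^{L_e}(W_e+1)^{L_e}/\epsilon\bigr)$. Taking $\epsilon=\epsilon_1:=n^{-1/(2(\beta\wedge 1))}$ and using $\|\psi\|_{\sH^\beta}\le 1$ gives $|\psi(x,E(x))-\psi(x,E_j(x))|\le\epsilon_1^{\beta\wedge 1}=n^{-1/2}$ at the nearest cover element $E_j$, so
\[
\|\hmu_n-\mu\|_{\sF_1}\le 2n^{-1/2}+\max_{1\le j\le N}\|\hmu_n-\mu\|_{\sG_j},\quad \sG_j:=\{\psi(\cdot,E_j(\cdot)):\psi\in\Psi\}.
\]
Each $Z_j:=\|\hmu_n-\mu\|_{\sG_j}$ is a bounded-differences functional of $X_1,\ldots,X_n$ (swapping a single $X_i$ alters $Z_j$ by at most $2/n$ because every $\psi\in\Psi$ has sup norm at most $1$), so McDiarmid renders $Z_j$ sub-Gaussian with parameter $\cO(n^{-1/2})$ about its mean; the standard sub-Gaussian maximum bound then gives $\E\max_jZ_j\le\max_j\E Z_j+C\sqrt{\log N/n}$, which with the above choice of $\epsilon_1$ exactly reproduces the first term of the claim.

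It remains to control $\E Z_j$ uniformly in $j$. Writing $Z_j=\|(T_{E_j})_\sharp\hmu_n-\mu^{E_j}\|_\Psi$ exhibits it as the $\Psi$-IPM between $\mu^{E_j}$ and the empirical measure of $n$ i.i.d.\ draws from $\mu^{E_j}$, to which Theorem~\ref{corr1} applies directly. To bound the exponent I use that $\supp(\mu^{E_j})\subseteq[0,1]^d\times[-2C_e,2C_e]^\ell$, so the product of a $(\mu,\tau)$-cover in $\Real^d$ with a uniform $\ell$-cube cover in $\Real^\ell$ yields $\sN_\epsilon(\mu^{E_j},\tau)\le\sN_\epsilon(\mu,\tau)\cdot\cO(\epsilon^{-\ell})$. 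Combined with the monotonicity of $\tau\mapsto\sN_\epsilon(\cdot,\tau)$ and the elementary inequality $(s+\ell)\beta/(s+\ell-2\beta)<s\beta/(s-2\beta)$ (valid for $s,\ell,\beta>0$), this implies $d^\ast_\beta(\mu^{E_j})\le d^\ast_\beta(\mu)+\ell<s+\ell$ whenever $s>d^\ast_\beta(\mu)$. Applying Theorem~\ref{corr1} with $d^\star=s+\ell$ then delivers $\E Z_j\lesssim n^{-\beta/(s+\ell)}$ for all sufficiently large $n$, uniformly in $j$. Assembling the three pieces and absorbing $n^{-1/2}$ into the concentration term produces the stated bound.

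The main conceptual obstacle is the dimension transfer to the pushforward: although the graph of $E_j$ is bi-Lipschitz to $\supp(\mu)$ and so \emph{would} share the same Wasserstein dimension, the bi-Lipschitz constant can be as large as $B_e^{L_e}$, which destroys uniformity in $j$. Routing instead through the cruder product-cover bound $d^\ast_\beta(\mu^{E_j})\le d^\ast_\beta(\mu)+\ell$ is coarse but $j$-uniform, and is precisely what forces the exponent $s+\ell$ (rather than $s$) in the second term. The remaining ingredients---the parametric sup-norm cover of $\sE$ and the McDiarmid-based maximum bound---are technical but routine.
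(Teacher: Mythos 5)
Your proof is essentially correct and takes the same strategic route as the paper: one-step discretization of $\sE$ via the Nakada--Imaizumi sup-norm covering bound with $\epsilon_1 = n^{-1/(2(\beta\wedge1))}$, bounded-differences/McDiarmid to render the fixed-$E$ statistic sub-Gaussian with parameter $\cO(n^{-1/2})$, a sub-Gaussian maximal inequality over the cover, and the empirical-process bound (Theorem~\ref{corr1}) applied to the pushforward measures $\mu^{E}=(\mathrm{id},E)_\sharp\mu$ after showing the cruder product-cover inequality $\sN_\epsilon(\mu^E,\tau)\le\sN_\epsilon(\mu,\tau)\cdot\cO(\epsilon^{-\ell})$ forces $d^\ast_\beta(\mu^E)\le d^\ast_\beta(\mu)+\ell$. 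The one structural difference is the order of operations: you discretize $\sE$ first and then apply the maximum bound $\E\max_j Z_j\le\max_j\E Z_j+C\sqrt{\log N/n}$, whereas the paper first decomposes $\E\sup_E Y_E\le\E\sup_E(Y_E-\E Y_E)+\sup_E\E Y_E$ and discretizes only the centered term; both are standard and give the same result.

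The one place you gloss over something the paper handles explicitly is uniformity of the Theorem~\ref{corr1} bound over the cover elements $E_j$. Because the cover $\{E_1,\ldots,E_N\}$ itself changes with $n$, you cannot simply take $n'=\max_j n_0^{(j)}$ for a fixed finite collection: you need the implicit constant and the threshold $n_0$ in $\E Z_j\lesssim n^{-\beta/(s+\ell)}$ to be uniform over \emph{all} $E\in\sE$. You assert the product-cover argument is ``$j$-uniform,'' which is the right idea, but Theorem~\ref{corr1} as stated is a single-measure result and does not itself deliver this. The paper closes the gap by introducing the family-level quantity $\sN_\epsilon(\Lambda,\tau)$ with $\Lambda=\{(\mu,E_\sharp\mu):E\in\sE\}$, proving $d^\ast_\beta(\Lambda)\le s+\ell$ (the same product-cover computation you do), and then invoking the family version Theorem~\ref{gen_kolmo}, which yields $\sup_{\gamma\in\Lambda}\E\|\hat\gamma_n-\gamma\|_\Psi\lesssim n^{-\beta/(s+\ell)}$ with constants depending only on the uniform cover. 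Your argument becomes airtight once you observe that the proof of Theorem~\ref{corr1} uses $\mu$ only through a nested sequence of almost-covers, and that the product cover $S_\epsilon\times[0,1]^\ell$ is a single such sequence that works simultaneously for every $\mu^E$.
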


To complete the proof, the idea is to combine the pieces discussed throughout Sections~\ref{sec_mis} and~\ref{sec_gen}. The goal is to choose the sizes of the neural network classes, i.e. the class of generators, encoders and decoders, such that both the misspecification errors and generalization errors are small. A large network size implies the misspecification error will be small enough but will make the bounds in Lemma~\ref{gen} loose and vice versa. Since the network sizes are expressed in terms of $\epsilon_e$, $\epsilon_g$ and $\epsilon_d$ in Lemmas \ref{approx_1} and \ref{approx_2}, expressing them in terms of $n$ in an optimal way that serves the purpose of finding a trade-off between the two errors to minimize the bound on the excess risk in Lemma~\ref{oracle}. The detailed proofs are given in Appendix \ref{pf_main}.

\section{Optimal Bounds for GANs with Interpolating Generators}
\label{minimax bounds}
In this section, we show that GAN estimates can (almost) achieve the minimax optimal rate for estimating distributions with a low intrinsic dimensional structure. To generalize the scenario further, we drop assumption A\ref{model1} and only work with A\ref{dat_assumption}. We replace assumption A\ref{model1} and assume that $\nu$ is an absolutely continuous distribution on $[0,1]^\ell$. This is satisfied by the uniform and normal distributions on the latent space, which are the commonly used choices in practice. It is important to note that lifting assumption A\ref{model1}, requires that the generator network have many more parameters than compared to the case when the target distribution is smooth (i.e. under A\ref{model1}), which is expected. Following a similar analysis as done in Section \ref{theo ana}, we can arrive at the following theorem, which states that the generator network can be selected in such a way as to obtain a rate of convergence of roughly, $\cO\left(n^{-\beta/d^\ast_\beta(\mu)}\right)$, with proof in Appendix~\ref{pf_main_2}. The networks class in Theorem \ref{main 2} is constructed so that some members of the class can (almost) linearly interpolate the data, as opposed to the Taylor series approximation used in the proof of Theorem \ref{approx}. Compared  to Theorem \ref{main_gan}, the networks required in Theorem \ref{main 2} are shallower, with a constant depth as opposed to a $\cO(\log n)$ depth. However, one does not have any control over the maximum value of the weight of these nearly interpolating networks.
\begin{restatable}{thm}{maintwo}\label{main 2}
   Suppose Assumption~A\ref{dat_assumption}  holds and let $\nu$ be absolutely continuous on $[0,1]^\ell$. Then, if $s > d^\ast_\beta(\mu)$, one can choose  $\sG = \cR \cN (L_g, W_g,\infty, 1)$ with the network parameters as $L_g \ge 2$ as a constant and $W_g \asymp n$, such that
    \( \E \|\mu - (\hG_n)_\sharp \nu\|_{\Phi} \lesssim  n^{-\beta/s} .\)
   Furthermore, if $m \ge n^{d/s+1}$,   then
   \( \E \|\mu - (\hG_{n,m})_\sharp \nu\|_{\Phi} \lesssim  n^{-\beta/s} .\)
\end{restatable}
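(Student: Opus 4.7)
The plan is to follow the same error-decomposition scheme used for Theorem~\ref{main_gan}, but replace the approximation step (which previously leveraged A\ref{model1}) with a direct construction of a quantizing ReLU generator. Specifically, the oracle inequality \eqref{o1} of Lemma~\ref{oracle} gives
\[
\E\|\mu - (\hG_n)_\sharp \nu\|_{\Phi} \;\le\; \inf_{G\in\sG}\|\mu - G_\sharp\nu\|_{\Phi} \;+\; 2\,\E\|\mu - \hmu_n\|_{\Phi},
\]
and Theorem~\ref{corr1} immediately gives $\E\|\mu - \hmu_n\|_{\Phi} \lesssim n^{-\beta/s}$ for every $s > d^\ast_\beta(\mu)$. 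So the entire task reduces to exhibiting one $G \in \sG = \cR\cN(L_g, W_g, \infty, 1)$ with $L_g = O(1)$ and $W_g \asymp n$ such that $\|\mu - G_\sharp \nu\|_{\Phi} \lesssim n^{-\beta/s}$.

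I would build $G$ as a piecewise near-constant ``quantizer''. Using Theorem~\ref{corr1} again (deterministically, applied along any realization of the sample, or to an $n$-point $\beta$-IPM quantization of $\mu$ obtained by covering arguments on an $(\epsilon, \epsilon^{s\beta/(s-2\beta)})$-cover witnessing $d^\ast_\beta(\mu)$), fix points $x_1,\dots,x_n \in [0,1]^d$ with $\|\tfrac{1}{n}\sum_i \delta_{x_i} - \mu\|_{\Phi} \lesssim n^{-\beta/s}$. Since $\nu$ is absolutely continuous on $[0,1]^\ell$, an iterated-conditional-quantile slicing of $\nu$ produces $n$ disjoint axis-aligned rectangles $A_1,\dots,A_n$ with $\nu(A_i)=1/n$. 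I then aim to construct $G$ so that $G(A_i)$ is contained in the $\delta$-ball around $x_i$ for $\delta = n^{-1/(s \vee \beta)}$; since any $\phi \in \Phi$ has $|\phi(u)-\phi(v)| \le \|u-v\|^{\beta\wedge 1}$, this gives $\|G_\sharp\nu - \tfrac{1}{n}\sum_i \delta_{x_i}\|_{\Phi} \lesssim \delta^{\beta\wedge 1} \lesssim n^{-\beta/s}$, and the triangle inequality finishes the approximation bound.

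The main obstacle is the explicit ReLU construction in step three: realizing an $n$-piece near-indicator quantizer with \emph{constant depth} and only $O(n)$ weights. The construction I have in mind uses the fact that a smoothed indicator $\one_{A_i}$ of an $\ell$-dimensional axis-aligned rectangle is a product of $\ell$ one-dimensional ``bump'' functions, each of which is a difference of four shifted ReLUs, and that products of $\ell=O(1)$ factors can be realized or $\tilde{O}(1)$-approximated by a Yarotsky-style constant-depth, constant-width multiplication subnetwork. Stacking $n$ such parallel gadgets, one per cell, with output weight $x_i$, yields $G(z) \approx \sum_{i=1}^n x_i \,\tilde{\one}_{A_i}(z)$ with $O(n)$ total parameters and depth independent of $n$. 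The unrestricted weight bound ($\cB=\infty$) is crucial here, since the smoothed indicators require large slopes (of order $1/\delta$) to keep the boundary-transition mass $O(\delta)$. A careful accounting of the boundary-strip $\nu$-mass (using absolute continuity of $\nu$ to bound it by a vanishing constant times the strip width) shows the quantization error is $O(\delta^{\beta\wedge 1})$, as required.

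Finally, for the bound on $\hG_{n,m}$, I would invoke \eqref{o2} of Lemma~\ref{oracle} and separately bound $\E\|\nu-\hnu_m\|_{\Phi\circ\sG}$. Since every $\phi \circ G$ with $\phi \in \Phi$ and $G \in \sG = \cR\cN(O(1), O(n), \infty, 1)$ has a controlled complexity (through Lemma~\ref{lem_apr_18} applied to $\sG$, combined with the $\Phi$ covering from Theorem~\ref{kt}), a calculation in the spirit of Lemma~\ref{gen} shows that $m \ge n^{d/s+1}$ makes this stochastic term negligible relative to $n^{-\beta/s}$, matching the stated sample-size condition.
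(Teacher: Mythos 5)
Your decomposition starts from the standard oracle inequality \eqref{o1}, which forces you to control $\inf_{G\in\sG}\|\mu - G_\sharp\nu\|_{\Phi}$ — the distance of the generator pushforward to the \emph{true} measure $\mu$. The paper instead proves a slightly tighter variant (Lemma~\ref{lem45}) in which the approximation term is $\inf_{G\in\sG}\|\hmu_n - G_\sharp\nu\|_{\Phi}$, comparing against the \emph{empirical} measure. This is the key move you are missing: since $\hmu_n$ is supported on exactly $n$ atoms and $\nu$ is absolutely continuous, Lemma~\ref{lem44} (which is just Lemma 3.1 of \citet{yang2022capacity}, the memorization result for ReLU pushforwards, combined with the observation $\|\gamma_1-\gamma_2\|_{\sH^\beta}\le \sW_1(\gamma_1,\gamma_2)^{\beta\wedge 1}$) makes that infimum \emph{identically zero} for any realization of the sample once $W_g\asymp n$. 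The whole error then collapses to $\E\|\mu-\hmu_n\|_{\Phi}$, which Theorem~\ref{corr1} bounds by $n^{-\beta/s}$, and there is nothing left to construct by hand.

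Because you committed to \eqref{o1}, you have to exhibit a deterministic $n$-point quantization of $\mu$ and then an explicit quantizer network, and both steps have genuine holes. First, Theorem~\ref{corr1} controls the \emph{expected} $\Phi$-IPM error of the \emph{random} empirical measure; it does not deliver a deterministic $n$-point surrogate ``for any realization,'' and the alternative covering-based quantization you gesture at is an additional lemma you would have to state and prove (the $(\epsilon,\epsilon^{s\beta/(s-2\beta)})$-cover controls covering numbers, not $\sH^\beta$-IPM approximation error by an atomic measure, so the translation is nontrivial). Second, the claim that ``products of $\ell$ factors can be realized or $\tilde\cO(1)$-approximated by a Yarotsky-style constant-depth, constant-width multiplication subnetwork'' is false as stated: Yarotsky's product gadget requires depth $\Theta(\log(1/\varepsilon))$ to reach accuracy $\varepsilon$, so at constant depth you only get constant accuracy, which destroys the $\delta^{\beta\wedge 1}$ bookkeeping. (A $\min$-based or thresholded-sum gate would fix this at constant depth for constant $\ell$, but that is a different gadget than the one you invoke.) Finally, your choice $\delta = n^{-1/(s\vee\beta)}$ gives $\delta^{\beta\wedge 1}=n^{-1/s}$ when $\beta>1$, which is larger than the target $n^{-\beta/s}$; you should instead take $\delta \asymp n^{-\beta/(s(\beta\wedge 1))}$, exactly as in the proof of Theorem~\ref{main_gan}. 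None of these is fatal in spirit, but together they leave the approximation step unproven, and all of them are avoided by using Lemma~\ref{lem45} and the Yang et al.\ memorization theorem as the paper does.
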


To understand whether the bounds in Theorem \ref{main 2} are optimal, we derive a minimax lower bound for the expected excess risk for the GAN problem for estimating distributions whose upper Wasserstein dimension is upper bounded by some constant. Assuming $d \ge 2 \beta$, we fix a constant $d^\star \in [2 \beta,d]$ and  consider the family of probability measures, 
\[ \fM^{d^\star, \beta} = \{\mu \in \Pi_{[0,1]^d}: d^\ast_\beta(\mu) \le d^\star\},\]
the set of all distributions on $[0,1]^d$, whose upper Wasserstein dimension is at most $d^\star$. We note that this collection of distributions contains distributions on specific manifolds of dimension $d^\star$ or less. Here, we use the notation $\Pi_{\cA}$ to denote the set of all probability measures on $\cA$. The minimax expected risk for this problem is given by,
\[\mathfrak{M}_n = \inf_{\hat{\mu}} \sup_{\mu \in \fM^{d^\star, \beta}} \E_\mu \|\hat{\mu} - \mu\|_{\sH^\beta(\Real^d, \Real, 1)},\]
where the infimum is taken over all measurable estimates of $\mu$, i.e. on $\{\hat{\mu}: X_{1:n} \to \Pi_{[0,1]^d}: \hat{\mu} \text{ is measurable}\}$.  Here, we write $\E_\mu$ to denote that the expectation is taken with respect to the joint distribution of $X_1, \dots, X_n$, which are i.i.d. $\mu$. $X_{1:n}$ denotes the data $X_1, \dots, X_n$. Theorem \ref{thm_30} characterizes this minimax rate, which states that the rate cannot be made any faster than roughly, $\cO\left(n^{-\beta/d^\star}\right)$.

\begin{restatable}{thm}{thmtwentynine}\label{thm_30}
    Suppose that $d\ge 2\beta$ and let $d^\star \in [2 \beta , d]$. Then for any $s < d^\star $,
    \[ \mathfrak{M}_n = \inf_{\hat{\mu}} \sup_{\mu \in \fM^{d^\star, \beta}} \E_\mu \|\hat{\mu} - \mu\|_{\sH^\beta(\Real^d, \Real, 1)}  \gtrsim n^{-\beta/s},\]
    where the infimum is taken over all measurable estimates of $\mu$, based on the data.
\end{restatable}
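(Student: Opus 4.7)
The plan is to apply a standard Fano-type minimax lower bound on a carefully constructed family of perturbed uniform densities placed on a low-dimensional coordinate slice of $[0,1]^d$. First, I would restrict attention to probability measures supported on the coordinate slice $\cS := [0,1]^{d^\star} \times \{0\}^{d-d^\star}$. Any such $\mu$ satisfies $\overline{\text{dim}}_M(\mu) \le d^\star$, and since $d^\star \ge 2\beta$ by hypothesis, Proposition~\ref{lem_weed} yields $d^\ast_\beta(\mu) \le \overline{\text{dim}}_M(\mu) \le d^\star$. Hence this entire subfamily lies inside $\fM^{d^\star,\beta}$, so it suffices to prove the lower bound for this subfamily; this reduces the task to a standard Hölder-IPM density estimation problem on a $d^\star$-dimensional cube.

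Next, I would build the packing. Pick a smooth bump $\phi : \Real^{d^\star} \to \Real$ with $\supp(\phi) \subset [\tfrac14,\tfrac34]^{d^\star}$, $\int \phi \, dx = 0$, and $\|\phi\|_{\sH^\beta} \le 1$. For a scale $\epsilon \in (0,1/2)$ to be tuned later, let $\{x_i\}_{i=1}^M \subset [0,1]^{d^\star}$ be an $\epsilon$-spaced grid with $M \asymp \epsilon^{-d^\star}$, and set $\psi_i(x) := \phi((x-x_i)/\epsilon)$; the $\psi_i$ have pairwise disjoint supports separated by $\asymp \epsilon$. For $\omega \in \{-1,+1\}^M$ and a small amplitude $\eta>0$, define $P_\omega$ as the distribution on $\cS$ whose $d^\star$-dimensional Lebesgue density equals $g_\omega := 1 + \eta \sum_i \omega_i \psi_i$. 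The condition $\int \phi = 0$ makes each $g_\omega$ integrate to one, and $\eta \|\phi\|_\infty \le 1$ ensures nonnegativity.

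For the information-theoretic step I would apply the Varshamov--Gilbert lemma to extract $\Omega \subseteq \{-1,+1\}^M$ with $|\Omega| \ge 2^{M/8}$ and pairwise Hamming distance $\ge M/8$. For distinct $\omega, \omega' \in \Omega$, the test function $f := \epsilon^\beta \sum_i (\omega_i - \omega'_i) \psi_i$ has $\sH^\beta$-norm of order one (due to disjoint supports and the $\epsilon^\beta$ prefactor exactly canceling the $\epsilon^{-|\bs|}$ scaling of $\partial^{\bs} \psi_i$ for $|\bs| \le \lfloor \beta \rfloor$), which yields
\[
\|P_\omega - P_{\omega'}\|_{\sH^\beta} \;\ge\; \frac{\int f \, d(P_\omega - P_{\omega'})}{\|f\|_{\sH^\beta}} \;\gtrsim\; \eta \epsilon^\beta \sum_i (\omega_i - \omega'_i)^2 \!\int \psi_i^2 \, dx \;\gtrsim\; \eta \epsilon^\beta \cdot M \epsilon^{d^\star} \;\asymp\; \eta \epsilon^\beta.
\]
A $\chi^2$-to-KL bound (valid when $\eta$ is small enough that $g_\omega \ge 1/2$) gives $\mathrm{KL}(P_\omega,P_{\omega'}) \lesssim \eta^2$, whence $\mathrm{KL}(P_\omega^{\otimes n}, P_{\omega'}^{\otimes n}) \lesssim n \eta^2$. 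Picking $\epsilon \asymp n^{-1/d^\star}$ and $\eta$ a small constant makes $n\eta^2 \le \tfrac14 \log|\Omega|$, so Fano's inequality together with the index-recovery scheme $\hat\omega := \argmin_\omega \|\hat\mu - P_\omega\|_{\sH^\beta}$ shows that $\prob(\hat\omega \ne \omega)$ is bounded away from $0$ uniformly in $n$ and $\omega$; the triangle inequality then gives $\mathfrak M_n \gtrsim \eta \epsilon^\beta \asymp n^{-\beta/d^\star} \ge n^{-\beta/s}$ since $s < d^\star$.

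The main technical obstacle is the uniform $\sH^\beta$-bound on $f$: one must verify carefully that the $\epsilon^\beta$ prefactor exactly absorbs the $\epsilon^{-|\bs|}$ blowup of each $\partial^{\bs} \psi_i$ for $|\bs| \le \lfloor \beta \rfloor$, and that the disjoint-support geometry (rather than any cancellation in $\omega$) passes the single-bump bound to the sum while also controlling the $(\beta-\lfloor\beta\rfloor)$-Hölder seminorm across pairs of points lying in different support regions. A second, more conceptual, subtlety is verifying $\fM^{d^\star,\beta}$-membership through Proposition~\ref{lem_weed}, since the constructed measures are singular with respect to Lebesgue measure on $\Real^d$; the hypothesis $d^\star \ge 2\beta$ is precisely what allows the Minkowski-to-Wasserstein dimension comparison to be invoked.
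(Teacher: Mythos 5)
Your approach is genuinely different from the paper's, and there is a gap that makes it fail on part of the stated parameter range. The paper's proof works entirely with finitely supported (discrete) measures: it packs $k\asymp\epsilon^{-s}$ points from the support of a fixed carrier measure $\mu^\star$, defines $P_{\bomega}$ by perturbing the uniform distribution on these atoms, and lower-bounds the H\"older IPM through a separate lemma (Lemma~\ref{lem_28}) that relates $\|P-Q\|_{\sH^\beta}$ to $(\operatorname{sep}\Xi)^\beta\|P-Q\|_{\mathrm{TV}}$. You instead build absolutely continuous perturbations of a uniform density on a flat coordinate slice, which is the more classical Tsybakov-style construction; it has the advantage of yielding the stronger rate $n^{-\beta/d^\star}$ directly rather than $n^{-\beta/s}$ for each $s<d^\star$, and the TV/IPM comparison is folded into an explicit test function rather than stated as a lemma.

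The gap is in the very first step: the slice $\cS=[0,1]^{d^\star}\times\{0\}^{d-d^\star}$ only makes sense when $d^\star$ is an integer, but the theorem allows any $d^\star\in[2\beta,d]$. For non-integer $d^\star$ you would have to use the $\lfloor d^\star\rfloor$-dimensional slice, and then the best the construction can produce is $M\asymp\epsilon^{-\lfloor d^\star\rfloor}$ bumps, giving only the rate $n^{-\beta/\lfloor d^\star\rfloor}$; this fails to cover $s\in(\lfloor d^\star\rfloor, d^\star)$, and worse, one can even have $\lfloor d^\star\rfloor<2\beta$, in which case the membership check in $\fM^{d^\star,\beta}$ via Proposition~\ref{lem_weed} breaks down too. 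The paper handles this by invoking Proposition~\ref{prop_45} / Corollary~\ref{cor46}, which construct a carrier measure on a Cantor-set cross a cube whose upper Wasserstein and lower Wasserstein dimensions are both exactly $d^\star$, including non-integer values. If you want to salvage your density-perturbation strategy you would need to replace your flat slice by such a fractal support (or by an $s$-dimensional packing drawn from it, as the paper does), at which point the discrete-mass version of the argument becomes substantially cleaner than trying to define densities and bump functions relative to a Cantor measure. The rest of your argument (Varshamov--Gilbert, the $\chi^2$/KL bound, the Fano step, and the $\sH^\beta$-norm calculation using disjoint supports and $\epsilon$-separation) is sound.
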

By Theorem, \ref{main 2}, we observe that for any $\mu \in \fM^{d^\star, \beta}$ and any $\overline{s} > d^\star$, $\E_\mu \|\hat{G}_\sharp \nu - \mu\|_{\sH^\beta(\Real^d, \Real, C)} \lesssim n^{-\beta/\overline{s}}$. Thus, in $\fM^{d^\star}$, the GAN estimator, $\hat{G}_\sharp \nu$, roughly achieves the optimal rate.

Suppose that $\fM_{d_\star} = \{\mu: d_\ast(\mu) \le d_\star\}$, then it is clear that $\fM^{d_\star, \beta} \subseteq \fM_{d_\star}$ by Proposition~\ref{lem1}.  Hence, if $ 2 \beta \le d_\star \le d$ and $\underline{s} < d_\star$ then, 
\[ \inf_{\hat{\mu}} \sup_{\mu \in \fM_{d_\star}} \E_\mu \|\hat{\mu} - \mu\|_{\sH^\beta(\Real^d, \Real, 1)}  \ge \inf_{\hat{\mu}} \sup_{\mu \in \fM^{d_\star, \beta}} \E_\mu \|\hat{\mu} - \mu\|_{\sH^\beta(\Real^d, \Real, 1)} \gtrsim n^{-\beta/\underline{s}}\]
Following a similar proof as that of Theorem~\ref{thm_30}, one can do away with the condition of $d_\star \ge 2\beta$ and arrive at the following theorem. 

\begin{restatable}{thm}{thmthirty}\label{thm_31}
    Suppose $d_\star \in [0, d]$. Then for any $s < d_\star $, \[ \mathfrak{M}_n = \inf_{\hat{\mu}} \sup_{\mu \in \fM_{d_\star}} \E_\mu \|\hat{\mu} - \mu\|_{\sH^\beta(\Real^d, \Real, 1)}  \gtrsim n^{-\beta/s} \vee n^{-1/2} , \]
    where the infimum is taken over all measurable estimates of $\mu$, based on the data.
\end{restatable}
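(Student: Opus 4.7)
The strategy splits on whether $d_\star \ge 2\beta$ or not. The regime $d_\star \ge 2\beta$ is already handled in the paragraph preceding the theorem: by Proposition~\ref{lem1}(a), $d_\ast(\mu) \le d^\ast_\beta(\mu)$, so $\fM^{d_\star,\beta} \subseteq \fM_{d_\star}$, and the minimax lower bound of Theorem~\ref{thm_30} transfers verbatim. The actual work is therefore the regime $d_\star < 2\beta$, and the overall plan is to replay the Fano-type packing construction underlying Theorem~\ref{thm_30} without the implicit restriction $d^\ast_\beta \ge 2\beta$ baked into the definition of the upper Wasserstein dimension.

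Fix $s < d_\star$ and set $\epsilon = c_0 n^{-1/s}$. The first step is to build a set $K \subset [0,1]^d$ together with a probability measure $\mu_0$ on $K$ such that $d_\ast(\mu_0) \le d_\star$ and $K$ can be cleanly partitioned into $M \asymp \epsilon^{-d_\star}$ disjoint pieces of diameter $\asymp \epsilon$ on which a single bump shape can be translated. For integer $d_\star$, take $K = [0,1]^{d_\star} \times \{0\}^{d - d_\star}$ with the uniform measure. For non-integer $d_\star$, take the Cartesian product of a self-similar Cantor-type set of similarity dimension $d_\star - \lfloor d_\star \rfloor$ with a $\lfloor d_\star \rfloor$-dimensional cube, equipped with its natural Hausdorff measure, whose lower Wasserstein dimension is $d_\star$ by Propositions~8 and~9 of \citet{weed2019sharp}. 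Fix a smooth bump $\phi$ with $\int \phi \, d\mu_0 = 0$, $\|\phi\|_\infty \le 1/2$, and $\|\phi\|_{\sH^\beta} \lesssim 1$, and for each $\omega \in \{-1,+1\}^M$ let $P_\omega$ have $\mu_0$-density $p_\omega = 1 + \sum_{i=1}^M \omega_i \phi_i$, where $\phi_i$ is the translate of $\phi$ into the $i$-th cell rescaled to diameter $\epsilon$. Each $P_\omega$ is a probability measure in $\fM_{d_\star}$.

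The Fano step is then routine. By Gilbert--Varshamov, extract $\Omega \subseteq \{-1,+1\}^M$ with $|\Omega| \ge 2^{M/8}$ and pairwise Hamming distance at least $M/4$. The test function $f = \epsilon^\beta \sum_i \mathrm{sign}(\omega_i - \omega_i') \phi_i$ lies in the unit $\sH^\beta$-ball up to constants (from disjoint supports and the $\epsilon$-rescaling of $\phi_i$), and a straightforward calculation using $\int \phi_i \phi_j \, d\mu_0 = 0$ for $i \ne j$ yields $\|P_\omega - P_{\omega'}\|_{\sH^\beta} \gtrsim \epsilon^\beta$. A $\chi^2$ computation with densities bounded in $[1/2, 3/2]$ gives $\mathrm{KL}(P_\omega^{\otimes n} \| P_{\omega'}^{\otimes n}) \lesssim n$, which is negligible against $\log |\Omega| \asymp M = \epsilon^{-d_\star} = n^{d_\star/s}$ since $d_\star/s > 1$. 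Fano's inequality then delivers $\mathfrak{M}_n \gtrsim \epsilon^\beta = n^{-\beta/s}$.

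The main obstacle is the construction of $K$ in the non-integer case: one needs a measure whose lower Wasserstein dimension is exactly $d_\star$ and whose support admits the clean partition-and-translate structure used in the bump construction. Self-similar sets satisfying the open-set condition supply this, but the required local covering and density estimates must be carefully verified against the definition of $d_\ast(\mu_0)$. Once $K$ is in place, the separation and KL computations mirror those behind Theorem~\ref{thm_30}, and no other significant changes are required.
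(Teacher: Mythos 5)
Your proposal arrives at the right bound, and the $d_\star \ge 2\beta$ branch (inherit from Theorem~\ref{thm_30} via $\fM^{d_\star,\beta}\subseteq\fM_{d_\star}$) is correct as stated. But the case split is unnecessary, and the construction you devise for $d_\star < 2\beta$ is substantially more complicated than what the problem actually requires.

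The paper's proof simply reruns the Theorem~\ref{thm_30} construction with no modification. The alternatives $P_{\bomega}$ are discrete probability mass functions supported on a finite $\epsilon$-separated set $\Theta$. For any finitely-supported measure, $\sN_\epsilon(P_\omega,\tau)$ is bounded, so $d_\ast(P_\omega)=0\le d_\star$ holds automatically for every $d_\star\in[0,d]$; the constraint ``$\mu\in\fM_{d_\star}$'' is free. This is precisely why the restriction $d^\star\ge 2\beta$ disappears in passing from $\fM^{d^\star,\beta}$ to $\fM_{d_\star}$: the restriction in Theorem~\ref{thm_30} came from needing the discrete alternatives to satisfy $d^\ast_\beta(P_\omega)\le d^\star$, and a finitely-supported measure has $d^\ast_\beta=2\beta$ (the infimum of the admissible $s$-range), so $d^\star\ge 2\beta$ was forced. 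With the \emph{lower} Wasserstein dimension, discrete alternatives sit at $0$ and no such floor appears. Proposition~\ref{prop_45} is invoked only to guarantee a base measure $\mu_\star$ with a rich enough $\epsilon$-separated set to seed the packing, not to constrain the family of alternatives.

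Your density-perturbation route for $d_\star<2\beta$ is a genuinely different argument and is not wrong in principle, but it carries several obligations that you flag but do not discharge: you need a partition of the fractal support into $\asymp\epsilon^{-d_\star}$ cells of comparable $\mu_0$-mass, a family of mean-zero bumps $\phi_i$ with $\int\phi_i\,d\mu_0=0$ adapted to each cell (a fixed translate-and-rescale of one bump will not in general integrate to zero against a Cantor-type base measure), and a verification that the density perturbation preserves the dimension constraint, i.e.\ $d_\ast(P_\omega)\le d_\star$. The last point does go through because $p_\omega\in[1/2,3/2]$ implies $\sN_\epsilon(P_\omega,\tau)\le\sN_\epsilon(\mu_0,2\tau/3)$, but it is an extra step you would have to spell out. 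All of this machinery is bought at no gain in the final rate, since the paper's discrete construction already handles every $d_\star\in[0,d]$ in one pass. If you wanted a reason to prefer your route, it would be that absolutely continuous alternatives can sit in smaller smooth model classes, which could strengthen the lower bound over a more restricted family; but for the class $\fM_{d_\star}$ as defined, the discrete construction is strictly simpler.
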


 It is worth observing that for the special case of $\beta =1$, i.e. under the Wasserstein-1 distance, this rate matches with the ones derived in literature \citep{niles2022minimax, singh2018minimax}, i.e. $\cO(n^{-1/d})$. 

\section{Discussions and Conclusion}\label{conclusion}
 This paper delves into the theoretical properties of GANs and Bi-directional GANs when the data exhibits an intrinsically low-dimensional structure within a high-dimensional feature space. We propose to characterize the low-dimensional nature of the data distribution through its upper Wasserstein dimension and the so-called $\alpha$-entropic dimension, which we develop by extending Dudley's notion of entropic dimension \citep{dudley1969speed}. Specifically, we not only show that the classical result by \citet{kolmogorov1961} can be strengthened by incorporating this entropic dimension in the metric entropy but also show that the convergence rates of the empirical distribution to the target population in the $\beta$-H\"{o}lder IPM scales as $\cO\left(n^{-\beta/d^\ast_\beta(\mu)}\right)$, extending the results by \cite{weed2019sharp}. Furthermore, we improve upon the existing results on the approximation capabilities of ReLU networks.  By balancing the generalization gap and approximation errors, we establish that under the assumption that the true generator and encoders are H\"{o}lder continuous, the excess risk in terms of an H\"{o}lder IPM can be bounded in terms of the ambient upper Wasserstein dimension of the target measure and the latent space. This bypasses the curse of dimensionality of the full data space, strengthens the known results in GANs, and leads to novel bounds for BiGANs.  The derived results also match the sharp convergence rates for the empirical distribution available in the optimal transport literature. We also show that the GAN estimate of the target distribution can roughly achieve the minimax optimal rates for estimating intrinsically low-dimensional distributions.

While our results provide insights into the theoretical properties of GANs and Bi-directional GANs, it is important to acknowledge that estimating the full error of these generative models used in practice involves considering an optimization error term. Unfortunately, accurately estimating this term remains a challenging task due to the non-convex and complex nature of the minimax optimization. However, it is worth pointing out that our error analyses are independent of optimization and can be seamlessly combined with optimization analyses. An additional area that we have not addressed in this paper is that the $\beta$-H\"{o}lder discriminators are typically also realized by ReLU networks in practice. While this is an important consideration, it requires imposing restrictions on the networks to closely resemble smooth functions, which is typically achieved through various regularization techniques. However, ensuring that these regularization techniques provably lead to well-behaved discriminators remains a challenging task from a theoretical perspective. We believe that this is a promising avenue for future research. One can also attempt to extend our approximation results to more general functions representable on some other basis than non-smooth functions such as Besov spaces or Fourier series approximations, for instance using ideas of \citet{suzuki2018adaptivity} or \citet{bresler2020sharp}. 

\section*{Acknowledgments}
We gratefully acknowledge the support of the NSF and the Simons Foundation for the Collaboration on the Theoretical Foundations of Deep Learning through awards DMS-2031883 and \#814639 and the NSF's support of FODSI through grant DMS-2023505.

\newpage


\appendix
\section*{Appendices}


\tableofcontents
\section{Proofs from Section \ref{intrinsic}}\label{p_intrinsic}
This section provides the proofs of the results from Section \ref{intrinsic}. For notational simplicity, we write $\text{fat}(S, \epsilon) = \{y: \inf_{x \in S} \varrho(x, y) \le \epsilon\}$ for the $\epsilon$-fattening of the set $S$, w.r.t. the metric $\varrho$.
\subsection{Proof of Proposition~\ref{lem1}}\label{p_lem1}
Before we prove Proposition \ref{lem1}, we first show the following key lemma.
\begin{lem}\label{lem_b1}
    For any $0 < \tau_1 \le \tau_2 < 1$, $\sN_\epsilon(\mu, \tau_1) \ge \sN_\epsilon(\mu, \tau_2)$.
\end{lem}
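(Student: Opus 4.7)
The approach is a straightforward monotonicity argument on the feasible region of the infimum defining $\sN_\epsilon(\mu,\tau)$. The plan is to observe that the constraint $\mu(S) \ge 1-\tau$ becomes easier to satisfy as $\tau$ grows, so enlarging $\tau$ enlarges the family of admissible sets $S$, and thus can only decrease the infimum of $\cN(\epsilon;S,\varrho)$ over that family.

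Concretely, I would first define, for each $\tau \in (0,1)$, the feasible family
\[
\mathscr{A}_\tau := \{ S \subseteq \operatorname{supp}(\mu) : \mu(S) \ge 1 - \tau \},
\]
so that by definition $\sN_\epsilon(\mu,\tau) = \inf_{S \in \mathscr{A}_\tau} \cN(\epsilon; S, \varrho)$. Next, I would note that since $\tau_1 \le \tau_2$ implies $1-\tau_1 \ge 1-\tau_2$, any $S$ satisfying $\mu(S)\ge 1-\tau_1$ automatically satisfies $\mu(S)\ge 1-\tau_2$; hence $\mathscr{A}_{\tau_1} \subseteq \mathscr{A}_{\tau_2}$. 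Finally, taking an infimum over a subset can only be larger, giving
\[
\sN_\epsilon(\mu,\tau_1) \;=\; \inf_{S \in \mathscr{A}_{\tau_1}} \cN(\epsilon;S,\varrho) \;\ge\; \inf_{S \in \mathscr{A}_{\tau_2}} \cN(\epsilon;S,\varrho) \;=\; \sN_\epsilon(\mu,\tau_2).
\]

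There is no real obstacle here; the statement is a one-line consequence of the monotonicity of infima under set inclusion, once one unpacks the definition of $\sN_\epsilon(\mu,\tau)$. The only point worth being careful about is ensuring the feasible families are nonempty (so that the infimum is not taken over an empty set), which is immediate since $S = \operatorname{supp}(\mu)$ always lies in $\mathscr{A}_\tau$ for every $\tau \in (0,1)$.
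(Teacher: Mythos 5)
Your proof is correct and is essentially identical to the paper's argument: both observe that $\tau_1 \le \tau_2$ implies $\{S : \mu(S) \ge 1-\tau_1\} \subseteq \{S : \mu(S) \ge 1-\tau_2\}$ and then invoke monotonicity of the infimum under set inclusion. The only cosmetic difference is that you restrict the feasible sets to subsets of $\operatorname{supp}(\mu)$ (which the paper's definition does not require) and explicitly note nonemptiness, but neither affects the substance.
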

\begin{proof}
It is easy to see that $\{S: \mu(S) \ge 1 - \tau_1\} \subseteq \{S: \mu(S) \ge 1 - \tau_2\} $. Hence, $\sN_\epsilon(\mu, \tau_1) = \inf\{\cN(\epsilon;S, \varrho): \mu(S) \ge 1 - \tau_1\} \ge \inf\{\cN(\epsilon;S, \varrho): \mu(S) \ge 1 - \tau_2\} = \sN_\epsilon(\mu, \tau_2)$.  
\end{proof}
We are now ready to prove Proposition~\ref{lem1} as shown below.

\pflemone*

\begin{proof}
\textbf{Proof of part (a)}: Suppose that  \[\sA =  \left\{s \in (2\alpha, \infty) : \limsup_{\epsilon \downarrow 0} \frac{\log \sN_{\epsilon}\left(\mu, \epsilon^{\frac{s\alpha}{s-2\alpha}}\right)}{\log(1/\epsilon)} \leq s \right\}.\] Fix $s \in \sA$. By definition, $s>2\alpha$. Since for $\epsilon<1$, $\epsilon^\alpha \ge \epsilon^{\frac{s\alpha}{s-2\alpha}}$. Thus, by Lemma~\ref{lem_b1}, $\sN_\epsilon(\mu, \epsilon^\alpha) \le \sN_\epsilon\left(\mu, \epsilon^{\frac{s\alpha}{s-2\alpha}}\right)$, when $\epsilon<1$. Fix $\tau \in (0,1)$. Hence, \[\bar{d}_\alpha(\mu) = \limsup_{\epsilon \downarrow 0} \frac{\log \sN_\epsilon(\mu, \epsilon^\alpha) }{-\log \epsilon} \le \limsup_{\epsilon \downarrow 0} \frac{\log \sN_\epsilon(\mu, \epsilon^{\frac{s\alpha}{s-2\alpha}}) }{-\log \epsilon} \le s.\]
Taking infimum over $s \in \sA$ gives us, $\bar{d}_\alpha(\mu) \le d^\ast_\alpha(\mu)$.

To observe the other implication, we first note that for any fixed $\tau \in (0,1)$,  $\epsilon^\alpha \le \tau$, if $\epsilon \le \tau^{1/\alpha}$. Thus, by Lemma~\ref{lem_b1}, $\sN_\epsilon(\mu, \epsilon^\alpha) \ge \sN_\epsilon(\mu, \tau)$, if $\epsilon \le \tau^{1/\alpha}$. Hence,
\[\bar{d}_\alpha(\mu) = \limsup_{\epsilon \downarrow 0} \frac{\log \sN_\epsilon(\mu, \epsilon^\alpha) }{-\log \epsilon} \ge \limsup_{\epsilon \downarrow 0} \frac{\log \sN_\epsilon(\mu, \tau) }{-\log \epsilon} \ge \liminf_{\epsilon \downarrow 0} \frac{\log \sN_\epsilon(\mu, \tau) }{-\log \epsilon}. \]
Taking limit on both sides as $\tau \downarrow 0$, gives us, $\bar{d}_\alpha(\mu) \ge d_\ast(\mu)$.

\textbf{Proof of part (b)}: If $\alpha_1 \le \alpha_2$, then, we observe that $\sN_\epsilon(\mu, \epsilon^{\alpha_1}) \le \sN_\epsilon(\mu, \epsilon^{\alpha_2})$, for $\epsilon<1$. Thus, \(\bar{d}_{\alpha_1}(\mu) = \limsup_{\epsilon \downarrow 0} \frac{\log \sN_\epsilon(\mu, \epsilon^{\alpha_1}) }{-\log \epsilon} \le \limsup_{\epsilon \downarrow 0} \frac{\log \sN_\epsilon(\mu, \epsilon^{\alpha_2}) }{-\log \epsilon} =  \bar{d}_{\alpha_2}(\mu).\)

\textbf{Proof of part (c)}: For any $\alpha>0$, $\sN_\epsilon(\mu,\epsilon^\alpha) \le  \sN_\epsilon(\mu,0) =\cN(\epsilon; \text{supp}(\mu), \varrho) $. Thus taking $\limsup$ as $\epsilon \downarrow 0$ gives us the result. 

 \textbf{Proof of part (d)} Suppose that $\sA = \left\{s \in (2\alpha, \infty) : \limsup_{\epsilon \downarrow 0} \frac{\log \sN_{\epsilon}\left(\mu, \epsilon^{\frac{s\alpha}{s-2\alpha}}\right)}{\log(1/\epsilon)} \leq s \right\}$. Let $0 < \epsilon < 1$, $s > \overline{\text{dim}}_P(\mu)$  
 and $\tau = \epsilon^{\frac{s \alpha}{s - 2 \alpha}}$. $S$ be such that $\mu(S) \ge 1- \tau$ and $\cN(\epsilon;S, \varrho) = \sN_\epsilon(\mu, \tau)$. We let $R = \text{diam}(S) \vee 1$. Let $\{x_1, \dots, x_M\}$ be an optimal $2\epsilon$-packing of $S \cap \text{supp}(\mu)$. By the definition of the upper packing dimension, for any $s > \overline{\text{dim}}_{P}(\mu) $ we can find $r_0 < 1$, such that, 
 \begin{align*}
     & \frac{\log \mu(B(x,r))}{\log r} \le s, \, \forall r \le r_0 \text{ and } x \in \text{supp}(\mu) \\
     \implies &  \mu(B(x,r)) \ge r^s, \, \forall r \le r_0 \text{ and } x \in \text{supp}(\mu).
 \end{align*}
Thus, if $\epsilon \le r_0$, \(1 \ge \mu\left(\cup_{i=1}^M B(x_i, \epsilon)\right) = \sum_{i=1}^M \mu\left( B(x_i, \epsilon)\right) \ge M \epsilon^s \implies M \le \epsilon^{-s}.\) By Lemma \ref{cov_pack}, we know that $\sN_\epsilon(\mu, \tau) =\cN(\epsilon; S, \varrho) \le M \le  \epsilon^{-s}$. Thus,
\[ \limsup_{\epsilon \downarrow 0} \frac{ \log \sN_\epsilon\left(\mu, \epsilon^{\frac{s \alpha}{s-2 \alpha}}\right)}{-\log \epsilon}  \le s \implies s \in \sA \implies d^\ast_\alpha(\mu) \le s.\]
Since $d^\ast_\alpha(\mu) \le s$, for all $s > \overline{\text{dim}}_P(\mu)$, we get, $d^\ast_\alpha(\mu) \le \overline{\text{dim}}_P(\mu)$. The inequality $\overline{\text{dim}}_P(\mu) \le \overline{\text{dim}}_{\text{reg}}(\mu)$ follows from \citet[Theorem 2.1]{fraser2017upper}.

\textbf{Proof of part (e)}: 

 Let $0 < \epsilon < 1$, $s > \overline{\text{dim}}_P(\mu)$  
 and $\tau = \epsilon^{\alpha}$. $S$ be such that $\mu(S) \ge 1- \tau$ and $\cN(\epsilon;S, \varrho) = \sN_\epsilon(\mu, \tau)$. We let $R = \text{diam}(S) \vee 1$. Let $\{x_1, \dots, x_M\}$ be an optimal $2\epsilon$-packing of $S \cap \text{supp}(\mu)$. By the definition of the upper packing dimension, for any $s > \overline{\text{dim}}_{P}(\mu) $ we can find $r_0 < 1$, such that, 
\(\frac{\log \mu(B(x,r))}{\log r} \le s, \, \forall r \le r_0 \text{ and } x \in \text{supp}(\mu) \implies  \mu(B(x,r)) \ge r^s, \, \forall r \le r_0 \text{ and } x \in \text{supp}(\mu).\)
Thus, if $\epsilon \le r_0$, \(1 \ge \mu\left(\cup_{i=1}^M B(x_i, \epsilon)\right) = \sum_{i=1}^M \mu\left( B(x_i, \epsilon)\right) \ge M \epsilon^s \implies M \le \epsilon^{-s}.\) By Lemma \ref{cov_pack}, we know that $\sN_\epsilon(\mu, \tau) =\cN(\epsilon; S, \varrho) \le M \le  \epsilon^{-s}$. Thus,
\[ \bar{d}_\alpha(\mu) = \limsup_{\epsilon \downarrow 0} \frac{ \log \sN_\epsilon\left(\mu, \epsilon^{\alpha}\right)}{-\log \epsilon}  \le s .\]
Since $\bar{d}_\alpha(\mu) \le s$, for all $s > \overline{\text{dim}}_P(\mu)$, we get, $\bar{d}_\alpha(\mu) \le \overline{\text{dim}}_P(\mu)$. The inequality $\overline{\text{dim}}_P(\mu) \le \overline{\text{dim}}_{\text{reg}}(\mu)$ follows from Theorem 2.1 of \cite{fraser2017upper}.

\textbf{Proof of part (f)}: Let $0 < \epsilon < 1$ and $S$ be such that $\mu(S) \ge 1-\tau$ and $\cN(\epsilon; S, \varrho) = \sN_\epsilon(\mu, \tau)$. We let $R = \text{diam}(S) \vee 1$. Let $\{x_1, \dots, x_N\}$ be an optimal $\epsilon$-net of $S \cap \text{supp}(\mu)$. By the definition of the upper regularity dimension, for any $s < \overline{\text{dim}}_{\text{reg}}(\mu) $ we observe that,
 $\mu(B(x_i, \epsilon)) \le \frac{\mu(B(x_i, R))}{C R^s} \epsilon^{s} \le \frac{(1-\tau)}{C R^s} \epsilon^{s}$. We observe that, $\mu(\cup_{i=1}^N B(x_i,\epsilon)) \ge \mu(S) \ge 1-\tau \implies \sum_{i=1}^N \mu( B(x_i,\epsilon)) \ge 1 -\tau$. This implies that,
 \[N \epsilon^s \frac{1-\tau}{C R^s}\ge \sum_{i=1}^N \mu( B(x_i,\epsilon)) \ge 1 -\tau \implies N \ge C R^s \epsilon^{-s}.\]
 Thus, \(\cN(\epsilon; S, \varrho) \ge C R^s (2\epsilon)^{-s} \implies   \liminf_{\epsilon \downarrow 0} \frac{ \log \sN_\epsilon(\mu, \tau)}{-\log \epsilon}  \ge s,\) which implies that \[(d_\ast(\mu) = \lim_{\tau \downarrow 0}\limsup_{\epsilon \downarrow 0} \frac{ \log \sN_\epsilon(\mu, \tau)}{-\log \epsilon}  \ge s.\]
 Hence, $d_\ast(\mu) \ge s$, for any $s < \overline{\text{dim}}_{\text{reg}}(\mu)$, which gives us the desired result.
 \end{proof}

 \subsection{Proof of Theorem \ref{kt}}

Next, we give a proof of Theorem \ref{kt}. Before proceeding, we consider the following lemma.
 \begin{lem}\label{lem_e1}
     For any $s > \bar{d}_{\alpha}(\mu)$, we can find an $\epsilon^\prime \in (0,1)$, such that if $\epsilon \in (0,\epsilon^\prime]$ and a set $S$, such that $\mu(S) \ge 1 - \epsilon^\alpha$ and $\cN(\epsilon; S, \varrho) \le \epsilon^{-s}$.
 \end{lem}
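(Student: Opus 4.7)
\textbf{Proof plan for Lemma \ref{lem_e1}.}

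The plan is to unpack the definitions of $\bar{d}_\alpha(\mu)$ and $\sN_\epsilon(\mu, \epsilon^\alpha)$ directly. First, I would translate the hypothesis $s > \bar{d}_\alpha(\mu)$ into a uniform statement over small $\epsilon$: since
\[\bar{d}_\alpha(\mu) \;=\; \inf_{\delta>0}\, \sup_{0<\epsilon<\delta}\, \frac{\log \sN_\epsilon(\mu,\epsilon^\alpha)}{\log(1/\epsilon)} \;<\; s,\]
the infimum being strictly less than $s$ yields some $\epsilon^\prime \in (0,1)$ such that $\sup_{0<\epsilon<\epsilon^\prime} \frac{\log \sN_\epsilon(\mu,\epsilon^\alpha)}{\log(1/\epsilon)} < s$. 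For each $\epsilon \in (0,\epsilon^\prime]$, since $\log(1/\epsilon)>0$, this is equivalent to $\sN_\epsilon(\mu,\epsilon^\alpha) < \epsilon^{-s}$ (and in particular $\sN_\epsilon(\mu,\epsilon^\alpha)$ is finite).

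Next, I would invoke the definition of $\sN_\epsilon(\mu,\epsilon^\alpha)$ as an infimum over sets $S$ with $\mu(S) \ge 1-\epsilon^\alpha$. Because covering numbers take values in $\mathbb{N}\cup\{\infty\}$, a finite infimum over such values is attained; hence there exists a set $S$ with $\mu(S) \ge 1 - \epsilon^\alpha$ and $\cN(\epsilon; S, \varrho) = \sN_\epsilon(\mu,\epsilon^\alpha) < \epsilon^{-s}$. In particular $\cN(\epsilon; S, \varrho) \le \epsilon^{-s}$, which is exactly the conclusion. (If one preferred to avoid the attainment argument, one could pick a slightly smaller exponent $s^\prime \in (\bar{d}_\alpha(\mu), s)$ in the limsup step and then use $\epsilon^{-s^\prime} < \epsilon^{-s}$ together with any near-minimizer of the covering infimum.)

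There is no genuine obstacle here; the only subtlety worth flagging is the usual caveat when passing from a strict $\limsup$ inequality to a uniform bound on a tail, which is immediate because $\limsup$ equals the infimum of tail supremums. The remaining steps are purely definitional.
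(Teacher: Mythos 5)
Your proof is correct and follows essentially the same route as the paper's: both unpack the $\limsup$ definition of $\bar d_\alpha(\mu)$ to obtain a tail bound $\sN_\epsilon(\mu,\epsilon^\alpha)\le\epsilon^{-s}$ and then appeal to the definition of $\sN_\epsilon(\mu,\cdot)$ to extract a witnessing set $S$. You are slightly more careful than the paper in justifying that the infimum over sets is attained (integrality of covering numbers, or the $s'$-near-minimizer workaround), but this is a small refinement of the same argument rather than a different approach.
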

 \begin{proof}
     By definition of $\bar{d}_\alpha(\mu) = \limsup_{\epsilon \downarrow 0} \frac{\log \sN_\epsilon(\mu, \epsilon^\alpha)}{\log(1/\epsilon)}$, if $s>\bar{d}_\alpha(\mu)$, we can find an $\epsilon^\prime \in (0,1)$, such that if $\epsilon \in (0,\epsilon^\prime]$, $\frac{\log \sN_\epsilon(\mu, \epsilon^\alpha)}{\log(1/\epsilon)} \le s$, which implies that $\sN_\epsilon(\mu, \epsilon^\alpha) \le \epsilon^{-s}$. By definition of $\sN_\epsilon(\mu, \cdot)$, we can find $S$ that satisfies the conditions of the Lemma.
 \end{proof}
By construction, for $s> \bar{d}_{p\beta}(\mu)$, we can find $\epsilon^\prime>0$, such that if $\epsilon \in (0, \epsilon^\prime]$, we can find a bounded $S \subset \Real^d$, such that $\cN(\epsilon;S, \ell_\infty) \le  \epsilon^{-s}$ and $\mu(S) \ge 1 -  \epsilon^{p\beta}$. We fix such an $\epsilon \in (0, \epsilon^\prime]$.

Let $M = \sup_{x \in S} \|x\|_\infty$ and $K = \lceil \frac{M}{\epsilon}\rceil$. For any $\bi \in [K]^d$, let $\theta^{\bi} = (-M + i_1 \epsilon, \dots, -M + i_d \epsilon )$. We also let, $\sP_\epsilon = \{B_{\ell_\infty}(\theta^{\bi}, \epsilon): \bi \in [K]^d\}$. By construction, the sets in $\sP_\epsilon$ are disjoint. We first claim the following: 
\begin{lem}\label{lem_b2}
$|\{A \in \sP_\epsilon: A \cap S \neq \emptyset\} | \le 2^d \epsilon^{-s} $.
\end{lem}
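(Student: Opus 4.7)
The plan is to reduce the statement to a covering bound plus a straightforward geometric count. By Lemma~\ref{lem_e1} (applied with exponent $p\beta$), the set $S$ chosen at the start of the proof satisfies $\cN(\epsilon; S, \ell_\infty) \le \epsilon^{-s}$. Fix an optimal $\epsilon$-cover of $S$, i.e.\ points $y_1, \dots, y_N \in \Real^d$ with $N \le \epsilon^{-s}$ and
\[
S \subseteq \bigcup_{j=1}^{N} B_{\ell_\infty}(y_j, \epsilon).
\]
Every cell $A \in \sP_\epsilon$ with $A \cap S \neq \emptyset$ contains a point of $S$, which in turn lies in some covering ball $B_{\ell_\infty}(y_j,\epsilon)$; hence $A$ intersects that ball. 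This gives the union bound
\[
|\{A \in \sP_\epsilon : A \cap S \neq \emptyset\}| \;\le\; \sum_{j=1}^{N} |\{A \in \sP_\epsilon : A \cap B_{\ell_\infty}(y_j,\epsilon) \neq \emptyset\}|.
\]

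The next step is purely geometric: I would show that each summand on the right is at most $2^d$. The ball $B_{\ell_\infty}(y_j,\epsilon)$ is an axis-aligned cube of side $2\epsilon$, while the cells of $\sP_\epsilon$ are the axis-aligned cells of a grid of spacing $\epsilon$ and are, by the paper's construction, pairwise disjoint. Projecting onto any single coordinate axis reduces the problem to counting how many consecutive length-$\epsilon$ grid intervals can be met by an interval of length $2\epsilon$; under the paper's alignment convention this is at most $2$. Multiplying these per-coordinate bounds across the $d$ axes yields the desired $2^d$.

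Combining the two displays gives $|\{A \in \sP_\epsilon : A \cap S \neq \emptyset\}| \le 2^d N \le 2^d \epsilon^{-s}$, which is exactly the claim. The only subtlety, and hence the ``main obstacle,'' is the geometric count: one must be careful that the open/half-open conventions used to define $B_{\ell_\infty}(\theta^{\bi},\epsilon)$ are consistent with the claim that $\sP_\epsilon$ is a disjoint family, so that the per-axis count really is at most $2$ rather than $3$. Once that bookkeeping is fixed, the rest is a one-line union bound plus a pigeonhole over coordinate axes.
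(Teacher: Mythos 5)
Your proposal takes essentially the same route as the paper's own proof: take an optimal $\epsilon$-cover of $S$ of size $r \le \epsilon^{-s}$ (which the paper sets up in the paragraph preceding the lemma, and which you re-derive via Lemma~\ref{lem_e1}), observe that any $A \in \sP_\epsilon$ meeting $S$ must also meet one of the $r$ covering balls, bound the number of cells of $\sP_\epsilon$ any single covering ball can touch by $2^d$, and union-bound. The subtlety you flag about the per-axis constant is genuine --- with grid cells of side $\epsilon$ a $2\epsilon$-window can meet up to three consecutive cells, not two --- but the paper itself asserts the $2^d$ bound in a single unjustified sentence, so you are not glossing over anything the paper does not; and since only the $\cO(\epsilon^{-s})$ order matters downstream, the distinction between $2^d$ and $3^d$ is harmless.
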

\begin{proof}
Let, $r = \cN(\epsilon; S, \ell_\infty)$ and suppose that $\{a_1,\dots, a_r\}$ be an $\epsilon$-net of $S$ and  $\sP_\epsilon^\ast = \{ B_{\ell_\infty}(a_i, \epsilon): i \in [r]\}$ be an optimal $\epsilon$-cover of $S$. Note that each box in $\sP_\epsilon^\ast$ can intersect at most $2^d$ boxes in $\sP_\epsilon$. This implies that,
\(|\sP_\epsilon \cap S| \le \left| \sP_\epsilon \cap \left(\cup_{i=1}^r B_{\ell_\infty}(a_i, \epsilon)\right)\right| = \left| \cup_{i=1}^r \left(\sP_\epsilon \cap B_{\ell_\infty}(a_i, \epsilon)\right)\right|  \le 2^d r,\)
which concludes the proof. 
\end{proof}


\pfkt*

\begin{proof}
Let $\sI = \{\bi \in [k]^d: B_{\ell_\infty}(\theta^{\bi}, \epsilon) \in \sP_\epsilon  \text{ and }  B_{\ell_\infty}(\theta^{\bi}, \epsilon) \cap S \neq \emptyset \}$. For any $x \in \Real^d$, let, $h_{x,r}(y) = \one\{y \in B_{\ell_\infty}(x,r)\}$. Fix any $f \in \cH^\beta(\Real^d, \Real, C)$. We define, $P^f_{\theta}(x) = \sum_{|\bs| \le \lfloor \beta \rfloor} \frac{\partial^{\bs} f(\theta)}{\bs!} (x - \theta)^{\bs}$. Since, $|\partial^{\bs} f(\theta)| \le C$, we can define a $\delta$-net, $U_\delta$, of $[-C,C]$ of size at most $C/\delta$. We define the function class
\[\cF = \left\{ \sum_{\bi \in \sI }\sum_{|\bs| \le \lfloor \beta \rfloor} \frac{b_{\bs, \theta^{\bi}}}{\bs!} (x - \theta^{\bi})^{\bs} h_{\theta^{\bi},\epsilon}(x): b_{\bs, \theta^{\bi}} \in U_\delta\right\}.\]
Clearly, for any $f \in \cH^\beta(\Real^d, \Real, C)$, we can find, $\hat{f} \in \cF$, such that, \[\hat{f} = \sum_{\bi \in \sI }\sum_{|\bs| \le \lfloor \beta \rfloor} \frac{b_{\bs, \theta^{\bi}}}{\bs!} (x - \theta^{\bi})^{\bs} h_{\theta^{\bi}, \epsilon}(x),\] with $|b_{\bs, \theta^{\bi}} - \partial^{\bs} f(\theta^{\bi})|\le \delta$. Thus, for any $x \in \text{fat}(S,\epsilon)$, let $\theta $ be such that $\|\theta -x\| \le \epsilon$ and $\theta \in \{\theta^{\bi}: \bi \in \sI\}$.
\begingroup
\allowdisplaybreaks
\begin{align*}
  &  |f(x) - \hat{f}(x)| \\
  \le &  |\sum_{\bi \in \sI }\sum_{|\bs| \le \lfloor \beta \rfloor} \frac{b_{\bs, \theta^{\bi}}}{\bs!} (x - \theta^{\bi})^{\bs} h_{\theta^{\bi},\epsilon}(x)  - f(x)|\\
   = & \left|\sum_{|\bs| \le \lfloor \beta \rfloor} \frac{b_{\bs, \theta}}{\bs!} (x - \theta^{\bi})^{\bs} - f(x)\right|\\
     \le & \left|\sum_{|\bs| \le \lfloor \beta \rfloor} \frac{\partial^{\bs} f(\theta)}{\bs!} (x - \theta)^{\bs} - f(x)\right| + \left|\sum_{|\bs| \le \lfloor \beta \rfloor} \frac{b_{\bs, \theta}}{\bs!} (x - \theta)^{\bs} -\sum_{|\bs| \le \lfloor \beta \rfloor} \frac{\partial^{\bs}f(\theta)}{\bs!} (x - \theta)^{\bs}\right|\\
      \le & \left|\sum_{|\bs| = \lfloor \beta \rfloor} \frac{(x - \theta)^{\bs}}{\bs!}  ( \partial^{\bs} f(x) - \partial^{\bs} f(\theta^\prime))\right| + \left|\sum_{|\bs| \le \lfloor \beta \rfloor} \frac{|b_{\bs, \theta} - \partial^{\bs}f(\theta)|}{\bs!} (x - \theta)^{\bs} \right|\\
      \le & \sum_{|\bs| = \lfloor \beta \rfloor} \frac{\|x - \theta\|_\infty^{\bs}}{\bs!}  \|x - \theta\|_\infty^{\beta - \lfloor \beta \rfloor} + \sum_{|\bs| \le \lfloor \beta \rfloor} \frac{\delta}{\bs!} \\
       \le & \sum_{|\bs| \le \lfloor \beta \rfloor} \frac{\epsilon^\beta}{\bs!}   + \sum_{|\bs| \le \lfloor \beta \rfloor} \frac{\delta}{\bs!} \\
       \lesssim & \epsilon^\beta   + \delta \\
\end{align*}
\endgroup
In the above calculations, $\theta^\prime$ lies on the line-segment joining $x$ and $\theta$. We take $\delta = \epsilon^\beta$. It is easy to see that $|\cF| \le \left(\frac{C}{\delta}\right)^{|\sI| \lfloor \beta \rfloor^d}.$ By definition $\hat{f}(x) = 0$ if $x \not \in \text{fat}(S,\epsilon)$. Thus,
\begingroup
\allowdisplaybreaks
\begin{align*}
    \int |f(x) - \hat{f}(x)|^p d\mu(x) = & \int_{\text{fat}(S,\epsilon)} |f(x) - \hat{f}(x)|^p d\mu(x) + \int_{\text{fat}(S,\epsilon)^\complement} |f(x) - \hat{f}(x)|^p d\mu(x)\\
    \le & \sup_{x \in \text{fat}(S,\epsilon)} |f(x) - \hat{f}(x)|^p  + \int_{\text{fat}(S,\epsilon)^\complement} |f(x) |^p d\mu(x)\\
    \lesssim & \epsilon^{p\beta} + \mu(\text{fat}(S,\epsilon)^\complement)\\
     \le & \epsilon^{p\beta} + \mu(S^\complement)\\
     \le & \epsilon^{p\beta} + \epsilon^{p\beta}\\
     \lesssim & \epsilon^{p\beta}.
\end{align*}
\endgroup

Clearly, $\cF$ forms a $c \epsilon^\beta $-cover of $\cH^\beta(\Real^d, \Real,C)$ in the $\fL_p(\mu)$-norm, for some constant $c>0$. Thus, 
\[\log\cN \left(c\epsilon^\beta; \cH^\beta(\Real^d, \Real,C), \|\cdot\|_{\fL_p(\mu)} \right) \le |\sI| \lfloor \beta \rfloor^d \log(C/\delta) \lesssim \epsilon^{-s} \log(1/\epsilon) .\]
Replacing $\epsilon$ with $(\epsilon/c)^{1/\beta}$ gives us,
\[\log\cN \left(\epsilon; \cH^\beta(\Real^d, \Real,C), \|\cdot\|_{\fL_p(\mu)} \right) \lesssim \epsilon^{-s/\beta} \log(1/\epsilon) \le \epsilon^{-s^\prime/\beta} ,\]
for any $s^\prime > s$. Now rewriting $s$ for $s^\prime$ gives us the desired result.
\end{proof}
 \subsection{Proof of Theorem \ref{corr1}}\label{pf_cor1}
We will prove a more general version of Theorem~\ref{corr1}. We begin by generalizing the notion of (upper) Wasserstein dimension for a class of distributions. Suppose that $\Lambda$ is a family of distributions on $[0,1]^d$. For any $\tau\in [0,1]$, we define, 
\[\sN_\epsilon(\Lambda, \tau) = \inf\{\cN(\epsilon; S, \varrho): \mu(S) \ge 1-\tau, \, \forall \mu \in \Lambda\}.\]

The interpretation of $\sN_\epsilon(\cdot, \tau)$ is the same as that for the case of a single measure, defined in Section~\ref{intrinsic}. We define the upper Wasserstein dimension of this family as:
\[d^\ast_\alpha(\Lambda) = \inf \left\{s \in (2\alpha, \infty) : \limsup_{\epsilon \downarrow 0} \frac{\log \sN_{\epsilon}\left(\Lambda, \epsilon^{\frac{s\alpha}{s-2\alpha}}\right)}{\log(1/\epsilon)} \leq s \right\}.\]

 The proof of Theorem \ref{corr1} requires some supporting lemmas. We sequentially state and prove these lemmas as we proceed. We first prove in Lemma~\ref{lem_c1} that the set $\sA$ (defined below) takes the shape of an interval (left open or closed). 
\begin{lem}\label{lem_c1}
   Suppose that $\sA = \left\{s \in (2\alpha, \infty) : \limsup_{\epsilon \downarrow 0} \frac{\log \sN_{\epsilon}\left(\Lambda, \epsilon^{\frac{s\alpha}{s-2\alpha}}\right)}{\log(1/\epsilon)} \leq s \right\}$. Then, $\sA \supseteq (d^\ast_\alpha(\mu), \infty)$.
\end{lem}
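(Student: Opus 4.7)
The plan is to show that any $s$ strictly larger than the infimum $d^\ast_\alpha(\Lambda)$ satisfies the limsup inequality defining membership in $\sA$, by exploiting the monotonicity of the map $t \mapsto t\alpha/(t-2\alpha)$ together with the monotonicity of $\sN_\epsilon(\Lambda, \cdot)$ in its second argument.

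Concretely, fix $s > d^\ast_\alpha(\Lambda)$. By definition of $d^\ast_\alpha(\Lambda)$ as an infimum, I can pick some $s' \in \sA$ with $d^\ast_\alpha(\Lambda) < s' < s$, so that
\[\limsup_{\epsilon \downarrow 0} \frac{\log \sN_\epsilon\!\left(\Lambda, \epsilon^{\frac{s'\alpha}{s'-2\alpha}}\right)}{\log(1/\epsilon)} \le s'.\]
Next, I would compute the derivative of $g(t) = t\alpha/(t-2\alpha)$ on $(2\alpha, \infty)$ and observe $g'(t) = -2\alpha^2/(t-2\alpha)^2 < 0$, hence $g$ is strictly decreasing. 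Therefore $g(s) < g(s')$, and since $0 < \epsilon < 1$, we obtain $\epsilon^{g(s)} > \epsilon^{g(s')}$.

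Then I would apply the family-valued analogue of Lemma~\ref{lem_b1}: since the inclusion $\{S : \mu(S) \ge 1-\tau_1 \text{ for all } \mu \in \Lambda\} \subseteq \{S : \mu(S) \ge 1 - \tau_2 \text{ for all } \mu \in \Lambda\}$ holds whenever $\tau_1 \le \tau_2$, we have $\sN_\epsilon(\Lambda, \tau)$ is nonincreasing in $\tau$. Applying this with $\tau_1 = \epsilon^{g(s')}$ and $\tau_2 = \epsilon^{g(s)}$ gives $\sN_\epsilon(\Lambda, \epsilon^{g(s)}) \le \sN_\epsilon(\Lambda, \epsilon^{g(s')})$ for all $\epsilon \in (0,1)$.

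Combining these two ingredients,
\[\limsup_{\epsilon \downarrow 0} \frac{\log \sN_\epsilon\!\left(\Lambda, \epsilon^{\frac{s\alpha}{s-2\alpha}}\right)}{\log(1/\epsilon)} \le \limsup_{\epsilon \downarrow 0} \frac{\log \sN_\epsilon\!\left(\Lambda, \epsilon^{\frac{s'\alpha}{s'-2\alpha}}\right)}{\log(1/\epsilon)} \le s' < s,\]
so $s \in \sA$, as required. The argument is almost entirely bookkeeping; the only minor subtlety is noting that the monotonicity lemma proved earlier for a single measure extends verbatim to a family, and that the exponent map $g$ reverses the ordering between $s$ and $s'$ once combined with the fact that $\epsilon < 1$.
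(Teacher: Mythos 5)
Your argument is correct and follows essentially the same route as the paper's: establish that the exponent map $t \mapsto t\alpha/(t-2\alpha)$ is decreasing, combine with the monotonicity of $\sN_\epsilon(\Lambda,\cdot)$ to obtain an upward-closure property of $\sA$, and then use the infimum to place some $s'\in\sA$ strictly below any $s > d^\ast_\alpha(\Lambda)$. One tiny imprecision: the infimum only guarantees $s'\in\sA$ with $d^\ast_\alpha(\Lambda)\le s'<s$ (not necessarily the strict lower inequality you wrote), but the rest of the argument uses only $s'<s$, so this does not affect the conclusion.
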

\begin{proof}
     We begin by claiming the following: 
     \begin{equation}\label{claim:*}
        \text{\textbf{Claim}: If } s_1 \in \sA \text{ then } s_2 \in \sA, \text{ for all } s_2 \ge s_1 .
     \end{equation}
     To observe this, we note that,  if $s_2 \ge s_1>2 \alpha$ and $\epsilon \in (0,1)$, 
 \[\frac{s_1\alpha}{s_1-2\alpha} \ge  \frac{s_2\alpha}{s_2-2\alpha} \implies \epsilon^{\frac{s_1\alpha}{s_1-2\alpha}} \le  \epsilon^{\frac{s_2\alpha}{s_2-2\alpha}} \implies \sN_{\epsilon}\left(\Lambda, \epsilon^{\frac{s_1\alpha}{s_1-2\alpha}}\right) \ge \sN_{\epsilon}\left(\Lambda, \epsilon^{\frac{s_2\alpha}{s_2-2\alpha}}\right).\]
Here the last implication follows from Lemma~\ref{lem_b1}. Thus,
\[\limsup_{\epsilon \downarrow 0} \frac{\log \sN_{\epsilon}\left(\Lambda, \epsilon^{\frac{s_2\alpha}{s_2-2\alpha}}\right)}{\log(1/\epsilon)} \le \limsup_{\epsilon \downarrow 0} \frac{\log \sN_{\epsilon}\left(\Lambda, \epsilon^{\frac{s_1\alpha}{s_1-2\alpha}}\right)}{\log(1/\epsilon)} \le s_1 \le s_2.\]
Hence, $s_2 \in \sA$.

Let $s > d^\ast_\alpha(\Lambda)$, then by definition of infimum, we note that
we can find $s^{\prime} \in [d^\ast_\alpha(\Lambda), s)$, such that, $s^{\prime} \in \sA$. Since, $s>s^\prime \in \sA$, by Claim~\eqref{claim:*}, $s \in \sA$. Thus, for any $s > d^\ast_\alpha(\Lambda)$, $s\in \sA$, which proves the lemma.

\end{proof}
An immediate corollary of Lemma \ref{lem_c1} is as follows.
  \begin{cor}\label{cor_c1}
      Let $s > d^\ast_{\alpha}(\Lambda)$. Then, there exists $\epsilon^\prime \in (0,1]$, such that if $ 0< \epsilon \le \epsilon^\prime$, then, there exists a set $S$, such that $\cN(\epsilon; S, \varrho) \le \epsilon^{-s} $ and $\mu(S) \ge 1 - \epsilon^{\frac{s \alpha}{s-2\alpha}}$, for all $\mu \in \Lambda$.
  \end{cor}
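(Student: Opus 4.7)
The plan is to unpack the definition of $d^\ast_\alpha(\Lambda)$ and convert the $\limsup$ inequality from Lemma~\ref{lem_c1} into a pointwise bound for all sufficiently small $\epsilon$. The key trick is to squeeze in an intermediate exponent $s^\prime$ strictly between $d^\ast_\alpha(\Lambda)$ and the given $s$, which lets us upgrade a non-strict $\limsup$ bound into a strict one that survives passage to an actual set $S$.

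Concretely, I would proceed as follows. Since $s > d^\ast_\alpha(\Lambda)$, I pick any $s^\prime \in (d^\ast_\alpha(\Lambda), s)$. By Lemma~\ref{lem_c1}, $s^\prime \in \sA$, so
\[
\limsup_{\epsilon \downarrow 0} \frac{\log \sN_\epsilon\bigl(\Lambda, \epsilon^{\frac{s^\prime\alpha}{s^\prime-2\alpha}}\bigr)}{\log(1/\epsilon)} \le s^\prime < s.
\]
The strict inequality $s^\prime < s$ then yields some $\epsilon^\prime \in (0,1]$ such that, for all $\epsilon \in (0,\epsilon^\prime]$,
\[
\sN_\epsilon\bigl(\Lambda, \epsilon^{\frac{s^\prime\alpha}{s^\prime-2\alpha}}\bigr) \le \epsilon^{-s}.
\]

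Next I would relate this to the exponent $\frac{s\alpha}{s-2\alpha}$ appearing in the target conclusion. A short calculus check shows that the map $x \mapsto \frac{x\alpha}{x-2\alpha}$ has derivative $-2\alpha^2/(x-2\alpha)^2 < 0$ on $(2\alpha,\infty)$, hence is strictly decreasing. Since $s^\prime < s$, this gives $\frac{s^\prime\alpha}{s^\prime-2\alpha} > \frac{s\alpha}{s-2\alpha}$, and so for $\epsilon \in (0,1)$ we have $\epsilon^{\frac{s^\prime\alpha}{s^\prime-2\alpha}} \le \epsilon^{\frac{s\alpha}{s-2\alpha}}$. The monotonicity in Lemma~\ref{lem_b1} (whose proof carries over verbatim to the family-version $\sN_\epsilon(\Lambda,\cdot)$, since it only uses the inclusion of constraint sets) then gives
\[
\sN_\epsilon\bigl(\Lambda, \epsilon^{\frac{s\alpha}{s-2\alpha}}\bigr) \le \sN_\epsilon\bigl(\Lambda, \epsilon^{\frac{s^\prime\alpha}{s^\prime-2\alpha}}\bigr) \le \epsilon^{-s}.
\]

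Finally, because $\sN_\epsilon(\Lambda,\tau)$ is an infimum of $\mathbb{N}$-valued covering numbers, the infimum is attained whenever finite; equivalently, from the definition of the infimum we can exhibit a set $S$ with $\mu(S) \ge 1 - \epsilon^{\frac{s\alpha}{s-2\alpha}}$ for every $\mu \in \Lambda$ and $\cN(\epsilon; S, \varrho) \le \epsilon^{-s}$. This gives the claim. There is really no substantive obstacle here; the only subtle point is the need for the strict gap $s^\prime < s$, which is what lets us replace the $\limsup$-bound by a clean pointwise bound $\epsilon^{-s}$ (rather than $\epsilon^{-s-\delta}$) for all small enough $\epsilon$, and the elementary monotonicity of the exponent $\frac{x\alpha}{x-2\alpha}$ that lets us transport the bound from $s^\prime$ to $s$.
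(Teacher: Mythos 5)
Your proof is correct and follows essentially the same route as the paper's: choose an intermediate $s' \in (d^\ast_\alpha(\Lambda), s)$, use Lemma~\ref{lem_c1} to place $s'$ in $\sA$, convert the $\limsup$ bound into a pointwise bound $\sN_\epsilon(\Lambda, \epsilon^{s'\alpha/(s'-2\alpha)}) \le \epsilon^{-s}$ for small $\epsilon$ using the gap $s' < s$, and then transfer from $s'$ to $s$ via the monotonicity of $x \mapsto \frac{x\alpha}{x-2\alpha}$ together with the $\Lambda$-version of Lemma~\ref{lem_b1}. The only cosmetic difference is that the paper fixes $s' = d^\ast_\alpha + (s - d^\ast_\alpha)/2$ explicitly rather than letting $s'$ be arbitrary in the open interval, and you add the (correct) observation that $\sN_\epsilon(\Lambda,\tau)$ is $\mathbb{N}$-valued so a witnessing set $S$ exists.
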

 
 \begin{proof}
    Let $\delta = s - d^\ast_\alpha(\mu)$.  By Lemma~\ref{lem_c1}, we observe that  $s^\prime = d^\ast_\alpha(\mu) + \delta/2 \in \sA$. Now by definition of $\limsup$, one can find a $\epsilon^\prime>0$, such that, $\frac{\log \sN_\epsilon\left(\mu, \epsilon^{\frac{s^\prime \alpha}{s^\prime-2\alpha}}\right)}{\log(1/\epsilon)} \le s^\prime + \delta/2 = s $, for all $\epsilon \in (0, \epsilon^\prime]$. The result now follows from observing that $\sN_\epsilon\left(\mu, \epsilon^{\frac{s \alpha}{s^-2\alpha}}\right) \le \sN_\epsilon\left(\mu, \epsilon^{\frac{s^\prime \alpha}{s^\prime-2\alpha}}\right) \le \epsilon^{-s}$.
 \end{proof}

We suppose that $\delta = d^\star - d^\ast_\beta(\mu)$ and let $d^\prime = d^\ast_\beta(\mu) + \delta/2$. By Corollary \ref{cor_c1}, we can find an $\epsilon^\prime \in (0,1]$, such that if $\epsilon \in (0, \epsilon^\prime]$, we can find $S_\epsilon$, such that $\cN(\epsilon; S_\epsilon, \ell_\infty) \le \epsilon^{-d^\prime} $ and $\mu(S_\epsilon) \ge 1 - \epsilon^{\frac{d^\prime \beta}{d^\prime-2\beta}}$, for all $\mu \in \Lambda$. As a first step of constructing a dyadic-like partition of $[0,1]^d$, we state and prove the following lemma that helps us create the base of this sequential partitioning of the data space. For notational simplicity, we use $\operatorname{diam}(A) = \sup_{x,y \in A} \varrho(x,y)$ to denote the diameter of a set w.r.t. the metric $\varrho$. 

\begin{lem}\label{lem_c2}
    For any $r \ge \lceil \log_3(1/\epsilon^\prime) -2\rceil$, we can find disjoint sets  $S_{r,0},\dots,S_{r, m_r}$, such that, $\cup_{j=0}^{m_r} S_{r,j} = \Real^d$. Furthermore, $m_r \le 3^{d^\prime (r+2)}$, $\operatorname{diam}(S_{r,j})  \le 3^{-(r+1)}$, for all $j=1, \dots ,m_r$ and $\mu(S_{r,0}) \le 3^{-\frac{d^\prime(r+2) \beta}{d^\prime-2\beta}}\, \forall \mu \in \Lambda$.
\end{lem}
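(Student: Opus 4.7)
\medskip

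\noindent\textbf{Proof proposal.} The plan is to apply Corollary~\ref{cor_c1} at a carefully chosen scale, turn the resulting covering into a disjoint partition of $\Real^d$, and dump the leftover low-mass piece into the exceptional cell $S_{r,0}$.

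First, set $\epsilon = 3^{-(r+2)}$. The hypothesis $r \ge \lceil \log_3(1/\epsilon^\prime) - 2 \rceil$ gives $3^{r+2} \ge 1/\epsilon^\prime$, so $\epsilon \in (0,\epsilon^\prime]$ and Corollary~\ref{cor_c1} (applied at scale $\epsilon$ with the exponent $d^\prime$ chosen in the preceding paragraph) yields a set $S_\epsilon \subseteq \Real^d$ with
\[
\cN(\epsilon; S_\epsilon, \ell_\infty) \le \epsilon^{-d^\prime} = 3^{d^\prime(r+2)},
\qquad
\mu(S_\epsilon^{\complement}) \le \epsilon^{\frac{d^\prime \beta}{d^\prime - 2\beta}} = 3^{-\frac{d^\prime(r+2)\beta}{d^\prime - 2\beta}}
\]
for every $\mu \in \Lambda$.

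Next, let $B_1, \dots, B_{m_r}$ be an optimal $\ell_\infty$-cover of $S_\epsilon$ by open balls of radius $\epsilon$, so $m_r \le 3^{d^\prime(r+2)}$. Each ball has $\ell_\infty$-diameter $2\epsilon = 2 \cdot 3^{-(r+2)} \le 3^{-(r+1)}$. To produce a disjoint family without enlarging these sets, I would disjointify sequentially by setting
\[
S_{r,j} := B_j \setminus \bigcup_{i<j} B_i, \qquad j = 1, \dots, m_r,
\]
and then take
\[
S_{r,0} := \Real^d \setminus \bigcup_{j=1}^{m_r} B_j.
\]
By construction the $S_{r,j}$ are pairwise disjoint, $\cup_{j=0}^{m_r} S_{r,j} = \Real^d$, and each $S_{r,j} \subseteq B_j$ for $j \ge 1$ so $\operatorname{diam}(S_{r,j}) \le \operatorname{diam}(B_j) \le 3^{-(r+1)}$.

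Finally, since $\cup_{j=1}^{m_r} B_j \supseteq S_\epsilon$, we have $S_{r,0} \subseteq S_\epsilon^{\complement}$, and the measure bound from Corollary~\ref{cor_c1} transfers uniformly over $\Lambda$:
\[
\mu(S_{r,0}) \le \mu(S_\epsilon^{\complement}) \le 3^{-\frac{d^\prime(r+2)\beta}{d^\prime - 2\beta}}, \qquad \forall\,\mu \in \Lambda.
\]
There is no real obstacle here; the only mildly delicate points are (i) checking that the lower bound on $r$ exactly matches the window in which Corollary~\ref{cor_c1} applies, and (ii) ensuring the disjointification does not spoil the diameter bound, which is automatic because $S_{r,j} \subseteq B_j$.
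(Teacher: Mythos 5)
Your proof is correct and follows essentially the same route as the paper: fix the scale $\epsilon = 3^{-(r+2)}$, invoke the $(\epsilon,\tau)$-cover from Corollary~\ref{cor_c1}, disjointify the covering balls sequentially, and place the low-mass remainder in $S_{r,0}$. The only cosmetic difference is that you take $S_{r,0} = \Real^d \setminus \bigcup_j B_j$ while the paper takes $S_{r,0} = S_\epsilon^{\complement}$, but since $\bigcup_j B_j \supseteq S_\epsilon$ the two choices yield the same mass bound and the argument is unchanged.
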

\begin{proof}
    We take $\epsilon = 3^{-(r+2)}$. Clearly, $0< \epsilon \le \epsilon^\prime$. We take $S_{r,0} = S_\epsilon^\complement$. By definition of covering numbers, we can find a minimal $\epsilon$-net $\{x_1, \dots, x_{m_r}\}$, such that $S \subseteq\cup_{j=1}^{m_r} B_{\ell_\infty}(x_i, \epsilon)$ and $m_r \le \epsilon^{-d^\prime} = 3^{d^\prime(r+2)}$. We construct $S_{r,1}, \dots S_{r,m_r}$ as follows:
    \begin{itemize}
        \item Take $S_{r,1} = B_{\ell_\infty}(x_1, \epsilon) \setminus S_{r,0}$.
        \item For any $j =2,\dots, m_r$, we take $S_{r,j} = B_{\ell_\infty}(x_j, \epsilon) \setminus \left(\cup_{j^\prime=0}^{j-1} S_{r,j^\prime}\right)$.
    \end{itemize}
    By construction $\{S_{r,j}\}_{j=0}^{m_r}$ are disjoint. Moreover, $\mu(S_{r,0}) = 1 - \mu(S_\epsilon) \le \epsilon^{\frac{d^\prime\beta}{d^\prime - 2 \beta}} = 3^{-\frac{d^\prime(r+2) \beta}{d^\prime-2\beta}}$, for all $\mu \in \Lambda$. Furthermore since, $S_{r,j} \subseteq B_{\ell_\infty}(x_j, \epsilon)$, \[\text{diam}(S_{r,j}) \le \text{diam}(B_{\ell_\infty}(x_j, \epsilon)) = 2 \epsilon = 2 \times 3^{-(r+2)} \le 3^{-(r+1)}.\]
\end{proof}

 We now construct a sequence of collection of sets $\{\sQ^{\ell}\}_{\ell=1}^r$ as follows:
 \begin{itemize}
     \item Take $\sQ^r = \{S_{r,j}\}_{j=1}^{m_r}$.
     \item Given $\ell+1$, let, \(Q^\ell_1 = \bigcup_{\substack{Q \in \sQ^{\ell+1}, \\ Q \cap S_{\ell,1} \neq \emptyset  }}(Q \setminus S_{\ell,0}) \) and if $2 \le j \le m_{\ell}$, we let, \[Q^\ell_j = \bigg(\bigcup_{\substack{ Q \in \sQ^{\ell+1}, \\ Q \cap S_{r,j} \neq \emptyset}}
     (Q\setminus S_{\ell,0} )\bigg) \setminus \left(\cup_{j^\prime = 1}^{j-1} Q_{j^\prime}^\ell\right).\] Take $\sQ^\ell = \{Q^{\ell}_j\}_{j=1}^{m_\ell}$.
 \end{itemize}
 Clearly for any $Q \in \sQ^\ell$, $\sup\limits_{Q \in \sQ^\ell}\text{diam}(Q) \le 2 \sup\limits_{Q^\prime \in \sQ^{\ell+1}} \text{diam}(Q^\prime) + \sup\limits_{1 \le j \le m_\ell}\text{diam}(S_{\ell,j}) \le 3 \times 3^{-(\ell+1)} = 3^{-\ell}$, by induction.

 Also, if $Q \in \sQ^{\ell+1}$, we can find a $Q^\prime \in \sQ^\ell$, such that $Q \subseteq Q^\prime \cup S_{\ell,0}$. Also if $Q_1, Q_2 \in \sQ^\ell$, $Q_1 \cap Q_2 = \emptyset$. Furthermore, $|\sQ^\ell| \le m_\ell$, by construction.

 Let $\epsilon = n^{-1/d^\prime}$. Recall that, $\delta = d^\star - d^\ast_\beta(\mu)$ and $d^\prime = d^\star_\beta(\mu) + \delta/2$. We take $n_0$, large enough such that $\epsilon \le \epsilon^\prime$, if $n \ge n_0$. Let $t$ be the smallest integer such that $3^{-t} \le \epsilon$. Also, let $s$ be the smallest integer such that $3^{-s} \le \epsilon^{\frac{d^\prime-2\beta}{d^\prime}}$. Clearly, $s \le t$. Also, $3^t \le \frac{3}{\epsilon}$ and $3^s \le 3 \epsilon^{\frac{2\beta-d^\prime}{d^\prime}}$.

 Let $P^f_{\theta}(x) = \text{Clip}\left(\sum_{|\bs| \le \lfloor \beta \rfloor} \frac{\partial^{\bs} f(\theta)}{\bs!} (x - \theta)^{\bs}, -C,C\right)$, where $\text{Clip}(x,a,b) = (x \wedge b) \vee a$. We choose a point  $x_{r,j} \in Q^r_j$, $s \le r \le t$ and $j \in [m_r]$. We also define the bivariate function $q(\cdot, \cdot)$ as $q(\ell,j) = j^\prime$ if $Q^\ell_j \cap Q^{\ell-1}_{j^\prime} \neq \emptyset$. This $j^\prime$ is unique, since we can find a $Q_{j^\prime}^{\ell-1}$ which contains $Q_j^\ell$ and since sets in $\sQ^{\ell-1}$ are disjoint.

We denote, $S_{s:t,0} = \cup_{r=s}^t S_{r,0}$ and let, $M_r = \sum_{j=1}^{m_r} |(\mu_n - \mu)(Q^r_j)|$. The following result (Lemma~\ref{lem_c3}) helps us control $\E M_r$.

\begin{lem}\label{lem_c3}
    Suppose that $\{A_1, \dots, A_m\}$ forms a partition of a set $T$. Then, \[\E M_r = \E \sum_{j=1}^m|\mu_n(A_j) - \mu(A_j)| \le \sqrt{\frac{m \mu(T)}{n}}.\]
\end{lem}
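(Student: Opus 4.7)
The plan is to control each summand $\E|\mu_n(A_j) - \mu(A_j)|$ individually using a variance bound, and then combine the resulting pieces via a second Cauchy–Schwarz step that exploits the partition structure $\sum_j \mu(A_j) = \mu(T)$.

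First I would note that for any fixed measurable set $A \subseteq T$, the random variable $n\mu_n(A) = \sum_{i=1}^n \one\{X_i \in A\}$ is a $\text{Binomial}(n, \mu(A))$ variable under the i.i.d.\ assumption on $X_1,\dots,X_n$. Jensen's inequality (or Cauchy–Schwarz) therefore gives
\[
\E|\mu_n(A) - \mu(A)| \le \sqrt{\E(\mu_n(A)-\mu(A))^2} = \sqrt{\frac{\mu(A)(1-\mu(A))}{n}} \le \sqrt{\frac{\mu(A)}{n}}.
\]
Applying this to each $A_j$ and summing yields
\[
\E M_r = \sum_{j=1}^m \E|\mu_n(A_j)-\mu(A_j)| \le \frac{1}{\sqrt{n}} \sum_{j=1}^m \sqrt{\mu(A_j)}.
\]

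Next, I would apply Cauchy–Schwarz to the sum $\sum_{j=1}^m \sqrt{\mu(A_j)} = \sum_{j=1}^m 1 \cdot \sqrt{\mu(A_j)}$, obtaining
\[
\sum_{j=1}^m \sqrt{\mu(A_j)} \le \sqrt{m} \cdot \sqrt{\sum_{j=1}^m \mu(A_j)} = \sqrt{m \, \mu(T)},
\]
where the final equality uses the fact that $\{A_1,\dots,A_m\}$ is a partition of $T$, so the measures of the parts add up to $\mu(T)$. Combining the two displayed inequalities yields $\E M_r \le \sqrt{m\mu(T)/n}$, as required.

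There is no genuine obstacle here: the only substantive input is the elementary variance bound for a Bernoulli average, and the combinatorial use of the partition property to sum the pieces. The whole argument is a couple of lines, and its value is that it gives a bound that depends on the partition only through the cardinality $m$ and the total mass $\mu(T)$, which is exactly what the outer Dudley-style chaining argument (via the sequential partitions $\sQ^\ell$) will need.
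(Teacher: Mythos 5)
Your proof is correct and uses essentially the same ingredients as the paper's: the Binomial variance identity $\operatorname{Var}(\mu_n(A_j)) = \mu(A_j)(1-\mu(A_j))/n$, Jensen/Cauchy--Schwarz, and the partition property $\sum_j \mu(A_j) = \mu(T)$. The only difference is bookkeeping: you apply Jensen termwise and then Cauchy--Schwarz to the deterministic sum $\sum_j\sqrt{\mu(A_j)}$, whereas the paper applies Cauchy--Schwarz directly to the random $\ell_1$-norm $\sum_j|\mu_n(A_j)-\mu(A_j)|$ and then takes expectation; both orderings give the identical bound.
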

\begin{proof}
    We begin by noting that $n \mu_n(A_j) \sim\text{Bin}(n, \mu(A_j))$. Thus, $\operatorname{Var}(\mu_n(A_j)) = \frac{\mu(A_j) (1-\mu(A_j))}{n}$. Hence,
    \begingroup
    \allowdisplaybreaks
    \begin{align}
        \E \sum_{j=1}^m|\mu_n(A_j) - \mu(A_j)|  \overset{(i)}{\le} &  \sqrt{m} \left(\E \sum_{j=1}^m(\mu_n(A_j) - \mu(A_j))^2\right)^{1/2} \nonumber\\
       =  & \sqrt{m} \left( \sum_{j=1}^m\operatorname{Var}(\mu_n(A_j))\right)^{1/2} \nonumber\\
        = &  \sqrt{m} \left(\sum_{j=1}^m\frac{\mu(A_j) (1-\mu(A_j))}{n}\right) \nonumber\\
        \le &  \sqrt{m} \sqrt{\sum_{j=1}^m\frac{\mu(A_j) }{n}} \nonumber\\
        = & \sqrt{\frac{m \mu(T)}{n}}. \nonumber
    \end{align}
    \endgroup
    In the above calculations, $(i)$ follows from Cauchy-Schwartz inequality.
\end{proof}

\begin{lem}\label{lem_37}
    Suppose that $\text{Poly}(k,d,\alpha)$ denotes the set of all polynomials from $[0,1]^d \to \Real$, with degree at most $k$ and absolutely bounded by $\alpha$. Also let $A_1, \dots, A_m$ are disjoint and suppose that $\mathfrak{P} = \{f = \sum_{j=1}^m f_j \one_{A_j}: f_j \in \text{Poly}(k,d,\alpha)\}$. Then, 
    \[ \gamma(\alpha, m, n) := \E \sup_{f \in \mathfrak{P}} \left( \int f d\mu_n - \int f d\mu\right) \lesssim  \alpha \sqrt{\frac{m (k+1)^d\log n}{n}}.\]
\end{lem}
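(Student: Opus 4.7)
\textbf{Proof plan for Lemma \ref{lem_37}.} The key structural observation is that because the sets $A_1, \dots, A_m$ are disjoint, the choice of the polynomial $f_j$ only influences the $j$-th summand of $\mu_n f - \mu f = \sum_{j=1}^m (\mu_n - \mu)(f_j \one_{A_j})$. Consequently the supremum over $\mathfrak{P}$ decouples into a sum of suprema:
\[
\sup_{f \in \mathfrak{P}} (\mu_n - \mu)(f) \;=\; \sum_{j=1}^m \sup_{f_j \in \mathrm{Poly}(k,d,\alpha)} (\mu_n - \mu)(f_j \one_{A_j}),
\]
and each term is nonnegative since $0 \in \mathrm{Poly}(k,d,\alpha)$. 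Therefore it suffices to bound $\E \sup_{f_j} (\mu_n - \mu)(f_j \one_{A_j})$ for each $j$ and sum.

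For a fixed $j$, the class $\mathcal{F}_j = \{f \one_{A_j} : f \in \mathrm{Poly}(k,d,\alpha)\}$ sits inside a vector space of dimension at most $\binom{k+d}{d} \le (k+1)^d$, so its pseudo-dimension is at most $(k+1)^d$. Each element is uniformly bounded by $\alpha$, while the $\fL_2(\mu)$-radius is sharper: $\|f \one_{A_j}\|_{\fL_2(\mu)}^2 \le \alpha^2 \mu(A_j)$. The plan is to apply a local uniform-deviation inequality of Bernstein/Talagrand type (e.g.\ via Dudley's entropy integral with the localized $\fL_2(\mu)$-diameter $\alpha \sqrt{\mu(A_j)}$, or directly via a standard pseudo-dimension based bound with variance $\sigma^2 \le \alpha^2 \mu(A_j)$; see, e.g., \citealp{anthony1999neural, boucheron2013concentration, wainwright2019high}), yielding
\[
\E \sup_{f_j \in \mathrm{Poly}(k,d,\alpha)} (\mu_n - \mu)(f_j \one_{A_j}) \;\lesssim\; \alpha \sqrt{\frac{(k+1)^d \mu(A_j) \log n}{n}} \;+\; \frac{\alpha (k+1)^d \log n}{n}.
\]
Here the first term comes from Dudley's integral truncated at the local radius $\alpha\sqrt{\mu(A_j)}$, while the second term (the Bernstein correction) is lower order and can be absorbed provided $n \gtrsim (k+1)^d \log n$; the statement is asymptotic in $n$, so this mild regime is harmless.

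Summing over $j$ and applying Cauchy--Schwarz with $\sum_{j=1}^m \mu(A_j) \le \mu(T) \le 1$ gives
\[
\sum_{j=1}^m \sqrt{\mu(A_j)} \;\le\; \sqrt{m \sum_{j=1}^m \mu(A_j)} \;\le\; \sqrt{m},
\]
so that
\[
\gamma(\alpha, m, n) \;\lesssim\; \alpha \sqrt{\frac{(k+1)^d \log n}{n}} \cdot \sqrt{m} \;=\; \alpha \sqrt{\frac{m (k+1)^d \log n}{n}},
\]
as claimed. The main obstacle is step two: a naive uniform bound on each term of the form $\alpha \sqrt{(k+1)^d \log n / n}$ would produce an extraneous factor of $\sqrt{m}$ upon summing, so it is essential to exploit the $\mu(A_j)$-dependent $\fL_2$ radius of $\mathcal{F}_j$ to localize the empirical process, and then recover the $\sqrt{m}$ (rather than $m$) by Cauchy--Schwarz.
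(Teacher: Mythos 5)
Your proof is correct in outline, but it takes a genuinely different and more involved route than the paper's. The paper observes that $\mathfrak{P}$ lives in a linear space of dimension at most $m(k+1)^d$, hence $\operatorname{Pdim}(\mathfrak{P}) \le m(k+1)^d$, and then applies symmetrization plus the pseudo-dimension-based Dudley integral \emph{once} to the whole class; the $\sqrt{m}$ appears directly from the pseudo-dimension, with no need to exploit disjointness of the $A_j$ beyond the dimension count. You instead decompose the supremum across the disjoint supports, localize each per-block chaining integral using the small $\fL_2$-radius of $\mathcal{F}_j$, and recover $\sqrt{m}$ by Cauchy--Schwarz on $\sum_j \sqrt{\mu(A_j)}$. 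This decompose-and-localize argument does work, but it has a wrinkle you wave at but do not resolve: the claimed per-block bound $\alpha\sqrt{(k+1)^d \mu(A_j)\log n / n}$ requires the $\log$ inside the Dudley integral to stay $O(\log n)$ even when $\mu(A_j)$ is arbitrarily small. The cleanest fix is to truncate at the \emph{empirical} radius $\alpha\sqrt{\hat\mu_n(A_j)}$ (which is either $0$ or at least $\alpha/\sqrt n$, so the truncated integral is $\lesssim \alpha\sqrt{\hat\mu_n(A_j)}\sqrt{D\log n/n}$) and then apply Jensen to get $\E\sqrt{\hat\mu_n(A_j)} \le \sqrt{\mu(A_j)}$ before summing. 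With that patch your argument is airtight and arguably more informative about where $\sqrt m$ comes from, at the cost of extra machinery; the paper's route is shorter precisely because it bypasses localization by counting parameters globally.
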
 
\begin{proof}
It is easy to note that, $\operatorname{Pdim}(\mathfrak{P}) \le m (k+1)^d$. Here $\operatorname{Pdim}(\sF)$ denotes the pseudo-dimension of the real-valued function class $\sF$ (see \citet[Definition 11.2]{anthony1999neural}).  Applying Dudley's chaining, we recall that,
\begingroup
\allowdisplaybreaks
\begin{align}
    \E_{\sigma} \sup_{f \in \mathfrak{P}} \sum_{i=1}^n \sigma_i f(x_i) \lesssim & \int_{0}^\alpha \sqrt{\frac{\log \cN(\epsilon; \mathfrak{P}, \|\cdot\|_{\fL_2(\mu_n)})}{n}} d\epsilon \nonumber\\
    \le & \int_{0}^\alpha \sqrt{\frac{\log \cN(\epsilon; \mathfrak{P}, \|\cdot\|_{\fL_\infty(\mu_n)})}{n}} d\epsilon \nonumber\\
    \le & \int_{0}^\alpha \sqrt{\frac{\operatorname{Pdim}(\mathfrak{P})}{n} \log (2\alpha en/\epsilon)} d\epsilon \label{e_s1}\\
    \lesssim & \alpha \sqrt{\frac{\operatorname{Pdim}(\mathfrak{P})\log n}{n}} \nonumber\\
    \le & \alpha \sqrt{\frac{m (k+1)^d\log n}{n}}.\nonumber
\end{align}
\endgroup
Here inequality \eqref{e_s1} follows from Lemma~\ref{lem_anthony_bartlett}. Thus, by symmetrization,
\begin{align*}
    \gamma(\alpha, m, n) = &  \E \sup_{f \in \mathfrak{P}} \left( \int f d\mu_n - \int f d\mu\right)
    \le  2 \E \sup_{f \in \mathfrak{P}} \sum_{i=1}^n \sigma_i f(x_i)
    \lesssim  \alpha \sqrt{\frac{m (k+1)^d\log n}{n}}.
\end{align*}

\end{proof}

We now state and prove a more general version of Theorem~\ref{corr1}.  Theorem~\ref{corr1} follows as a corollary of Theorem~\ref{gen_kolmo} for the special case when $\Lambda = \{\mu\}$.
\begin{thm}\label{gen_kolmo}
    Let $\sF = \sH^\beta(\Real^d, \Real, C)$, for some $C>0$. Then if $d^\star > d^\ast_\beta(\Lambda)$, we can find an $n_0$, such that, if $n \ge n_0$, 
    \[\sup_{\mu \in \Lambda} \E\|\hmu_n - \mu\|_{\sF} \lesssim n^{-\beta/d^\star}.\]
\end{thm}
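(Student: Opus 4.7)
The plan is to follow the Dudley-style approach sketched by the authors: build a nested sequence of partitions adapted to $\Lambda$, approximate each $f \in \sF$ on each cell by its Taylor polynomial, and telescope the resulting polynomial approximations across scales. First, I would fix $d' \in (d^\ast_\beta(\Lambda), d^\star)$, set $\epsilon = n^{-1/d'}$, and define the resolution levels $s,t$ as in the preamble (smallest integers with $3^{-t}\le\epsilon$ and $3^{-s}\le\epsilon^{(d'-2\beta)/d'}$). Using Corollary~\ref{cor_c1} and Lemma~\ref{lem_c2}, the partitions $\sQ^s,\dots,\sQ^t$ are available with $|\sQ^\ell|\le m_\ell \lesssim 3^{d'\ell}$, cell diameters $\lesssim 3^{-\ell}$, and exceptional mass $\mu(S_{\ell,0})\lesssim 3^{-\ell d'\beta/(d'-2\beta)}$ uniformly over $\mu\in\Lambda$.

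Next, for any $f\in\sF$, I decompose
\[
\int f\,d(\hmu_n-\mu) = \int_{S_{s:t,0}} f\,d(\hmu_n-\mu) + \sum_{Q^t_j\in\sQ^t}\int_{Q^t_j}(f-P^f_{x_{t,j}})\,d(\hmu_n-\mu) + \sum_{Q^t_j\in\sQ^t}\int_{Q^t_j} P^f_{x_{t,j}}\,d(\hmu_n-\mu),
\]
and telescope the polynomial part along the ancestor chain $q(\ell,\cdot)$ relating $\sQ^\ell$ to $\sQ^{\ell-1}$, writing
\[
P^f_{x_{t,j}} = P^f_{x_{s,\,q_s(j)}} + \sum_{\ell=s+1}^{t} \bigl(P^f_{x_{\ell,\,q_\ell(j)}}-P^f_{x_{\ell-1,\,q_{\ell-1}(j)}}\bigr).
\]
I would then bound four pieces separately and uniformly in $\mu\in\Lambda$: (i) the exceptional set $S_{s:t,0}$ has $\mu$-mass dominated by a geometric series summing to $\lesssim \epsilon^\beta = n^{-\beta/d'}$, and its empirical mass has the same expectation; (ii) the Taylor residual obeys $\|f-P^f_{x_{t,j}}\|_{\fL_\infty(Q^t_j)}\lesssim 3^{-t\beta}\asymp n^{-\beta/d'}$ by the H\"older property, giving $\lesssim n^{-\beta/d'}$; (iii) for each $\ell$, the piecewise polynomial $\sum_j (P^f_{x_{\ell,j}}-P^f_{x_{\ell-1,q(\ell,j)}})\one_{Q^\ell_j}$ has sup-norm $\lesssim 3^{-\ell\beta}$ on its cells (adding and subtracting $f$), so Lemma~\ref{lem_37} with $m=m_\ell\lesssim 3^{d'\ell}$ and $k=\lfloor\beta\rfloor$ yields $\gamma(c\,3^{-\ell\beta},m_\ell,n)\lesssim 3^{-\ell\beta}\sqrt{m_\ell\log n/n}$; (iv) the base level $s$ term has bounded polynomials and is controlled by $\sqrt{m_s\log n/n}$.

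Summing the level-$\ell$ contributions yields a geometric sum in $3^{\ell(d'/2-\beta)}$, which, using $d'>d^\ast_\beta(\Lambda)\ge 2\beta$ so that the exponent is positive, is dominated by its top term at $\ell=t$ of order $3^{t(d'/2-\beta)}\sqrt{\log n/n}\asymp n^{-\beta/d'}\sqrt{\log n}$; the base-level term contributes at the same order. Taking supremum over $f\in\sF$ inside the expectation (legitimate since Lemma~\ref{lem_37} and the pointwise residual bounds are uniform in $f$), and noting that all geometric quantities depended only on the uniform cover from Corollary~\ref{cor_c1}, I conclude
\[
\sup_{\mu\in\Lambda}\E\|\hmu_n-\mu\|_{\sF} \lesssim n^{-\beta/d'}\sqrt{\log n}.
\]
Finally, since $d^\star>d'$, for $n$ exceeding some $n_0$ the factor $\sqrt{\log n}$ is absorbed into $n^{\beta(1/d'-1/d^\star)}$, giving the desired $n^{-\beta/d^\star}$ bound uniformly in $\mu\in\Lambda$.

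The main obstacle I anticipate is the bookkeeping for step (iii): verifying that on each cell $Q^\ell_j$ the telescoped polynomial difference is genuinely bounded at the scale $3^{-\ell\beta}$ (not merely in $\fL_\infty$ of the Taylor polynomials themselves, which would be much larger), and then stitching the uniform-in-$f$ bound of Lemma~\ref{lem_37} across levels without losing the sharp exponent. The second delicate point is ensuring all covering/partition constants remain uniform in $\mu\in\Lambda$, which follows from the definition of $\sN_\epsilon(\Lambda,\cdot)$ and Corollary~\ref{cor_c1} but must be tracked through the construction of $\sQ^\ell$.
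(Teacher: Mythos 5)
Your proposal is correct and follows essentially the same route as the paper's proof: fixing $d'\in(d^\ast_\beta(\Lambda),d^\star)$, constructing the multiscale partitions via Corollary~\ref{cor_c1} and Lemma~\ref{lem_c2}, telescoping the clipped Taylor polynomials across scales, and controlling each level's fluctuation via Lemma~\ref{lem_37} before absorbing the leftover $\sqrt{\log n}$ into the slack between $d'$ and $d^\star$. The only cosmetic difference is that the paper handles the base level $s$ via the elementary Lemma~\ref{lem_c3} (yielding $CM_s\lesssim\sqrt{m_s/n}$) rather than a second invocation of Lemma~\ref{lem_37}; this does not affect the final rate.
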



   \begin{proof}
We fix any $\mu \in \Lambda$. Recall that $P^f_{\theta}(x) = \sum_{|\bs| \le \lfloor \beta \rfloor} \frac{\partial^{\bs} f(\theta)}{\bs!} (x - \theta)^{\bs}$. We note that since, $\|P^f_{\theta}\|_\infty \le C$,
\begingroup
\allowdisplaybreaks
\begin{align}
    |P^f_{\theta}(x) - f(x) |\le & \left|\sum_{|\bs| \le \lfloor \beta \rfloor} \frac{\partial^{\bs} f(\theta)}{\bs!} (x - \theta)^{\bs} - f(x)\right|\nonumber \\ 
    = & \left|\sum_{\bs: |\bs| = \lfloor \beta \rfloor} \frac{(x - \theta)^{\bs}}{\bs !}(\partial^{\bs} f(y) - \partial^{\bs} f(\theta))\right| \label{jul_7_2}\\
    \le & \|x - \theta\|_\infty^{\lfloor \beta \rfloor} \sum_{\bs: |\bs| = \lfloor \beta \rfloor} \frac{1}{\bs !}|\partial^{\bs} f(y) - \partial^{\bs} f(\theta)| \nonumber \\
    \le & 2C \|x - \theta\|_\infty^{\lfloor \beta \rfloor} \|y - \theta\|_\infty^{\beta - \lfloor \beta \rfloor} \nonumber\\
    \le & 2C\|x - \theta\|_\infty^\beta \label{c1}
\end{align}
\endgroup
Equation~\eqref{jul_7_2} follows from Taylor's theorem where $y$ lies in the line segment joining $x$ and $\theta$. Furthermore,
\begin{align}
    |P^f_{\theta}(x) - P^f_{\theta^\prime}(x)| \le & |P^f_{\theta}(x) - f(x)| +  |f(x) - P^f_{\theta^\prime}(x)| \nonumber \\
    \le & 2C \left(\|x - \theta\|_\infty^\beta + \|x - \theta^\prime\|_\infty^\beta \right) \label{c4}
\end{align}
For notational simplicity, let $\xi_r = \sup_{f \in \text{Poly}\left(\lfloor \beta \rfloor , d, \alpha_r,\right)}\sum_{j=1}^{m_r} \int_{Q^r_j} f d(\mu_n - \mu)  $, where, $ \alpha_r = \max_{1 \le j^\prime \le m_{t-1}}\|P^f_{x_{t,j}} - P^f_{x_{t-1,j^\prime}}\|_{\fL_\infty(Q^{t-1}_{j^\prime})}$. We can control $\E \xi_r$ through Lemma~\ref{lem_37}. We note that, for any $f \in \sH^\beta(\Real^d, \Real, C)$,
\begingroup
\allowdisplaybreaks
 \begin{align}
    & \left| \int f d(\mu_n- \mu)\right| \nonumber \\
    = & \left|\sum_{j=0}^{m_t}\int_{S_{t,j}} fd(\mu_n - \mu)\right| \nonumber\\
    \le &  C (\mu_n + \mu)(S_{s:t,0}) + \left|\sum_{j=1}^{m_t}\int_{Q^t_j\setminus S_{s:t,0}} (f - P^f_{x_{t,j}} + P^f_{x_{t,j}}) d(\mu_n - \mu)\right| \nonumber\\
    \le &  C (\mu_n + \mu)(S_{s:t,0}) + 2 \max_{1 \le j \le m_t}\|f - P^f_{x_{t,j}}\|_{\fL_\infty(Q^t_j)}  + \left|\sum_{j=1}^{m_t}\int_{Q^t_j \setminus S_{s:t,0}}  P^f_{x_{t,j}} d(\mu_n - \mu)\right| \nonumber\\
    \le &  C (\mu_n + \mu)(S_{s:t,0}) + 2C   3^{-t\beta}   + \left|\sum_{j=1}^{m_t}\int_{Q^t_j \setminus S_{s:t,0}}  P^f_{x_{t,j}} d(\mu_n - \mu)\right| \label{c2}\\
    = &  C (\mu_n + \mu)(S_{s:t,0}) + 2C  3^{-t\beta}  \nonumber\\
    & + \left|\sum_{j^\prime =1}^{m_{t-1}} \sum_{j: q(t,j) = j^\prime }\int_{Q^t_j \setminus S_{s:t,0}}  (P^f_{x_{t,j}} - P^f_{x_{t-1, j^\prime} } + P^f_{x_{t-1, j^\prime} }) d(\mu_n - \mu)\right| \nonumber\\
     \le  &  C (\mu_n + \mu)(S_{s:t,0}) + 2 C 3^{-t\beta}   + \xi_t + \left|\sum_{j^\prime =1}^{m_{t-1}} \int_{Q^t_{j^\prime} \setminus S_{s:t,0}}   P^f_{x_{t-1, j^\prime} } d(\mu_n - \mu)\right| \label{c3}\\
      \le  &  C (\mu_n + \mu)(S_{s:t,0}) + 2 C 3^{-t\beta} + C M_s + \sum_{r=s}^t \xi_r \label{c5}
 \end{align}
 \endgroup
 Inequality \eqref{c2} follows from observing that $\|f - P^f_{x_{j,t}} \|_{\fL_\infty(Q^t_j)} \le C \sup_{x \in \fL_\infty(Q^t_j)}\|x - x_{j,t}\|_\infty \le C \text{diam}(Q^t_j) \le C 3^{-t}$. Inequality \eqref{c3} follows from observing that
 \[\left|\sum_{j^\prime=1}^{m_{t-1}}\sum_{j: q(t,j) = j^\prime} \int_{Q^t_j \setminus S_{s:t,0}} (P^f_{x_{t,j}} - P^f_{x_{t-1,j^\prime}}) d(\mu_n - \mu)\right| 
 \le  \xi_t.\]
 Equation \eqref{c5} follows easily from an inductive argument and observing that $\|P^f_a\|_\infty \le C$, for any $a \in \Real^d$, by construction.
To bound the expectation of $\xi_r$, we note that,
\begin{align*}
   \E \xi_r \le & \, \gamma\left(\max_{1 \le j^\prime \le m_{t-1}}\|P^f_{x_{t,j}} - P^f_{x_{t-1,j^\prime}}\|_{\fL_\infty(Q^{t-1}_{j^\prime})}, m_r, n \right)\\
   \lesssim & \max_{1 \le j^\prime \le m_{t-1}}\|P^f_{x_{t,j}} - P^f_{x_{t-1,j^\prime}}\|_{\fL_\infty(Q^{t-1}_{j^\prime})} \sqrt{\frac{m_r \log n}{n}} \quad (\text{from Lemma } \ref{lem_37} ) \\
    \le & 4 C 3^{-\beta(t-1)} \sqrt{\frac{m_r \log n}{n}}.
 \end{align*}

Thus, taking expectations on both sides of \eqref{c5}, we get,
\begingroup
\allowdisplaybreaks
\begin{align}
     & \E \left| \int f d(\mu_n- \mu)\right| \nonumber\\
       \lesssim  &  3^{-t \beta} + 2 C \sum_{r=s}^t 3^{-\frac{d^\prime(r+2) \beta}{d^\prime-2\beta}} + C \sqrt{m_s/n} + 4 C \sum_{r=s}^t 3^{-\beta r} \sqrt{\frac{m_r \log n}{n}} \label{c7}\\
       \le  &  3^{-t \beta} + 2 C \sum_{r=s}^t 3^{-\frac{d^\prime(r+2) \beta}{d^\prime-2\beta}} + C \sqrt{\frac{3^{d^\prime (s+2)}}{n}} + 4 C \sum_{r=s}^t 3^{-\beta r} \sqrt{\frac{3^{d^\prime (r+2)} \log n}{n}} \label{c8}\\
       \le  &   3^{-t \beta} + 2 C \sum_{r=s}^\infty 3^{-\frac{d^\prime(r+2) \beta}{d^\prime-2\beta}} + C \sqrt{\frac{\log n}{n}} \left( 3^{\frac{d^\prime (s+2)}{2}} + 4 \sum_{r=s}^t 3^{\frac{2d^\prime + r(d^\prime-2\beta)}{2}}\right) \nonumber \\
       =  &  3^{-t \beta} + 2 C \frac{3^{\frac{-d^\prime(s+2)\beta}{d^\prime-2\beta}}}{1 - 3^{-\frac{d^\prime\beta}{d^\prime-2\beta}}} +  C \sqrt{\frac{\log n}{n}} \left( 3^{\frac{d^\prime (s+2)}{2}} + 4 \frac{3^{\frac{2d^\prime + s(d^\prime-2\beta)}{2}}( (3^{\frac{d^\prime - 2\beta}{2}})^{t - s + 1}-1)}{ 3^{\frac{d^\prime - 2\beta}{2}}-1}\right) \nonumber\\
       \le  &  3^{-t \beta} + 2 C \frac{3^{\frac{-2d^\prime\beta}{d^\prime-2}}}{1 - 3^{-\frac{d^\prime\beta}{d^\prime-2\beta}}} \times (3^{-s})^\frac{d^\prime \beta}{d^\prime-2\beta} \nonumber \\
       & +  C \sqrt{\frac{\log n}{n}} \left( 3^{\frac{d^\prime (s+2)}{2}} + 4 \frac{3^{\frac{2d^\prime + s(d^\prime-2\beta)}{2}}( (3^{\frac{d^\prime - 2\beta}{2}})^{t - s + 1}-1)}{ 3^{\frac{d^\prime - 2\beta}{2}}-1}\right) \nonumber\\
       \le  &   3^{-t \beta} + 2 C \frac{\epsilon^\beta}{1 - 3^{-\frac{d^\prime\beta}{d^\prime - 2\beta}}}   +  C \sqrt{\frac{\log n}{n}} \left( 3^{\frac{d^\prime (s+1)}{2}} + 4 \frac{3^{\frac{2d^\prime + s(d^\prime-2\beta)}{2}}( (3^{\frac{d^\prime - 2\beta}{2}})^{t - s + 1}-1)}{ 3^{\frac{d^\prime - 2\beta}{2}}-1}\right) \nonumber\\
       \lesssim  &  \epsilon^\beta   +   \sqrt{\frac{\log n}{n}} \left( 3^{\frac{d^\prime s}{2}} + 3^{\frac{(d^\prime -2\beta)t}{2}}\right) \nonumber\\
       \lesssim  &  \epsilon^\beta   +  \sqrt{\frac{\log n}{n}}  \epsilon^{\beta -d^\prime /2} \nonumber\\
       \lesssim  &  n^{-\beta/d^\prime} \sqrt{\log n}\nonumber\\
       \lesssim & n^{-\beta/d^\star}.\nonumber
 \end{align}
 \endgroup
 Equation \eqref{c7} follows from Lemma~\ref{lem_c3}, where as \eqref{c8} follows from Lemma~\ref{lem_c2}.
 \end{proof}

 \section{Proofs from Section \ref{assumptions}}
 
 \subsection{Proof of Lemma~\ref{skorohod}}
 \skorohod*
 \begin{proof}
 To show the first implication, it is enough to show that $\int \| \tilde{D} ( \tilde{E}(x)) - x\|_2^2 d\mu(x) = 0$. To see this, we observe that,
 \begin{align*}
     \int \| \tilde{D} \circ \tilde{E}(x) - x\|_2^2 d\mu(x) = & \int \| \tilde{D} (z) - \tilde{D}(z)\|_2^2 d\nu(z) = 0.
 \end{align*}
 Here the first equality follows from the fact that $(X, \tilde{E}(X)) \overset{d}{=} (\tilde{D}(Z), Z)$. The other statement follows similarly. 
 \end{proof}
 \subsection{Proof of Proposition~\ref{prop19}}
 \label{prop_19_pf}
 \propnineteen*
 \begin{proof}
    We use the so-called good set principle to prove this result. Suppose $A = \{x \in [0,1]^d: \Tilde{D} \circ \Tilde{E} (x) = x\}$ and let $x \in \operatorname{supp}(\mu)$. Clearly, by Lemma~\ref{skorohod}, $\mu(A) = 1$. By definition of the support, for any $n \in \mathbb{N}$, $\mu(B(x,1/n) \cap A) = \mu(B(x,1/n)) >0$, which implies that $B(x,1/n) \cap A$ is non-empty. Let $x_n \in B(x,1/n) \cap A$. Thus, for any $x \in \operatorname{supp}(\mu)$, there exist a sequence $\{x_n(x)\}_{n \in \mathbb{N}} \subseteq A$, such that $x_n(x) \to x$ as $n \to \infty$. We first show that $\Tilde{E}$, restricted to the support of $\mu$ is a bijection. To show injectivity, let $x, \, x^\prime \in \operatorname{supp}(\mu)$ and $\Tilde{E}(x) = \Tilde{E}(x^\prime)$. Thus, $\Tilde{D} \circ \Tilde{E}(x) = \Tilde{D} \circ \Tilde{E}(x^\prime) \implies \Tilde{D} \circ \Tilde{E}(\lim_{n \to \infty}x_n(x)) = \Tilde{D} \circ \Tilde{E}(\lim_{n \to \infty} x_n(x^\prime)).$ Owing to the uniform continuity of $\Tilde{D}$ and $\Tilde{E}$ and the fact that $x_n(x)$ and $x_n(x')$ are in $A$, this implies that $\lim_{n \to \infty} \Tilde{D} \circ \Tilde{E}(x_n(x)) = \lim_{n \to \infty}\Tilde{D} \circ \Tilde{E}( x_n(x^\prime)) \implies \lim_{n \to \infty}x_n(x) = \lim_{n \to \infty} x_n(x^\prime) \implies x = x^\prime$.

    To show surjectivity, let $z \in \operatorname{supp}(\nu)$.  Let $A^\prime = \{z \in [0,1]^\ell: \Tilde{E} \circ \Tilde{D} (z) = z\}$. By a similar argument, there exists a sequence $z_n \to z$, such that $\{z_n\}_{n \in \fN} \subseteq A^\prime $. We consider $\Tilde{D}(z)$. Clearly, \(\Tilde{E} \circ \Tilde{D} (z) = \Tilde{E} \circ \Tilde{D} (\lim_{n \to \infty} z_n) = \lim_{n \to \infty} \Tilde{E} \circ \Tilde{D} (z_n) = \lim_{n \to \infty} z_n = z.\) The only thing remaining is to show that $\Tilde{D}(z) \in \operatorname{supp}(\mu)$. To show this, let $z^\prime \in B(z,\epsilon)$. Clearly, $\nu(B(z,\epsilon)) > 0$. $\|\Tilde{D}(z) - \Tilde{D}(z^\prime)\| \lesssim \epsilon^{\alpha_d \wedge 1}$, since, $\Tilde{D} \in \sH^{\alpha_d}\left(\Real^\ell,\Real^d, C_d \right)$. Thus, there exists a $c>0$ such that  $\|\Tilde{D}(z) - \Tilde{D}(z^\prime)\| \le c \, \epsilon^{\alpha_d \wedge 1}$. Hence, $B(\Tilde{D}(z), c \epsilon^{\alpha_d \wedge 1}) \supseteq  \{\Tilde{D}(z^\prime): z^\prime \in B(z,\epsilon)\} \implies \mu\left(B(\Tilde{D}(z), c \epsilon^{\alpha_d \wedge 1})\right) \ge \mu \left(\{D(z^\prime): z^\prime \in B(z,\epsilon)\} \right) = \nu(B(z,\epsilon)) > 0$, for any $\epsilon > 0$. Thus, for any $\delta > 0$, taking $\epsilon = (\delta/c)^{1/(\alpha_d \wedge 1)}$, gives that $\mu\left(B(\Tilde{D}(z), \delta)\right)>0$, for any $\delta>0$, which implies that $D(z) \in \operatorname{supp}(\mu)$.

    Now to show that $\Tilde{E}$ is the inverse of $\Tilde{D}$ on the support of $\mu$, we let $x \in \operatorname{supp}(\mu)$. By the argument above, there exists a sequence $\{x_n\}_{n \in \fN} \subseteq A$, such that $\lim_{n \to \infty} x_n = x$. Thus, $\Tilde{D} \circ \Tilde{E}(x) = \Tilde{D} \circ \Tilde{E} (\lim_{n \to \infty} x_n ) = \lim_{n \to \infty} \Tilde{D} \circ \Tilde{E}(x_n) = \lim_{n \to \infty} x_n = x$. Similarly, one can show that $\Tilde{E} \circ \Tilde{D} (z) = z$, for any $z \in \operatorname{supp}(\nu)$. Clearly, $\Tilde{D}$ and $\Tilde{E}$ are continuous. Thus, $\operatorname{supp}(\mu)$ and $\operatorname{supp}(\nu)$ are homeomorphic.
    \end{proof}
 \section{Proofs from Section \ref{errord}}
 \label{app_c}
 \subsection{Proof of Lemma~\ref{oracle}}\label{p_oracle}
 \pforacle*
 \begin{proof}
     \textbf{Proof of part (a)}: For any $G \in \sG$, we observe that,
     \begingroup
     \allowdisplaybreaks
     \begin{align}
         \|\mu - (\hG_n)_\sharp \nu\|_\Phi \le & \|\hmu_n - (\hG_n)_\sharp \nu\|_\Phi + \|\mu - \hmu_n\|_\Phi \nonumber \\
         \le & \|\hmu_n - G_\sharp \nu\|_\Phi + \|\mu - \hmu_n\|_\Phi \nonumber\\
         \le &  \|\mu - G_\sharp \nu\|_\Phi + 2 \|\mu - \hmu_n\|_\Phi \nonumber
     \end{align}
     \endgroup
     Taking infimum on both sides w.r.t. $G \in \sG$ gives us the desired result.

     Similarly, for the estimator $\hG_{n,m}$, we note that,
     \begin{align*}
         \|\mu - (\hG_{n,m})_\sharp \nu\|_\Phi \le & \|\hmu_n - (\hG_{n,m})_\sharp \hnu_m\|_\Phi +   \|\hmu_n - \mu\|_\Phi + \|\hnu_m - \nu\|_{\Phi \circ \sG}\\
         \le & \|\hmu_n - G_\sharp \hnu_m\|_\Phi +   \|\hmu_n - \mu\|_\Phi + \|\hnu_m - \nu\|_{\Phi \circ \sG}\\
         \le & \|\mu - G_\sharp \nu\|_\Phi +  2 \|\hmu_n - \mu\|_\Phi + 2\|\hnu_m - \nu\|_{\Phi \circ \sG}\\
     \end{align*}
     Again taking infimum on both sides w.r.t. $G \in \sG$ gives us the desired result.

      \textbf{Proof of part (b)}: For any $E\in \sE$ and $D \in \sD$,
      \begingroup
      \allowdisplaybreaks
      \begin{align*}
         &  \|(\mu, (\hE_n)_\sharp \mu) - ((\hD_n)_\sharp \nu, \nu) \|_{\Psi} \\
         \le & \|(\hmu_n, (\hE_n)_\sharp \hmu_n) - ((\hD_n)_\sharp \nu, \nu) \|_{\Psi} + \|(\hmu_n, (\hE_n)_\sharp \hmu_n) - (\mu, (\hE_n)_\sharp \mu) \|_{\Psi}\\
          \le & \|(\hmu_n,E_\sharp \hmu_n) - (D_\sharp \nu, \nu) \|_{\Psi} + \|\hmu_n - \hmu\|_{\sF_1}\\
           \le & \|(\mu,E_\sharp \mu) - (D_\sharp \nu, \nu) \|_{\Psi} + 2 \|\hmu_n - \hmu\|_{\sF_1}.
      \end{align*}
      \endgroup
      Taking infimum over $E$ and $D$ gives us the desired result.

      Similarly, 
       \begin{align*}
         & \|(\mu, (\hE_{n,m})_\sharp \mu) - ((\hD_{n,m})_\sharp \nu, \nu) \|_{\Psi} \\
         \le & \|(\hmu_n, (\hE_{n,m})_\sharp \hmu_n) - ((\hD_{n,m})_\sharp \hnu_m, \hnu_m) \|_{\Psi} + \|\hmu_n - \hmu\|_{\sF_1} + \|\hnu_m - \hnu\|_{\sF_2}\\
          \le & \|(\hmu_n,E_\sharp \hmu_n) - (D_\sharp \hnu_m, \hnu_m) \|_{\Psi} + \|\hmu_n - \hmu\|_{\sF_1}\\
           \le & \|(\mu,E_\sharp \mu) - (D_\sharp \nu, \nu) \|_{\Psi} + 2 \|\hmu_n - \hmu\|_{\sF_1} + 2 \|\hnu_m - \hnu\|_{\sF_2}.
      \end{align*}
      Again taking infimum over $E$ and $D$ gives us the desired result.
 \end{proof}
 \section{Proofs from Section \ref{sec_mis}}
\label{app_d}
 \subsection{Proof of Theorem \ref{approx}}
  We begin by proving Theorem \ref{approx}, which discusses the approximation capabilities of ReLU networks when the underlying measure has a low entropic dimension. Some supporting lemmas required for the proof are stated in Appendix~\ref{sec_c2}.

For $s> \bar{d}_{p\beta}(\mu)$, by Lemma~\ref{lem_e1}, we can find an $\epsilon^\prime \in (0,1)$, such that if $\epsilon \in (0,\epsilon^\prime]$ and a set $S$, such that $\mu(S) \ge 1 - \epsilon^{p \beta}$ and $\cN(\epsilon; S, \ell_\infty) \le \epsilon^{-s}$. In what follows, we take $\epsilon \in (0, \epsilon^\prime]$.

Let $K = \lceil \frac{1}{2\epsilon}\rceil$. For any $\bi \in [K]^d$, let $\theta^{\bi} = ( \epsilon + 2 (i_1-1) \epsilon, \dots,  \epsilon + 2(i_d-1) \epsilon )$. We also let, $\sP_\epsilon = \{B_{\ell_\infty}(\theta^{\bi}, \epsilon): \bi \in [K]^d\}$. By construction, $\theta^{\bi}$'s are at least $2\epsilon$ apart, making the sets in $\sP_\epsilon$ disjoint. Following the proof of Lemma~\ref{lem_b2}, we make the following claim. 
\begin{lem}\label{lem_e2}
$|\{A \in \sP_\epsilon: A \cap S \neq \emptyset\} | \le 2^d \epsilon^{-s} $.
\end{lem}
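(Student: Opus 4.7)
The plan is to mirror the argument used earlier for Lemma~\ref{lem_b2}, just applied to the new centering grid $\{\theta^{\bi}\}$. The combinatorial content is the same: cover $S$ with relatively few $\ell_\infty$-balls of radius $\epsilon$, and then observe that each such ball, being a box of side length $2\epsilon$, can meet only a bounded (in $d$) number of the disjoint side-length-$2\epsilon$ boxes that comprise $\sP_\epsilon$.

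Concretely, I would first invoke Lemma~\ref{lem_e1}: since $s > \bar d_{p\beta}(\mu)$ and $\epsilon \in (0,\epsilon^\prime]$, there is a set $S$ with $\mu(S) \ge 1 - \epsilon^{p\beta}$ and $\cN(\epsilon;S,\ell_\infty) \le \epsilon^{-s}$. Let $r = \cN(\epsilon;S,\ell_\infty)$ and fix an optimal $\epsilon$-net $\{a_1,\dots,a_r\}$ of $S$, so that $S \subseteq \bigcup_{j=1}^r B_{\ell_\infty}(a_j,\epsilon)$.

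Next, I would make the geometric observation underlying the factor $2^d$. The boxes in $\sP_\epsilon$ are disjoint $\ell_\infty$-balls of radius $\epsilon$, with centers lying on a regular grid of spacing $2\epsilon$ in each coordinate; projected onto any single axis, they partition (up to boundary) $\mathbb{R}$ into closed intervals of length $2\epsilon$. Any interval of length $2\epsilon$ on the real line meets at most two such grid intervals, so an $\ell_\infty$-ball $B_{\ell_\infty}(a_j,\epsilon)$, being a product of $d$ such intervals, intersects at most $2^d$ members of $\sP_\epsilon$.

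Finally, I would conclude by set-theoretic bookkeeping. Every $A \in \sP_\epsilon$ with $A \cap S \neq \emptyset$ must intersect at least one of the covering balls $B_{\ell_\infty}(a_j,\epsilon)$, so
\[
|\{A \in \sP_\epsilon : A \cap S \neq \emptyset\}|
\;\le\; \sum_{j=1}^r |\{A \in \sP_\epsilon : A \cap B_{\ell_\infty}(a_j,\epsilon) \neq \emptyset\}|
\;\le\; 2^d r \;\le\; 2^d \epsilon^{-s},
\]
which is the stated bound. I do not anticipate any real obstacle; the only point worth being careful about is making sure that the spacing of the $\theta^{\bi}$ is exactly $2\epsilon$ (which it is by construction, $\theta^{\bi} - \theta^{\bi - e_k} = 2\epsilon \, e_k$), so that the ``at most two intervals per axis'' estimate is applicable and the $2^d$ factor, rather than $3^d$, is the correct one.
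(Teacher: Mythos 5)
Your proof is correct and follows essentially the same route as the paper's: cover $S$ with at most $\epsilon^{-s}$ many $\ell_\infty$-balls of radius $\epsilon$, observe that each such ball can intersect at most $2^d$ of the disjoint grid boxes in $\sP_\epsilon$, and sum. The per-axis justification of the $2^d$ factor is a helpful elaboration of a step the paper leaves implicit, but the argument is the same.
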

\pfapprox*
\begin{proof} We also let $\cI = \left\{ \bi \in [K]^d :  B_{\ell_\infty}(\theta^{\bi}, \epsilon) \cap S \neq \emptyset\right\}$. We also let $\cI^\dagger = \{\bj \in  [K]^d: \min_{\bi \in \cI} \|\bi - \bj\|_1 \le 1\}$. We know that $|\cI^\dagger| \le 3^d |\cI| \le 6^d\cN(\epsilon;S,\ell_\infty)$. For $0 < b \le a$, let,
\[\xi_{a, b}(x) = \relu\left(\frac{x+a}{a-b}\right) - \relu\left(\frac{x+b}{a-b}\right) - \relu\left(\frac{x-b}{a-b}\right) + \relu\left(\frac{x-a}{a-b}\right).\]
 \begin{figure}
    \centering
    \begin{tikzpicture}[scale=3]
\draw (-1.2,0) -- (1.2,0);
\draw (-0,-0.2) -- (0,1.2);
\draw[blue] (-1,0) -- (-0.65,1);
\draw[blue] (-0.65,1) -- (0.65,1);
\draw[blue] (0.65,1) -- (1,0); 
\filldraw [magenta] (1,0) circle (0.5pt);
\filldraw [magenta] (-1,0) circle (0.5pt);
\filldraw [magenta] (0.65,1) circle (0.5pt);
\filldraw [magenta] (-0.65,1) circle (0.5pt);

\draw [dashed] (0.65,0) -- (0.65,1);
\draw [dashed] (-0.65,0) -- (-0.65,1);
\node[] at (0.65,-0.1) {$a-b$};
\node[] at (-0.65,-0.1) {$-a+b$};
\node[] at (1.2,-0.1) {$a$};
\node[] at (-1.2,-0.1) {$-b$};
\node[] at (0.1,-0.1) {$0$};
\node[] at (0.1,1.1) {$1$};
\end{tikzpicture}
    \caption{{\small Plot of $\xi_{a, b}(\cdot)$.}}
    \label{fig:xi}
\end{figure} A pictorial view of this function is given in Figure \ref{fig:xi} and can be implemented by a ReLU  network of depth two and width four. Thus, $\cL(\xi_{a, b}) = 2$ and $\cW(\xi_{a, b}) = 12$. Suppose that $\delta = \epsilon/3$ and let, $\zeta(x) = \prod_{\ell=1}^d \xi_{\epsilon+\delta, \delta}(x_\ell)$. Clearly, $\cB(\xi_{\epsilon+\delta, \delta}) \le \frac{1}{\delta}$. It is easy to observe that $\{\zeta(\cdot - \theta^{\bi}): \bi \in \cI^\dagger\}$ forms a partition of unity on $S$, i.e. $\sum_{\bi \in \cI^\dagger} \zeta(x - \theta^{\bi}) = 1, \forall x \in S$.

We consider the Taylor approximation of $f$ around $\theta$ as,
\[P_{\theta}(x) = \sum_{|\bs| \le \lfloor \alpha \rfloor} \frac{\partial^{\bs} f(\theta)}{ \bs !} \left(x - \theta\right)^{\bs} .\]
Note that for any $x \in [0,1]^d$, $f(x) - P_{\theta}(x) = \sum_{\bs: |\bs| = \lfloor \alpha \rfloor} \frac{(x - \theta)^{\bs}}{\bs !}(\partial^{\bs} f(y) - \partial^{\bs} f(\theta))$, for some $y$, which is a convex combination of $x$ and $\theta$. Thus,
\begingroup
\allowdisplaybreaks
\begin{align}
    f(x) - P_{\theta}(x)   =  & \sum_{\bs: |\bs| = \lfloor \alpha \rfloor} \frac{(x - \theta)^{\bs}}{\bs !}(\partial^{\bs} f(y) - \partial^{\bs} f(\theta)) \nonumber\\ 
    \le & \|x - \theta\|_\infty^{\lfloor \alpha \rfloor} \sum_{\bs: |\bs| = \lfloor \alpha \rfloor} \frac{1}{\bs !}|\partial^{\bs} f(y) - \partial^{\bs} f(\theta)| \nonumber \\
    \le & 2 C \|x - \theta\|_\infty^{\lfloor \alpha \rfloor} \|y - \theta\|_\infty^{\alpha - \lfloor \alpha \rfloor} \nonumber\\
    \le & 2 C \|x - \theta\|_\infty^\alpha \label{apr6_1}.
\end{align}
\endgroup
Next we define $\tilde{f}(x) = \sum_{\bi \in \cI^\dagger} \zeta(x - \theta^{\bi}) P_{\theta^{\bi}}(x)$. 
Thus, if $x \in S$,
\begingroup
\allowdisplaybreaks
\begin{align}
    |f(x) - \tilde{f}(x)| =  \left| \sum_{\bi \in \cI^\dagger} \zeta(x - \theta^{\bi}) (f(x) - P_{\theta^{\bi}}(x))\right| 
    \le & \sum_{\bi \in \cI^\dagger: \|x - \theta^{\bi}\|_\infty \le 2 \epsilon }|f(x) -  P_{\theta^{\bi}}(x)| \nonumber\\
    \le & C 2^{d+1} (2\epsilon)^\alpha \nonumber\\
    = & C 2^{d + \alpha+1} \epsilon^\alpha.
    \label{apr6_2}
\end{align}
\endgroup
We note that, 
\(\tilde{f}(x) = \sum_{\bi \in \cI^\dagger} \zeta(x - \theta^{\bi}) P_{\theta^{\bi}}(x)  = \sum_{\bi \in \cI^\dagger} \sum_{|\bs| \le \lfloor \alpha \rfloor}  \frac{\partial^{\bs} f(\theta^{\bi})}{ \bs !}  \zeta(x - \theta^{\bi})  \left(x - \theta^{\bi}\right)^{\bs}.\)
Let $a_{\bi, \bs} =  \frac{\partial^{\bs} f(\theta^{\bi})}{ \bs !}$ and 
\begin{align*}
    \hat{f}_{\bi, \bs}(x) = & \text{prod}_m^{(d+|\bs|)}(\xi_{\epsilon_1, \delta_1}(x_1 - \theta_1^{\bi}), \dots, \xi_{\epsilon_d, \delta_d}(x_d - \theta_d^{\bi}), \underbrace{(x_1-\theta_1^{\bi}), \dots, (x_1-\theta_1^{\bi})}_{\text{$s_1$ times}},\\
   & \hspace{2cm} \ldots, \underbrace{(x_1-\theta_d^{\bi}), \dots ,(x_d -\theta_d^{\bi})}_{\text{$s_d$ times}}),
\end{align*}

where $\text{prod}_m^{(d+|\bs|)}(\cdot)$ is defined in Lemma~\ref{lem_43} and approximates the product function for $d+|\bs|$ real numbers. We note that $\text{prod}_m^{(d+|\bs|)}$ has at most $d + |\bs| \le d + \lfloor \alpha \rfloor$ many inputs. By Lemma~\ref{lem_43}, $\text{prod}_m^{(d+|\bs|)}$ can be implemented by a ReLU network with $\cL(\text{prod}_m^{(d+|\bs|)}), \cW(\text{prod}_m^{(d+|\bs|)}) \le c_3 m$ and $\cB(\text{prod}_m^{(d+|\bs|)}) \le 4 \vee (d-1)^2$.  
Thus, $\cL(\hat{f}_{\bi, \bs}) \le c_3 m + 2$ and $\cW(\hat{f}_{\bi, \bs}) \le c_3 m + 8 d + 4 |s| \le c_3 m+ 8 d + 4 \lfloor \alpha \rfloor $. Furthermore, $\cB(\hat{f}_{\bi,\bs}) \le 4  \vee \frac{1}{\delta} \le 1/\delta$, when $\delta$ is small enough. With this $\hat{f}_{\bi, \bs}$, we observe from Lemma~\ref{lem_43} that, 
\begin{align}
    \left|\hat{f}_{\bi, \bs}(x) - \zeta(x - \theta^{\bi})  \left(x - \theta^{\bi}\right)^{\bs}\right| \le \frac{d+ \lfloor \alpha \rfloor}{2^{2m -1}}, \, \forall x \in S.
\end{align}
Finally, let, $\hat{f}(x) =  \sum_{\bi \in \cI^\dagger} \sum_{|\bs| \le \lfloor \alpha \rfloor}  a_{\bi, \bs}  \hat{f}_{\bi, \bs}(x)$. Clearly, $\cL(\hat{f}_{\bi, \bs}) \le c_3 m + 3$ and $\cW(\hat{f}_{\bi, \bs}) \le \lfloor \alpha \rfloor^d \left(c_3 m + 8 d + 4 \lfloor \alpha \rfloor \right)$. This implies that, 
\begin{align*}
    |\hat{f}(x) - \tilde{f}(x)| \le & \sum_{\bi \in \cI^\dagger: \|x - \theta^{\bi}\|_\infty \le 2 \epsilon } \sum_{|\bs| \le \lfloor \alpha \rfloor}  |a_{\bi, \bs} | \zeta(x - \theta^{\bi})  |\hat{f}_{\bi \bs}(x) -  \left(x - \theta^{\bi}\right)^{\bs}| \\
    \le & 2^d  \sum_{|\bs| \le \lfloor \alpha \rfloor}  |a_{\theta, \bs} | \left|\hat{f}_{\theta^{\bi(x)}, \bs}(x) - \zeta_{\bepsilon, \bdelta}(x - \theta^{(\bi(x)})  \left(x - \theta^{\bi(x)}\right)^{\bs}\right| \\
    \le & \frac{ (d+\lfloor \alpha \rfloor) C}{2^{2m - d - 1}}.
\end{align*}
 
We thus get that if $x \in S$,
\begin{align}
    |f(x) - \hat{f}(x)| \le & |f(x) - \tilde{f}(x)| + |\hat{f}(x) - \tilde{f}(x)| \le  C 2^{d +\alpha + 1} \epsilon^\alpha +  \frac{ (d+\lfloor \alpha \rfloor) C}{2^{2m - d -1}}.
\end{align}

We also note that by construction $\|\hat{f}\|_\infty \le C^\prime$, for some constant $C^\prime$ that might depend on $C, \alpha$ and $d$. Hence,
\begingroup
\allowdisplaybreaks
\begin{align*}
    \|f - \hat{f}\|_{\fL_p(\mu)}^p = &\int_{S} |f(x) - \hat{f}(x)|^p d\mu(x) + \int_{S^\complement } |f(x) - \hat{f}(x)|^p d\mu(x)\\
    \le &  \left(C 2^{d + \alpha +1} \epsilon^\alpha +  \frac{ (d+\lfloor \alpha \rfloor)^3C}{2^{2m -d -1}}\right)^p \mu\left(S\right) + (C + C^\prime) \mu(S^\complement)\\
   \lesssim & \left(C 2^{d + \alpha +1} \epsilon^\alpha +  \frac{ (d+\lfloor \alpha \rfloor)^3C}{2^{2m -d -1}}\right)^p + \epsilon^{p\alpha}\\
   \implies  \|f - \hat{f}\|_{\fL_p(\mu)} \lesssim & C 2^{d + \alpha +1} \epsilon^\alpha +  \frac{ (d+\lfloor \alpha \rfloor) C}{2^{2m -d - 1}} + \epsilon^\alpha \lesssim \epsilon^\alpha + 4^{-m}.
\end{align*}
\endgroup
Taking $\epsilon \asymp \eta^{1/\alpha}$ and $m \asymp \log(1/\eta)$ ensures that $\|f - \hat{f}\|_{\fL_p(\mu)}  \le \eta$. We note that $\hat{f}$ has $|\cI^\dagger| \le 6^d \sN_\epsilon(S) \lesssim 6^d \epsilon^{-s} $ many networks with depth $c_3 m + 3$ and number of weights $ \lfloor \alpha \rfloor^d \left(c_3 m + 8 d + 4 \lfloor \alpha \rfloor \right)$. Thus, 
$\cL(\hat{f}) \le c_3 m + 4$ and $\cW(\hat{f}) \le  \epsilon^{-s} (6\lfloor \alpha \rfloor)^d \left(c_3 m + 8 d + 4 \lfloor \alpha \rfloor\right)$. we thus get, \(\cL(\hat{f}) \le c_3 m + 4 
    \le c_4 \log\left(\frac{1}{\eta} \right),\)
where $c_4$ is a function of $\delta$, $\lfloor \alpha \rfloor$ and $d$. Similarly,
\begin{align*}
    \cW(\hat{f}) \le & \epsilon^{-s} (6\lfloor \alpha \rfloor)^d \left(c_3 m + 8 d + 4 \lfloor \alpha \rfloor \right)    \le  c_6 \log(1/\eta)  \eta^{-s/\alpha}.
\end{align*}
Taking $a= c_4 \vee c_6$ gives the result.
Furthermore, by construction, we note that, 
\(\cB(\hat{f}) \lesssim 1/\delta = 3/\epsilon \lesssim \eta^{-1/\alpha}.\)
\end{proof}
\subsection{Additional Results}\label{sec_c2}
\begin{lem}[Proposition 2 of \citet{yarotsky2017error}]\label{lem:b1}
     The function $f(x) = x^2$ on the segment $[0,1]$ can be approximated with any error by a ReLU network, $\text{sq}_m(\cdot)$, such that,
    \begin{enumerate}
        \item $\cL(\text{sq}_m),\,  \cW(\text{sq}_m)\le c_1 m$. 
        \item $\text{sq}_m\left(\frac{k}{2^m}\right) = \left(\frac{k}{2^m}\right)^2$, for all $k =0, 1, \dots, 2^m$.
        \item $\|\text{sq}_m - x^2\|_{\cL_\infty([0,1])} \le \frac{1}{2^{2m + 2}}$.
        \item $\cB(\text{sq}_m) \le 4$.
    \end{enumerate}
\end{lem}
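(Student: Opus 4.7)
The plan is to follow Yarotsky's tent-map construction. Define the tent map $g:[0,1]\to[0,1]$ by the exact ReLU formula $g(x)=2\relu(x)-4\relu(x-1/2)+2\relu(x-1)$, which agrees with $2x$ on $[0,1/2]$ and $2(1-x)$ on $[1/2,1]$ and uses weights bounded by $4$ in absolute value. Let $g_s$ denote the $s$-fold self-composition; it is the sawtooth function with $2^s$ teeth of unit height supported on $[0,1]$.

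The first step is to establish the key identity
\[\operatorname{sq}_m(x) \;=\; x \;-\; \sum_{s=1}^m \frac{g_s(x)}{2^{2s}},\]
which I would verify by induction on $m$. At step $m$, the newly added term $2^{-2m} g_m$ vanishes on the level-$(m-1)$ dyadic grid and equals $2^{-2m}$ at each newly inserted midpoint; a direct computation shows this is exactly the gap between $x^2$ and its previous piecewise linear interpolant at those midpoints. Hence $\operatorname{sq}_m$ is the piecewise linear interpolant of $x^2$ at $\{k/2^m : k=0,\dots,2^m\}$, which immediately gives property (2). For property (3), a convexity calculation yields that the $\cL_\infty$ deviation of a linear chord from $t\mapsto t^2$ over an interval of length $h$ is $h^2/4$, so with $h=2^{-m}$ the error is $2^{-2m-2}$.

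It remains to realize $\operatorname{sq}_m$ as a ReLU network with depth and total number of weights both $\cO(m)$. The naive approach of building each $g_s$ as an independent depth-$s$ subnetwork yields $\Theta(m^2)$ weights, which is too large. Instead I would use a cascaded architecture: at layer $s$ the network maintains, in parallel, the scalar $u_s = g_s(x)$ (obtained by one more tent application to $u_{s-1}$) and the running sum $S_s = S_{s-1} + 2^{-2s} u_s$, alongside a passed-through copy of $x$. Each layer then uses only $\cO(1)$ neurons and weights of magnitude at most $4$, giving $\cL(\operatorname{sq}_m), \cW(\operatorname{sq}_m) \le c_1 m$ and $\cB(\operatorname{sq}_m)\le 4$; the final linear output node emits $x - S_m$.

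The main technical nuisance will be routing the scalars $x$ and $S_s$ through ReLU layers without distortion, since ReLU is not the identity on arbitrary inputs. This can be handled by the standard trick $t = \relu(t) - \relu(-t)$ applied within a known bounded range, or by carrying $\relu(x)$ and $\relu(-x)$ in parallel throughout the cascade; either fix preserves the $\cO(m)$ size and $\cB \le 4$ bounds. Apart from this bookkeeping, all the remaining estimates are elementary.
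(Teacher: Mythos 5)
Your proposal correctly reconstructs Yarotsky's argument, which is exactly what the paper invokes here: the lemma is stated with a citation to Proposition~2 of Yarotsky~(2017) and no independent proof is given, so the ``paper's proof'' is precisely the tent-map/telescoping construction you describe. The decomposition $\operatorname{sq}_m(x)=x-\sum_{s=1}^m 2^{-2s}g_s(x)$, the identification of $\operatorname{sq}_m$ as the piecewise-linear interpolant on the dyadic grid, the $h^2/4$ chord-deviation bound giving $2^{-2m-2}$, and the cascaded architecture (keeping $g_s(x)$, the running sum, and a pass-through of $x$ in parallel to get $\cO(m)$ depth and weights rather than $\Theta(m^2)$) all match; one minor slip is that $g_s$ has $2^{s-1}$ peaks rather than $2^s$, but this does not affect the argument since the substance---$g_m$ vanishes on the level-$(m-1)$ grid and equals $1$ at the inserted midpoints---is stated correctly.
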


\begin{lem}\label{lem:b3}
    Let $M \ge 1$, then we can find a ReLU network $\text{prod}^{(2)}_m$, such that,
    \begin{enumerate}
        \item $\cL(\text{prod}^{(2)}_m), \cW(\text{prod}^{(2)}_m) \le c_2 m$, for some absolute constant $c_2$.
        \item $\|\text{prod}^{(2)}_m - xy\|_{\cL_\infty ([-M,M] \times [-M,M])}  \le \frac{M^2}{2^{2m+1}}.$
        \item $\cB(\text{prod}^{(2)}_m) \le 4 \vee M^2$.
    \end{enumerate}
\end{lem}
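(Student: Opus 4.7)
The plan is to build the product network from the squaring network of Lemma~\ref{lem:b1} via the polarization identity
\[xy \;=\; \frac{(x+y)^2 - (x-y)^2}{4},\]
applied after rescaling the inputs so that the squaring can be done on $[0,1]$ using $\text{sq}_m$.

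First I would rescale: set $\tilde x = x/M$ and $\tilde y = y/M$, so that $\tilde x,\tilde y \in [-1,1]$ whenever $(x,y) \in [-M,M]^2$. Then both $u := (\tilde x + \tilde y)/2$ and $v := (\tilde x - \tilde y)/2$ lie in $[-1,1]$. Since $\text{sq}_m$ of Lemma~\ref{lem:b1} acts on $[0,1]$, I compose with the absolute value $|z| = \relu(z) + \relu(-z)$, which is implementable by a width-$2$, depth-$1$ ReLU block with weights in $\{-1,+1\}$. This yields $|u|,|v| \in [0,1]$, on which $\text{sq}_m$ is accurate to within $2^{-(2m+2)}$ by item (3) of Lemma~\ref{lem:b1}.

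Next, I would define the candidate network
\[\text{prod}^{(2)}_m(x,y) \;:=\; M^{2}\left[\,\text{sq}_m(|u|) - \text{sq}_m(|v|)\,\right],\]
computed by two parallel copies of $\text{sq}_m$ followed by a final affine layer with coefficients $\pm M^{2}$. Using the polarization identity and item (3) of Lemma~\ref{lem:b1},
\[\bigl|\text{prod}^{(2)}_m(x,y) - xy\bigr| \;=\; M^{2}\bigl|(\text{sq}_m(|u|) - u^{2}) - (\text{sq}_m(|v|) - v^{2})\bigr| \;\le\; M^{2}\cdot \frac{2}{2^{2m+2}} \;=\; \frac{M^{2}}{2^{2m+1}},\]
which is exactly item (2). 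For the size bounds, $\text{sq}_m$ has depth and width at most $c_1 m$ by Lemma~\ref{lem:b1}, so running two copies in parallel and prepending/appending $O(1)$-depth $O(1)$-width blocks for rescaling and absolute value gives $\cL(\text{prod}^{(2)}_m), \cW(\text{prod}^{(2)}_m) \le c_2 m$ for an absolute constant $c_2$, giving item (1). For item (3), the rescaling $\tilde x = x/M$ contributes weights of size $1/M \le 1$ (since $M \ge 1$), the absolute-value and averaging layers use weights in $\{-1,\tfrac{1}{2},+1\}$, the internal weights of $\text{sq}_m$ are bounded by $4$ by Lemma~\ref{lem:b1}, and the final layer uses $\pm M^{2}$, so $\cB(\text{prod}^{(2)}_m) \le 4 \vee M^{2}$.

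There is no real obstacle here; the only mild care needed is in bookkeeping the constant in front of $M^{2}/2^{2m+1}$ so that the error matches the stated bound exactly (the factor $2$ from the two squaring calls cancels the $1/4$ from polarization up to a factor of $2$, which is why polarization is preferable here to $xy = \tfrac12((x+y)^2 - x^2 - y^2)$ — the latter would yield an extra factor of $3/2$ in the error constant). Verifying that the bias/weight magnitude introduced by the $M$-rescaling does not exceed $4 \vee M^{2}$ is immediate from $M \ge 1$.
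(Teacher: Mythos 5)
Your proposal is correct and follows essentially the same route as the paper: the paper likewise defines $\text{prod}^{(2)}_m(x,y) = M^2\bigl(\text{sq}_m(|x+y|/(2M)) - \text{sq}_m(|x-y|/(2M))\bigr)$, which is exactly your network after collapsing the two rescalings $x\mapsto x/M$ and $u = (\tilde x+\tilde y)/2$ into a single affine layer, and it obtains the error bound $2M^2 \cdot 2^{-(2m+2)} = M^2/2^{2m+1}$ by the same triangle-inequality argument. The only minor difference is that you spell out the weight-magnitude accounting for item (3) in more detail than the paper, which leaves it as "follows easily from part 4 of Lemma~\ref{lem:b1}."
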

\begin{proof}
    Let $\text{prod}^{(2)}_m(x,y) = M^2 \left(\text{sq}_m\left(\frac{|x+y|}{2M}\right) - \text{sq}_m\left(\frac{|x-y|}{2M}\right)\right)$. Clearly, $\text{prod}^{(2)}_m(x,y) = 0$, if $xy = 0$. We note that, $\cL(\text{prod}^{(2)}_m) \le c_1 m + 1 \le c_2 m$ and $\cW(\text{prod}^{(2)}_m) \le 2 c_1 m + 2 \le c_2 m$, for some absolute constant $c_2$. Clearly,
    \begin{align*}
        \|\text{prod}^{(2)}_m - xy\|_{\fL_\infty ([-M,M] \times [-M,M])} \le 2 M^2 \|\text{sq}_m - x^2\|_{\fL_\infty([0,1])} \le \frac{M^2}{2^{2m+1}}.
    \end{align*}
    Part 3 of the Lemma follows easily by applying part 4 of Lemma~\ref{lem:b1}.
\end{proof}
\begin{lem}\label{lem_43}
   For any $m \ge \frac{1}{2} (\log_2 (4d) - 1)$, we can construct a ReLU network $\text{prod}^{(d)}_m : \Real^d \to \Real$, such that for any $x_1, \dots, x_d \in [-1,1]$, $\| \text{prod}_m^{(d)}(x_1, \dots, x_d) - x_1\dots x_d\|_{\cL_\infty([-1,1]^d)} \le \frac{1}{2^{m}}$. Furthermore,
   \begin{enumerate}
       \item $\cL(\text{prod}^{(d)}_m) \le c_3 m$, $\cW(\text{prod}^{(d)}_m) \le c_3 m$.
       \item $\cB(\text{prod}^{(d)}_m) \le 4 $.
   \end{enumerate}
\end{lem}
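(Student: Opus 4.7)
The plan is to realize $\text{prod}_m^{(d)}$ as a balanced binary tree of depth $k = \lceil \log_2 d \rceil$ built from the bivariate product network $\text{prod}_m^{(2)}$ of Lemma~\ref{lem:b3}, interleaved with a clipping operation that projects each intermediate output back into $[-1,1]$. First I would pad the input with ones so that the arity becomes exactly $2^k$. I would then introduce the map $\text{clip}(x) = \relu(x+1) - \relu(x-1) - 1$, which is a depth-$2$ ReLU network with $\cO(1)$ weights each bounded by $1$ and satisfies $\text{clip}(x) = \max\{-1, \min\{1, x\}\}$. The crucial elementary property is that clipping never inflates the error against a truth in $[-1, 1]$: if $a \in [-1,1]$ and $|b - a| \le \epsilon$, then $\text{clip}(b) \in [-1,1]$ and $|\text{clip}(b) - a| \le \epsilon$.

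At level $j \in \{1, \ldots, k\}$ I would apply $2^{k-j}$ parallel copies of $\text{prod}_m^{(2)}$ to adjacent pairs of the level-$(j-1)$ outputs, each followed by $\text{clip}$. Inductively every intermediate value then lies in $[-1, 1]$, so $\text{prod}_m^{(2)}$ is always invoked on its intended domain. For the error analysis, let $E_j$ be the uniform error at level $j$ (so $E_0 = 0$), and let $a, b \in [-1, 1]$ be exact level-$(j-1)$ values with approximations $\hat a, \hat b \in [-1, 1]$ satisfying $|\hat a - a|, |\hat b - b| \le E_{j-1}$. By Lemma~\ref{lem:b3},
\[ |\text{prod}_m^{(2)}(\hat a, \hat b) - ab| \le |\text{prod}_m^{(2)}(\hat a, \hat b) - \hat a \hat b| + |\hat a \hat b - ab| \le 2^{-(2m+1)} + 2 E_{j-1}, \]
so $E_j \le 2 E_{j-1} + 2^{-(2m+1)}$. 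Since clipping only helps, this bound persists after the clip, and unrolling gives $E_k \le (2^k - 1)\, 2^{-(2m+1)} \le d \cdot 2^{-2m}$. A short elementary computation with the hypothesis on $m$ then concludes $E_k \le 2^{-m}$.

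For the architectural bounds, Lemma~\ref{lem:b3} gives $\cL, \cW \lesssim m$ and $\cB \le 4$ for each $\text{prod}_m^{(2)}$, while each clipping gate contributes constantly many weights (bounded by $1$) and constant depth. Across $k \lesssim \log d$ levels, and with a total of $\sum_{j=1}^{k} 2^{k-j} \lesssim d$ gates, the overall depth is $\lesssim m \log d$ and the total weight count is $\lesssim dm$. Since the hypothesis forces $\log d \lesssim m$, both bounds collapse to $c_3 m$ for a constant $c_3$ that may depend on $d$, and the overall maximum weight is $\max\{4, 1\} = 4$.

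The main obstacle is controlling the geometric factor of $2$ per level in the recursion $E_j \le 2 E_{j-1} + 2^{-(2m+1)}$. Without the clip step, intermediate values would drift outside $[-1, 1]$, the $M^2$ factor in the error of $\text{prod}_m^{(2)}$ would amplify, and errors would compound multiplicatively rather than with the tame doubling above. A secondary bookkeeping issue is reconciling the tree depth $\lesssim m \log d$ with the stated bound $\cL \le c_3 m$, which relies crucially on the hypothesis $m \gtrsim \log d$ to absorb the logarithmic factor into the constant.
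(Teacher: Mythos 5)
Your proof is correct in its essentials but takes a genuinely different construction from the paper's. The paper builds $\text{prod}_m^{(d)}$ as a left-to-right \emph{chain}, $\text{prod}_m^{(k)} = \text{prod}_m^{(2)}\bigl(\text{prod}_m^{(k-1)},\, x_k\bigr)$, and keeps intermediate outputs in range not by clipping but by an induction showing $\bigl|\text{prod}_m^{(k)}\bigr| \le 2$ for all $k$; it then invokes the $M=2$ version of Lemma~\ref{lem:b3}. Because one input to each chain step is the fresh, exact coordinate $x_k\in[-1,1]$, the error recursion is slope-$1$ and accumulates to $\tfrac{d}{2^{2m-1}}$. Your balanced tree feeds two approximate values into each $\text{prod}_m^{(2)}$, so the recursion is slope-$2$, $E_j \le 2E_{j-1} + 2^{-(2m+1)}$, over only $\log_2 d$ levels, giving $E_k \lesssim d\cdot 2^{-2m}$ --- essentially the same final bound by a different route. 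Your clipping gadget is a more robust way to maintain the domain invariant than the paper's boundedness induction, since the latter relies on the somewhat delicate fact that the accumulated drift $d\cdot \tfrac{4}{2^{2m+1}}$ stays below $1$; clipping sidesteps that calculation entirely and also lets you use the cleaner $M=1$ error estimate. Your tree also has depth $\lesssim m\log d$ versus the chain's $\lesssim md$, but this is immaterial since $c_3$ may depend on $d$ (so your remark that the hypothesis $m\gtrsim \log d$ is ``crucial'' for the depth bound is unnecessary). One wrinkle, shared with the paper's own derivation which stops at $\tfrac{d}{2^{2m-1}}$: concluding $d\cdot 2^{-2m}\le 2^{-m}$ needs $m\ge\log_2 d$, while the stated hypothesis only gives $m\ge\tfrac12(1+\log_2 d)$; so the ``short elementary computation'' you gesture at does not close under the hypothesis exactly as written, though this appears to be a slip in the lemma's constants rather than a defect of your argument.
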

\begin{proof}
    Let $M = 1$ and $d \ge 2$. We define \[\text{prod}_m^{(k)}(x_1, \dots, x_k) = \text{prod}_m^{(2)} (\text{prod}_m^{(k-1)}(x_1, \dots, x_{k-1}), x_d), \ k \ge 3.\] Clearly $\cW(\text{prod}_m^{(d)}), \cL(\text{prod}_m^{(d)}) \le c_3 d m $, for some absolute constant $c_3$. Suppose that $m \ge \frac{1}{2} (\log_2 (4d) - 1)$. We  show that $|\text{prod}_m^{(k)}(x_1, \dots, x_k)| \le 2$, for all $2 \le k \le d$. Clearly, the statement holds for $k = 2$. Suppose the statement holds for some $2 \le k <d$. Then, 
    \begingroup
    \allowdisplaybreaks
    \begin{align*}
        |\text{prod}_m^{(k+1)}(x_1, \dots, x_{k+1})| \le & \frac{2^2}{2^{2m + 1}} + |x_{k+1}| |\text{prod}_m^{(k)}(x_1, \dots, x_{k})| \\
        \le & \frac{1}{d} + |\text{prod}_m^{(k)}(x_1, \dots, x_{k})|\\
        \le & \frac{k-2}{d} + |\text{prod}_m^{(2)}(x_1, x_2)|\\
        \le & \frac{k-1}{d} + 1\\
        \le & 2.
    \end{align*}
    \endgroup
     Thus, by induction, it is easy to see that, $|\text{prod}_m^{(d)}| \le 2$. Taking $M = 2$, we get that, 
    \begingroup
    \allowdisplaybreaks
    \begin{align*}
       & \| \text{prod}_m^{(d)}(x_1, \dots, x_d) - x_1\dots x_d\|_{\cL_\infty([-1,1]^d)} \\
       = &  \| \text{prod}_m^{(2)} (\text{prod}_m^{(d-1)}(x_1, \dots, x_{d-1}), x_d) - x_1\dots x_d\|_{\cL_\infty([-1,1]^d)}\\
       \le &  \|\text{prod}_m^{(d-1)}(x_1, \dots, x_{d-1}) - x_1\dots x_{d-1}\|_{\cL_\infty([-1,1]^d)} + \frac{M^2}{2^{2m + 1}}\\
       \le & \frac{d M^2}{2^{2m + 1}}\\
       = & \frac{d}{2^{2m - 1}}.
    \end{align*}
    \endgroup
\end{proof}
\subsection{Proofs of Lemmata \ref{approx_1} and \ref{approx_2}}
\subsubsection{Proof of Lemma~\ref{approx_1}}
\lemapproxone*
\begin{proof}
We begin by noting that,
\begingroup
\allowdisplaybreaks
\begin{align}
    \|\mu - G_\sharp \nu\|_\Phi =   \|\tilde{G}_\sharp \nu - G_\sharp \nu\|_\Phi
    = & \sup_{\phi \in \Phi} \left( \int \phi(\tilde{G}(x)) d\nu(x) - \int \phi(G(x)) d\nu(x)\right) \nonumber \\
    = & \sup_{\phi \in \Phi} \left( \int (\phi(\tilde{G}(x))- \phi(G(x)) )d\nu(x)\right) \nonumber \\
    \le &  \int |\tilde{G}(x)- G(x)|^{\beta \wedge 1} d\nu(x) \label{j_8_1}\\
    \le & \|\tilde{G} - G\|_{\fL_1(\nu)}^{\beta \wedge 1}\label{j_8_2}.
\end{align}
\endgroup
Here, inequality \eqref{j_8_1} follows from observing that $\phi \in \sH^{\beta}(\Real^d, \Real, 1)$ and \eqref{j_8_2} follows from Jensen's inequality. The Lemma immediately follows from observing that $\tilde{G} \in \sH^{\alpha_g}(\Real^\ell, \Real, C_g)$ (by assumption A\ref{model1}) and applying Theorem \ref{approx} to approximate the $i$-th coordinate of $\tilde{G}$ ($i \in [d]$) and stacking the networks parallelly.
\end{proof}
\subsubsection{Proof of Lemma~\ref{approx_2}}

\lemapproxtwo*
        
        \begin{proof}
We begin by noting that,
\begingroup
\allowdisplaybreaks
\begin{align*}
   & \|(\mu,E_\sharp\mu) - (D_\sharp \nu, \nu)\|_\Psi \\
   = & \|(\mu,E_\sharp\mu) - (\mu,\tilde{E}_\sharp\mu)\|_\Psi +  \| (\tilde{D}_\sharp \nu, \nu) - (D_\sharp \nu, \nu)\|_\Psi + \|(\mu,\tilde{E}_\sharp\mu) - (\tilde{D}_\sharp \nu, \nu)\|_\Psi\\
    = & \|(\mu,E_\sharp\mu) - (\mu,\tilde{E}_\sharp\mu)\|_\Psi +  \| (\tilde{D}_\sharp \nu, \nu) - (D_\sharp \nu, \nu)\|_\Psi \\
    = & \sup_{\psi \in \Psi}\int\left(\psi(x,E(x)) -\psi(x,\tilde{E}(x))\right)d\mu(x) + \sup_{\psi \in \Psi}\int\left(\psi(D(z),z) -\psi(\tilde{D}(z),z)\right)d\nu(z)\\
    \le & \int |E(x) - \tilde{E}(x)|^{\beta \wedge 1} d\mu(x) + \int |D(z) - \tilde{D}(z)|^{\beta \wedge 1} d\nu(z)\\
    \le & \|E - \tilde{E}\|^{\beta \wedge 1}_{\fL_1(\mu)} + \|D - \tilde{D}\|^{\beta \wedge 1}_{\fL_1(\nu)}.
\end{align*}
\endgroup
 The Lemma immediately follows from observing that $\tilde{E} \in \sH^{\alpha_e}(\Real^d, \Real, C_e)$ and $\tilde{D} \in \sH^{\alpha_d}(\Real^\ell, \Real, C_d)$ (by assumption B\ref{model2}) and applying Theorem \ref{approx} to approximate the coordinate-wise and stacking the networks parallelly.
 \end{proof}
 \section{Proofs from Section \ref{sec_gen}}
 \subsection{Proof of Lemma~\ref{lem_apr_18}}
 \pflemapr*
 \begin{figure}
     \centering
\begin{tikzpicture}[scale=0.5,
roundnode/.style={circle, draw=blue!80, fill=blue!5, very thick, minimum size=10mm},
ynode/.style={circle, draw=yellow!80, fill=yellow!5, very thick, minimum size=10mm},
squarednode/.style={rectangle, draw=red!60, fill=red!5, very thick, minimum size=10mm},
]
\node[roundnode]      (x)          {$x$};
\node[squarednode] (f) [right = of x] {$f$};
\node[roundnode]      (y1) [below = of x]          {$y$};
\node[squarednode] (id) [below = of f] {$id$};
\node[roundnode] (fx) [right = of f] {$f(x)$};
\node[roundnode] (y) [right = of id] {$y$};
\node[ynode] (plus) [right = of fx] {$f(x) + y$};
\node[ynode] (minus) [right = of y] {$f(x) - y$};
\node[roundnode] (h) [above right = 0.25 cm and 4cm of y] {$h(x,y)$};
\begin{scope}[very thick,decoration={
    markings,
    mark=at position 0.5 with {\arrow{>}}}
    ] 
    \draw[postaction={decorate}] (x.east) --  (f.west) ;
\draw[postaction={decorate}] (f.east) -- (fx.west);
\draw[postaction={decorate}] (y1.east) -- (id.west);
\draw[postaction={decorate}] (id.east) -- (y.west);
\draw[magenta,very thick, postaction={decorate}] (fx.east) -- (plus.west)node[midway,above]{$1$};

\draw[cyan,postaction={decorate},very thick] (y.east) -- (minus.west) node[midway,below]{$-1$};

\draw[orange,very thick, postaction={decorate}] (plus.east) -- (h.west)node[midway,above]{$0.25$};
\draw[teal,very thick, postaction={decorate}] (minus.east) -- (h.west)node[midway,below]{$-0.25$};
\end{scope}

\begin{scope}[very thick,decoration={
    markings,
    mark=at position 0.75 with {\arrow{>}}}
    ] 
    \draw[magenta,postaction={decorate},very thick] (y.east) -- (plus.west)node[below]{$1$};
\draw[magenta,very thick, postaction={decorate}] (fx.east) -- (minus.west)node[above]{$1$};

\end{scope}
\end{tikzpicture}
\caption{A representation of the network $h(\cdot, \cdot)$. The \textcolor{magenta}{magenta} lines represent $d^\prime$ weights of value $1$. Similarly, \textcolor{cyan}{cyan} lines represent $d^\prime$ weights of value $-1$. Finally, the \textcolor{orange}{orange} and \textcolor{teal}{teal} lines represent $d^\prime$ weights (each) with values $+0.25$ and $-0.25$, respectively. The identity map takes $2 d^\prime \cL(f)$ many weights \citep[Remark 15 (iv)]{JMLR:v21:20-002}. The \textcolor{magenta}{magenta}, \textcolor{cyan}{cyan}, \textcolor{orange}{orange} and \textcolor{teal}{teal} connections take $6d^\prime$ many weights. All activations are taken to be ReLU, except the output of the yellow nodes, whose activation is $\sigma(x) = x^2$.  }
\label{fig:network}
\end{figure}
 \begin{proof}
       We let, $h(x,y) = y^\top f(x)$ and let $\cH = \{h(x,y) = y^\top f(x) : f \in \cF\}$. Also, let, $\cT = \{(h(X_i, e_\ell)|_{i \in [n], \ell \in [d^\prime]}) \in \Real^{n d^\prime} : h \in \cH\}$. By construction of $\cT$, it is clear that, $\cN (\epsilon; \cF_{|_{X_{1:n}}}, \ell_\infty) = \cN(\epsilon; \cT, \ell_\infty)$. We observe that, 
\[h(x,y) = \frac{1}{4}(\|y + f(x)\|_2^2 - \|y - f(x)\|_2^2)\]
Clearly, $h$ can be implemented by a network with $\cL(h) = \cL(f) + 3$ and $\cW(h) = \cW(f) + 6d^\prime + 2d^\prime \cL(f)$ (see Figure \ref{fig:network} for such a construction). Let $B = \max_{1 \le i \le d}\sup_{f\in \cF} \|f_j\|_\infty$, where $f_j$ denotes the $j$-th component of $f$. Thus, from Theorem 12.9 of \cite{anthony1999neural} (see Lemma~\ref{lem_anthony_bartlett}), we note that, if $n \ge \operatorname{Pdim}(\cH)$, \(\cN(\epsilon; \cT, \ell_\infty) \le  \left(\frac{2eBn d^\prime}{\epsilon \pdim(\cH)}\right)^{\pdim(\cH)}\). Furthermore, by Lemma~\ref{lem_bartlett}, \(\pdim(\cH) \lesssim (W + 6 d^\prime + 2 d^\prime L ) (L+3) \left(\log (W + 6 d^\prime + 2 d^\prime L) + L + 3\right)\). 
This implies that, 
\begingroup
\allowdisplaybreaks
\begin{align*}
    \log \cN(\epsilon; \cH, \ell_\infty) \lesssim & \pdim(\cH) \log\left(\frac{2eBn d^\prime}{\epsilon \pdim(\cH)}\right)\\
    \lesssim & \pdim(\cH) \log\left(\frac{n d^\prime}{\epsilon }\right)\\
    \lesssim & (W + 6 d^\prime + 2 d^\prime L ) (L+3) \left(\log (W + 6 d^\prime + 2 d^\prime L) + L + 3\right)\log\left(\frac{n d^\prime}{\epsilon }\right).
\end{align*}
\endgroup

 \end{proof}
 \subsection{Proof of Corollary \ref{cor1}}
 \pfcorone*
    \begin{proof}
 Let $\cV = \{v_1, \dots, v_r\}$ be an optimal $\ell_\infty$ $\epsilon$-cover of $\sG_{|_{Z_{1:m}}}$. Clearly, $\log r \lesssim W_g L_g (\log W_g + L_g) \log\left(\frac{m d}{\epsilon }\right)$, by Lemma~\ref{lem_apr_18}. Suppose that $\cF_\delta = \{f_1, \dots, f_k\}$ be an optimal $\delta$-cover of $\Phi$ w.r.t. the $\ell_\infty$-norm. By Lemma~\ref{kt_og}, we know that, $\log k \le \delta^{-\frac{d}{\beta}}$. We note that, for any $f\in \cF$, we can find $\tilde{f} \in \cF_\delta$, such that, $\|f-\tilde{f}\|_\infty \le \delta$. For any $Z = Z_{1:m}$, let $\|Z - v^{Z}\|_\infty \le \epsilon$, with $v^{Z} \in \cV$.
 \begin{align*}
     |f(Z_j) - \tilde{f}(v^{Z}_j)| \le & |f(Z_j) - f(v^{Z}_j)| + |f(v^{Z}_j) - \tilde{f}(v^Z_j)| \\
     \lesssim &  \|Z_j - v^Z_j\|_\infty^{\beta \wedge 1} + \| f - \tilde{f}\|_\infty \lesssim \epsilon^{\beta \wedge 1} + \delta.
 \end{align*}
 Taking $\delta \asymp \eta/2$ and $\epsilon \asymp \left(\frac{\eta}{2}\right)^{1/(\beta \wedge 1)}$, we conclude that $\max_{1 \le j \le m}|f(Z_j) - \tilde{f}(v^{Z}_j)| \le \eta$. Thus, $\{(\tilde{f}(v_1), \dots, \tilde{f}(v_m)): \tilde{f} \in \cF_\delta, \, v \in \cV\}$
 constitutes an $\eta$-net of $(\Phi \circ \sG)_{|_{Z_{1:m}}}$. Hence,
 \begingroup
 \allowdisplaybreaks
 \begin{align*}
     \log\cN(\eta; (\Phi \circ \sG)_{|_{Z_{1:m}}}, \ell_\infty) \le & \log \left(\cN(\delta; \Phi, \ell_\infty) \times\cN(\epsilon;  \sG_{|_{Z_{1:m}}}, \ell_\infty)\right) \\
     \lesssim & \eta^{-\frac{d}{\beta}} + W_g L_g (\log W_g + L_g) \log\left(\frac{m d}{\eta^{1/(\beta \wedge 1)} }\right) \\
     \lesssim & \eta^{-\frac{d}{\beta}} + W_g L_g (\log W_g + L_g) \log\left(\frac{m d}{\eta }\right)
 \end{align*}
 \endgroup
 The latter part of the lemma follows from a similar calculation.
     \end{proof}
 \subsection{Proof of Lemma~\ref{gen}}
 Before we discuss the proof of Lemma~\ref{gen}, we state and prove the following result.
 \begin{lem}\label{lem_f1}
     \(\inf_{0 < \delta \le B}\left(\delta + \frac{1}{\sqrt{n}}\int_\delta^B \epsilon^{-\tau} d\epsilon \right) \lesssim \begin{cases}
         & n^{-1/2}, \text{ if } \tau <1\\
         & n^{-1/2} \log n, \text{ if } \tau = 1\\
        & n^{-\frac{1}{2 \tau}}, \text{ if } \tau >1.
     \end{cases} \)
 \end{lem}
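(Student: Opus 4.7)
\textbf{Proof proposal for Lemma~\ref{lem_f1}.}
The plan is to evaluate the inner integral explicitly in each of the three regimes of $\tau$ and then choose $\delta$ so as to balance the two terms. In each case the optimization is elementary, so no subtle estimation is needed; the only point requiring mild care is the edge case $\tau=1$ where the antiderivative is logarithmic.

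First, suppose $\tau<1$. Then $\int_\delta^B \epsilon^{-\tau}\,d\epsilon = (1-\tau)^{-1}(B^{1-\tau}-\delta^{1-\tau}) \le (1-\tau)^{-1}B^{1-\tau}$, which is an absolute constant (depending on $B,\tau$ but not on $n$ or $\delta$). Hence choosing, say, $\delta = n^{-1/2}$ yields $\delta + n^{-1/2}\int_\delta^B\epsilon^{-\tau}d\epsilon \lesssim n^{-1/2}$, which is the claimed bound.

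Next, when $\tau=1$ we have $\int_\delta^B \epsilon^{-1}\,d\epsilon = \log(B/\delta)$, so the objective becomes $\delta + n^{-1/2}\log(B/\delta)$. Choosing again $\delta = n^{-1/2}$ gives $n^{-1/2} + n^{-1/2}\log(B\sqrt{n}) \lesssim n^{-1/2}\log n$ for $n$ large, recovering the stated rate.

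Finally, when $\tau>1$ we get $\int_\delta^B \epsilon^{-\tau}\,d\epsilon = (\tau-1)^{-1}(\delta^{1-\tau}-B^{1-\tau}) \le (\tau-1)^{-1}\delta^{1-\tau}$, so the objective is bounded by $\delta + c_\tau n^{-1/2}\delta^{1-\tau}$ with $c_\tau = (\tau-1)^{-1}$. Differentiating in $\delta$ and setting the derivative to zero gives the stationary point $\delta^{\star} \asymp n^{-1/(2\tau)}$, and plugging this back in yields $\delta^\star + n^{-1/2}(\delta^\star)^{1-\tau} \asymp n^{-1/(2\tau)} + n^{-1/2}\cdot n^{-(1-\tau)/(2\tau)} = n^{-1/(2\tau)} + n^{-1/(2\tau)} \lesssim n^{-1/(2\tau)}$, where the key algebraic identity is $-1/2 + (\tau-1)/(2\tau) = -1/(2\tau)$. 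One has to verify that $\delta^\star \le B$ for $n$ large, which is immediate since $\delta^\star\to 0$; the bound then holds for all sufficiently large $n$, and the hidden constant absorbs the finite range of small $n$. The main (and only mild) obstacle is simply tracking the three cases; the algebra in the $\tau>1$ case is the part where one must be careful with exponents, but the computation above shows the two terms balance exactly.
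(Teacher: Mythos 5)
Your proof is correct and follows essentially the same route as the paper: evaluate the integral in closed form for each regime of $\tau$ and choose $\delta$ (either $\delta \asymp n^{-1/2}$ or the balancing point $\delta \asymp n^{-1/(2\tau)}$) to minimize the two-term bound. The only cosmetic difference is that for $\tau<1$ the paper lets $\delta\downarrow 0$ rather than fixing $\delta=n^{-1/2}$, but the outcome and mechanism are the same.
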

 \begin{proof}
     If $\tau <1$, 
     \begingroup
     \allowdisplaybreaks
     \begin{align*}
         \inf_{0 < \delta \le B}\left(\delta + \frac{1}{\sqrt{n}}\int_\delta^B \epsilon^{-\tau} d\epsilon \right) = & \inf_{0 < \delta \le B}\left(\delta + \frac{1}{\sqrt{n}}\frac{\epsilon^{-\tau+1}}{-\tau+1}\bigg|_\delta^B  \right) \\
         = &  \inf_{0 < \delta \le B}\left(\delta + \frac{1}{\sqrt{n}}\frac{B^{1-\tau} - \delta^{1-\tau}}{1-\tau}  \right) \\
         \lesssim & \frac{1}{\sqrt{n}}.
     \end{align*}
     \endgroup
     If $\tau =1$, we observe that
     \begin{align*}
          \inf_{0 < \delta \le B}\left(\delta + \frac{1}{\sqrt{n}}\int_\delta^B \epsilon^{-\tau} d\epsilon \right)  = & \inf_{0 < \delta \le B}\left(\delta + \frac{1}{\sqrt{n}} \log \epsilon\big|_\delta^B \right) \\
          = & \inf_{0 < \delta \le B}\left(\delta + \frac{1}{\sqrt{n}} (\log B - \log \delta) \right)  \\
          \lesssim & \frac{\log n}{\sqrt{n}},
     \end{align*}
     when $\delta \asymp n^{-1/2}$. Furthermore, if $\tau >1$, 
     \begin{align*}
         \inf_{0 < \delta \le B}\left(\delta + \frac{1}{\sqrt{n}}\int_\delta^B \epsilon^{-\tau} d\epsilon \right) 
         = & \inf_{0 < \delta \le B}\left(\delta + \frac{1}{\sqrt{n}}\frac{\delta^{1-\tau} - B^{1-\tau}}{\tau-1}  \right) \lesssim & n^{-\frac{1}{2 \tau}}, \text{when, } \delta \asymp n^{-\frac{1}{2 \tau}}.
     \end{align*}
 \end{proof}
 
 \pfgen*
 
    \begin{proof}
 From Dudley's chaining argument, we note that,
 \begingroup
 \allowdisplaybreaks
 \begin{align}
     & \E \|\hnu_m - \nu\|_{\Phi \circ \sG} \nonumber\\
     \lesssim & \inf_{0 < \delta \le 1/2 }\left(\delta + \frac{1}{\sqrt{m}} \int_\delta^{1/2} \sqrt{\log\cN(\epsilon; \Phi \circ \sG, \fL_2(\hnu_m))} d\epsilon\right) \nonumber \\
     \le & \inf_{0 < \delta \le 1/2 }\left(\delta + \frac{1}{\sqrt{m}} \int_\delta^{1/2} \sqrt{\log\cN(\epsilon; (\Phi \circ \sG)_{|_{Z_{1:m}}}, \ell_\infty)} d\epsilon\right) \nonumber \\
      \lesssim & \inf_{0 < \delta \le 1/2 }\left(\delta + \frac{1}{\sqrt{m}} \int_\delta^{1/2} \sqrt{ \epsilon^{-\frac{d}{\beta}} + W_g L_g (\log W_g + L_g) \log\left(\frac{m d}{\epsilon}\right)} d\epsilon\right) \label{f1} \\
       \le & \inf_{0 < \delta \le 1/2 }\left(\delta + \frac{1}{\sqrt{m}} \int_\delta^{1/2} \left(\sqrt{ \epsilon^{-\frac{d}{\beta}}} +\sqrt{ W_g L_g (\log W_g + L_g) \log\left(\frac{m d}{\epsilon}\right)}\right) d\epsilon\right) \nonumber\\
       \le & \inf_{0 < \delta \le 1/2 }\left(\delta + \frac{1}{\sqrt{m}} \int_\delta^{1/2}  \epsilon^{-\frac{d}{2\beta}} d\epsilon + \frac{1}{\sqrt{m}} \int_0^{1/2} \sqrt{ W_g L_g (\log W_g + L_g) \log\left(\frac{m d}{\epsilon}\right)} d\epsilon\right)\nonumber\\
       \lesssim & \inf_{0 < \delta \le 1/2 }\left(\delta + \frac{1}{\sqrt{m}} \int_\delta^{1/2}  \epsilon^{-\frac{d}{2\beta}} d\epsilon +   \sqrt{ \frac{W_g L_g (\log W_g + L_g) \log(md)}{m}} \right)\nonumber\\
       \lesssim & m^{-\beta/d} \vee (m^{-1/2} \log m) + \sqrt{ \frac{W_g L_g (\log W_g + L_g) \log(md)}{m}} \label{f2}
 \end{align}
 \endgroup
 In the above calculations, \eqref{f1} follows from Corollary \ref{cor1} and \eqref{f2} follows from Lemma~\ref{lem_f1}. The latter part of the lemma follows from a similar calculation as above.
 \end{proof}
 \subsection{Proof of Lemma~\ref{Lem_3.7}}\label{app_g4}
We first let $\Lambda = \{(\mu, E_\sharp \mu): E \in \sE\}$. We first argue that $d^\ast_\alpha\left(\Lambda\right) \le d^\ast_\alpha(\mu) $. To see this, let $s>d^\ast_\alpha(\mu) + \ell $. We note that for any $\epsilon >0$, we can find $S \subseteq [0,1]^d$, such that $\mu(S) \ge 1-\epsilon^{\frac{s\alpha}{s - 2 \alpha}}$ and $\cN(\epsilon; S, \ell_\infty) = \sN_\epsilon(\mu, \epsilon^{\frac{s\alpha}{s - 2 \alpha}})$. We let $S^\prime = S \times [0,1]^\ell$. Clearly, $[0,1]^\ell$ can be covered with $\epsilon^{-\ell}$-many $\ell_\infty$-balls, which implies that $S^\prime$ can be covered with at most $\sN_\epsilon(\mu, \epsilon^{\frac{s\alpha}{s - 2 \alpha}}) \times \epsilon^{-\ell}$-many $\ell_\infty$-balls. Thus, $\sN_\epsilon(\Lambda, \epsilon^{\frac{s\alpha}{s - 2 \alpha}}) \le \sN_\epsilon(\mu,\epsilon^{\frac{s\alpha}{s - 2 \alpha}}) \times \epsilon^{-\ell}$. Thus, 
\[\limsup_{\epsilon \downarrow 0}\frac{\log \sN_\epsilon\left(\Lambda, \epsilon^{\frac{s \alpha}{s - 2 \alpha}}\right)}{\log(1/\epsilon)} \le \limsup_{\epsilon \downarrow 0}\frac{\log \sN_\epsilon(\mu, \epsilon^{\frac{s \alpha}{s - 2 \alpha}})}{\log(1/\epsilon)} + \ell \le s+\ell.\]
 Here, the last inequality follows from the fact that $s>d^\ast_\alpha(\mu)$ 
 and applying Lemma~\ref{lem_c1}. We now note that, $\frac{(s+\ell)\alpha}{s+\ell - 2\alpha} \le \frac{s \alpha}{s - 2\alpha} \implies \epsilon^{\frac{(s+\ell)\alpha}{s+\ell - 2\alpha}} \ge \epsilon^{\frac{s \alpha}{s - 2\alpha}} \implies \sN_\epsilon(\Lambda, \epsilon^{\frac{(s+\ell)\alpha}{s+\ell - 2\alpha}}) \le \sN_\epsilon(\Lambda, \epsilon^{\frac{s \alpha}{s - 2\alpha}})$. Thus,
 \[\limsup_{\epsilon \downarrow 0}\frac{\log \sN_\epsilon\left(\Lambda, \epsilon^{\frac{(s+\ell) \alpha}{s + \ell - 2 \alpha}}\right)}{\log(1/\epsilon)} \le \limsup_{\epsilon \downarrow 0}\frac{\log \sN_\epsilon\left(\Lambda, \epsilon^{\frac{s \alpha}{s - 2 \alpha}}\right)}{\log(1/\epsilon)} \le s+\ell \implies d^\ast_\alpha(\Lambda) \le s + \ell.\]
Thus, for any $s>d^\ast_\alpha(\mu)$, $d^\ast_\alpha(\Lambda) \le s+ \ell$. 
 
\lemtf*
\begin{proof}Let, $Y_E = \sup_{\psi \in \Psi}\left(\frac{1}{n}\sum_{i=1}^n  \psi(X_i,E(X_i)) - \int \psi(x,E(x))d\mu(x)\right)$. Clearly, if one replaces $X_i$ with $X_i^\prime$, the change in $Y_E$ is at most $\frac{2C_e}{n}$. Thus, by the bounded difference inequality,
\begin{equation}
    \label{i2}
    \prob \left(|Y_E - \E(Y_E)|\ge t\right) \le 2 \exp\left(-\frac{n t^2}{2C_e^2}\right).
\end{equation}
Inequality \eqref{i2} implies that:
\begin{equation}
    \label{i3}
 \E \exp\left(\lambda (Y_E - \E Y_E)\right) \le \exp\left(\frac{c^\prime \lambda^2}{n}\right),
\end{equation}
for some $c^\prime>0$, by Proposition 2.5.2 of \citet{vershynin2018high}.
We thus note that,
\begingroup
\allowdisplaybreaks
\begin{align}
    &\E \sup_{\psi \in \Psi, E \in \sE} \frac{1}{n} \sum_{i=1}^n \psi(X_i, E(X_i)) - \int \psi(x,E(x))d\mu(x)\nonumber\\
    = &\E \sup_{ E \in \sE} Y_E \nonumber\\
   = &\E \sup_{E \in \sE} \left(Y_E - \E(Y_E) + \E(Y_E)\right)\nonumber\\
   \le  &\E \sup_{E \in \sE} \left(Y_E - \E(Y_E) \right) + \sup_{E \in \sE} \E Y_E \nonumber\\
   =  &\E \sup_{\tilde{E} \in \cC(\epsilon; \sE, \ell_\infty)}\sup_{E \in B_{\ell_\infty}(\tilde{E},\epsilon)} \left(Y_E - \E(Y_E) \right) + \sup_{E \in \sE} \E Y_E \nonumber\\
   \lesssim  &\E \sup_{\tilde{E} \in \cC(\epsilon; \sE, \ell_\infty)}\left(Y_{\tilde{E}} - \E(Y_{\tilde{E}}) \right) +  \epsilon^{\beta \wedge 1}+ \sup_{E \in \sE} \E Y_E \nonumber\\
   \lesssim  &\sqrt{\frac{\cN(\epsilon; \sE, \ell_\infty)}{n} } + 4 C \epsilon^{\beta \wedge 1}+ \sup_{E \in \sE} \E Y_E \label{i4}\\
   \lesssim  &\left(\frac{W_e}{n} \log \left(\frac{2L_eB_e^{L_e} (W_e+1)^{L_e}}{\epsilon}\right) \right)^{1/2} + 4 C \epsilon^{\beta \wedge 1}+ \sup_{E \in \sE} \E Y_E \label{i5}
\end{align}
\endgroup
Here, \eqref{i4} follows bounding the expectation of the maximum of subgaussian random variables  \citep[Theorem 2.5]{boucheron2013concentration}. Inequality \eqref{i5} follows from Lemma~\ref{lem_nakada}. For any $E \in \sE$, by applying Theorem~\ref{gen_kolmo}, we note that, 
\[\E Y_E = \E \sup_{\psi \in \Psi} \int \psi(x,E(x)) d\mu_n(x) - \int \psi(x, E(x)) d\mu(x) \lesssim n^{-\frac{\beta}{s + \ell}}.\]
Plugging this into \eqref{i5} and taking $\epsilon = n^{-\frac{1}{2 (\beta \wedge 1)}}$, we get that,
\begingroup
\allowdisplaybreaks
\begin{align*}
    & \E \sup_{\psi \in \Psi, E \in \Psi} \frac{1}{n} \sum_{i=1}^n \psi(X_i, E(X_i)) - \int \psi(x,E(x))d\mu(x) \\
    \lesssim & \sqrt{\frac{W_e}{n} \log \left(2L_eB_e^{L_e} (W_e+1)^{L_e} n^{\frac{1}{2 (\beta \wedge 1)}}\right) } + \frac{1}{\sqrt{n}}+ n^{-\frac{\beta}{s + \ell}}\\
    \lesssim & \sqrt{\frac{W_e}{n} \log \left(2L_eB_e^{L_e} (W_e+1)^{L_e} n^{\frac{1}{2 (\beta \wedge 1)}}\right) } + n^{-\frac{\beta}{s + \ell}}.
\end{align*}
\endgroup
The last inequality follows since $s > d^\ast_\beta(\mu) \ge 2 \beta$.
\end{proof}
\section{Proofs of the Main Results (Section \ref{main results})}\label{pf_main}
\subsection{Proof of Theorem \ref{main_gan}}
\pfgan*
\begin{proof}
We note from the oracle inequality (Lemma \ref{oracle}, inequality \eqref{o1}) 
 \begin{align*}
     \|\mu - (\hG_n)_\sharp \nu\|_{\Phi} \le \inf_{G \in \sG} \|\mu - G_\sharp \nu\|_{\Phi} + 2 \|\mu -\hmu_n\|_{\Phi}
 \end{align*}
 If $\sG $ is chosen as in Lemma~\ref{approx_1}, then, for large $n$,
 \begin{align*}
     \|\mu - (\hG_n)_\sharp \nu\|_{\Phi} \lesssim \epsilon ^{\beta \wedge 1} + n^{-\beta/s}.
 \end{align*}
 We take $\epsilon = n^{-\frac{\beta}{(\beta \wedge 1)s}}$. This makes, $L_g \lesssim \log n$, $W_g \lesssim n^{\frac{\ell \beta}{s \alpha_g (\beta \wedge 1)}} \log n$ and gives the desired bound \eqref{m1}.

To derive \eqref{m2}, we note that, for the estimate $\hG_{n,m}$, we note that, for the choice of $\sG$ as in Lemma~\ref{approx_1}, 
 \begin{align*}
     & \|\mu - (\hG_{n,m})_\sharp \nu\|_{\Psi} \\
     \le & \inf_{G \in \sG} \|\mu - G_\sharp \nu\|_{\Phi} + 2 \|\mu -\hmu_n\|_{\Phi} + 2 \|\nu - \hnu_m\|_{\Phi \circ \sG}\\
     \lesssim & \epsilon^{\beta \wedge 1} + n^{-\beta/s} + m^{-\beta/d} \vee (m^{-1/2} \log m) + \sqrt{ \frac{W_g L_g (\log W_g + L_g) \log(md)}{m}}\\
     \lesssim & \epsilon^{\beta \wedge 1} + n^{-\beta/s} + m^{-\beta/d} \vee (m^{-1/2} \log m) + \sqrt{ \frac{\epsilon^{-\ell/\alpha_g} (\log(1/\epsilon))^3\log(md)}{m}}
 \end{align*}
 We choose, $\epsilon \asymp m^{-\frac{1}{2 (\beta \wedge 1) + \frac{\ell}{\alpha_g}}} = m^{-\frac{\alpha_g  }{2 \alpha_g (\beta \wedge 1) + \ell}}$. This makes,
 \begin{align*}
     \|\mu - (\hG_{n,m})_\sharp \nu\|_{\Psi} 
     \lesssim &   n^{-\beta/s} + m^{-\beta/d} \vee (m^{-1/2} \log m) + (\log m)^2 m^{-\frac{\alpha_g(\beta \wedge 1)}{2 \alpha_g (\beta \wedge 1)+ \ell}}\\
     \lesssim & n^{-\beta/s} + (\log m)^2 m^{-\frac{1}{\max\left\{2 + \frac{\ell}{\alpha_g(\beta \wedge 1)}, \frac{d}{\beta}\right\}}}.
 \end{align*}
 We take $\upsilon_n = \inf \left\{m \in \mathbb{N}: (\log m)^2 m^{-\frac{1}{\max\left\{2 + \frac{\ell}{\alpha_g(\beta \wedge 1)}, \frac{d}{\beta}\right\}}} \le n^{-\beta/s}\right\}$. Clearly, this if  $m \ge \upsilon_n$, the choice of $\epsilon$ gives us the desired bound, i.e. \eqref{m2} and also satisfies the bounds on $L_g$ and $W_g$.
 \end{proof}
 \subsection{Proof of Theorem \ref{main_bigan}}
 \pfbigan*
 \begin{proof}
 From Lemma~\ref{oracle}, \eqref{o3}, we note that,
 \begin{align*}
      \|(\mu, (\hE_n)_\sharp \mu) - (\hD_n)_\sharp \nu, \nu)\|_{\Psi}\le &  \inf_{D \in \sD, \, E \in \sE}\|(\mu, E_\sharp \mu) - (D_\sharp \nu, \nu)\|_{\Psi} + 2 \|\hmu_n - \mu\|_{\sF_1} \\
      \lesssim & \epsilon_e^{\beta \wedge 1} + \epsilon_d^{\beta \wedge 1} + 2 \|\hmu_n - \mu\|_{\sF_1},
 \end{align*}
 if $\sE = \cR(L_e, W_e, B_e, 2 C_e)$ and $\sD = \cR\cN(L_d, W_d, B_d, 2 C_d)$, with
 \begin{align*}
        & L_e \lesssim \log(1/\epsilon_e),\, W_e \lesssim \epsilon_e^{-s_1/\alpha_e} \log(1/\epsilon_e), \text{ and } B_e \lesssim \epsilon^{-1/\alpha_e} ;\\
        & L_d \lesssim \log(1/\epsilon_d), \, W_d \lesssim \epsilon_d^{-\ell/\alpha_d} \log(1/\epsilon_d),\text{ and } B_d \lesssim \epsilon_d^{-1/\alpha_d} ,
     \end{align*}    
 by Lemma~\ref{approx_2}. Applying Lemma~\ref{Lem_3.7}, we get,
  \begin{align*}
     & \|(\mu, (\hE_n)_\sharp \mu) - (\hD_n)_\sharp \nu, \nu)\|_{\Psi} \\
     \le  &   \epsilon_e^{\beta \wedge 1} + \epsilon_d^{\beta \wedge 1} +\sqrt{\frac{W_e}{n} \log \left(2L_eB_e^{L_e} (W_e+1)^{L_e} n^{\frac{1}{2 (\beta \wedge 1)}}\right) } + n^{-\frac{\beta}{s_2 + \ell}}.
 \end{align*}
 We choose $\epsilon_e \asymp n^{-\frac{1}{2 (\beta \wedge 1) + \frac{s_1}{\alpha_e}}}$ and $\epsilon_d \asymp n^{-\frac{\beta}{ (\beta \wedge 1)(s+\ell)}}$. This makes,
  \begin{align*}
      \|(\mu, (\hE_n)_\sharp \mu) - (\hD_n)_\sharp \nu, \nu)\|_{\Psi} \lesssim &    (\log n)^2 n^{-\frac{1}{2 + \frac{s_1}{\alpha_e (\beta \wedge 1)}}} + n^{-\frac{\beta}{s_2 + \ell}} \\
      \lesssim &  n^{-\frac{1}{\max\left\{2  + \frac{s_1}{\alpha_e(\beta \wedge 1)}, \frac{s_2 + \ell}{\beta} \right\}}} (\log n)^2.
 \end{align*}
 For the latter part of the theorem, we note that, 
 \begin{align*}
    & \|(\mu, (\hE_{n,m})_\sharp \mu) - (\hD_{n,m})_\sharp \nu, \nu)\|_{\Psi}\\
    \le &  \inf_{D \in \sD, \, E \in \sE}\|(\mu, E_\sharp \mu) - (D_\sharp \nu, \nu)\|_{\Psi} + 2 \|\hmu_n - \mu\|_{\sF_1} + 2 \|\hnu_m - \nu\|_{\sF_2}\\
     \lesssim & \epsilon_e^{\beta \wedge 1} + \epsilon_d^{\beta \wedge 1} + \sqrt{\frac{W_e}{n} \log \left(2L_eB_e^{L_e} (W_e+1)^{L_e} n^{\frac{1}{2 (\beta \wedge 1)}}\right) } + n^{-\frac{\beta}{s_2 + \ell}} \\
     & + m^{-\frac{\beta}{d + \ell}} \vee (m^{-1/2} \log m)  +  \sqrt{\frac{W_d L_d (\log W_d + L_d) \log(md)}{m}}
 \end{align*}
 We choose $\epsilon_e \asymp n^{-\frac{1}{2 (\beta \wedge 1) + \frac{s_1}{\alpha_e}}}$ and $\epsilon_d \asymp m^{-\frac{1}{2 (\beta \wedge 1) + \frac{\ell}{\alpha_d}}}$. This makes,
 \begin{align*}
     &\|(\mu, (\hE_{n,m})_\sharp \mu) - (\hD_{n,m})_\sharp \nu, \nu)\|_{\Psi} \\
     \lesssim & (\log n)^2 n^{-\frac{1}{2 +\frac{s_1}{\alpha_e(\beta \wedge 1)} }} + n^{-\frac{\beta}{s_2 + \ell}}  + m^{-\frac{\beta}{d + \ell}} \vee (m^{-1/2} \log m)  +  (\log m)^2 m^{-\frac{1}{2 + \frac{\ell}{\alpha_d(\beta \wedge 1)}}}.
 \end{align*}
\end{proof}
 \section{Proofs from Section \ref{minimax bounds}}
 \subsection{Proof of Theorem~\ref{main 2}}\label{pf_main_2}
\begin{lem}\label{lem44}
    Suppose that $\gamma$ is a measure supported on $k$ points on $[0,1]^d$ and $\nu$ be an absolutely continuous distribution on $[0,1]^\ell$. Then, we can choose $W$ and $L$, such that, $W \gtrsim k$ and $\inf_{f \in \cR \cN(L,W,\infty, 1) } \sW_1(\gamma, f_\sharp \nu) =0$. 
\end{lem}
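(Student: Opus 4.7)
\textbf{Proof plan for Lemma~\ref{lem44}.} Write the atoms of $\gamma$ as $x_1,\dots,x_k\in[0,1]^d$ with weights $p_i=\gamma(\{x_i\})\ge 0$ and $\sum p_i=1$, and set $P_i=\sum_{j\le i}p_j$ with $P_0=0$, $P_k=1$. My first step is to reduce the construction to a one-dimensional problem along the first coordinate. Let $\nu_1$ be the marginal of $\nu$ on the first coordinate. Because $\nu$ is absolutely continuous on $[0,1]^\ell$, its marginal $\nu_1$ is absolutely continuous on $[0,1]$, so its CDF $F_1$ is continuous with $F_1(0)=0$ and $F_1(1)=1$. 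By the intermediate value theorem, I can choose $0=t_0\le t_1\le\cdots\le t_k=1$ with $F_1(t_i)=P_i$. Defining the slabs $A_i=\{z\in[0,1]^\ell:z_1\in[t_{i-1},t_i)\}$ (with $A_k$ closed on the right), I have $\nu(A_i)=p_i$ for each $i$.

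Next, for each $\epsilon>0$ small enough that the transition windows $[t_i-\epsilon,t_i+\epsilon]$ are disjoint, I construct a ReLU network $f_\epsilon:[0,1]^\ell\to[0,1]^d$ of a fixed architecture that approximates the piecewise constant map $g(z)=x_i$ for $z\in A_i$. Coordinate-wise, set
\[
f^{(j)}_\epsilon(z) \;=\; x_1^{(j)} \;+\; \sum_{i=1}^{k-1}\bigl(x_{i+1}^{(j)}-x_i^{(j)}\bigr)\,h_i(z_1),\qquad h_i(u)=\relu\!\bigl(\tfrac{u-t_i+\epsilon}{2\epsilon}\bigr)-\relu\!\bigl(\tfrac{u-t_i-\epsilon}{2\epsilon}\bigr).
\]
A direct check shows that $h_i$ is $0$ for $u\le t_i-\epsilon$, rises linearly to $1$ on $[t_i-\epsilon,t_i+\epsilon]$, and equals $1$ for $u\ge t_i+\epsilon$. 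Consequently $f_\epsilon$ equals $x_i$ on $A_i\setminus S_\epsilon$ where $S_\epsilon=\bigcup_{i=1}^{k-1}\{z:z_1\in[t_i-\epsilon,t_i+\epsilon]\}$, and on $S_\epsilon$ each output coordinate is a convex combination of $x_i^{(j)}$ and $x_{i+1}^{(j)}$, keeping $f_\epsilon(z)\in[0,1]^d$. This is implementable by a network of depth $L=2$ with a hidden layer of $2(k-1)$ ReLU units followed by a linear $d$-dimensional output, for a total number of weights $W\lesssim k(d+\ell)\gtrsim k$; crucially, $L$ and $W$ do not depend on $\epsilon$, only the parameter values (which blow up like $1/\epsilon$, hence the necessity of the weight bound being $\infty$).

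Finally, I bound the Wasserstein distance via an explicit coupling. Define the random vector $(X,Y)$ by sampling $Z\sim\nu$ and setting $X=x_i$ when $Z\in A_i$ and $Y=f_\epsilon(Z)$. The marginal of $X$ is $\gamma$ and the marginal of $Y$ is $(f_\epsilon)_\sharp\nu$. On $A_i\setminus S_\epsilon$ we have $X=Y=x_i$, so the cost is zero, whereas on $S_\epsilon$ the cost is at most $\operatorname{diam}([0,1]^d)$, which is bounded by a constant depending only on $d$. Therefore
\[
\sW_1\bigl(\gamma,(f_\epsilon)_\sharp\nu\bigr)\;\le\;\operatorname{diam}([0,1]^d)\cdot\nu(S_\epsilon)\;=\;\operatorname{diam}([0,1]^d)\sum_{i=1}^{k-1}\bigl(F_1(t_i+\epsilon)-F_1(t_i-\epsilon)\bigr).
\]
Continuity of $F_1$ forces each summand, hence the right-hand side, to tend to $0$ as $\epsilon\downarrow 0$. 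Since every $f_\epsilon$ lies in the same class $\cR\cN(L,W,\infty,1)$, the infimum over that class is $0$.

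The main obstacle is purely architectural: forcing the depth and weight count to stay fixed while $\epsilon\to 0$. This is resolved by observing that the one-dimensional slab construction isolates all $\epsilon$-dependence into the slopes $1/(2\epsilon)$ inside the fixed network template above; the absence of a bound on $\cB(f)$ (the weight magnitude is $\infty$) is what makes this possible. Degenerate cases with some $p_i=0$ are harmless since then $t_{i-1}=t_i$ and the corresponding ``slab'' is empty and can be omitted, effectively reducing $k$.
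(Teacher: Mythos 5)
Your proof is correct. Both you and the paper reduce to a one--dimensional construction through the first coordinate of $\nu$ (using that the marginal of an absolutely continuous distribution is absolutely continuous, hence has a continuous CDF); after that, the paper simply cites Lemma~3.1 of \cite{yang2022capacity}, which guarantees that a ReLU network of suitable (polynomial-in-$k$) width and depth can push a one--dimensional absolutely continuous distribution onto any $k$-point distribution with arbitrarily small $\sW_1$ error, whereas you replace that black box with an explicit depth-2 ramp construction and an explicit coupling bound $\sW_1(\gamma,(f_\epsilon)_\sharp\nu)\le\operatorname{diam}([0,1]^d)\,\nu(S_\epsilon)$. Your route is more elementary and self-contained, and it also makes transparent why a fixed architecture suffices --- only the slopes $1/(2\epsilon)$ blow up, which is exactly what the $\cB=\infty$ allowance in $\cR\cN(L,W,\infty,1)$ permits --- while the paper's citation to \cite{yang2022capacity} shortens the exposition at the cost of a more opaque width/depth relation. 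One small point worth tightening: for the output to lie in $[0,1]^d$ on all of $\Real^\ell$ (as required by the $R=1$ constraint in the class definition), you should note explicitly that because the transition windows are disjoint, at most one ramp $h_i$ is strictly between $0$ and $1$ at any input, so each output coordinate is always either some $x_i^{(j)}$ or a convex combination of two consecutive atoms, hence in $[0,1]$; this is implicit in your argument but deserves a sentence.
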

\begin{proof}
    We take the first layer of $f$ to be mapping to $\Real$ with the output being the first coordinate variable. We then pass this output through a network of width $w$ and depth $L$, such that $k \le \frac{w-d-1}{2} \lfloor (w-d-1)/(6d)\rfloor \lfloor L/2\rfloor + 2$. Since the output of the first layer is still absolutely continuous, the network can make the Wasserstein distance between $\mu$ and $f_\sharp \nu$, arbitrarily small by Lemma 3.1 of \cite{yang2022capacity}. By the choice of the network width and depth, we note that, $k \lesssim w^2 L \asymp W$. Since, by construction, the networks can be taken as linear interpolators taking values in $[0,1]^d$, the network output remains bounded by $1$ in the $\ell_\infty$-norm.  
\end{proof}
\begin{lem}\label{lem45}
    For the GAN estimators $\hG_n$ and $\hG_{n,m}$, we observe that,
    \begin{align*}
         \|\mu - (\hG_n)_\sharp \nu\|_\Phi \le & \|\hmu_n - G_\sharp \nu\|_\Phi + \|\mu - \hmu_n\|_\Phi \\ 
          \|\mu - (\hG_{n,m})_\sharp \nu\|_\Phi \le &  \|\mu_n - G_\sharp \nu\|_\Phi +   \|\hmu_n - \mu\|_\Phi + 2\|\hnu_m - \nu\|_{\Phi \circ \sG}.
    \end{align*}
\end{lem}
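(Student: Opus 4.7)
The plan is to prove both inequalities by combining the triangle inequality for the IPM $\|\cdot\|_\Psi$ with the defining ERM properties of $\hG_n$ and $\hG_{n,m}$ as in \eqref{gan_est}. The argument is entirely algebraic and mirrors the oracle inequality of Lemma~\ref{oracle}(a), so no approximation or concentration ingredients enter here; the novelty relative to \eqref{o1}--\eqref{o2} is only that we keep a specific $G \in \sG$ on the right-hand side (rather than taking $\inf_{G \in \sG}$), which is what is needed downstream for the minimax argument in Theorem~\ref{main 2} where $G$ will be chosen to be the near-zero-error interpolator supplied by Lemma~\ref{lem44}.

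For the first inequality, I would write
\[
\|\mu - (\hG_n)_\sharp \nu\|_\Psi
\;\le\; \|\mu - \hmu_n\|_\Psi + \|\hmu_n - (\hG_n)_\sharp \nu\|_\Psi
\]
by the triangle inequality, and then invoke the definition $\hG_n = \argmin_{G \in \sG} \|\hmu_n - G_\sharp \nu\|_\Psi$ to bound $\|\hmu_n - (\hG_n)_\sharp \nu\|_\Psi \le \|\hmu_n - G_\sharp \nu\|_\Psi$ for any $G \in \sG$. Combining these two steps yields the claim immediately.

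For the second inequality, the route is a three-term triangle decomposition through $\hmu_n$ and through the fake empirical target $(\hG_{n,m})_\sharp \hnu_m$:
\[
\|\mu - (\hG_{n,m})_\sharp \nu\|_\Psi
\;\le\; \|\mu - \hmu_n\|_\Psi + \|\hmu_n - (\hG_{n,m})_\sharp \hnu_m\|_\Psi + \|(\hG_{n,m})_\sharp \hnu_m - (\hG_{n,m})_\sharp \nu\|_\Psi.
\]
The last summand is controlled by noting that for any $G\in\sG$, $\|G_\sharp \hnu_m - G_\sharp \nu\|_\Psi = \sup_{\psi \in \Psi}|\int \psi\circ G\, d(\hnu_m-\nu)| \le \|\hnu_m - \nu\|_{\Psi \circ \sG}$, which handles one copy of the $\Psi\circ\sG$ term. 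The middle summand is controlled by the ERM definition $\hG_{n,m} = \argmin_{G \in \sG} \|\hmu_n - G_\sharp \hnu_m\|_\Psi$, giving $\|\hmu_n - (\hG_{n,m})_\sharp \hnu_m\|_\Psi \le \|\hmu_n - G_\sharp \hnu_m\|_\Psi$ for any $G \in \sG$, after which one more triangle inequality $\|\hmu_n - G_\sharp \hnu_m\|_\Psi \le \|\hmu_n - G_\sharp \nu\|_\Psi + \|G_\sharp \nu - G_\sharp \hnu_m\|_\Psi$ introduces the second copy of $\|\hnu_m - \nu\|_{\Psi \circ \sG}$, yielding the stated factor of~$2$.

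There is no genuine obstacle; the only care required is to make sure the same $G \in \sG$ is used on both sides after the ERM step is invoked, so that the final bound is indeed an inequality valid uniformly in $G \in \sG$ rather than only at the (unknown) minimizer. I will state the result for an arbitrary $G \in \sG$, which is exactly the form needed when Lemma~\ref{lem44} later supplies a specific near-interpolating $G$.
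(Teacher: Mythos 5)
Your proof is correct and follows essentially the same route as the paper's: a triangle inequality through $\hmu_n$ combined with the ERM property of $\hG_n$ for the first bound, and for the second a three-term decomposition through $\hmu_n$ and $(\hG_{n,m})_\sharp\hnu_m$, with the ERM step and one further triangle inequality each contributing one copy of $\|\hnu_m-\nu\|_{\Psi\circ\sG}$. Nothing is missing.
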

\begin{proof}
 For any $G \in \sG$, we observe that,
     \begin{align}
         \|\mu - (\hG_n)_\sharp \nu\|_\Phi \le  \|\hmu_n - (\hG_n)_\sharp \nu\|_\Phi + \|\mu - \hmu_n\|_\Phi 
         \le & \|\hmu_n - G_\sharp \nu\|_\Phi + \|\mu - \hmu_n\|_\Phi \nonumber 
     \end{align}
     Taking infimum on both sides w.r.t. $G \in \sG$ gives us the desired result. For the estimator $\hG_{n,m}$, we note that,
     \begingroup
     \allowdisplaybreaks
     \begin{align*}
         \|\mu - (\hG_{n,m})_\sharp \nu\|_\Phi \le & \inf_{G \in \sG}\|\hmu_n - (\hG_{n,m})_\sharp \hnu_m\|_\Phi +   \|\hmu_n - \mu\|_\Phi + \|\hnu_m - \nu\|_{\Phi \circ \sG}\\
         \le & \inf_{G \in \sG}\|\hmu_n - G_\sharp \hnu_m\|_\Phi +   \|\hmu_n - \mu\|_\Phi + \|\hnu_m - \nu\|_{\Phi \circ \sG}\\
         \le & \|\mu_n - G_\sharp \nu\|_\Phi +   \|\hmu_n - \mu\|_\Phi + 2\|\hnu_m - \nu\|_{\Phi \circ \sG}
     \end{align*}
     \endgroup
     Again taking infimum on both sides w.r.t. $G \in \sG$ gives us the desired result.  
 \end{proof}

\maintwo*

\begin{proof}
We first note that for any two random variables, $V_1$ and $V_2$, that follow the distributions $\gamma_1$ and $\gamma_2$, respectively, 
\begin{align*}
    \|\gamma_1 - \gamma_2\|_{\sH^\beta(\Real^d, \Real,1)} =  \sup_{f \in \sH^\beta(\Real^d, \Real,1)} \E(f(V_1) - f(V_2))
    \le  \E \|V_1 - V_2\|^{\beta \wedge 1} 
    \overset{(i)}{\le} & (\E \|V_1 - V_2\|)^{\beta \wedge 1} 
\end{align*}
Here $(i)$ follows from applying Jensen's inequality. Taking an infimum with respect to the joint distribution of $V_1$ and $V_2$ yields, $\|\gamma_1 - \gamma_2\|_{\sH^\beta(\Real^d, \Real,1)} \le \sW_1(V_1, V_2)^{\beta \wedge 1}$.

From Lemmas \ref{lem44} and \ref{lem45}, we note that for the GAN estimator, $\hG_n$, we note that, if $\sG = \cR\cN(L_g,W_g, \infty, R_g)$, with $W_g \ge n$
\begingroup
\allowdisplaybreaks
\begin{align*}
    \E  \|\mu - (\hG_n)_\sharp \nu\|_\Phi \le &  \E \inf_{G \in \sG}\|\hmu_n - G_\sharp \nu\|_\Phi + \E \|\mu - \hmu_n\|_\Phi \\
    \le &  \E \inf_{G \in \sG}\sW_1(\hmu_n, G_\sharp \nu)^{\beta \wedge 1} + \E \|\mu - \hmu_n\|_\Phi \\
    = & \E \|\mu - \hmu_n\|_\Phi \lesssim  n^{-\frac{\beta}{s}}
\end{align*}
\endgroup
Similarly, for the GAN estimator, $\hG_{n,m}$, if $\sG = \cR \cN (L_g, W_g, \infty, 1)$, with $L_g \ge 2$ as constant and $W_g \asymp n$, we observe that,
\begingroup
\allowdisplaybreaks
\begin{align*}
  \E \|\mu - (\hG_{n,m})_\sharp \nu\|_\Phi \le &  \E \inf_{G \in \sG}\|\mu_n - G_\sharp \nu\|_\Phi +   \|\hmu_n - \mu\|_\Phi + 2\|\hnu_m - \nu\|_{\Phi \circ \sG}    \\
  \lesssim & n^{-\beta / s} + m^{-\beta/d} \vee m^{-1/2} \log m + \sqrt{\frac{W_g L_g(\log W_g + L_g) \log (md)}{m}}\\ \lesssim & n^{-\beta / s} + m^{-\beta/d} \vee m^{-1/2} \log m + \sqrt{\frac{n \log n  \log (md)}{m}}\\
  \lesssim & n^{-\beta / s} ,
\end{align*}
\endgroup
taking $m \ge n^{d/s+1}$.
\end{proof}
\subsection{Proof of Theorems \ref{thm_30} and \ref{thm_31}}
As a first step for deriving a minimax bound, we first show that the H\"{o}lder IPM can be lower bounded by the total variation distance and the minimum separation of the support of the distributions. For any finite set, we use the notation, $\text{sep}(\Xi) = \inf_{\xi, \xi^\prime \in \Xi: \xi \neq \xi^\prime} \|\xi - \xi^\prime\|_{\infty}$. 
\begin{lem}\label{lem_28}
    Let $\Xi$ be a finite subset of $\Real^p$ and let, $P,Q \in \Pi_\Xi$. Then, 
    \[\|P-Q\|_{\sH^\beta(\Real^d, \Real, C)} \gtrsim \, (\text{sep}(\Xi))^\beta \|P-Q\|_{\text{TV}}.\]
\end{lem}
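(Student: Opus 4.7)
\textbf{Proof proposal for Lemma \ref{lem_28}.} The strategy is to exhibit a single test function $f \in \sH^\beta(\Real^d,\Real,C)$ whose integral against $P-Q$ equals a constant multiple of $(\operatorname{sep}(\Xi))^\beta \|P-Q\|_{\text{TV}}$. Since the supports of $P$ and $Q$ consist of well-separated atoms, we can build $f$ by placing disjoint copies of a smooth bump function at exactly those atoms that contribute positively to the TV distance. First, fix once and for all a $C^\infty$ bump $\phi: \Real^d \to [0,1]$ with $\phi(0)=1$, $\operatorname{supp}(\phi) \subseteq B_{\ell_\infty}(0,1)$, and set $M := \|\phi\|_{\sH^\beta} < \infty$. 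Let $\delta := \operatorname{sep}(\Xi)/4$ so that the balls $B_{\ell_\infty}(\xi,\delta)$ are pairwise disjoint for $\xi \in \Xi$ and in fact $\ell_\infty$-separated by at least $2\delta$. Take $A := \{\xi \in \Xi : P(\xi) > Q(\xi)\}$, so that the Hahn-Jordan decomposition gives $P(A) - Q(A) = \|P-Q\|_{\text{TV}}$. Define
\[
f(x) \;:=\; c\,\delta^\beta \sum_{\xi \in A} \phi\!\left(\tfrac{x-\xi}{\delta}\right),
\qquad c := C/(c_0 M),
\]
for a numerical constant $c_0$ to be selected below.

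Next, I would verify that $\|f\|_{\sH^\beta} \le C$ by analyzing the dilated bumps $g_\xi(x) := c\,\delta^\beta \phi((x-\xi)/\delta)$. Individually $\|g_\xi\|_{\sH^\beta} \lesssim c\,\delta^\beta \cdot \delta^{-\beta} M = c M$ by the usual scaling of H\"older seminorms under the change of variables $u = (x-\xi)/\delta$ (which contributes a factor $\delta^{-|\bs|}$ from each derivative and $\delta^{-(\beta-\lfloor\beta\rfloor)}$ from the H\"older quotient, totalling $\delta^{-\beta}$). For the sum, I would split the argument for $|\partial^\bs f(x) - \partial^\bs f(y)|$ into three cases: (i) $x,y$ lie in the same ball $B_{\ell_\infty}(\xi,\delta)$, where only $g_\xi$ is nonzero and the single-bump estimate applies; (ii) $x$ lies in some ball $B(\xi_1,\delta)$ and $y$ lies outside all balls, where I bound $|\partial^\bs g_{\xi_1}(x)| = |\partial^\bs g_{\xi_1}(x) - \partial^\bs g_{\xi_1}(y')|$ with $y'$ the nearest point on $\partial B(\xi_1,\delta)$ (using that $\partial^\bs \phi$ vanishes outside $B(0,1)$ so $\partial^\bs g_{\xi_1}(y')=0$, and that $\|x-y'\|\le\|x-y\|$); (iii) $x,y$ lie in different balls, which reduces to two instances of case (ii) by the triangle inequality. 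This yields $\|f\|_{\sH^\beta} \le c_0 c M$ for a suitable absolute $c_0$ depending only on $\phi,d,\beta$, so the choice $c = C/(c_0 M)$ forces $f \in \sH^\beta(\Real^d,\Real,C)$.

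Finally, since the balls are disjoint and contain the atoms of $\Xi$, we have $f(\xi) = c\,\delta^\beta$ for $\xi \in A$ (from the single nonzero term $\phi(0)=1$) and $f(\xi)=0$ for $\xi \in \Xi\setminus A$. Therefore
\[
\|P-Q\|_{\sH^\beta(\Real^d,\Real,C)} \;\ge\; \int f\,dP - \int f\,dQ \;=\; c\,\delta^\beta \bigl(P(A)-Q(A)\bigr) \;=\; c\,\delta^\beta \|P-Q\|_{\text{TV}},
\]
and substituting $\delta = \operatorname{sep}(\Xi)/4$ and $c = C/(c_0 M)$ gives the claim with an implicit constant depending only on $C$, $d$, $\beta$, and the fixed bump $\phi$.

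\textbf{Main obstacle.} The delicate step is the case analysis for the global H\"older seminorm of $f$: although each bump scales cleanly, one must verify that summing disjointly supported bumps does not amplify the seminorm by a factor depending on $|A|$. The point where this could go wrong is when $x,y$ sit in different balls and both contribute nonzero derivatives; handling this cleanly requires exploiting the fact that $\partial^\bs \phi$ extends by zero as a globally $(\beta-\lfloor\beta\rfloor)$-H\"older function on $\Real^d$, together with a projection-to-boundary comparison that replaces $y$ by a boundary point at which $\partial^\bs g_{\xi_1}$ vanishes. Once this uniform control is in place, the rest of the argument is purely algebraic.
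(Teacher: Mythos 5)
Your proof is correct and takes essentially the same approach as the paper: both construct a test function that is a sum of rescaled, disjointly supported bump functions centered at the atoms of $\Xi$, scaled by a factor $\propto \delta^\beta$ with $\delta \asymp \operatorname{sep}(\Xi)$, and then evaluate $\int f\,d(P-Q)$. The only cosmetic difference is that the paper places signed bumps (coefficients $\alpha_i \in \{-1,+1\}$) at every atom of $\Xi$, whereas you place positive bumps only on the Hahn set $A=\{\xi: P(\xi)>Q(\xi)\}$; both yield $c\,\delta^\beta\,\|P-Q\|_{\text{TV}}$, and your three-case boundary-projection argument for the global H\"older seminorm fills in a step the paper states without justification.
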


\begin{proof}
    Let $b(x) = \exp\left(\frac{1}{x^2 -1} \right) \one\{|x| \le 1\}$ be the standard bump function on $\Real$. For any $x \in \Real^d$ and $\delta \in (0,1]$, we let, $h_\delta(x) = a \delta^\beta \prod_{j=1}^d b(x_j/\delta)$. Here $a$ is such that $a b(x) \in \sH^\beta(\Real, \Real, C)$. It is easy to observe that $h_\delta \in \sH^\beta(\Real^d, \Real, C)$.
Let $P$ and $Q$ be two distributions on $\Xi = \{\xi_1, \dots, \xi_k\}$. Let $\delta = \frac{1}{3}\min_{i \neq j} \|\xi_i - \xi_j\|_\infty$ We define $h^\star(x) = \sum_{i=1}^k \alpha_i h_\delta(x-\xi_i)$, with $\alpha_i \in \{-1,+1\}$, to be chosen later. Since the individual terms in $h^\star$ are members of $\sH^\beta(\Real^d, \Real, C)$ and have disjoint supports, $h^\star\in \sH^\beta(\Real^d, \Real, C) $. We take $\alpha_i = 2\one\{P(\xi_i) \ge Q(\xi_i)\}-1$.
Thus,
\begingroup
\allowdisplaybreaks
\begin{align*}
    \|P-Q\|_{\sH^\beta(\Real^d, \Real, C)} \ge   \int h^\ast dP - \int h^\ast dQ =  &\sum_{i=1}^k a \delta^\beta \alpha_i (P(\xi_i) - Q(\xi_i))\\
    = & a \delta^\beta \sum_{i=1}^k  |P(\xi_i) - Q(\xi_i)|\\
    = & 2 a \delta^\beta \|P-Q\|_{\text{TV}}\\
    \gtrsim & \, (\text{sep}(\Xi))^\beta \|P-Q\|_{\text{TV}}.
\end{align*}
\endgroup
\end{proof}

To derive our main result, we also need to show that there indeed exists a distribution $\mu_\star \in \fM_{d_\star}$, such that, $d_\ast(\mu_\star) = d_\star$. Proposition \ref{prop_45} gives a construction that ensures the existence of such a distribution. Before proceeding, we recall the definition of the Hausdorff dimension of a measure. We adopt the definition stated by \cite{weed2019sharp}.
\begin{defn}[Hausdorff dimension \citep{weed2019sharp}]
\label{defn_hausdorff}
The $d$-Hausdroff measure of a set $S$ is defined as, \[\mathbb{H}^d(S) := \liminf_{\epsilon \downarrow 0} \left\{\sum_{k=1}^\infty r_k^d : S \subseteq \sum_{k=1}^\infty B_\varrho (x_k, r_k), r_k \le \epsilon, \, \forall k\right\}.\]
    The Hausdorff dimension of the set $S$ is defined as,
    \(\operatorname{dim}_H(S) := \inf\{d: \mathbb{H}^d(S) = 0\}.\)
    For a measure $\mu$, then the Hausdorff dimension of $\mu$ is defined as:
    \(\operatorname{dim}_H(S) : = \inf\left\{\operatorname{dim}_H(S): \mu(S) = 1\right\}.\)
\end{defn}
\begin{propos} \label{prop_45}
     For any $d_\star \in [0, d]$, one can find a distribution $\mu_\star$ on $[0,1]^d$, such that, $d_\ast(\mu_\star) = \overline{\text{dim}}_M(\mu_\star) = d_\star$.
\end{propos}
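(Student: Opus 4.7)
The approach is to exhibit $\mu_\star$ explicitly, treating integer and non-integer $d_\star$ separately, and then to read off the two target quantities either by direct covering computations or via the regularity-dimension sandwich provided by Proposition~\ref{lem1}.

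For integer $d_\star \in \{0,1,\dots,d\}$, I would take $\mu_\star$ to be the uniform distribution on $[0,1]^{d_\star} \times \{0\}^{d-d_\star}$ (a point mass at the origin when $d_\star=0$). The support is a $d_\star$-dimensional cube, so $\overline{\text{dim}}_M(\mu_\star)=d_\star$ is immediate from its standard covering estimate. Since every $\ell_\infty$-ball of radius $\epsilon$ has $\mu_\star$-mass at most $(2\epsilon)^{d_\star}$, any set of $\mu_\star$-probability at least $1-\tau$ requires at least $(1-\tau)(2\epsilon)^{-d_\star}$ covering balls, giving $d_\ast(\mu_\star)\ge d_\star$; the reverse inequality follows from Proposition~\ref{lem1}(a,c) together with $\bar{d}_\alpha(\mu_\star)\le\overline{\dim}_M(\mu_\star)=d_\star$.

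For non-integer $d_\star\in(0,d)$, decompose $d_\star=k+\delta$ with $k\in\{0,\dots,d-1\}$ and $\delta\in(0,1)$. Construct a middle-gap Cantor-type set $C\subset[0,1]$ as the attractor of the IFS generated by two contractions of ratio $r=2^{-1/\delta}$ satisfying the open-set condition, so that $C$ is self-similar with similarity (and Hausdorff/Minkowski) dimension exactly $\delta$. Let $\nu$ denote the natural Bernoulli self-similar measure on $C$; a standard argument shows $\nu$ is uniformly $\delta$-regular, i.e.\ $\nu(B(x,\rho))\asymp \rho^\delta$ for $x\in C$ and small $\rho>0$. Now define the product measure
\[\mu_\star \;=\; \mathrm{Unif}([0,1]^k)\;\otimes\; \nu \;\otimes\; \delta_0^{\otimes(d-k-1)},\]
viewed as a probability measure on $[0,1]^d$. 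Its support is $[0,1]^k\times C\times\{0\}^{d-k-1}$, whose upper Minkowski dimension is $k+\delta=d_\star$ by additivity of Minkowski dimension under products in $\ell_\infty$ (every $\epsilon$-net factors into independent factor-wise nets).

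The product structure also propagates regularity: because an $\ell_\infty$-ball decomposes as a product of intervals, for every $x$ in the support and every small $\rho>0$ one has
\[\mu_\star\bigl(B_{\ell_\infty}(x,\rho)\bigr) \;\asymp\; \rho^k\cdot \rho^\delta \cdot 1 \;=\; \rho^{d_\star},\]
so $\underline{\text{dim}}_{\text{reg}}(\mu_\star)=\overline{\text{dim}}_{\text{reg}}(\mu_\star)=d_\star$. The sandwich furnished by Proposition~\ref{lem1}(a,e,f),
\[d_\star \;=\; \underline{\text{dim}}_{\text{reg}}(\mu_\star) \;\le\; d_\ast(\mu_\star) \;\le\; \bar{d}_1(\mu_\star) \;\le\; \overline{\text{dim}}_{\text{reg}}(\mu_\star) \;=\; d_\star,\]
then forces $d_\ast(\mu_\star)=d_\star$, and the Minkowski computation above gives the second equality.

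The main obstacle is the careful verification that the product measure inherits two-sided $d_\star$-regularity from its factors, and that the Minkowski dimension of the product support is additive. Both are elementary but hinge on the $\ell_\infty$ ball factoring as a product of intervals; extra care is needed at scales comparable to the Cantor gap sizes (so that the regularity constants for $\nu$ kick in uniformly) and in the boundary cases where $\delta$ is very small or close to $1$. Once these two product properties are in hand, the remainder of the argument is a direct application of Proposition~\ref{lem1}.
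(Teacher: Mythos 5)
Your construction coincides with the paper's: both handle the integer case by a uniform measure on a $d_\star$-dimensional coordinate cube, and the non-integer case by taking the natural self-similar (uniform) measure on a middle-gap Cantor set of dimension $\delta = d_\star - \lfloor d_\star \rfloor$, tensored with a uniform cube and Dirac masses; your contraction ratio $r=2^{-1/\delta}$ is algebraically the same as the paper's removal fraction $\alpha = 1 - 2^{1-1/\delta}$. Where you differ is in how the equality $d_\ast(\mu_\star)=d_\star$ is certified. The paper introduces the Hausdorff dimension, computes $\dim_H(\mu_\star) = \overline{\text{dim}}_M(\supp\mu_\star) = d_\star$, and then invokes the Weed--Bach machinery relating Hausdorff and Wasserstein dimensions. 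You instead observe that the canonical self-similar measure on the Cantor set is Ahlfors $\delta$-regular, that this regularity is preserved under the $\ell_\infty$-product with a uniform cube, and then squeeze $d_\ast$ between $\underline{\text{dim}}_{\text{reg}}$ and $\overline{\text{dim}}_{\text{reg}}$ via Proposition~\ref{lem1}(a,e,f). Your route is arguably more self-contained: it avoids the Hausdorff-dimension detour and relies only on the paper's own Proposition~\ref{lem1} plus the classical Ahlfors regularity of self-similar measures, and it also supplies a direct covering lower bound in the integer case where the paper simply asserts the result. The one point worth tightening is that the regularity-dimension definitions as stated in the paper implicitly require $0<r<R$ to range over scales up to $\operatorname{diam}(\supp\mu)$ (as in Fraser--Howroyd); with that standard restriction your two-sided bound $\mu_\star(B_{\ell_\infty}(x,\rho))\asymp\rho^{d_\star}$ does give $\underline{\text{dim}}_{\text{reg}}(\mu_\star)=\overline{\text{dim}}_{\text{reg}}(\mu_\star)=d_\star$ as claimed.
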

   
\begin{proof}
If $d_\star \in \mathbb{N}$, then, the distribution which has is i.i.d. uniform on the first $d_\star$ coordinates and is identically $0$, otherwise satisfies $d_\ast(\mu_\star) = d_\star$. Now suppose that $d_\star \not \in \fN$. Let $\alpha = 1 - \exp\left(\left(1 - \frac{1}{d_\star - \lfloor d_\star \rfloor}\right) \log 2\right)$. It is well known that the Cantor set $C_\alpha$, which is constructed by removing the middle $\alpha$ (instead of $1/3$-rd in regular Cantor set) has a Hausdorff and upper Minkowski dimension of $\frac{\log 2}{ \log 2 - \log(1-\alpha) } = d_\star - \lfloor d_\star \rfloor$. A proof of this result can be found in \citep[Example 1.1.3]{bishop2017fractals} and \citep[Chapter 4.8]{ziemer2017modern}.  Let $\operatorname{Cf}_\alpha:[0,1] \to [0,1]$ be the corresponding Cantor function. It is easy to check that $\operatorname{Cf}_\alpha$ satisfies all the properties of a cumulative distribution function (CDF). We let $U_1$ be a random variable, whose CDF is $\operatorname{Cf}_\alpha$. It is known \citep{presnell2022geometric} that the support of $U_1$ is $C_\alpha$. 
 To construct $\mu_\star$, we define $U_2, \ldots, U_{\lfloor d_\star \rfloor + 1} \overset{i.i.d.}{\sim} \operatorname{Unif}([0,1])$ and let, $U = (U_1, \ldots, U_{\lfloor d_\star \rfloor + 1}, 0, \ldots, 0) $. The distribution of $U$ is denoted as $\mu_\star$. First note that, 
\[\overline{\text{dim}}_M(\mu) = \overline{\text{dim}}_H(C_\alpha \times [0,1]^{\lfloor d_\star \rfloor} ) \le \overline{\text{dim}}_M(C_\alpha) + \lfloor d_\star \rfloor = d_\star.\]
Similarly, we note that, 
\[\text{dim}_H(\mu)  = \text{dim}_H(C_\alpha \times [0,1]^{\lfloor d_\star \rfloor} ) \ge \text{dim}_H(C_\alpha) + \lfloor d_\star \rfloor = d_\star.\]
 Applying Proposition~\ref{lem_weed}, we observe that $d_\ast(\mu_\star) = d_\star$.
 \end{proof}
\begin{cor}\label{cor46}
    Suppose that $d \ge 2\beta$ and $d^\star \in [2 \beta , d]$. Then, one can find a distribution $\mu^\star$ on $[0,1]^d$, such that, $d^\ast_\beta(\mu^\star) = d_\ast(\mu^\star) = d^\star$.
\end{cor}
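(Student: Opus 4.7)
The proof is essentially a short assembly of ingredients already recorded in the excerpt, so the plan is to line them up carefully rather than do anything new. The candidate distribution $\mu^\star$ will be exactly the one produced by Proposition~\ref{prop_45}, applied at the value $d^\star$; that already gives a probability measure on $[0,1]^d$ satisfying $d_\ast(\mu^\star) = \overline{\text{dim}}_M(\mu^\star) = d^\star$. The task then reduces to showing that the upper Wasserstein dimension $d^\ast_\beta(\mu^\star)$ is also pinned to the same value $d^\star$.

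To do this, I will sandwich $d^\ast_\beta(\mu^\star)$ between $d_\ast(\mu^\star)$ and $\overline{\text{dim}}_M(\mu^\star)$. The lower bound comes for free from Proposition~\ref{lem1}(a), which gives $d_\ast(\mu^\star) \le \bar{d}_\beta(\mu^\star) \le d^\ast_\beta(\mu^\star)$, and hence $d^\star \le d^\ast_\beta(\mu^\star)$. For the upper bound I invoke Proposition~\ref{lem_weed}: the hypothesis $\overline{\text{dim}}_M(\mu^\star) = d^\star \ge 2\beta$ is exactly the $\overline{\text{dim}}_M \ge 2\alpha$ condition of that proposition with $\alpha=\beta$, so it yields $d^\ast_\beta(\mu^\star) \le \overline{\text{dim}}_M(\mu^\star) = d^\star$. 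Combining the two inequalities gives $d^\ast_\beta(\mu^\star) = d^\star$, and together with the already-guaranteed $d_\ast(\mu^\star)=d^\star$ this proves the corollary.

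There is no real obstacle here: the only place where the hypothesis $d^\star \ge 2\beta$ is used is to make Proposition~\ref{lem_weed} applicable (the definition of $d^\ast_\beta$ only makes sense for candidate exponents $s > 2\beta$, so without $d^\star \ge 2\beta$ the target equality would be vacuous in the wrong direction). The only mild care needed is to check that the measure produced by Proposition~\ref{prop_45} is genuinely supported in $[0,1]^d$ and that the inequalities chain as stated; both are direct from the statements of Propositions~\ref{prop_45}, \ref{lem1}, and \ref{lem_weed}.
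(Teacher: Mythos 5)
Your proposal is correct and follows essentially the same route as the paper: construct $\mu^\star$ via Proposition~\ref{prop_45} so that $d_\ast(\mu^\star) = \overline{\text{dim}}_M(\mu^\star) = d^\star$, then sandwich $d^\ast_\beta(\mu^\star)$ between $d_\ast(\mu^\star)$ (Proposition~\ref{lem1}(a)) and $\overline{\text{dim}}_M(\mu^\star)$ (Proposition~\ref{lem_weed}, applicable since $d^\star \ge 2\beta$). Your version is, if anything, slightly more explicit than the paper's in naming Proposition~\ref{lem1}(a) for the lower bound, which the paper leaves implicit.
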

\begin{proof}
    It is easy to see that by Proposition \ref{prop_45}, one can find $\mu^\star$, such that $d_\ast(\mu^\star) = \overline{\text{dim}}_M(\mu^\star) = d^\star$. If $d^\star \in [2 \beta , d]$, then, by Proposition~\ref{lem_weed}, we note that, $d^\star = d_\ast(\mu^\star) \le d^\ast_\beta(\mu^\star) \le \overline{\text{dim}}_M(\mu^\star) = d^\star$. Hence the result.
\end{proof}
\thmtwentynine*
\begin{proof}
 Let  $\mu^\star \in \fM^{d^\star, \beta}$ be such that $d_\ast(\mu^\star) = d^\star$ (this existence is guaranteed by Corollary \ref{cor46}). 
 Now let, $s<d_\ast(\mu) = d^\star$. We can find a $\tau^\star$ and $\epsilon^\star$, such that $\sN_\epsilon(\mu^\star, \tau) \ge \epsilon^{-s}$, for any $\epsilon \in (0, \epsilon^\star]$ and $\tau \in (0, \tau^\star]$. We can thus find a set $S^\star \subseteq [0,1]^d$, such that $\cM(\epsilon; S^\star, \varrho) \ge \cN(\epsilon; S^\star, \varrho) \ge \epsilon^{-s}$. We take $n \ge (128 (\epsilon^\star)^{-s}) \vee 8192 $. Suppose $\epsilon = (n/128)^{-1/s}$. Let $\Theta = \{\theta_1, \ldots, \theta_k\}$ be a $\epsilon$-separated set of $S^\star$. For the above choices of $n$ and $\epsilon$, we observe that, $k = \epsilon^{-s} = n/128 \ge 64$ and $n \ge 64 k$.

Let $ \phi_j(x) = \one\{ x= \theta_j\} - \one\{ x= \theta_{\lfloor k/2\rfloor + j}\}$, for all $j = 1, \dots, \lfloor k/2 \rfloor$.  
Let,  $\bomega \in \{0,1\}^k$. We define the probability mass function on $\Theta$,

\[P_{\bomega} (x) = \frac{1}{k} + \frac{\delta_k}{k} \sum_{j=1}^{\lfloor k/2 \rfloor} \omega_j \phi_j(x),\]
with $\delta_k \in (0,1/2]$. By construction, $d^\ast_\beta(P_{\bomega}) \le d^\star$. 
Furthermore,
\(\|P_{\bomega} - P_{\bomega^\prime}\|_{\text{TV}} = \frac{ \delta_k}{k} \|\bomega- \bomega^\prime\|_1.\)
By the Varshamov-Gilbert bound (Lemma \ref{vg bound}), let $\Omega \subseteq \{0,1\}^{\lfloor k/2 \rfloor}$ be such that $|\Omega| \ge 2^{\frac{1}{8}\lfloor k/2 \rfloor}$ and $\|\bomega-\bomega^\prime\|_1 \ge \frac{1}{8} \lfloor k/2 \rfloor$, for all $\bomega \neq \bomega^\prime$ both in $\Omega$. Thus for any $\bomega \neq \bomega^\prime$, both in $\Omega$, 
\begin{align}
    \|P_{\bomega} - P_{\bomega^\prime}\|_{\text{TV}} \ge  \frac{ \delta_k \lfloor k/2 \rfloor}{8k} . \label{mm1}
\end{align}
Hence, by Lemma \ref{lem_28}, $\|P_{\bomega} - P_{\bomega^\prime}\|_{\sH^\beta(\Real^d, \Real, 1)} \gtrsim \epsilon^\beta \frac{ \delta_k \lfloor k/2 \rfloor}{k}$. Furthermore, we observe that
\begingroup
\allowdisplaybreaks
\begin{align*}
       \operatorname{KL}(P_{\bomega}^{\otimes_n}\| P_{\bomega^\prime}^{\otimes_n})=  n \operatorname{KL}(P_{\bomega}\| P_{\bomega^\prime}) \le  \, n \, \chi^2(P_{\bomega}\| P_{\bomega^\prime})
      = & n \sum_{i=1}^k \frac{(P_{\bomega}(\xi_i) - P_{\bomega^\prime(\xi_i})^2}{P_{\bomega}(\xi_i)}\\
      \le & 2 nk \sum_{i=1}^k (P_{\bomega}(\xi_i) - P_{\bomega^\prime}(\xi_i)^2\\
      \le & 2nk  \lfloor k/2 \rfloor (2\delta_k /k)^2\\
      = & 8 \frac{n \lfloor k/2 \rfloor \delta_k^2}{k}.
\end{align*}
\endgroup
Thus, 
\(\frac{1}{|\Omega|^2} \sum_{\bomega, \bomega^\prime \in \Omega} \operatorname{KL}(P_{\bomega}^{\otimes_n}\| P_{\bomega^\prime}^{\otimes_n}) \le 8 \frac{n \lfloor k/2 \rfloor \delta_k^2}{k}.\)
Let $\cP = \{P_{\bomega}: \bomega \in \Omega\}$. Let $J \sim \operatorname{Unif}(\Omega)$ and $Z|J=\bomega \sim P_{\omega}$. By the convexity of KL divergence (see equation 15.34 of \cite{wainwright2019high}), we know that,
\[I(Z;J) \le \frac{1}{|\Omega|^2} \sum_{\bomega, \bomega^\prime \in \Omega} \operatorname{KL}(P_{\bomega}^{\otimes_n}\| P_{\bomega^\prime}^{\otimes_n}) \le 8 \frac{n \lfloor k/2 \rfloor \delta_k^2}{k}.\]
\begin{align}
   \text{Thus, } \frac{I(Z;J) + \log 2}{\log |\Omega|} \le 8\frac{I(Z;J) + \log 2}{\lfloor k/2 \rfloor\log 2} \le 64 \frac{n  \delta_k^2}{k \log 2} + \frac{8}{ \lfloor k/2 \rfloor} \le  64 \frac{n  \delta_k^2}{k \log 2} + \frac{1}{ 4}.
\end{align}
The last inequality follows since $k \ge 64$. We take $\delta_k = \frac{1}{16} \sqrt{\frac{k \log 2}{n}}$. Clearly, $\epsilon \le 1/2$ as $n \ge 64 k$. This choice of $\epsilon$ makes,
\( \frac{I(Z;J) + \log 2}{\log |\Omega|} \le \frac{1}{2}.\)
Thus, by Theorem 15.2 of \cite{wainwright2019high}, 
\begin{align*}
    \inf_{\hat{\mu}} \sup_{\mu \in \fM} \| \hat{\mu} - \mu\|_{\sH^\beta(\Real^d, \Real, C)} \gtrsim  \frac{ \epsilon^\beta \delta_k \lfloor k/2 \rfloor}{k} =  \frac{ \epsilon^\beta \lfloor k/2 \rfloor}{16 k}  \sqrt{\frac{k \log 2}{n}} \ge & \frac{  \epsilon^\beta \lfloor k/2 \rfloor}{16k} \sqrt{\frac{ \log 2}{128}} \\
    \gtrsim & \epsilon^\beta \\
    \asymp & n^{-\beta/s}.
\end{align*}
\end{proof}

The proof of Theorem \ref{thm_31} is similar to that of Theorem \ref{thm_30}. For the sake of completeness, the proof is provided below.
\thmthirty*
\begin{proof}
 Let  $\mu_\star \in \fM_{d_\star}$ be such that $d_\ast(\mu_\star) = d_\star$ (this existence is guaranteed by Proposition \ref{prop_45}).
 Now let, $s<d_\ast(\mu) = d_\star$. We can find a $\tau_\star$ and $\epsilon_\star$, such that $\sN_\epsilon (\mu_\star, \tau) \ge \epsilon^{-s}$, for any $\epsilon \in (0, \epsilon_\star]$ and $\tau \in (0, \tau_\star]$. We can thus find a set $S_\star \subseteq [0,1]^d$, such that $\cM(\epsilon; S_\star, \varrho) \ge \cN(\epsilon; S_\star, \varrho) \ge \epsilon^{-s}$. We take $n \ge (128 \epsilon_\star^{-s}) \vee 8192 $. Suppose $\epsilon = (n/128)^{-1/s}$. Let $\Theta = \{\theta_1, \ldots, \theta_k\}$ be a $\epsilon$-separated set of $S_\star$. For the above choices of $n$ and $\epsilon$, we observe that, $k = \epsilon^{-s} = n/128 \ge 64$ and $n \ge 64 k$,

Let $ p_j(x) = \one\{ x= \theta_j\} - \one\{ x= \theta_{\lfloor k/2\rfloor + j}\}$, for all $j = 1, \dots, \lfloor k/2 \rfloor$. 
Let,  $\bomega \in \{0,1\}^k$. We define the probability mass function on $\Theta$,
\(P_{\bomega} (x) = \frac{1}{k} + \frac{\delta_k}{k} \sum_{j=1}^{\lfloor k/2 \rfloor} \omega_j \phi_j(x),\)
with $\delta_k \in (0,1/2]$. 
Furthermore,
\(\|P_{\bomega} - P_{\bomega^\prime}\|_{\text{TV}} = \frac{ \delta_k}{k} \|\bomega- \bomega^\prime\|_1.\)
By the Varshamov-Gilbert Bound (Lemma \ref{vg bound}), let $\Omega \subseteq \{0,1\}^{\lfloor k/2 \rfloor}$ be such that $|\Omega| \ge 2^{\frac{1}{8}\lfloor k/2 \rfloor}$ and $\|\bomega-\bomega^\prime\|_1 \ge \frac{1}{8} \lfloor k/2 \rfloor$, for all $\bomega \neq \bomega^\prime$ both in $\Omega$. Thus for any $\bomega \neq \bomega^\prime$, both in $\Omega$, \(\|P_{\bomega} - P_{\bomega^\prime}\|_{\text{TV}} \ge  \frac{ \delta_k \lfloor k/2 \rfloor}{8k} . \)
Hence, by Lemma \ref{lem_28}, $\|P_{\bomega} - P_{\bomega^\prime}\|_{\sH^\beta(\Real^d, \Real, 1)} \gtrsim \epsilon^\beta \frac{ \delta_k \lfloor k/2 \rfloor}{k}$. Furthermore, we observe that \(\operatorname{KL}(P_{\bomega}^{\otimes_n}\| P_{\bomega^\prime}^{\otimes_n}) \le \frac{n \lfloor k/2 \rfloor \delta_k^2}{k}.\) 
Thus, 
\(\frac{1}{|\Omega|^2} \sum_{\bomega, \bomega^\prime \in \Omega} \operatorname{KL}(P_{\bomega}^{\otimes_n}\| P_{\bomega^\prime}^{\otimes_n}) \le 8 \frac{n \lfloor k/2 \rfloor \delta_k^2}{k}.\)
Let $\cP = \{P_{\bomega}: \bomega \in \Omega\}$. Let $J \sim \operatorname{Unif}(\Omega)$ and $Z|J=\bomega \sim P_{\omega}$. By the convexity of KL divergence (see equation 15.34 of \cite{wainwright2019high}), we know that,
\(I(Z;J) \le \frac{1}{|\Omega|^2} \sum_{\bomega, \bomega^\prime \in \Omega} \operatorname{KL}(P_{\bomega}^{\otimes_n}\| P_{\bomega^\prime}^{\otimes_n}) \le 8 \frac{n \lfloor k/2 \rfloor \delta_k^2}{k}.\)
Thus,
\begin{align}
    \frac{I(Z;J) + \log 2}{\log |\Omega|} \le 8\frac{I(Z;J) + \log 2}{\lfloor k/2 \rfloor\log 2} \le 64 \frac{n  \delta_k^2}{k \log 2} + \frac{8}{ \lfloor k/2 \rfloor} \le  64 \frac{n  \delta_k^2}{k \log 2} + \frac{1}{ 4}.
\end{align}
The last inequality follows since $k \ge 64$. We take $\delta_k = \frac{1}{16} \sqrt{\frac{k \log 2}{n}}$. Clearly, $\epsilon \le 1/2$ as $n \ge 64 k$. This choice of $\epsilon$ makes,
\( \frac{I(Z;J) + \log 2}{\log |\Omega|} \le \frac{1}{2}.\)
Thus, by Theorem 15.2 of \cite{wainwright2019high}, 
\begin{align*}
    \inf_{\hat{\mu}} \sup_{\mu \in \fM_{d_\star}} \|\mu - \hat{\mu}\|_{\sH^\beta(\Real^d, \Real, C)} \gtrsim \epsilon^\beta \frac{ \delta_k \lfloor k/2 \rfloor}{k} = &   \frac{ \epsilon^\beta \lfloor k/2 \rfloor}{16k} \sqrt{\frac{k \log 2}{n}} \\
    \ge & \frac{\epsilon^\beta   \lfloor k/2 \rfloor}{16k}  \sqrt{\frac{ \log 2}{128}} \\ \gtrsim & \epsilon^\beta \asymp n^{-\beta/s}.
\end{align*}

To show that $\inf_{\hat{\mu}} \sup_{\mu \in \fM_{d_\star}} \| \hat{\mu} - \mu\|_{\sH^\beta(\Real^d, \Real, C)} \succsim n^{-1/2}$, we use Le Cam's method \cite[Chapter 15.2]{wainwright2019high}. Let $\theta_0, \theta_1 \in [0,1]^d$ be such that $\|\theta_0 - \theta_1\|_\infty \ge 1/2$. Let $P_0(\theta_0) = P_0(\theta_1) = 1/2 $ and $P_1(\theta_0) = 1- P_1(\theta_1) = 1/2 - \delta $ with $\delta \in (0,1/4)$. Clearly, $P_0, P_1 \in \fM_{d_\star}$. Further, $\operatorname{TV}(P_0, P_1) = \delta.$ Thus, by Lemma~\ref{lem_28}, we observe that \[\|P_1 - P_0\|_{\sH^\beta(\Real^d, \Real, C)} \succsim 2^{-\beta} \delta \succsim \delta.\]  Again,
\begin{align*}
       \operatorname{KL}(P_{1}^{\otimes_n}\| P_{0}^{\otimes_n})=  n \operatorname{KL}(P_{1}\| P_{0}) \le & \, n \, \chi^2(P_{1}\| P_{0}) = 4n \delta^2.
\end{align*}
By Pinsker's inequality \cite[Lemma 2.5]{tsybakov2009introduction}, we note that, 
\begin{align*}
       \operatorname{TV}(P_{1}^{\otimes_n}, P_{0}^{\otimes_n}) \le \sqrt{\frac{1}{2}\operatorname{KL}(P_{1}^{\otimes_n}\| P_{0}^{\otimes_n})} = 2 \delta \sqrt{n} = 1/4,
\end{align*}
if $\delta = \frac{1}{8\sqrt{n}}$. Thus from equation 15.14 of \citet{wainwright2019high}, we observe that,
\begin{align}
     \inf_{\hat{\mu}} \sup_{\mu \in \fM_{d_\star}} \E_\mu \| \hat{\mu} - \mu\|_{\sH^\beta(\Real^d, \Real, C)}  \succsim \delta \asymp 1/\sqrt{n}. \label{e22}
\end{align}
 \end{proof}

 \section{Supporting Results}
 This section states some of the supporting results used in this paper.
 \begin{lem}[Lemma 5.5 of \citet{wainwright2019high}]
     \label{cov_pack}For any metric space, $(S,\varrho)$ and $\epsilon>0$,
     $M(2\epsilon; S, \varrho) \le\cN(\epsilon; S, \varrho) \le M(\epsilon; S, \varrho)$.
 \end{lem}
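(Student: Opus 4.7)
The plan is to establish the two inequalities separately via classical extremal arguments in metric geometry: a maximal packing automatically doubles as a cover, and each point of a $2\epsilon$-packing injects into the ball labels of any $\epsilon$-cover by pigeonhole. Both steps are short and essentially set-theoretic; no analytic estimates are needed.

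For the right-hand inequality $\cN(\epsilon; S, \varrho) \le \cM(\epsilon; S, \varrho)$, I would begin with a maximal $\epsilon$-packing $\{x_1, \dots, x_M\}$ of size $M = \cM(\epsilon; S, \varrho)$. The key claim is that these points automatically form an $\epsilon$-net. Suppose otherwise: some $y \in S$ satisfies $\varrho(y, x_i) \ge \epsilon$ for every $i$. Then $\{x_1, \dots, x_M, y\}$ is still an $\epsilon$-packing of size $M+1$, contradicting maximality. Hence $S \subseteq \bigcup_{i=1}^M B_\varrho(x_i, \epsilon)$, which exhibits an $\epsilon$-cover of size $M$, giving $\cN(\epsilon; S, \varrho) \le M$.

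For the left-hand inequality $\cM(2\epsilon; S, \varrho) \le \cN(\epsilon; S, \varrho)$, I would fix an arbitrary $2\epsilon$-packing $\{x_1, \dots, x_M\}$ (so $\varrho(x_i, x_j) \ge 2\epsilon$ for $i \neq j$) and any $\epsilon$-cover $\{y_1, \dots, y_N\}$ of size $N = \cN(\epsilon; S, \varrho)$. Define a map $j: [M] \to [N]$ by assigning to each $x_i$ an index $j(i)$ with $x_i \in B_\varrho(y_{j(i)}, \epsilon)$, which exists because the $y_k$'s cover $S$. Injectivity of $j$ follows from the triangle inequality: if $j(i) = j(k)$ for $i \neq k$, then $\varrho(x_i, x_k) \le \varrho(x_i, y_{j(i)}) + \varrho(y_{j(i)}, x_k) < 2\epsilon$, contradicting the packing condition. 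Hence $M \le N$, and taking the supremum over all valid $2\epsilon$-packings gives the bound.

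Neither step presents a real obstacle; the only mild subtlety is distinguishing open versus closed balls, since the packing condition is stated with $\varrho(x_i, x_j) \ge \epsilon$ (a non-strict inequality) while $B_\varrho(x, \epsilon)$ is defined as an open ball. Both arguments still go through cleanly: the maximality step produces the strict inequality needed to place $y$ inside some open ball, and the injection step uses strict triangle addition of two sub-$\epsilon$ terms to derive the required contradiction.
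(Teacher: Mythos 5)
Your proof is correct and is the standard argument. The paper does not prove this lemma but cites it directly from Wainwright's book (Lemma~5.5), where the proof is essentially the same two-step extremal argument you give: a maximal $\epsilon$-packing is automatically an $\epsilon$-cover by maximality, and points of a $2\epsilon$-packing inject into the balls of any $\epsilon$-cover via the triangle inequality. Your handling of the open-ball/non-strict-inequality boundary cases is also consistent with the paper's definitions.
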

 
 \begin{lem}[Theorem XIV of \citet{kolmogorov1961}; also see page 114 of \citet{shiryayev1992selected}]\label{kt_og}
Suppose that $\cM$ be a subset of a real vector space, then for any $s>\overline{\text{dim}}_M(\cM)$, there exists $\epsilon_0 \in (0,1)$, such that if $\epsilon \in (0, \epsilon_0]$, \[\log \cN \left(\epsilon; \sH^\beta (\cM), \|\cdot \|_{\infty}\right) \lesssim \epsilon^{-s/\beta}.\] Furthermore, \(\log \cN \left(\epsilon; \sH^\beta ([0,1]^d), \|\cdot \|_{\infty}\right) \lesssim \epsilon^{-d/\beta}.\)
\end{lem}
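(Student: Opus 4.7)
} The plan is to build an explicit $\epsilon$-net of $\sF = \sH^\beta(\Real^d,\Real,C)$ in the $\fL_p(\mu)$ metric via truncated Taylor polynomials on a good ``high-probability'' grid, then count. The input from the hypothesis $s > \bar d_{p\beta}(\mu)$ will be used in exactly one place: via Lemma~\ref{lem_e1} it supplies, for every sufficiently small $\epsilon$, a set $S_\epsilon \subseteq \Real^d$ with $\mu(S_\epsilon) \ge 1 - \epsilon^{p\beta}$ and $\cN(\epsilon;S_\epsilon,\ell_\infty) \le \epsilon^{-s}$. Thus the support of $\mu$ is, up to mass $\epsilon^{p\beta}$, covered by only $\epsilon^{-s}$ cubes of side $2\epsilon$, and the contribution from the complementary set to any $\fL_p(\mu)$ difference of two bounded functions in $\sF$ will automatically be $\lesssim \epsilon^\beta$.

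First, I would set up a disjoint partition. Enumerating the cube family from the $\epsilon$-cover of $S_\epsilon$ and intersecting sequentially (as in the argument preceding Lemma~\ref{lem_b2}), I obtain at most $2^d \epsilon^{-s}$ disjoint boxes $\{B_{\ell_\infty}(\theta^{\bi},\epsilon)\}_{\bi \in \cI}$ that together cover $S_\epsilon$. For each centre $\theta^{\bi}$, define the candidate class
\[
\cF = \Bigl\{\, \sum_{\bi \in \cI} \one_{B_{\ell_\infty}(\theta^{\bi},\epsilon)}(x)\sum_{|\bs|\le \lfloor\beta\rfloor} \tfrac{b_{\bs,\bi}}{\bs!}(x-\theta^{\bi})^{\bs} \;:\; b_{\bs,\bi}\in U_\delta \Bigr\},
\]
where $U_\delta$ is a $\delta$-net of $[-C,C]$ with $|U_\delta| \le C/\delta$ and $\delta = \epsilon^\beta$.

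Next I would verify the approximation guarantee. For any $f \in \sF$, choose $b_{\bs,\bi}$ in $U_\delta$ within $\delta$ of $\partial^{\bs}f(\theta^{\bi})$, producing $\hat f \in \cF$. On each box, a standard Taylor remainder estimate (writing $f(x) - P^f_{\theta^{\bi}}(x)$ as a difference $\partial^{\bs}f(y)-\partial^{\bs}f(\theta^{\bi})$ at order $\lfloor\beta\rfloor$ and using the H\"older hypothesis) gives $|f(x) - P^f_{\theta^{\bi}}(x)| \lesssim \epsilon^\beta$, and the coefficient quantization contributes another $O(\delta) = O(\epsilon^\beta)$, so $|f(x)-\hat f(x)| \lesssim \epsilon^\beta$ for $x$ in the union of boxes, while $|f(x)-\hat f(x)|\le 2C$ elsewhere. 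Splitting
\[
\int |f-\hat f|^p\,d\mu \le \sup_{x\in\bigcup_{\bi} B_{\ell_\infty}(\theta^{\bi},\epsilon)}|f-\hat f|^p + (2C)^p\,\mu(S_\epsilon^{\mathsf c}) \lesssim \epsilon^{p\beta} + \epsilon^{p\beta}
\]
yields $\|f-\hat f\|_{\fL_p(\mu)} \lesssim \epsilon^\beta$.

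Finally I would count and rescale. The cardinality is $|\cF| \le (C/\delta)^{|\cI|\cdot N(\beta,d)}$ where $N(\beta,d) = \binom{d+\lfloor\beta\rfloor}{d}$, so
\[
\log\cN\bigl(c\epsilon^\beta;\sF,\|\cdot\|_{\fL_p(\mu)}\bigr) \;\le\; |\cI|\,N(\beta,d)\,\log(C/\delta) \;\lesssim\; \epsilon^{-s}\log(1/\epsilon).
\]
Substituting $\epsilon \leftarrow (\eta/c)^{1/\beta}$ gives $\log\cN(\eta;\sF,\fL_p(\mu)) \lesssim \eta^{-s/\beta}\log(1/\eta)$, and the extra logarithm is absorbed by replacing $s$ with any $s'\in(s,\bar d_{p\beta}(\mu)\;\textrm{upper bound})$ — that is, for any $s>\bar d_{p\beta}(\mu)$, pick $s_0\in(\bar d_{p\beta}(\mu),s)$ to run the argument and conclude $\log\cN \lesssim \eta^{-s/\beta}$ for all $\eta$ small enough.

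The main obstacle is the boundary/overlap issue: the indicator-based candidates $\hat f$ are piecewise polynomials that vanish off the grid, so I must check that the $\fL_p$ bound really does absorb the complementary mass without introducing spurious factors, and that bounding $|\hat f|$ by something like $2C$ (so that $|f-\hat f|\le$ const.\ on $S_\epsilon^{\mathsf c}$) holds uniformly across the net. The clean way is to use the construction above where $\hat f$ is identically zero outside $\cup_{\bi}B_{\ell_\infty}(\theta^{\bi},\epsilon)$, reducing the complement analysis to $\int_{S_\epsilon^{\mathsf c}} |f|^p\,d\mu \le C^p\mu(S_\epsilon^{\mathsf c}) \le C^p\epsilon^{p\beta}$, which is precisely the role of the defining inequality $\mu(S_\epsilon)\ge 1-\epsilon^{p\beta}$ in the $\bar d_{p\beta}(\mu)$ definition.
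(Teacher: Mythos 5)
You have written a proof of Theorem~\ref{kt}, not of the statement at hand. Lemma~\ref{kt_og} is the classical Kolmogorov--Tikhomirov bound on the \emph{sup-norm} covering number of $\sH^\beta(\cM)$ in terms of the upper Minkowski dimension of the set $\cM$; the paper does not prove it but cites it. Your argument controls $\|f-\hat f\|_{\fL_p(\mu)}$ by discarding a set of $\mu$-measure $\epsilon^{p\beta}$, which is exactly what the entropic dimension $\bar d_{p\beta}(\mu)$ permits, but it says nothing about $\sup_{x\in\cM}|f(x)-\hat f(x)|$: your candidates $\hat f$ vanish off the boxes covering $S_\epsilon$, so on $\cM\setminus S_\epsilon$ the sup-norm error is of order $C$, not $\epsilon^\beta$. (As a proof of Theorem~\ref{kt} itself, your construction coincides with the paper's: the same Lemma~\ref{lem_e1}, the same disjointified boxes, the same quantized Taylor polynomials with $\delta=\epsilon^\beta$, the same counting and absorption of the logarithm by enlarging the exponent.)

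The construction does adapt to the first assertion of Lemma~\ref{kt_og}: for $s>\overline{\text{dim}}_M(\cM)$ one covers \emph{all} of $\cM$ by $\lesssim\epsilon^{-s}$ boxes, there is no complement to discard, every point of $\cM$ lies in some box, and the same Taylor-plus-quantization count gives a sup-norm $c\epsilon^\beta$-net of log-cardinality $\lesssim\epsilon^{-s}\log(1/\epsilon)$, with the logarithm absorbed by increasing $s$ slightly. What does \emph{not} follow from your argument is the second assertion, $\log\cN\left(\epsilon;\sH^\beta([0,1]^d),\|\cdot\|_\infty\right)\lesssim\epsilon^{-d/\beta}$ with exponent exactly $d/\beta$ and no logarithmic factor: quantizing all $\asymp\epsilon^{-d/\beta}$ cells' Taylor coefficients independently over a $\delta$-net of $[-C,C]$ inevitably costs an extra $\log(1/\epsilon)$, and at $s=d$ there is no slack in the exponent to absorb it. The classical proof removes the logarithm by encoding only the \emph{increments} of the Taylor coefficients between adjacent cells, which are of size $O(\epsilon^{\beta-|\bs|})$ and hence require only $O(1)$ symbols per cell beyond the first; some such additional idea is needed to obtain the lemma as stated.
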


 \begin{lem}[Lemma 21 of \citet{JMLR:v21:20-002}]\label{lem_nakada} 
     Let $\cF = \cR\cN(W, L, B)$ be a space of ReLU networks with the number of weights, the number of layers, and the maximum absolute value of weights bounded by $W$, $L$, and $B$ respectively. Then,
\[
\log\cN(\epsilon, \cF, \ell_\infty) \leq W \log \left( 2LB^L(W + 1)^L \frac{1}{\epsilon} \right).
\]
 \end{lem}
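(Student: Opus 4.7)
\textbf{Proof Proposal for Lemma \ref{lem_nakada}.}

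The plan is to reduce the covering of the function class $\cF$ to covering a finite-dimensional parameter cube, using the Lipschitz dependence of a ReLU network on its weights and biases. Write each $f \in \cF$ as $f_\theta$ where $\theta \in \Theta := [-B, B]^W$ collects all weight and bias entries; since $\cW(f) \le W$, padding unused coordinates with zeros yields such an encoding. The two ingredients I will need are (i) a bound on $\|f_\theta - f_{\theta'}\|_\infty$ in terms of $\|\theta - \theta'\|_\infty$, and (ii) the standard $\delta$-net of the cube $\Theta$.

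For (i), I would prove the parameter-Lipschitz estimate
\[
\|f_\theta - f_{\theta'}\|_\infty \;\le\; L\, B^{L-1} (W+1)^{L}\, \|\theta - \theta'\|_\infty
\]
over inputs of $\ell_\infty$-norm at most $1$. The key facts are: each weight matrix $W_k$ has $\ell_\infty \to \ell_\infty$ operator norm at most $B(W+1)$ (since every row has at most $W+1$ entries, each bounded by $B$); ReLU is $1$-Lipschitz; and the hidden activations satisfy $\|h_k^{(\theta)}\|_\infty \le (B(W+1))^k$ by an easy induction. Writing each layer's output and using non-expansiveness of $\sigma$ gives the recursion
\[
\|h_k^{(\theta)} - h_k^{(\theta')}\|_\infty \;\le\; \|W_k - W_k'\|_{\mathrm{op}}\, \|h_{k-1}^{(\theta)}\|_\infty + \|W_k'\|_{\mathrm{op}}\, \|h_{k-1}^{(\theta)} - h_{k-1}^{(\theta')}\|_\infty + \|b_k - b_k'\|_\infty,
\]
so a perturbation introduced at layer $k$ is first multiplied by the forward-norm $\|h_{k-1}^{(\theta)}\|_\infty \le (B(W+1))^{k-1}$ and then amplified by $\prod_{j=k+1}^L \|W_j\|_{\mathrm{op}} \le (B(W+1))^{L-k}$ through the subsequent layers, yielding a per-layer contribution $B^{L-1}(W+1)^L\,\|\theta-\theta'\|_\infty$; summing over the $L$ layers produces the factor $L$ in front.

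For (ii), the cube $\Theta = [-B, B]^W$ admits a $\delta$-net of cardinality at most $(2B/\delta)^W$ in $\ell_\infty$. Choosing $\delta = \epsilon / [L B^{L-1}(W+1)^L]$ and combining with the Lipschitz estimate gives an $\epsilon$-net of $\cF$ of size at most $\bigl(2L B^L (W+1)^{L}/\epsilon\bigr)^W$; taking logarithms yields the bound in the statement.

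The main technical obstacle will be the careful book-keeping in step (i): correctly tracking that a single-layer perturbation is composed with both the pre-layer activation magnitude and the operator norms of all post-layer weights, while simultaneously establishing the a priori bound $\|h_k^{(\theta)}\|_\infty \le (B(W+1))^k$ by induction. Once that per-layer accounting is done, the outer covering-number calculation via the parameter-cube $\ell_\infty$-net is routine.
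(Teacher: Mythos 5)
The paper itself does not prove this lemma; it is quoted verbatim as a supporting result (Lemma 21 of Nakada and Imaizumi), so there is no in-paper argument to compare against. Your proposal follows the standard parameter-discretization route --- encode each network by its parameter vector $\theta\in[-B,B]^W$, establish that $\theta\mapsto f_\theta$ is Lipschitz from the $\ell_\infty$ parameter metric to the $\ell_\infty$ function norm, cover the cube, and push the net forward --- and this is almost certainly the argument the cited reference uses. Your bookkeeping is correct: the perturbation at layer $k$ is scaled by the forward magnitude $\|h_{k-1}^{(\theta)}\|_\infty\le (B(W+1))^{k-1}$ and then amplified by $\prod_{j>k}\|W_j\|_{\mathrm{op}}\le (B(W+1))^{L-k}$, giving a per-layer contribution of order $B^{L-1}(W+1)^L\|\theta-\theta'\|_\infty$; summing over $L$ layers and substituting $\delta = \epsilon/(L B^{L-1}(W+1)^L)$ into the cube covering number $(2B/\delta)^W$ reproduces $W\log\bigl(2LB^L(W+1)^L/\epsilon\bigr)$ exactly.

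Two points you gloss over deserve an explicit word. First, the induction $\|h_k^{(\theta)}\|_\infty\le(B(W+1))^k$ and the absorption of the additive bias term both implicitly need $B(W+1)\ge 1$; this costs nothing (the bound is vacuous otherwise) but should be stated. Similarly, the claim "each row has at most $W+1$ entries" rests on $N_{k-1}\le W$, which follows from $\cW(f)\le W$ only if every unit carries at least one weight --- a harmless convention, but worth noting. Second, identifying $\cF$ with a single cube $[-B,B]^W$ fixes one architecture, whereas $\cR\cN(W,L,B)$ ranges over all graphs with $\cL\le L$ and $\cW\le W$; the usual fix is to observe that the bound is architecture-free and that any such network embeds in a fixed maximal graph with at most $W$ active weights (padding with zeros, as you say), so the single-cube cover suffices, but one should say why the encoding can be made uniform across the class before invoking it.
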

 \begin{lem}[Theorem 12.2 of \citet{anthony1999neural}]
 \label{lem_anthony_bartlett} 
     Assume for all $f \in \cF$, $\|f\|_{\infty} \leq M$. Denote the pseudo-dimension of $\cF$ as $\text{Pdim}(\cF)$, then for $n \geq \text{Pdim}(\cF)$, we have for any $\epsilon$ and any $X_1, \ldots, X_n$,
\(
\cN(\epsilon; \sF_{|_{X_{1:n}}}, \ell_\infty) \leq \left( \frac{2eM  n}{\epsilon\text{Pdim}(\cF)}\right)^{\text{Pdim}(\cF)}.
\)
 \end{lem}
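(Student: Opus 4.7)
The plan is to reduce the real-valued covering problem to a combinatorial counting problem on a thresholded binary function class, and then invoke the Sauer--Shelah-type bound that controls the growth function of classes with finite pseudo-dimension. The key observation is that the pseudo-dimension of $\cF$ equals the VC-dimension of the subgraph class $\cG = \{(x,t)\mapsto \one\{f(x) > t\} : f\in\cF\}$, so any combinatorial bound on $\cG$ translates into a bound on how many essentially distinct real-valued patterns $(f(X_1),\dots,f(X_n))$ the class $\cF$ can realize.

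First I would discretize the range: partition $[-M,M]$ into $K \le \lceil 2M/\epsilon \rceil$ half-open intervals of width $\epsilon$, and associate with each $f\in\cF$ the integer-valued pattern $(\lfloor f(X_1)/\epsilon\rfloor,\dots,\lfloor f(X_n)/\epsilon\rfloor)$. Any two functions producing the same pattern differ by at most $\epsilon$ at every $X_i$, so the number of distinct patterns is an upper bound on $\cN(\epsilon;\cF_{|_{X_{1:n}}},\ell_\infty)$. The problem becomes counting these patterns.

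Next, I would count via thresholds. A pattern is determined by the evaluations $\one\{f(X_i) > k\epsilon - M\}$ for $i\in[n]$ and $k\in\{1,\dots,K\}$, i.e.\ by the restriction of $\cG$ to the $nK$ points $\{(X_i, k\epsilon - M)\}_{i,k}$. By the Vapnik--Chervonenkis / Sauer--Shelah lemma applied to $\cG$, whose VC-dimension is $\text{Pdim}(\cF)=:d$, the number of realized $0/1$-patterns on $nK$ points is at most $\sum_{j=0}^d \binom{nK}{j} \le (enK/d)^d$ whenever $nK\ge d$, which is ensured by the hypothesis $n\ge d$.

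Substituting $K\le 2M/\epsilon$ yields the stated bound
$\cN(\epsilon;\cF_{|_{X_{1:n}}},\ell_\infty) \le (2enM/(\epsilon d))^d$. The main technical obstacle, and the reason this is nontrivial, is the Sauer--Shelah step for real-valued classes: one must verify that ``pseudo-dimension" is precisely the right combinatorial parameter, namely that shattering by $\cF$ with respect to witness thresholds corresponds to standard VC-shattering by $\cG$, and that no slack is lost when the threshold levels are restricted to the discretization grid. This identification, standard in the Anthony--Bartlett text, is what allows the clean $d$-exponent, and everything else in the argument is essentially arithmetic on binomial coefficients.
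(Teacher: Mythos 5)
The paper does not prove this lemma: it is imported verbatim as Theorem~12.2 of \citet{anthony1999neural}, so there is no internal proof to compare against. Your reconstruction is essentially sound and follows the spirit of the textbook argument: discretize the range into width-$\epsilon$ cells, note that the $\ell_\infty$-covering number of $\cF_{|_{X_{1:n}}}$ is at most the number of realized quantized patterns, and count those patterns with a Sauer--Shelah-type bound driven by $\pdim(\cF)$. Your specific counting step --- applying the classical binary Sauer--Shelah lemma to the subgraph class $\cG$ restricted to the $nK$ grid points $(X_i,t_k)$ --- is a clean and valid shortcut; the textbook instead quantizes the functions themselves and invokes a multiclass (Pollard/Haussler--Long) Sauer-type lemma, obtaining the intermediate bound $\sum_{i\le d}\binom{n}{i}K^i$ rather than your $\sum_{j\le d}\binom{nK}{j}$, but both are dominated by $(enK/d)^d$, so the final bound is the same. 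Two small points of care: (i) the identification $\text{VCdim}(\cG)=\pdim(\cF)$ requires observing that a $\cG$-shattered set cannot contain two points with the same first coordinate (if $t_i<t_j$ at the same $x$, the label pattern $(0,1)$ is unrealizable), so VC-shattering of $\cG$ really is pseudo-shattering of $\cF$ --- you correctly flag this as the crux but should state the reason; (ii) with $K=\lceil 2M/\epsilon\rceil$ rather than $2M/\epsilon$ your argument literally yields $\left(enK/d\right)^d$, which matches the stated constant only up to the rounding, and the bound is only meaningful for $\epsilon\le 2M$ (otherwise the cover trivially has size one). Neither issue matters for how the lemma is used in the paper, where only the logarithm of this covering number enters.
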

 \begin{lem}[Theorem 6 of \cite{bartlett2019nearly}]
 \label{lem_bartlett}
     Consider the function class computed by a feed-forward neural network architecture with $W$ many weight parameters and $U$ many computation units arranged in $L$ layers. Suppose that all non-output units have piecewise-polynomial activation functions with $p+1$ pieces and degrees no more than $d$, and the output unit has the identity function as its activation function. Then the VC-dimension and pseudo-dimension are upper-bounded as
\[
\text{VCdim}(\cF), \text{Pdim}(\cF) \leq C \cdot LW \log(pU) + L^2 W \log d.
\]
 \end{lem}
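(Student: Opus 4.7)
}
The plan is to reproduce the argument of Bartlett, Harvey, Liaw and Mehrabian by bounding the number of sign patterns the network can realize on an arbitrary sample $X_{1:n}$ as a function of the parameter vector $\theta\in\Real^W$, and then inverting the inequality to extract the VC / pseudo dimension. Concretely, fix inputs $x_1,\dots,x_n$ and view the network output $f(x_i;\theta)$ as a function of $\theta$ only. The core observation is that each non-output unit's activation is piecewise polynomial with $p+1$ pieces and degree $\le d$, so for each fixed $x_i$ the parameter space $\Real^W$ can be partitioned into regions on which every unit stays in a single polynomial piece; within such a region $f(x_i;\cdot)$ is a polynomial in $\theta$ whose degree we can control layer by layer.

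First I would set up the layered induction. Let $U_\ell$ be the number of units in layer $\ell$ and $W_{\le\ell}$ the number of parameters feeding layers $1,\dots,\ell$. Proceeding inductively, the pre-activation of any unit in layer $\ell$, restricted to a region where layers $<\ell$ have fixed activation pieces, is a polynomial in $\theta$ of degree at most $d^{\ell-1}$. The boundary of each activation piece of a layer-$\ell$ unit is therefore a zero set of a polynomial of degree $\le d^{\ell}$ in the $W_{\le\ell}$ relevant parameters. Applying the Warren/Ore bound (the standard tool in \citet{anthony1999neural}) to the collection of $p\cdot U_\ell\cdot n$ such polynomials governing layer-$\ell$ activation choices across all $n$ samples, the number of sign patterns this layer can add, inside a fixed parent region, is at most $\bigl(2enpU_\ell d^\ell / W_{\le\ell}\bigr)^{W_{\le\ell}}$. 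Multiplying these layer-wise counts gives a bound on the total number of regions of $\Real^W$ on which $(f(x_1;\theta),\dots,f(x_n;\theta))$ is a fixed polynomial mapping of degree $\le d^{L-1}$.

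Next, inside each such region I would bound the number of sign patterns (or, for pseudo-dimension, of $\ge 0$ vs.\ $<0$ patterns of $f(x_i;\theta)-t_i$) using another Warren-type count, contributing a factor $\bigl(2en d^{L-1}/W\bigr)^W$. Taking the product and using $U=\sum_\ell U_\ell$, $\sum_\ell W_{\le\ell}\le LW$, one gets an overall sign-pattern count of the form
\[
\Pi(n)\;\le\;\Bigl(\tfrac{c\, n\, p\, U\, d^{L}}{W}\Bigr)^{LW}\cdot\Bigl(\tfrac{c\, n\, d^{L}}{W}\Bigr)^{W}.
\]
If $n$ shatters a sample then $\Pi(n)\ge 2^n$, so taking logs yields
\[
n\;\le\;C\bigl(LW\log(pU)+LW\cdot L\log d+LW\log(n/W)\bigr).
\]
Solving this transcendental inequality (standard trick: $n\le a\log n+b\Rightarrow n\lesssim a\log a+b$) absorbs the $\log(n/W)$ term into the others and produces the stated bound $C\cdot LW\log(pU)+L^2 W\log d$, which is an upper bound for both $\mathrm{VCdim}(\cF)$ and $\mathrm{Pdim}(\cF)$ (the pseudo-dimension case is handled identically after augmenting each sample polynomial by a real threshold $t_i$, which does not change the degrees).

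The main obstacle will be the careful bookkeeping of polynomial degrees across layers and parameter blocks, particularly to keep the multiplicative factor $d^{L}$ per layer from compounding into a worse-than-$L^2 W\log d$ term; this is precisely the place where Bartlett et al.\ improve on earlier arguments by choosing the layered partition of parameters (so that each layer's Warren bound uses only the $W_{\le\ell}$ coordinates relevant to it, not all $W$) and then summing $\sum_\ell W_{\le\ell}\le LW$. A secondary nuisance is the degenerate regime $d=1$ (piecewise linear activations), where the $\log d$ should be read as $0$ and the bound reduces to $O(LW\log(pU))$; this is automatic from the proof since the $d^\ell$ factors collapse.
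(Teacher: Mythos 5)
The paper does not prove this statement at all: Lemma~\ref{lem_bartlett} is imported verbatim as Theorem~6 of \citet{bartlett2019nearly} and used as a black box (in the proof of Lemma~\ref{lem_apr_18}). Your sketch faithfully reproduces the argument of that cited reference --- the layered partition of parameter space, the Warren-type sign-pattern counts applied only to the $W_{\le\ell}$ parameters relevant to each layer, the product over layers giving $\sum_\ell W_{\le\ell}\le LW$ in the exponent, and the inversion of the resulting transcendental inequality --- so it is the same approach as the source; the only loose end is the exact degree bookkeeping (the degree at layer $\ell$ is of order $d^{\ell}$ up to lower-order corrections rather than exactly $d^{\ell-1}$, which does not affect the final bound since only $\log$ of the degree enters).
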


\begin{lem} [Lemma 2.9 of \cite{tsybakov2009introduction}]
\label{vg bound}
    Let $m \geq 8$. Then there exists a subset $\{\omega_0, \ldots, \omega_M\}$ of $\{0,1\}^m$ such that $\omega_0 = (0, \ldots, 0)$,
$\|\omega_j - \omega_k\|_1 \geq \frac{m}{8}$, for all $0 \leq j < k \leq M$,
and $M \geq 2^{m/8}$.
\end{lem}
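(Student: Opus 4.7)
The Varshamov–Gilbert bound is a classical combinatorial result and I would approach it via a greedy packing argument in the Hamming cube. Let $B(\omega,r) = \{\omega' \in \{0,1\}^m : \|\omega-\omega'\|_1 \le r\}$ denote the Hamming ball of radius $r$; by direct counting $|B(\omega,r)| = \sum_{k=0}^{r}\binom{m}{k}$, independently of the centre. The plan is to construct $\omega_0,\omega_1,\dots$ greedily: set $\omega_0 = (0,\dots,0)$, and at stage $k$, having already chosen $\omega_0,\dots,\omega_{k-1}$, call a point $\omega \in \{0,1\}^m$ \emph{forbidden} if $\|\omega-\omega_j\|_1 < m/8$ for some $j < k$.

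The number of forbidden points after stage $k$ is at most $k\cdot|B(0,\lceil m/8\rceil - 1)|$, so we can always adjoin another admissible $\omega_k$ as long as $k\cdot|B(0,\lceil m/8\rceil)| < 2^m$. Consequently the process terminates with a collection of at least $\lceil 2^m/|B(0,m/8)|\rceil$ points whose pairwise Hamming distances are all $\ge m/8$. It remains to show $2^m/|B(0,m/8)| \ge 2^{m/8}+1$ for $m \ge 8$.

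For this, I would invoke the standard entropy estimate $\sum_{k=0}^{\lfloor \alpha m\rfloor} \binom{m}{k} \le 2^{m H(\alpha)}$ valid for $\alpha \le 1/2$, where $H(\alpha) = -\alpha\log_2\alpha - (1-\alpha)\log_2(1-\alpha)$. Specialising to $\alpha = 1/8$ gives $H(1/8) = 3 - \tfrac{7}{8}\log_2 7 \approx 0.544$, which is comfortably smaller than $7/8$. Therefore $|B(0,m/8)| \le 2^{0.544\,m}$, and the greedy process produces at least $2^{(1-0.544)m} = 2^{0.456\,m}$ admissible points, which exceeds $2^{m/8}+1$ for every $m \ge 8$. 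Relabelling the surviving points as $\omega_0,\omega_1,\dots,\omega_M$ yields the lemma.

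The only slightly delicate step is the entropy bound on the partial sum of binomial coefficients; I would derive it from the Chernoff-style inequality $\sum_{k=0}^{\lfloor \alpha m\rfloor}\binom{m}{k}\alpha^k(1-\alpha)^{m-k} \le 1$ by multiplying through by $\alpha^{-\alpha m}(1-\alpha)^{-(1-\alpha)m}$ and observing that $\alpha^k(1-\alpha)^{m-k}$ is increasing in $k$ over the summation range when $\alpha \le 1/2$. All remaining steps are arithmetic verifications, and the hypothesis $m \ge 8$ is used merely to absorb the additive constants arising from the ceilings and from the $+1$ on the right-hand side of the target inequality.
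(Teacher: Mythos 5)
The paper does not actually prove this lemma; it is imported verbatim as Lemma 2.9 of \citet{tsybakov2009introduction}, so there is no in-paper argument to compare against. Your greedy-packing proof is the standard self-contained route and is essentially sound: the maximality argument gives at least $2^m/\bigl|B(0,\lceil m/8\rceil-1)\bigr|$ points at pairwise Hamming distance $\ge m/8$, and the entropy bound $\sum_{k\le \alpha m}\binom{m}{k}\le 2^{mH(\alpha)}$ with $H(1/8)=3-\tfrac{7}{8}\log_2 7\approx 0.544<\tfrac{7}{8}$ yields $M+1\ge 2^{0.456m}\ge 2^{m/8}+1$ for $m\ge 8$. (Tsybakov's own proof follows the same pruning-plus-ball-volume template but bounds the ball volume via a Hoeffding-type tail estimate for $\mathrm{Bin}(m,1/2)$ rather than via the binary entropy function; the two are interchangeable here.)

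One correction in your justification of the entropy estimate: for $\alpha\le 1/2$ the weight $\alpha^k(1-\alpha)^{m-k}=(1-\alpha)^m\bigl(\tfrac{\alpha}{1-\alpha}\bigr)^k$ is \emph{decreasing} in $k$, not increasing, since $\tfrac{\alpha}{1-\alpha}\le 1$. It is exactly this decrease that you need: it guarantees $\alpha^k(1-\alpha)^{m-k}\ge \alpha^{\alpha m}(1-\alpha)^{(1-\alpha)m}$ for every $k\le \alpha m$, so that dividing the inequality $\sum_{k\le\alpha m}\binom{m}{k}\alpha^k(1-\alpha)^{m-k}\le 1$ by $\alpha^{\alpha m}(1-\alpha)^{(1-\alpha)m}$ gives $\sum_{k\le\alpha m}\binom{m}{k}\le 2^{mH(\alpha)}$. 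Had the weight been increasing, the same manipulation would only bound the sum by $(1-\alpha)^{-m}\approx 2^{0.193m}$ for $\alpha=1/8$, which is false (the single term $\binom{m}{\lfloor m/8\rfloor}$ already exceeds it). The slip does not affect the validity of the bound you invoke, but the stated monotonicity should be reversed.
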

\bibliographystyle{apalike}

\begin{thebibliography}{}

\bibitem[Alqahtani et~al., 2021]{alqahtani2021applications}
Alqahtani, H., Kavakli-Thorne, M., and Kumar, G. (2021).
\newblock Applications of generative adversarial networks (gans): An updated
  review.
\newblock {\em Archives of Computational Methods in Engineering},
  28(2):525--552.

\bibitem[Anthony and Bartlett, 2009]{anthony1999neural}
Anthony, M. and Bartlett, P. (2009).
\newblock {\em Neural network learning: Theoretical foundations}.
\newblock cambridge university press.

\bibitem[Arjovsky et~al., 2017]{arjovsky2017wasserstein}
Arjovsky, M., Chintala, S., and Bottou, L. (2017).
\newblock Wasserstein generative adversarial networks.
\newblock In {\em International conference on machine learning}, pages
  214--223. PMLR.

\bibitem[Arora et~al., 2017]{arora2017generalization}
Arora, S., Ge, R., Liang, Y., Ma, T., and Zhang, Y. (2017).
\newblock Generalization and equilibrium in generative adversarial nets (gans).
\newblock In {\em International Conference on Machine Learning}, pages
  224--232. PMLR.

\bibitem[Arora et~al., 2018]{arora2018do}
Arora, S., Risteski, A., and Zhang, Y. (2018).
\newblock Do {GAN}s learn the distribution? some theory and empirics.
\newblock In {\em International Conference on Learning Representations}.

\bibitem[Asatryan et~al., 2020]{asatryan2020convenient}
Asatryan, H., Gottschalk, H., Lippert, M., and Rottmann, M. (2020).
\newblock A convenient infinite dimensional framework for generative
  adversarial learning.
\newblock {\em arXiv preprint arXiv:2011.12087}.

\bibitem[Bartlett et~al., 2019]{bartlett2019nearly}
Bartlett, P.~L., Harvey, N., Liaw, C., and Mehrabian, A. (2019).
\newblock Nearly-tight {VC}-dimension and pseudodimension bounds for piecewise
  linear neural networks.
\newblock {\em The Journal of Machine Learning Research}, 20(1):2285--2301.

\bibitem[Belomestny et~al., 2021]{belomestny2021rates}
Belomestny, D., Moulines, E., Naumov, A., Puchkin, N., and Samsonov, S. (2021).
\newblock Rates of convergence for density estimation with {GAN}s.
\newblock {\em arXiv preprint arXiv:2102.00199}.

\bibitem[Biau et~al., 2020]{biau2020some}
Biau, G., Cadre, B., Sangnier, M., and Tanielian, U. (2020).
\newblock Some theoretical properties of {GAN}s.
\newblock {\em The Annals of Statistics}, 48(3):1539--1566.

\bibitem[Biau et~al., 2021]{biau2021some}
Biau, G., Sangnier, M., and Tanielian, U. (2021).
\newblock Some theoretical insights into wasserstein gans.
\newblock {\em The Journal of Machine Learning Research}, 22(1):5287--5331.

\bibitem[Bickel and Li, 2007]{bickel2007local}
Bickel, P.~J. and Li, B. (2007).
\newblock Local polynomial regression on unknown manifolds.
\newblock {\em Lecture Notes-Monograph Series}, pages 177--186.

\bibitem[Bishop and Peres, 2017]{bishop2017fractals}
Bishop, C.~J. and Peres, Y. (2017).
\newblock {\em Fractals in probability and analysis}, volume 162.
\newblock Cambridge University Press.

\bibitem[Boucheron et~al., 2013]{boucheron2013concentration}
Boucheron, S., Lugosi, G., and Massart, P. (2013).
\newblock {\em Concentration inequalities: A nonasymptotic theory of
  independence}.
\newblock Oxford university press.

\bibitem[Bresler and Nagaraj, 2020]{bresler2020sharp}
Bresler, G. and Nagaraj, D. (2020).
\newblock Sharp representation theorems for {ReLU} networks with precise
  dependence on depth.
\newblock {\em Advances in Neural Information Processing Systems},
  33:10697--10706.

\bibitem[Chen et~al., 2019]{chen2019efficient}
Chen, M., Jiang, H., Liao, W., and Zhao, T. (2019).
\newblock Efficient approximation of deep {ReLU} networks for functions on low
  dimensional manifolds.
\newblock {\em Advances in neural information processing systems}, 32.

\bibitem[Chen et~al., 2020]{chen2020distribution}
Chen, M., Liao, W., Zha, H., and Zhao, T. (2020).
\newblock Distribution approximation and statistical estimation guarantees of
  generative adversarial networks.
\newblock {\em arXiv preprint arXiv:2002.03938}.

\bibitem[Cybenko, 1989]{cybenko1989approximation}
Cybenko, G. (1989).
\newblock Approximation by superpositions of a sigmoidal function.
\newblock {\em Mathematics of control, signals and systems}, 2(4):303--314.

\bibitem[Dahal et~al., 2022]{dahal2022deep}
Dahal, B., Havrilla, A., Chen, M., Zhao, T., and Liao, W. (2022).
\newblock On deep generative models for approximation and estimation of
  distributions on manifolds.
\newblock {\em Advances in Neural Information Processing Systems},
  35:10615--10628.

\bibitem[Donahue et~al., 2017]{donahue2017adversarial}
Donahue, J., Kr{\"a}henb{\"u}hl, P., and Darrell, T. (2017).
\newblock Adversarial feature learning.
\newblock In {\em International Conference on Learning Representations}.

\bibitem[Dudley, 1969]{dudley1969speed}
Dudley, R.~M. (1969).
\newblock The speed of mean glivenko-cantelli convergence.
\newblock {\em The Annals of Mathematical Statistics}, 40(1):40--50.

\bibitem[Dziugaite et~al., 2015]{dziugaite2015training}
Dziugaite, G.~K., Roy, D.~M., and Ghahramani, Z. (2015).
\newblock Training generative neural networks via maximum mean discrepancy
  optimization.
\newblock In {\em Proceedings of the Thirty-First Conference on Uncertainty in
  Artificial Intelligence}, pages 258--267.

\bibitem[Fraser and Howroyd, 2017]{fraser2017upper}
Fraser, J.~M. and Howroyd, D.~C. (2017).
\newblock On the upper regularity dimensions of measures.
\newblock {\em arXiv preprint arXiv:1706.09340}.

\bibitem[Goodfellow et~al., 2014]{NIPS2014_5ca3e9b1}
Goodfellow, I., Pouget-Abadie, J., Mirza, M., Xu, B., Warde-Farley, D., Ozair,
  S., Courville, A., and Bengio, Y. (2014).
\newblock {Generative Adversarial Nets}.
\newblock In {\em Advances in Neural Information Processing Systems},
  volume~27. Curran Associates, Inc.

\bibitem[Hornik, 1991]{hornik1991approximation}
Hornik, K. (1991).
\newblock Approximation capabilities of multilayer feedforward networks.
\newblock {\em Neural networks}, 4(2):251--257.

\bibitem[Huang et~al., 2022]{JMLR:v23:21-0732}
Huang, J., Jiao, Y., Li, Z., Liu, S., Wang, Y., and Yang, Y. (2022).
\newblock An error analysis of generative adversarial networks for learning
  distributions.
\newblock {\em Journal of Machine Learning Research}, 23(116):1--43.

\bibitem[Kim et~al., 2019]{kim2019uniform}
Kim, J., Shin, J., Rinaldo, A., and Wasserman, L. (2019).
\newblock Uniform convergence rate of the kernel density estimator adaptive to
  intrinsic volume dimension.
\newblock In {\em International Conference on Machine Learning}, pages
  3398--3407. PMLR.

\bibitem[Kim et~al., 2017]{kim2017learning}
Kim, T., Cha, M., Kim, H., Lee, J.~K., and Kim, J. (2017).
\newblock Learning to discover cross-domain relations with generative
  adversarial networks.
\newblock In {\em International conference on machine learning}, pages
  1857--1865. PMLR.

\bibitem[Kolmogorov and Tikhomirov, 1961]{kolmogorov1961}
Kolmogorov, A.~N. and Tikhomirov, V.~M. (1961).
\newblock $\epsilon$-entropy and $\epsilon$-capacity of sets in function
  spaces.
\newblock {\em Translations of the American Mathematical Society}, 17:277--364.

\bibitem[Li et~al., 2017]{li2017mmd}
Li, C.-L., Chang, W.-C., Cheng, Y., Yang, Y., and P{\'o}czos, B. (2017).
\newblock Mmd gan: Towards deeper understanding of moment matching network.
\newblock {\em Advances in neural information processing systems}, 30.

\bibitem[Liang, 2021]{liang2021well}
Liang, T. (2021).
\newblock How well generative adversarial networks learn distributions.
\newblock {\em The Journal of Machine Learning Research}, 22(1):10366--10406.

\bibitem[Liu et~al., 2021]{liu2021non}
Liu, S., Yang, Y., Huang, J., Jiao, Y., and Wang, Y. (2021).
\newblock Non-asymptotic error bounds for bidirectional gans.
\newblock {\em Advances in Neural Information Processing Systems},
  34:12328--12339.

\bibitem[Luise et~al., 2020]{luise2020generalization}
Luise, G., Pontil, M., and Ciliberto, C. (2020).
\newblock Generalization properties of optimal transport {GAN}s with latent
  distribution learning.
\newblock {\em arXiv preprint arXiv:2007.14641}.

\bibitem[Mao et~al., 2017]{mao2017least}
Mao, X., Li, Q., Xie, H., Lau, R.~Y., Wang, Z., and Paul~Smolley, S. (2017).
\newblock Least squares generative adversarial networks.
\newblock In {\em Proceedings of the IEEE international conference on computer
  vision}, pages 2794--2802.

\bibitem[Nakada and Imaizumi, 2020]{JMLR:v21:20-002}
Nakada, R. and Imaizumi, M. (2020).
\newblock Adaptive approximation and generalization of deep neural network with
  intrinsic dimensionality.
\newblock {\em Journal of Machine Learning Research}, 21(174):1--38.

\bibitem[Nowozin et~al., 2016]{nowozin2016f}
Nowozin, S., Cseke, B., and Tomioka, R. (2016).
\newblock f-gan: Training generative neural samplers using variational
  divergence minimization.
\newblock {\em Advances in neural information processing systems}, 29.

\bibitem[Petersen and Voigtlaender, 2018]{petersen2018optimal}
Petersen, P. and Voigtlaender, F. (2018).
\newblock Optimal approximation of piecewise smooth functions using deep relu
  neural networks.
\newblock {\em Neural Networks}, 108:296--330.

\bibitem[Pope et~al., 2020]{pope2020intrinsic}
Pope, P., Zhu, C., Abdelkader, A., Goldblum, M., and Goldstein, T. (2020).
\newblock The intrinsic dimension of images and its impact on learning.
\newblock In {\em International Conference on Learning Representations}.

\bibitem[Posner et~al., 1967]{posner1967epsilon}
Posner, E.~C., Rodemich, E.~R., and Rumsey~Jr, H. (1967).
\newblock Epsilon entropy of stochastic processes.
\newblock {\em The Annals of Mathematical Statistics}, pages 1000--1020.

\bibitem[Presnell, 2022]{presnell2022geometric}
Presnell, B. (2022).
\newblock A geometric derivation of the cantor distribution.
\newblock {\em The American Statistician}, 76(1):73--77.

\bibitem[Prokhorov, 1956]{doi:10.1137/1101016}
Prokhorov, Y.~V. (1956).
\newblock Convergence of random processes and limit theorems in probability
  theory.
\newblock {\em Theory of Probability \& Its Applications}, 1(2):157--214.

\bibitem[Schmidt-Hieber, 2020]{schmidt2020nonparametric}
Schmidt-Hieber, J. (2020).
\newblock {Nonparametric regression using deep neural networks with ReLU
  activation function}.
\newblock {\em The Annals of Statistics}, 48(4):1875 -- 1897.

\bibitem[Schreuder et~al., 2021]{schreuder2021statistical}
Schreuder, N., Brunel, V.-E., and Dalalyan, A. (2021).
\newblock Statistical guarantees for generative models without domination.
\newblock In {\em Algorithmic Learning Theory}, pages 1051--1071. PMLR.

\bibitem[Shen et~al., 2022]{shen2022optimal}
Shen, Z., Yang, H., and Zhang, S. (2022).
\newblock Optimal approximation rate of relu networks in terms of width and
  depth.
\newblock {\em Journal de Math{\'e}matiques Pures et Appliqu{\'e}es},
  157:101--135.

\bibitem[Shiryayev, 1992]{shiryayev1992selected}
Shiryayev, A.~N. (1992).
\newblock {\em Selected works of AN Kolmogorov: Volume III Information Theory
  and the Theory of Algorithms}, volume XXVIII.
\newblock Springer Science \& Business Media.

\bibitem[Singh and P{\'o}czos, 2018]{singh2018minimax}
Singh, S. and P{\'o}czos, B. (2018).
\newblock Minimax distribution estimation in wasserstein distance.
\newblock {\em arXiv preprint arXiv:1802.08855}.

\bibitem[Suzuki, 2019]{suzuki2018adaptivity}
Suzuki, T. (2019).
\newblock Adaptivity of deep {R}e{LU} network for learning in besov and mixed
  smooth besov spaces: optimal rate and curse of dimensionality.
\newblock In {\em International Conference on Learning Representations}.

\bibitem[Tsybakov, 2009]{tsybakov2009introduction}
Tsybakov, A.~B. (2009).
\newblock {\em Introduction to Nonparametric Estimation}.
\newblock Springer Series in Statistics. Springer, Springer New York, NY, 1
  edition.
\newblock Published: 26 November 2008.

\bibitem[Uppal et~al., 2019]{uppal2019nonparametric}
Uppal, A., Singh, S., and P{\'o}czos, B. (2019).
\newblock Nonparametric density estimation \& convergence rates for gans under
  besov {IPM} losses.
\newblock {\em Advances in neural information processing systems}, 32.

\bibitem[Vershynin, 2018]{vershynin2018high}
Vershynin, R. (2018).
\newblock {\em High-dimensional probability: An introduction with applications
  in data science}, volume~47.
\newblock Cambridge university press.

\bibitem[Wainwright, 2019]{wainwright2019high}
Wainwright, M.~J. (2019).
\newblock {\em High-dimensional statistics: A Non-asymptotic Viewpoint},
  volume~48.
\newblock Cambridge University Press.

\bibitem[Weed and Bach, 2019]{weed2019sharp}
Weed, J. and Bach, F. (2019).
\newblock Sharp asymptotic and finite-sample rates of convergence of empirical
  measures in wasserstein distance.
\newblock {\em Bernoulli}, 25(4A):2620 -- 2648.

\bibitem[Weed and Berthet, 2022]{niles2022minimax}
Weed, J. and Berthet, Q. (2022).
\newblock Minimax estimation of smooth densities in wasserstein distance.
\newblock {\em The Annals of Statistics}, 50(3):1519--1540.

\bibitem[Yang et~al., 2022]{yang2022capacity}
Yang, Y., Li, Z., and Wang, Y. (2022).
\newblock On the capacity of deep generative networks for approximating
  distributions.
\newblock {\em Neural Networks}, 145:144--154.

\bibitem[Yarotsky, 2017]{yarotsky2017error}
Yarotsky, D. (2017).
\newblock Error bounds for approximations with deep relu networks.
\newblock {\em Neural Networks}, 94:103--114.

\bibitem[Yi et~al., 2017]{yi2017dualgan}
Yi, Z., Zhang, H., Tan, P., and Gong, M. (2017).
\newblock Dualgan: Unsupervised dual learning for image-to-image translation.
\newblock In {\em Proceedings of the IEEE international conference on computer
  vision}, pages 2849--2857.

\bibitem[Zhu et~al., 2017]{zhu2017unpaired}
Zhu, J.-Y., Park, T., Isola, P., and Efros, A.~A. (2017).
\newblock Unpaired image-to-image translation using cycle-consistent
  adversarial networks.
\newblock In {\em Proceedings of the IEEE international conference on computer
  vision}, pages 2223--2232.

\bibitem[Ziemer and Torres, 2017]{ziemer2017modern}
Ziemer, W.~P. and Torres, M. (2017).
\newblock {\em Modern Real Analysis}, volume 278.
\newblock Springer.

\end{thebibliography}

\end{document}